\let\oldaddcontentsline\addcontentsline
\let\addcontentsline\oldaddcontentsline
\definecolor{Highlight}{HTML}{39b54a}  
\Crefname{problem}{Problem}{Problems}
\newcommand{\squishlist}{
\begin{list}{{{\small{$\bullet$}}}}
{\setlength{\itemsep}{3pt}      \setlength{\parsep}{1pt}
\setlength{\topsep}{1pt}       \setlength{\partopsep}{0pt}
\setlength{\leftmargin}{1em} \setlength{\labelwidth}{1em}
\setlength{\labelsep}{0.5em} } }
\newcommand{\squishend}{  \end{list}  }
\newcommand\DoToC{%
  \startcontents
  \printcontents{}{1}{\textbf{Contents}\vskip3pt\hrule\vskip5pt}
  \vskip3pt\hrule\vskip5pt
}
\definecolor{color1}{rgb}{0.251,0.255,0.236}
\definecolor{color2}{rgb}{0.233,0.243,0.255}
\definecolor{color3}{rgb}{0.241,0.255,0.231}
\definecolor{color4}{rgb}{0.255,0.246,0.232}
\newcommand{\name}{\texttt{C-RAG}\xspace}
\newcommand{\eqsmall}{\small} 
\newcommand{\eqnsmall}[1]{\scalebox{0.9}{\ensuremath{#1}}} 
\newcommand{\pspace}{-.0em} 
\newcommand{\stspace}{-.0em} 
\newcommand{\tspaceup}{-0em} 
\newcommand{\minspace}{-.0em}
\newcommand{\mcolor}[2]{\textcolor{#1}{#2}}
\newcommand{\bo}[1]{\mcolor{blue}{Bo: #1}}
\newcommand{\mintongedit}[1]{\mcolor{cyan}{#1}}
\newcommand{\merve}[1]{\mcolor{violet}{#1}}
\newcommand{\merveedit}[1]{\mcolor{black}{#1}}
\Crefname{figure}{Fig.}{Figs.}
\Crefname{equation}{Eq.}{Eqs.}
\Crefname{algorithm}{Alg.}{Algs.}
\Crefname{appendix}{App.}{Apps.}
\Crefname{section}{Sec.}{Secs.}
\Crefname{theorem}{Thm.}{Thms.}
\Crefname{lemma}{Lem.}{Lems.}
\Crefname{proposition}{Prop.}{Props.}
\Crefname{definition}{Def.}{Defs.}
\Crefname{corollary}{Cor.}{Cors.}
\begin{document}

\twocolumn[
\icmltitle{\name: Certified Generation Risks for Retrieval-Augmented Language Models}




\begin{icmlauthorlist}
\icmlauthor{Mintong Kang}{1}
\icmlauthor{Nezihe Merve Gürel}{2}
\icmlauthor{Ning Yu}{3}
\icmlauthor{Dawn Song}{4}
\icmlauthor{Bo Li}{1,5}
\end{icmlauthorlist}

\icmlaffiliation{1}{University of Illinois at Urbana-Champaign, USA}
\icmlaffiliation{2}{Delft University of Technology, Netherlands}
\icmlaffiliation{3}{Netflix Eyeline Studios, USA}
\icmlaffiliation{4}{University of California, Berkeley, USA}
\icmlaffiliation{5}{University of Chicago, USA}

\icmlcorrespondingauthor{Mintong Kang}{mintong2@illinois.edu}
\icmlcorrespondingauthor{Bo Li}{lbo@illinois.edu}

\icmlkeywords{Machine Learning, ICML}

\vskip 0.3in
]



\printAffiliationsAndNotice{}  

\begin{abstract}
    Despite the impressive capabilities of large language models (LLMs) across diverse applications, they still suffer from trustworthiness issues, such as hallucinations and misalignments. Retrieval-augmented language models (RAG) have been proposed to enhance the credibility of generations by grounding external knowledge, but the theoretical understandings of their generation risks remains unexplored. In this paper, we answer:
    \textit{1) whether RAG can indeed lead to low generation risks, 2) how to provide provable guarantees on the generation risks of RAG and vanilla LLMs, and 3) what sufficient conditions enable RAG models to reduce generation risks.}
    We propose \name, a novel framework to certify generation risks for RAG models.
    Specifically, we provide conformal risk analysis for RAG models and certify an upper confidence bound of generation risks, which we refer to as \textit{conformal generation risk}.
    We also provide theoretical guarantees on conformal generation risks for general bounded risk functions under test distribution shifts.
  We prove that RAG achieves a lower conformal generation risk than that of a single LLM when the quality of the retrieval model and transformer is non-trivial.
    Our intensive empirical results demonstrate the soundness and tightness of our conformal generation risk guarantees across four widely-used NLP datasets on four state-of-the-art retrieval models.
\end{abstract}

\vspace{\tspaceup}
\section{Introduction}
\vspace{\pspace}
Large language models (LLMs) \cite{touvron2023llama,openai2023gpt4} recently exhibit emergent abilities across different NLP tasks, such as text summarization, question answering, and machine translation. 
However, existing works \cite{decowang,liang2022holistic,liu2023trustworthy} show that the generations of LLMs can be unreliable, untrustworthy, and risky in many cases.
Therefore, \textit{certifiably controlling the generation risks of LLMs} becomes particularly important before the deployment of LLMs, especially in safety-critical domains.
\vspace{\pspace}

Retrieval-augmented language models (RAG) \cite{lewis2020retrieval,karpukhin2020dense,xiong2020approximate} have been proposed to enhance the credibility of LLMs by retrieving relevant documents from an external knowledge base and generating contents conditioned on the retrieved knowledge.
RAG models are shown effective in mitigating generation risks via in-context learning from the retrieved documents \cite{brown2020language}. 
However, theoretical understandings of their generation risks still remain unexplored. 
In this work, we focus on this problem and ask:
\vspace{\pspace}

\textit{Can RAG indeed lead to low generation risks?  How can we provide provable guarantees on the generation risks of RAG and vanilla LLMs? What are the sufficient conditions that enable RAG models to reduce generation risks? Can we provably control the generation risks below a desired level?}

\vspace{\pspace}
To theoretically analyze the generation risks of RAG and answer the above questions, we propose \name, a novel framework of certified generation risks for RAG models. 
We first propose a constrained generation protocol for RAG models to produce a controlled set of generations. The protocol operates based on specific parameter configurations, including the number of retrieved examples, the size of the generation set, and a similarity threshold for generation diversity.
We then provide conformal analysis \cite{bates2021distribution,angelopoulos2021learn,angelopoulos2022conformal} for RAG models under the constrained generation protocol, aiming to provably control the generation risks based on test statistics from in-distribution calibration samples.
To achieve this goal, we derive a high-probability upper bound of generation risks during inference time, which we call \textit{conformal generation risk}.  
We show that (a) the conformal generation risk serves as a sound upper bound to the empirical generation risks given a RAG configuration in \Cref{risk_guarantee_1};
(b) the generation risk can be certifiably controlled below a desired level by computing a valid set of RAG configurations via \name in \Cref{risk_guarantee_2}; 
(c) the conformal analysis can be extended to more complex scenarios under test-time distribution shifts in \Cref{pro:conf_shf}, which presents the \textit{first} generation risk guarantee under test-time distribution shifts for general bounded risk functions. 
Based on our conformal analysis for the generation risks of RAG and vanilla LLMs, we prove that 
(a) the conformal generation risk of RAG is lower than that of the corresponding vanilla LLM in \Cref{thm:gene_rag}; 
(b) under bounded test-time distribution shifts, RAG also lowers the conformal generation risks compared to the vanilla LLM in \Cref{thm:comp_shft}.
\vspace{\pspace}

We evaluate the conformal generation risk guarantees of \name with different retrieval models on four widely-used datasets.
For all retrieval methods and datasets, we empirically validate that our conformal generation risk guarantees are sound and tight even with distribution shifts, as they upper bound the empirical generation risks observed on random test sets while maintaining only a minimal gap, narrowing down to the scale of $\eqnsmall{1e-3}$.
We empirically show that RAG consistently achieves a lower conformal generation risk than a single LLM without retrieval, which is consistent with our theoretical findings in \Cref{sec:rag_conf,sec:distribution_shift}.
We also evaluate the conformal generation risk for different SOTA retrieval models, such as sparse encoding metrics BM25 \cite{robertson2009probabilistic},  text-embedding-ada-002 model from OpenAI, bge model from BAAI \cite{zhang2023retrieve}, and supervised fine-tuned embedding model  \cite{wang2023learning} to validate our analysis on retrieval quality.
We show that among these models, text-embedding-ada-002 and supervised fine-tuned embedding models outperform other baselines in achieving low conformal generation risks.

\vspace{\stspace}
\section{Related work}
\vspace{\pspace}
\textbf{Retrieval augmented generation} (RAG) is a framework for improving the generation quality of LLMs via retrieving relevant information from an external knowledge base and grounding the model on the retrieved knowledge for conditional generations. 
SOTA retrieval methods \cite{lewis2020retrieval,karpukhin2020dense,xiong2020approximate} employ dual encoders to project both query and candidate texts into the embedding space and retrieve candidate texts that exhibit high similarity to the embedded query text.
Although RAG is demonstrated to be effective in enhancing the generation credibility, their theoretical understanding is limited.
\citeauthor{basu2023statistical} conduct retrieval analysis for a constrained function and data class from a statistical perspective, but the results cannot be applicable to self-attention transformers and to arbitrary data distribution. 
In this work, we provide the first theoretical analysis of how RAG leads to low generation risks in self-attention transformers for arbitrary data distribution.
\vspace{\pspace}

\textbf{Conformal prediction} is a statistical technique used to create prediction sets with assured coverage.  \citep{vovk1999machine,vovk2005algorithmic,lei2013distribution,yang2021finite}.
Broadly, conformal risk control methods  \cite{bates2021distribution,angelopoulos2021learn,angelopoulos2022conformal,quach2023conformal} provide a high-confidence risk guarantee for black-box risk functions, assuming data exchangeability.
However, the risk guarantee can be broken under test-time distribution shifts.
While \citeauthor{angelopoulos2022conformal} and \citeauthor{farinhas2023nonexchangeable} offer a valid conformal risk guarantee for monotonic risk functions under distribution shifts, the monotonicity assumption is not always practical. 
In this work, we introduce a generally applicable conformal risk guarantee for general bounded risk functions, under distribution shifts at test time.

\vspace{\stspace}
\section{Preliminaries}
\vspace{\pspace}
\label{sec:prelim}

Before introducing \name, we first review the preliminaries of conformal controlling methods \cite{bates2021distribution,angelopoulos2021learn,angelopoulos2022conformal}, which calibrate machine learning models to ensure their predictions meet explicit finite-sample statistical guarantees.
Let $\eqnsmall{R(\lambda)}$ and $\eqnsmall{\hat{R}(\lambda)}$ denote the population risk and empirical risk, respectively, where $\eqnsmall{\lambda \in \Lambda}$ is a parameter that induces the risk. We consider a finite parameter space $\eqnsmall{\Lambda}$ with $\eqnsmall{N}$ parameters, $\eqnsmall{\lambda_1, \ldots, \lambda_N}$.
For a desired risk level $\eqnsmall{\alpha~(0<\alpha<1)}$, we define $\eqnsmall{N}$ null hypotheses: $\eqnsmall{\mathcal{H}_j: R(\lambda_j) > \alpha~(j \in {1, \ldots, N})}$. Let $\eqnsmall{p_j}$ be the p-value of the null hypothesis $\mathcal{H}_j$. Given $\eqnsmall{n}$ calibration samples, \citeauthor{bates2021distribution,angelopoulos2021learn} provide valid p-values as follows:
\vspace{-.5em}
\begin{equation}
\small
    p_j \hspace{-0.15em} = \hspace{-0.15em} \min \hspace{-0.15em} \left\{ \hspace{-0.15em} \exp \hspace{-0.15em} \left\{ \hspace{-0.15em} -n h(\hat{R}(\lambda_j),\alpha) \hspace{-0.15em} \right\} \hspace{-0.15em}, \hspace{-0.15em} e\sP\left[ \text{Bin}(n,\alpha)\le \lceil n \hat{R}(\lambda_j) \rceil \right]  \hspace{-0.15em}\right\} \hspace{-0.15em},
\end{equation}
where $\eqnsmall{h(a,b)=a\log(\nicefrac{a}{b})+ (1-a)\log(\nicefrac{(1-a)}{(1-b)})}$.
Let $\eqnsmall{J}$ be the index set of false null hypothesis. 
An algorithm $\eqnsmall{\mathcal{A}:[0,1]^N \mapsto 2^{{1, \ldots, N}}}$ is a family-wise error rate (FWER)-controlling algorithm at level $\delta$ if $\eqnsmall{\sP\left[ \mathcal{A}(p_1, \ldots, p_N) \subseteq J \right] \ge 1 - \delta}$. They finally show that:
\vspace{-.5em}
\begin{equation}
    \small
    \sP\left[ \sup_{\lambda \in \hat{\Lambda}} \left\{ R(\lambda) \right\} \le \alpha \right] \ge 1- \delta,
\end{equation}
where {\eqsmall $\hat{\Lambda}$} denotes the output of the FWER algorithm $\mathcal{A}$.

\ifbool{IsTwoColumn}{
\begin{figure*}[th]
    \centering
    \includegraphics[width=0.7\linewidth]{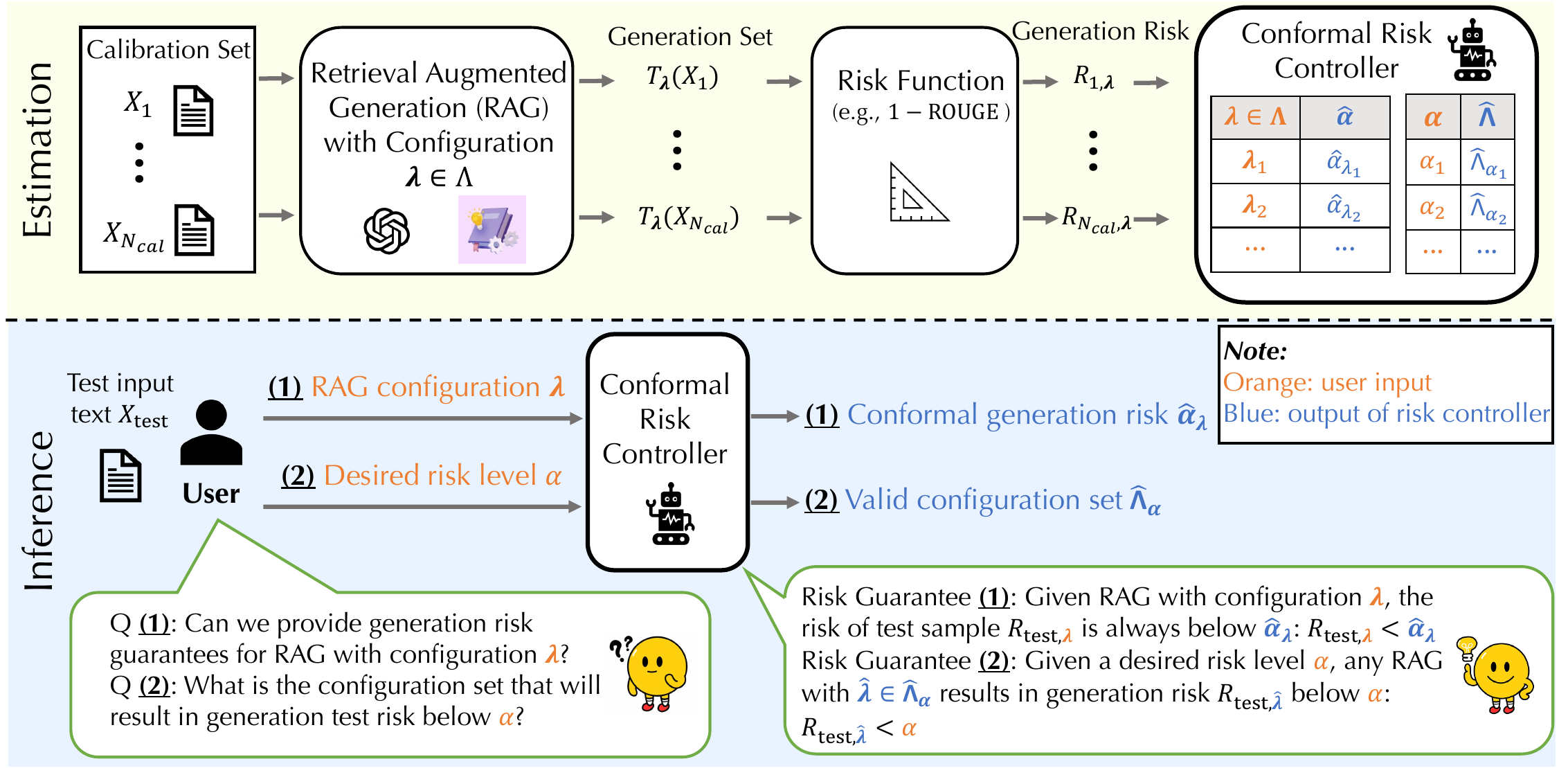}
    \vspace{-0.2em}
    \caption{\small 
    Overview of \name. 
    In the estimation stage (upper row), the conformal risk controller computes conformal generation risks for different RAG configurations (\Cref{risk_guarantee_1}), and valid configuration sets for different risk levels (\Cref{risk_guarantee_2}), both based on risk statistics on the calibration set. 
    In the inference stage (lower row), for any configuration $\eqnsmall{\vlambda}$ and any desired risk level $\eqnsmall{\alpha}$ provided by users, the conformal risk controller outputs the conformal generation risk $\eqnsmall{\hat{\alpha}_\vlambda}$ with Risk Guarantee \textbf{\underline{(1)}} and the configuration set $\eqnsmall{\hat{\Lambda}_\alpha}$ with Risk Guarantee \textbf{\underline{(2)}}.
    }
    \vspace{-0.5em}
    \label{fig:framework}
\end{figure*}
}
{
\begin{figure*}[th]
    \centering
    \includegraphics[width=1.0\linewidth]{figures/fig1.pdf}
    \caption{
    Overview of \name. 
    In the estimation stage (upper row), the conformal risk controller computes conformal generation risks for different RAG configurations (\Cref{risk_guarantee_1}), and valid configuration sets for different risk levels (\Cref{risk_guarantee_2}), both based on risk statistics on the calibration set. 
    In the inference stage (lower row), for any configuration $\eqnsmall{\vlambda}$ and any desired risk level $\eqnsmall{\alpha}$ provided by users, the conformal risk controller outputs the conformal generation risk $\eqnsmall{\hat{\alpha}_\vlambda}$ with Risk Guarantee \textbf{\underline{(1)}} and the configuration set $\eqnsmall{\hat{\Lambda}_\alpha}$ with Risk Guarantee \textbf{\underline{(2)}}.
    }
    \label{fig:framework}
\end{figure*}
}

\vspace{\stspace}
\section{Conformal generation risks of RAG models}
\label{sec:cgen}
\vspace{\pspace}
We introduce the problem setup in \Cref{sec:setup_not}, our constrained generation protocol for RAG models in \Cref{sec:rag_proto}, and the conformal generation risks in \Cref{sec:crc1}.
We prove that (1) Given a RAG configuration, \name provides a high-probability generation risk upper bound in \Cref{risk_guarantee_1}, and (2) Given a desired risk level $\eqnsmall{\alpha}$, \name offers a set of configurations that can provably maintain the risk below $\eqnsmall{\alpha}$ in \Cref{risk_guarantee_2}.
\vspace{\stspace}
\subsection{Problem setup}
\label{sec:setup_not}
\vspace{\pspace}
For a pretrained language model (LM) and any user input text, we aim to provide rigorous guarantees for the generation risks (e.g., {\eqsmall$1-\text{ROUGE}$}). 
To achieve this, we develop a constrained generation protocol for RAG models.
The generation protocol is governed by adjustable parameter configurations (e.g., number of retrieved examples, size of generations), which allow for more controlled RAG generations to achieve a desired risk level.

\vspace{\pspace}
Formally, we let $\eqnsmall{\gV}$ be the vocabulary set, $\eqnsmall{n_I}$ be the maximal length of input text and $\eqnsmall{n_O}$ be the maximal length of output text.
Let $\eqnsmall{\gX:=\gV^{n_I}}$ be the input text space, and $\eqnsmall{\gY:=\gV^{n_O}}$ be the output text space.
We notate $\eqnsmall{p_{\theta_l}(y | x)~(x \in \gX, y \in \gY)}$ as the probability distribution of output text $y$ given input text $x$, estimated by a pretrained LM parameterized by $\eqnsmall{\theta_l}$. 
Consider a RAG generation protocol $\eqnsmall{T_{\vlambda,p_{\theta_l}}: \gX \mapsto 2^\gY}$ with LM $\eqnsmall{\theta_l}$ and parameter configuration $\eqnsmall{\vlambda \in \Lambda=\sR^{B}}$, where $\eqnsmall{B}$ is the maximal number of parameters to control the generation procedure.
To evaluate the quality of the generation under $\eqnsmall{T_{\vlambda,p_{\theta_l}}(x)}$ given input $x$, we define a risk function $\eqnsmall{R(T_{\vlambda,p_{\theta_l}}(x), y): 2^\gY \times \gY \mapsto [0,1]}$, where $y$ is the reference text of $x$. For text generation tasks, a typical selection of the risk function could be $\eqnsmall{1- \max_{y' \in T_{\vlambda,p_{\theta_l}}(x)}\text{ROUGE}(y',y)}$, 
where $\eqnsmall{\text{ROUGE}}$ measures the matching score between the generation $\eqnsmall{y'}$ and reference text $\eqnsmall{y}$.
Notably, our generation protocol outputs a set of generations instead of just one, allowing us to explore better generations and adjust the generation set size through a parameter for risk control.
\vspace{\stspace}
\subsection{Constrained generation protocol for RAG models}
\label{sec:rag_proto}
\vspace{\pspace}
RAG models \cite{wang2023learning,rubin2021learning,huang2023raven} combine a retrieval model and a generation LM. The retrieval model retrieves $\eqnsmall{N_{\text{rag}}}$ relevant examples to the query from an external knowledge base, and the LM learns in-context from these examples. The knowledge base contains $\eqnsmall{N_{\text{ext}}}$ samples in $\eqnsmall{\hat{\gD}_{\text{ext}}=\{(X_i,Y_i)\}_{i=1}^{N_{\text{ext}}}}$. 
The retrieval model uses an encoder to map instances into an embedding space, and then identifies the relevant examples to the query $\eqnsmall{X_{\text{test}}}$ based on similarity. This similarity, defined by $\eqnsmall{s_{\theta_r}(\cdot,\cdot): \gX \times \gX \mapsto \sR}$ and parameterized by $\eqnsmall{\theta_r}$, is used to find the nearest examples using KNN search in the embedding space.

\vspace{\pspace}
We arrange the retrieved $\eqnsmall{N_{\text{rag}}}$ in-context examples and the test example $\eqnsmall{X_{\text{test}}}$ into augmented input text $\eqnsmall{X^{\text{(rag)}}}$ using a template.
We then sample the generation from $\eqnsmall{p_{\theta_l}(\cdot|X^{\text{(rag)}})}$ repeatedly until $\eqnsmall{\lambda_g}$ generations are collected. To control the diversity of generations, we reject those with a similarity higher than a threshold $\eqnsmall{\lambda_s}$ to the previous generations. 
In essence, the constrained generation protocol is controlled by configuration $\eqnsmall{\vlambda=[N_{\text{rag}}, \lambda_g, \lambda_s]}$ and output a generation set $\eqnsmall{T_{\vlambda,p_{\theta_l}}(x)}$ 
based on the configuration $\eqnsmall{\vlambda}$ and input $\eqnsmall{x}$. We refer to \Cref{alg:gen_pro} in \Cref{app:cons_gen_rag} for the pseudocode of the protocol.

\vspace{\stspace}
\subsection{Conformal generation risks for RAG models}
\label{sec:crc1}
\vspace{\pspace}

We certify generation risks of the RAG models with the constrained generation protocol {\eqsmall$\eqsmall T_{\vlambda,p_{\theta_l}}$} via conformal risk analysis \cite{bates2021distribution,angelopoulos2022conformal,angelopoulos2021learn}.
Conformal analysis provably controls the generation risks based on test statistics from in-distribution calibration samples.
In this work, we consider a calibration set {\eqsmall$\hat{\gD}_{\text{cal}}= \{(X_i,Y_i)\}_{i=1}^{N_{\text{cal}}}$} with size {\eqsmall$\eqsmall N_{\text{cal}}$}, and 
compute the empirical generation risk $\eqnsmall{\hat{R}(\hat{\gD}_{\text{cal}}) = \nicefrac{1}{N_{\text{cal}}} \sum_{(x,y) \in \hat{\gD}_{\text{cal}}} R(T_{\vlambda,p_{\theta_l}}(x),y)}$.
\vspace{\pspace}
\vspace{\pspace}
\vspace{-1.em}
\paragraph{Risk Guarantees for RAG Models} For an LM $\eqnsmall{\theta_l}$, calibration set $\eqnsmall{\hat{\gD}_{\text{cal}}}$, test sample $\eqnsmall{(X_{\text{test}}, Y_{\text{test}})}$, generation protocol $\eqnsmall{T_{\vlambda,p_{\theta_l}}}$ with configuration $\vlambda$ and confidence level $\eqnsmall{1-\delta~(\delta \in [0,1])}$, 
\name provides two types of generation risk guarantees for RAG models:

\begin{proposition}[Risk Guarantee \textbf{\underline{(1)}}, adaptation of \cite{bates2021distribution} to constrained RAG generation]
\label{risk_guarantee_1}
    Given a configuration $\eqnsmall{\vlambda}$ in generation protocol, \name guarantees that: 
\vspace{\pspace}
\begin{equation}
    \eqsmall
    \label{eq:conformal_guarantee}
    \sP \left[R(T_{{\vlambda},p_{\theta_l}}(x), y) \le \hat{\alpha}_\vlambda \right] \ge 1-\delta,
    \vspace{\pspace}
\end{equation}
\noindent where the high-probability risk upper bound $\eqnsmall{\hat{\alpha}_\vlambda}$, the so-called \textbf{conformal generation risk}, is given by:
\vspace{\pspace}
\begin{equation*} \label{eq:conformal_risk_}
    \eqnsmall{\hat{\alpha} = \min \left\{ h^{-1}\left(\dfrac{\ln(1/\delta)}{N_{\text{cal}}};\hat{R}(\hat{\gD}_{\text{cal}})\right),\Phi^{-1}_{\text{bin}}\left(\dfrac{\delta}{e};N_{\text{cal}},\hat{R}(\hat{\gD}_{\text{cal}})\right) \right\}}
    \vspace{\pspace}
\end{equation*}
with $\eqnsmall{h^{-1}(\cdot ;\cdot)}$ as the partial inverse $\eqnsmall{h^{-1}(h(a,b);a)=b}$ of $\eqnsmall{h(a,b)=a\log(\nicefrac{a}{b})+ (1-a)\log(\nicefrac{(1-a)}{(1-b)})}$, and $\eqnsmall{\Phi^{-1}_{\text{bin}}}$ as the inverse of binomial cumulative distribution function (CDF). 

\end{proposition}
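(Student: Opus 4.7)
The plan is to specialize the multi-hypothesis conformal risk control framework recalled in Section 3 to a single fixed configuration $\vlambda$, so that family-wise error rate control at level $\delta$ collapses to a single hypothesis test. Concretely, I would set $N=1$ and consider the single null hypothesis $\mathcal{H}_{\vlambda}: R(\vlambda) > \hat{\alpha}_\vlambda$, where $R(\vlambda) \coloneqq \mathbb{E}[R(T_{\vlambda,p_{\theta_l}}(X),Y)]$ is the population generation risk induced by the constrained protocol of Section 4.2. Because $(X_{\text{test}},Y_{\text{test}})$ and the calibration samples $\{(X_i,Y_i)\}_{i=1}^{N_{\text{cal}}}$ are exchangeable and the risk function $R(\cdot,\cdot)$ takes values in $[0,1]$, the per-sample losses $R(T_{\vlambda,p_{\theta_l}}(X_i),Y_i)$ are i.i.d.\ bounded random variables, which lets me invoke the valid $p$-value of Bates et al.\ verbatim.

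The core of the argument is then simply to invert this $p$-value in $\hat{\alpha}$. Recall that for any candidate risk level $\alpha$,
\begin{equation*}
p(\alpha) = \min\!\left\{ \exp\!\left(-N_{\text{cal}}\, h(\hat{R}(\hat{\gD}_{\text{cal}}),\alpha)\right),\ e\,\sP\!\left[\mathrm{Bin}(N_{\text{cal}},\alpha) \le \lceil N_{\text{cal}}\,\hat{R}(\hat{\gD}_{\text{cal}}) \rceil\right] \right\}
\end{equation*}
is a valid $p$-value for $\mathcal{H}_\vlambda$, so whenever $p(\hat{\alpha}_\vlambda) \le \delta$ we have $\sP[R(\vlambda) \le \hat{\alpha}_\vlambda] \ge 1-\delta$. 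I would therefore solve each branch at equality: the inequality $\exp(-N_{\text{cal}}\, h(\hat{R},\alpha)) \le \delta$ rearranges to $h(\hat{R},\alpha) \ge \ln(1/\delta)/N_{\text{cal}}$, which, using the partial inverse $h^{-1}(\cdot;\hat{R})$ on the half-line $\alpha \ge \hat{R}$ where $b\mapsto h(\hat{R},b)$ is strictly increasing, yields $\alpha \ge h^{-1}(\ln(1/\delta)/N_{\text{cal}};\hat{R})$; similarly $e\,\sP[\mathrm{Bin}(N_{\text{cal}},\alpha) \le \lceil N_{\text{cal}}\,\hat{R}\rceil] \le \delta$ gives $\alpha \ge \Phi^{-1}_{\text{bin}}(\delta/e;N_{\text{cal}},\hat{R})$ by monotonicity of the binomial CDF in its success probability. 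Since the $p$-value is the minimum of the two upper bounds, it suffices for $\hat{\alpha}_\vlambda$ to exceed the \emph{smaller} of the two inverted thresholds, giving exactly the expression in the statement.

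Finally, I would translate the population-level guarantee $\sP[R(\vlambda) \le \hat{\alpha}_\vlambda] \ge 1-\delta$ into the per-test-sample form displayed in the proposition. This follows immediately by taking a further expectation over $(X_{\text{test}},Y_{\text{test}})$ conditional on the calibration set, together with the tower property; alternatively, one can observe that the exchangeability argument already yields the statement with $R(T_{\vlambda,p_{\theta_l}}(x),y)$ replaced by the sample-level loss whose expectation is $R(\vlambda)$.

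The only step that requires care, and is the main obstacle, is the monotonicity bookkeeping needed to legitimately invert each branch of $p(\alpha)$. The function $b\mapsto h(a,b)$ on $[0,1]$ is not globally monotone in $b$, but it is strictly increasing on $b \ge a$, which is precisely the regime of interest because a useful upper bound must satisfy $\hat{\alpha}_\vlambda \ge \hat{R}(\hat{\gD}_{\text{cal}})$ (otherwise the $p$-value is trivially large); I would explicitly restrict to this branch to make the partial inverse $h^{-1}(\cdot;\hat{R})$ well-defined, and handle the binomial branch analogously by noting that $\alpha \mapsto \sP[\mathrm{Bin}(N_{\text{cal}},\alpha)\le k]$ is strictly decreasing in $\alpha$ for any fixed $k < N_{\text{cal}}$, so its inverse at level $\delta/e$ is unambiguous.
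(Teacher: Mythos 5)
Your proposal is correct and follows essentially the same route as the paper: the valid $p$-value you invoke from Bates et al.\ is precisely the minimum of the Hoeffding and Bentkus tail bounds that the paper's proof states directly, and inverting that $p$-value at level $\delta$ is the same inversion the paper performs to obtain $\hat{\alpha}$. Your additional care about the monotonicity of $b\mapsto h(a,b)$ on $b\ge a$ and of the binomial CDF in its success probability is a welcome tightening of bookkeeping the paper leaves implicit, but it does not change the argument.
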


\vspace{\pspace}
\begin{proposition}[Risk Guarantee \textbf{\underline{(2)}}, adaptation of \cite{angelopoulos2021learn} to constrained RAG generation]
\label{risk_guarantee_2}
Given a desired risk level $\eqnsmall{\alpha}$, \name computes a configuration set $\eqnsmall{\hat{\Lambda}_\alpha}$ such that each configuration in $\eqnsmall{\hat{\Lambda}_\alpha}$ is guaranteed to keep the generation risk below $\eqnsmall{\alpha}$. Namely, 
\vspace{\pspace}
\begin{equation}\label{eq:conformal_guarantee_2}
\vspace{\pspace} 
    \eqnsmall{\sP \left[ \sup_{ \hat{\vlambda} \in \hat{\Lambda}_\alpha} \left\{R\left(T_{\hat{\vlambda},p_{\theta_l}}(x),y \right)  \right\} \le \alpha \right] \ge 1 - \delta},
    \vspace{\minspace}
\end{equation}
where the valid configuration set $\eqnsmall{\hat{\Lambda}_\alpha}$ is given by family-wise error rate controlling algorithms such as Bonferroni correction: $\eqnsmall{\hat{\Lambda}_\alpha = \{ \hat{\vlambda}_j: p_j \le \delta / |\Lambda| \}}$
where $\eqnsmall{p_j}$ is the $\eqsmall p$-value of the null hypothesis: $\eqnsmall{\gH_j: R(T_{\vlambda,p_{\theta_l}}(x),y)>\alpha~(j \in \{1,...,|\Lambda|\})}$ and can be computed by finite-sample valid bounds as shown in \Cref{app:fwer}.
\end{proposition}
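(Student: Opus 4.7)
The plan is to instantiate the general conformal FWER framework recalled in \Cref{sec:prelim} for the constrained RAG generation setting. Since the configuration space $\eqnsmall{\Lambda}$ is finite, I would enumerate the configurations as $\eqnsmall{\vlambda_1,\ldots,\vlambda_{|\Lambda|}}$ and associate to each the null hypothesis $\eqnsmall{\gH_j: R(T_{\vlambda_j,p_{\theta_l}}(x),y)>\alpha}$ on the population risk. The calibration statistic $\eqnsmall{\hat{R}(\hat{\gD}_{\text{cal}})}$ defined in \Cref{sec:crc1} then plays the role of the empirical risk in the preliminaries, reducing risk control for RAG to a standard multiple-hypothesis-testing problem.

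First I would verify that the formula given for $\eqnsmall{p_j}$ is a valid (super-uniform) p-value under $\eqnsmall{\gH_j}$. Because the $\eqnsmall{N_{\text{cal}}}$ calibration samples are i.i.d.\ from the test distribution and the per-sample risk lies in $\eqnsmall{[0,1]}$, the Hoeffding and Bentkus tail inequalities applied to the sample mean $\eqnsmall{\hat{R}(\hat{\gD}_{\text{cal}})}$ yield exactly the two quantities whose minimum defines $\eqnsmall{p_j}$; super-uniformity is preserved under the minimum, so $\eqnsmall{\sP[p_j \le u \mid \gH_j] \le u}$ for every $\eqnsmall{u \in [0,1]}$, as in \citeauthor{bates2021distribution} and \citeauthor{angelopoulos2021learn}. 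This lets me import the generic FWER machinery directly into our setting.

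Second I would apply Bonferroni correction as the FWER-controlling algorithm. Let $\eqnsmall{J=\{j: R(T_{\vlambda_j,p_{\theta_l}}(x),y)>\alpha\}}$ be the set of true-null indices. A union bound over $\eqnsmall{J}$, using the validity of each $\eqnsmall{p_j}$ established in the previous step, yields
\begin{equation*}
    \sP\!\left[\exists\, j\in J:\ p_j \le \delta/|\Lambda|\right]\ \le\ \sum_{j\in J}\frac{\delta}{|\Lambda|}\ \le\ \delta.
\end{equation*}
On the complementary event no true null is rejected, so every $\eqnsmall{\hat{\vlambda}\in\hat{\Lambda}_\alpha}$ satisfies $\eqnsmall{R(T_{\hat{\vlambda},p_{\theta_l}}(x),y)\le\alpha}$, and taking the supremum over the (random) set $\eqnsmall{\hat{\Lambda}_\alpha}$ yields precisely \Cref{eq:conformal_guarantee_2}. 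The same argument transfers verbatim to any other FWER-controlling procedure (e.g.\ sequential graphical testing), since only the FWER level is invoked.

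The main obstacle is the super-uniformity check in the first step: confirming that the Hoeffding--Bentkus p-value remains valid for the RAG-specific risk $\eqnsmall{R(T_{\vlambda,p_{\theta_l}}(x),y)}$ under the constrained generation protocol of \Cref{sec:rag_proto}. This is immediate from the boundedness $\eqnsmall{R\in[0,1]}$ and the exchangeability of calibration and test samples, so no new concentration analysis is required; the remainder of the proof is bookkeeping that casts the configuration search as multiple hypothesis testing and translates FWER control back into the supremum-of-risk guarantee stated in \Cref{risk_guarantee_2}.
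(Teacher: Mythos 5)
Your proof is correct, and it differs from the paper's argument in one substantive way. The paper establishes FWER control by treating the $|\Lambda|$ hypothesis tests as \emph{independent}: it computes the probability of no Type I error as the product $(1-\delta/|\Lambda|)^{|\Lambda|}$ and concludes that the FWER is $1-(1-\delta/|\Lambda|)^{|\Lambda|}\le\delta$. You instead use the union bound over the set $J$ of true nulls, $\sP[\exists\, j\in J: p_j\le\delta/|\Lambda|]\le |J|\,\delta/|\Lambda|\le\delta$, which is the textbook Bonferroni argument and holds under \emph{arbitrary} dependence among the p-values. Your route is actually the more defensible one here: all $p_j$ are computed from the same calibration set $\hat{\gD}_{\text{cal}}$, so they are strongly dependent and the independence premise in the paper's product computation is not justified, whereas your union bound needs nothing beyond super-uniformity of each $p_j$ under its null. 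You also make explicit the step the paper leaves implicit, namely that the Hoeffding--Bentkus quantity is a valid p-value for the RAG-specific risk because the risk is bounded in $[0,1]$ and calibration/test samples are exchangeable; that check is exactly what is needed to import the generic FWER machinery. Both arguments reach the same conclusion, and the translation from ``no true null rejected'' to the supremum-of-risk statement in \Cref{eq:conformal_guarantee_2} is identical in the two proofs.
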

\vspace{\pspace}
\vspace{\pspace}
\vspace{\pspace}
\vspace{\pspace}
\vspace{-0.7em}
\paragraph{Connection between Risk Guarantees \underline{(1)} and \underline{(2)}} Risk Guarantee \textbf{\underline{(1)}} computes the conformal generation risk (risk upper bound) $\eqnsmall{\hat{\alpha}_\vlambda}$ given a configuration $\eqnsmall{\vlambda}$, while Risk Guarantee \textbf{\underline{(2)}} computes a configuration set $\eqnsmall{\hat{\Lambda}_\alpha}$ such that any configuration in the set results in a risk below the desired level $\eqnsmall{\alpha}$. 
Risk Guarantee \textbf{\underline{(2)}} can be conceptualized as accepting configurations with generation risks statistically below $\eqnsmall{\alpha}$ with a certain error rate (p-value), such that the union of error rates over parameter space is within the uncertainty budget $\eqnsmall{\delta}$.
The Bonferroni correction in \Cref{risk_guarantee_2}
adopts an even partition of the uncertainty budget, while we can have a dynamic partition algorithm based on graph search (see \Cref{app:fwer}).
Therefore, Risk Guarantee \textbf{\underline{(1)}} and \textbf{\underline{(2)}} are connected by the duality between p-values and confidence intervals \cite{bates2021distribution}. We mainly focus on the conformal analysis of Risk Guarantee \textbf{\underline{(1)}} in the following, and the results can be extrapolated to Risk Guarantee \textbf{\underline{(2)}} directly.
We defer the proofs of \Cref{risk_guarantee_1,risk_guarantee_2} to \Cref{app:pre}.
\vspace{\stspace}
\vspace{-0.7em}
\paragraph{Advance of \name compared to conformal controlling methods \cite{bates2021distribution,angelopoulos2021learn,angelopoulos2022conformal}} Existing conformal risk analysis assumes that test and calibration samples come from the same distribution, which allows statistical risk predictions for test samples based on the calibration data. While the conformal generation risk bounds are previously studied \cite{angelopoulos2022conformal, farinhas2023nonexchangeable}, the scope is limited to the monotonic risk functions.
In this work, we extend the scope to provide the first conformal generation risk analysis under test-time distribution shifts for general bounded risk functions in \Cref{sec:distribution_shift}.
In addition, we propose a constrained RAG generation protocol for enhanced effectiveness and efficiency of risk controlling for LLM generations, as illustrated in \Cref{sec:cgen}.
We also prove that RAG achieves a lower conformal generation risk than vanilla LLMs under scenarios with or without distribution shifts in \Cref{thm:gene_rag,thm:comp_shft}.

\vspace{\stspace}
\section{Theoretical analysis of \name}
\vspace{\pspace}
\label{sec:rag_conf}
In this section, we prove that RAG model achieves a lower conformal generation risk compared to LLMs without retrievals and its benefits are correlated with the quality of the retrieval model and transformer. 
We provide the structure of our theoretical analysis and conclusions in \Cref{fig:certi}. 
\vspace{\stspace}
\vspace{\minspace}
\ifbool{IsTwoColumn}{
\vspace{-1em}
    \begin{figure}[ht]
        \centering
        \includegraphics[width=\linewidth]{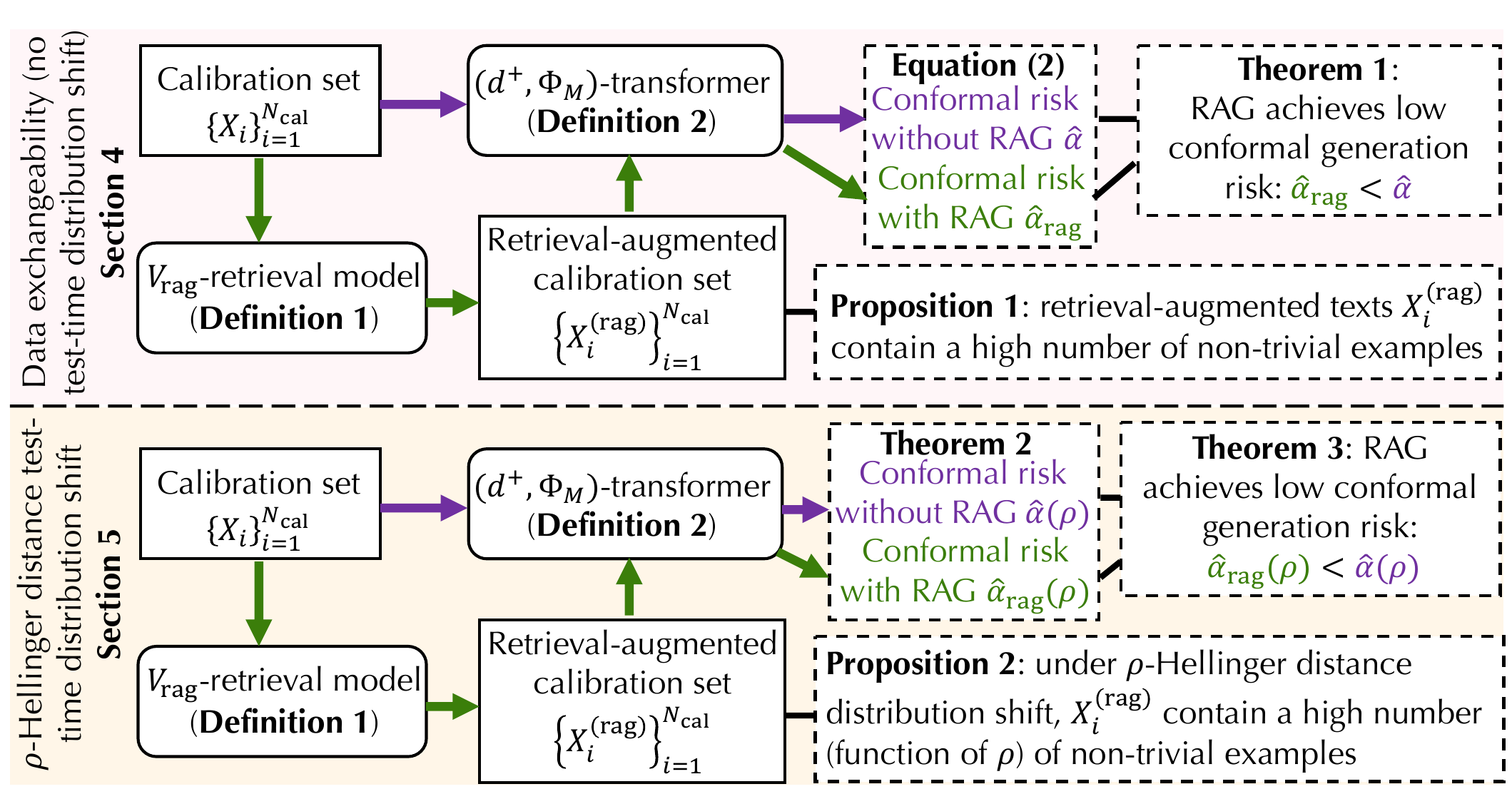}
        \vspace{\tspaceup}
        \vspace{-1.5em}
        \caption{\small Certification framework of \name. We provide theoretical results with the data exchangeability assumption in \Cref{sec:rag_conf} (upper row) and extend the results to more complex scenarios under test-time distribution shifts in \Cref{sec:distribution_shift} (lower row).}
        \label{fig:certi}
    \end{figure}
    \vspace{-1em}
}{
      \begin{figure}[ht]
        \centering
        \includegraphics[width=0.65\linewidth]{figures/certification.pdf}
        \vspace{-2.3em}
        \caption{Certification framework of \name. We provide theoretical results with the data exchangeability assumption in \Cref{sec:rag_conf} (upper row) and extend the results to more complex scenarios under test distribution shifts in \Cref{sec:distribution_shift} (lower row).}
        \label{fig:certi}
    \end{figure}
}
\vspace{\stspace}
\vspace{\pspace}
\vspace{\pspace}
\vspace{\minspace}
\subsection{Analysis setup} 
\label{sec:rag_setup}
\vspace{\pspace}

For our analysis, paralleling the previous transformer studies by \cite{von2023transformers,zhang2023trained,han2023context}, we consider a one-layer self-attention transformer parameterized with the embedding matrix $\eqnsmall{W_E: \gV \mapsto \sR^{d_1}}$, query matrix $\eqnsmall{W_Q: \sR^{d_1} \mapsto \sR^{d_2}}$, key matrix $\eqnsmall{W_K: \sR^{d_1} \mapsto \sR^{d_2}}$, value matrix $\eqnsmall{W_V: \sR^{d_1} \mapsto \sR^{d_2}}$, projection matrix $\eqnsmall{W_P: \sR^{d_2} \mapsto \Delta^{|\gV|}}$, and treat each instance approximately as a single token. 
The retrieval-augmented input text then consists of $\eqnsmall{N_{\text{rag}}}$ retrieved examples and $\eqnsmall{1}$ query example. We denote the augmented input text by $\eqnsmall{\vq \in \gV^{N_{\text{rag}}+1}}$.
We categorize pairs of queries $\eqnsmall{\vq_i,\vq_j~(i,j \in [N_{\text{rag}}+1])}$ as positive if they convey identical semantic meanings, indicated by $\eqnsmall{g(\vq_i)=g(\vq_j)}$. In this context, $\eqnsmall{\vq_i}$ can be referred to as a \textbf{positive example} of $\eqnsmall{\vq_j}$. Conversely, pairs $\eqnsmall{\vq_i,\vq_j~(i,j \in [N_{\text{rag}}+1])}$ are considered negative if they are semantically different.
We use such a definition for clear interpretation of our findings, but our analysis can be extended to include broader definitions of positive pairs, as addressed in the remarks of \Cref{thm:gene_rag}.

\vspace{\pspace}
Following the single-layer transformer formulation in \cite{von2023transformers}, given an input text $\eqnsmall{\vq}$, we consider the single-token output $\eqnsmall{O^{\text{(rag)}}(\vq) \in \Delta^{|\gV|}}$ corresponding to the query example $\eqnsmall{\vq_{N_{\text{rag}+1}}}$ at the last position, formulated as:
\ifbool{IsTwoColumn}{
\vspace{\stspace}
 \begin{equation}
    \eqsmall
    \begin{aligned}
        O^{\text{(rag)}}(\vq) &=  \sigmoid \big( W_P W_V \big\{ \underbrace{W_E \vq_{N_{\text{rag}+1}}  }_{\text{residual} } + \\ & (W_E \vq) \underbrace{\sigma ( (W_K W_E \vq)^T (W_Q W_E \vq_{N_{\text{rag}+1}}) ) }_{\text{attention scores to } \vq_{N_{\text{rag}+1}}} \big\} \big),
    \end{aligned}
    \vspace{\stspace}
    \vspace{\pspace}
    \vspace{+0.15em}
    \end{equation} 
}{
    \begin{equation}
    \eqsmall
    \begin{aligned}
        O^{\text{(rag)}}(\vq) =  \sigmoid \left( W_P W_V \left\{ W_E \vq_{N_{\text{rag}+1}}  + (W_E \vq) \sigma \left( (W_K W_E \vq)^T W_Q W_E \vq_{N_{\text{rag}+1}} \right) \right\} \right),
    \end{aligned}
    \end{equation} 
}
where $\eqnsmall{\sigma(\cdot)}$ is the Softmax function. Note that without RAG, the output probability vector $\eqnsmall{O(\vq)}$ is formulated as $\eqnsmall{O(\vq) = \sigmoid \left( W_P W_V W_E \vq_{N_{\text{rag}+1}} \right)}$.

\vspace{\stspace}
\subsection{Retrieval quality analysis}
\label{sec:retr_analysis}
\vspace{\pspace}
To quantify the quality of retrieval models, we introduce the concept of $\eqnsmall{V_{\text{rag}}}$-retrieval model, where $\eqnsmall{V_{\text{rag}}}$ measures the variance of the contrastive loss of the retrieval model. A small $\eqnsmall{V_{\text{rag}}}$ implies a well-trained low-variance retrieval model and can be theoretically linked to the retrieval quality, 
which is measured by the number of retrieved positive examples with respect to the query text.

\begin{definition}[$\eqnsmall{V_{\text{rag}}}$-retrieval model]
\label{def:ret_mod}
    Consider a retrieval model with similarity measurement $\eqnsmall{s_{\theta_r}(\cdot,\cdot)}$ parameterized with $\eqnsmall{\theta_r}$ and trained with contrastive loss $\eqnsmall{\gL_{\text{cont}}}$.
    Let $\eqnsmall{x^+},\eqnsmall{x^-}$ be positive and negative samples to sample $\eqnsmall{x}$. Consider common contrastive loss 
    $\eqnsmall{\gL_{\text{cont}}=-\log\left(\sigmoid_{\text{sig}}(\exp\{s_\theta(x,x^-)-\exp\{s_\theta(x,x^+))\right)}$, where $\eqnsmall{\sigmoid_{\text{sig}}(\cdot)}$ is the sigmoid function. We define a $\eqnsmall{V_{\text{rag}}}$-retrieval model as the retrieval model with (a) a non-trivial utility such that the expected contrastive loss $\eqnsmall{L_\tau}$ is better than random: $\eqnsmall{L_\tau:=\mathbb{E}[\gL_{\text{cont}}]<\ln2}$ (i.e., $\eqnsmall{\mathbb{E}[s_\theta(x,x^+)-s_\theta(x,x^-)]>0}$); and (b) bounded variance such that the training is stable and converges well: 
    $\eqnsmall{V_{\text{rag}} := {\mathbb{V}[s_\theta(x,x^+)-s_\theta(x,x^-)]^{1/2}} {\log(\exp\{L_\tau\}-1)}<1}$
\end{definition}

\begin{remark}
    {\underline{(R1)}} Note that a retrieval model with random initialization can achieve $\eqnsmall{\mathbb{E}[s_\theta(x,x^+)]=\mathbb{E}[s_\theta(x,x^-)]}$ asymptotically. We merely assume a $\eqnsmall{V_{\text{rag}}}$-retrieval model that can non-trivially differentiate the positive from negative examples.
    {\underline{(R2)}} 
    We also assume a moderate stability with bounded variance, which implicitly assumes a moderate generalization of the retrieval model based on the variance-generalization link \cite{lam2016robust,gotoh2018robust,namkoong2017variance}.
    This is essential for the analysis as the knowledge base distribution is non-identical to the calibration/test distribution.
    {\underline{(R3)}} We define $\eqnsmall{V_{\text{rag}}}$-retrieval model using a standard contrastive loss in \cite{wang2023learning,rubin2021learning}, 
    but it can be adapted for other contrastive loss such as triplet loss \cite{hermans2017defense}, by altering the logarithmic factor in the $\eqnsmall{V_{\text{rag}}}$ formula.
    We defer the proof sketches as well as the detailed proofs and remarks to \Cref{app:proof_first}. 
\end{remark}

\vspace{\pspace}
With a $\eqnsmall{V_{\text{rag}}}$-retrieval model, we show that \name can retrieve a high number of positive examples as in-context demonstrations as follows. 
\vspace{\pspace}
\begin{proposition}[lower bound of the number of retrieved positive examples]
\label{lem:rag}
    Consider the $\eqnsmall{V_{\text{rag}}}$-retrieval model in \Cref{def:ret_mod} and RAG generation protocol in \Cref{sec:rag_proto}.
    Let {\eqsmall $r^{(c)}_{\text{cal}}$} and {\eqsmall $r^{(c)}_{\text{ext}}$} be the portion of data with groundtruth output $\eqnsmall{c\in \gY}$ in the calibration data distribution and external knowledge data distribution, respectively.
    We have:
    \ifbool{IsTwoColumn}{
    \vspace{\stspace}
    \vspace{\minspace}
      \begin{equation}
        \label{eq:lem_rag}
        \eqsmall
        \begin{aligned}
            \mathbb{E}\left[ N_{\text{pos}} \right] \ge & \dfrac{9}{10}N_{\text{rag}}\left(1-  \sum_{c \in \gY} r^{(c)}_{\text{cal}} \left(N_{\text{ext}} -  r^{(c)}_{\text{ext}}N_{\text{ext}} \right. \right. \\ & \left. \left. + {\sqrt{2\ln{10}}} \right)   V_{\text{rag}}^{0.5 \left(r^{(c)}_{\text{ext}}N_{\text{ext}} - {\sqrt{2\ln{10}}} \right)}  \right)
                \vspace{\stspace}
                \vspace{\minspace}
                \vspace{\minspace}
        \end{aligned}
        \end{equation} 
    }{
      \begin{equation}
        \label{eq:lem_rag}
        \eqsmall
        \begin{aligned}
            \mathbb{E}\left[ N_{\text{pos}} \right] \ge & \dfrac{9}{10}N_{\text{rag}}\big(1-  \sum_{c \in \gY} r^{(c)}_{\text{cal}} \left(N_{\text{ext}} -  r^{(c)}_{\text{ext}}N_{\text{ext}}  + {\sqrt{2\ln{10}}} \right)   V_{\text{rag}}^{0.5 \left(r^{(c)}_{\text{ext}}N_{\text{ext}} - {\sqrt{2\ln{10}}} \right)} \big)
        \end{aligned}
        \end{equation} 
    }
    where $\eqnsmall{N_{\text{pos}}}$ is the number of retrieved positive examples, $\eqnsmall{N_{\text{rag}}}$ is the total number of retrieved examples, and $\eqnsmall{N_{\text{ext}}}$ is the number of examples in the external knowledge base.
\end{proposition}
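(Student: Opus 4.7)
My plan begins by decomposing $\mathbb{E}[N_{\text{pos}}]$ via the law of total expectation over the query's ground-truth class $c \in \gY$, weighted by $r^{(c)}_{\text{cal}}$. Conditioned on class $c$, the knowledge base contains $r^{(c)}_{\text{ext}} N_{\text{ext}}$ positive examples and $(1-r^{(c)}_{\text{ext}}) N_{\text{ext}}$ negatives, and since $N_{\text{pos}} = N_{\text{rag}} - N_{\text{neg-retrieved}}$ it suffices to upper bound the expected count of retrieved negatives. Each such retrieved negative must outrank at least one positive under the similarity $s_{\theta_r}$ in order to land inside the top-$N_{\text{rag}}$, so my goal reduces to bounding the probability of these ranking inversions using the $V_{\text{rag}}$-retrieval assumption.

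The core step is to convert \Cref{def:ret_mod} into a quantitative concentration statement on the score gap $D := s_{\theta_r}(x, x^+) - s_{\theta_r}(x, x^-)$. Starting from the log-sigmoid form of $\gL_{\text{cont}}$, a Jensen-inequality argument applied to the hypothesis $L_\tau < \ln 2$ yields a strictly positive lower bound $\mathbb{E}[D] \ge -\log(e^{L_\tau}-1) > 0$, and combining this with the variance control in \Cref{def:ret_mod} delivers a signal-to-noise bound governed by $1/V_{\text{rag}}$. A one-sided sub-Gaussian tail then shows that the event $D < \mathbb{E}[D] - \sqrt{\mathbb{V}[D]}\cdot \sqrt{2\ln 10}$ occurs with probability at most $1/10$, which is precisely the source of both the outer factor $9/10$ and the additive shift $\sqrt{2\ln 10}$ appearing in \Cref{eq:lem_rag}.

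With this per-pair concentration in hand, I would argue that a negative can appear among the top-$N_{\text{rag}}$ retrieved items only when its score simultaneously beats that of a large fraction of the $r^{(c)}_{\text{ext}}N_{\text{ext}}$ positives, an event that forces compound deviations on many (conditionally independent) comparisons $D_1, \dots, D_{r^{(c)}_{\text{ext}}N_{\text{ext}}}$. Aggregating the per-comparison failure probability via a Chernoff-type product gives the exponential factor $V_{\text{rag}}^{0.5(r^{(c)}_{\text{ext}}N_{\text{ext}} - \sqrt{2\ln 10})}$, while a first-moment count over candidate negatives contributes the prefactor $N_{\text{ext}} - r^{(c)}_{\text{ext}}N_{\text{ext}} + \sqrt{2\ln 10}$. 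Summing over $c$ weighted by $r^{(c)}_{\text{cal}}$ and subtracting the expected number of misretrieved negatives from $N_{\text{rag}}$ yields the bound in \Cref{eq:lem_rag}.

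The main obstacle is that the scores $s_{\theta_r}(x, x_i)$ for different knowledge-base items share a common query $x$, so the gaps $\{D_i\}$ are not unconditionally independent and a naive i.i.d.\ Chernoff bound is unavailable. I plan to resolve this by conditioning on $x$ and verifying, via the law of total variance, that the unconditional $V_{\text{rag}}$ still controls the conditional tails up to benign constants that can be absorbed into the $\sqrt{2\ln 10}$ slack. A secondary subtlety is that KNN retrieval is governed by order statistics rather than independent indicator events, but since only an expectation is required I would sidestep a full order-statistic analysis via a first-moment union bound over negatives. Detailed calculations are deferred to \Cref{app:proof_first}.
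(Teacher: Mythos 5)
Your overall skeleton matches the paper's: decompose over the query's class weighted by $r^{(c)}_{\text{cal}}$, turn the contrastive-loss condition into a positive expected score gap $\mathbb{E}[D]$ with $D=s_{\theta_r}(x,x^+)-s_{\theta_r}(x,x^-)$, union-bound over negatives, and take a product over the positives each negative must beat. However, there is a genuine gap in where you claim the constants $9/10$ and $\sqrt{2\ln 10}$ come from. You derive them from a one-sided sub-Gaussian tail on $D$, asserting $\sP[D<\mathbb{E}[D]-\sqrt{\mathbb{V}[D]}\sqrt{2\ln 10}]\le 1/10$. \Cref{def:ret_mod} only controls the variance of $D$, not any moment-generating function, so no sub-Gaussian tail is available; the only tool consistent with the assumptions is Chebyshev, which gives $\sP[D<0]\le \mathbb{V}[D]/\mathbb{E}[D]^2=V_{\text{rag}}^2$ (and would give only $1/(2\ln 10)\approx 0.22$, not $1/10$, for the deviation you invoke). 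This is exactly what the paper uses: the per-comparison failure probability is $V_{\text{rag}}^2$ with no additive slack at that stage.

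The actual source of $9/10$ and $\sqrt{2\ln 10}$ is a step your proposal never addresses: the knowledge base is a finite sample of size $N_{\text{ext}}$, so the empirical class proportions $\hat{r}^{(c)}_{\text{ext}}$ fluctuate around $r^{(c)}_{\text{ext}}$. The paper applies a concentration bound for the categorical distribution, $\sP[\|\hat{\vr}_{\text{ext}}-\vr_{\text{ext}}\|_1\ge \sqrt{2\ln(1/\delta_{\text{ext}})}/N_{\text{ext}}]\le\delta_{\text{ext}}$, and sets $\delta_{\text{ext}}=0.1$; this is what produces the outer factor $1-\delta_{\text{ext}}=9/10$, the reduced exponent $r^{(c)}_{\text{ext}}N_{\text{ext}}-\sqrt{2\ln 10}$ (worst-case undercount of positives), and the inflated prefactor $N_{\text{ext}}-r^{(c)}_{\text{ext}}N_{\text{ext}}+\sqrt{2\ln 10}$ (worst-case overcount of negatives). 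Without this step your bound has no mechanism to generate those corrections, and the mechanism you substitute for it is invalid under the stated assumptions. Two smaller points: the paper treats $N_{\text{pos}}$ as Binomial in $N_{\text{rag}}$ trials where each trial's failure event is that \emph{some} negative beats \emph{all} positives (top-$1$ retrieval repeated), rather than your order-statistic framing of a negative beating ``a large fraction'' of positives, which as stated does not yield the clean product $\sP[D<0]^{r^{(c)}_{\text{ext}}N_{\text{ext}}}$; and your plan to repair the dependence among the gaps $\{D_i\}$ via the law of total variance is left as an intention rather than an argument, whereas the paper simply assumes independence of the comparisons.
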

\vspace{\pspace}
\begin{remark}
\Cref{lem:rag} offers a guarantee on the minimum number of positive examples retrieved by the 
$\eqnsmall{V_{\text{rag}}}$-retrieval model. 
{\underline{(R1)}} The ratio of the retrieved examples that are positive increases at an exponential rate with respect to $\eqnsmall{N_{\text{ext}}}$, 
which suggests that expanding the external knowledge base could enhance the retrieval quality and therefore benefit in-context learning of LLMs, as shown in \cite{min2022rethinking,wang2022towards}.
For a sufficiently large $\eqnsmall{N_{\text{ext}}}$ (a common scenario in practice), the lower bound approximately scales with $\eqnsmall{0.9N_{\text{rag}}}$.
{\underline{(R2)}} 
If the knowledge base is highly long-tailed such that samples of certain reference texts are rare (i.e., small $\eqnsmall{r_{\text{ext}}^{(c)}}$), we require a larger sample size of knowledge base $\eqnsmall{N_{\text{ext}}}$ to compensate for the long-tail distribution and achieve comparable retrieval quality.
{\underline{(R3)}} A low-variance retrieval model is expected to generalize well to test distribution and increase retrieval quality. The above guarantee we provide for $\eqnsmall{\mathbb{E}[N_{\text{pos}}]}$ in relation to $\eqnsmall{V_{\text{rag}}}$ is a rigorous demonstration of this.
\end{remark}

\vspace{\stspace}
\subsection{RAG achieves provably lower conformal generation risk than a single LLM without retrieval}\label{sec:benefit_rag}
\vspace{\pspace}
Besides retrieval quality, the generation risk in RAG models is also affected by LLM quality, and to measure transformer quality, we define a $\eqnsmall{(d^+,\Phi_M)}$-transformer as follows.
\begin{definition}[$\eqnsmall{(d^+,\Phi_M)}$-transformer]
\label{def:trans}
    We assume that each in-context example $\eqnsmall{(X_i,Y_i)~(i \in [N_{\text{rag}}+1])}$ is encoded with a single token $\eqnsmall{\vq_i}$. 
    Let $\eqnsmall{\vq}$ be the retrieval-augmented input, consisting of $\eqnsmall{N_{\text{rag}}}$ retrieved in-context examples and $\eqnsmall{1}$ query example.
    We define a random variable to represent the negative prediction margin: $\eqnsmall{M = \max_{c \neq g(\vq_{N_{\text{rag}}+1})}O_c(\vq) -  O_{g(\vq_{N_{\text{rag}}+1})}(\vq)}$, where $\eqnsmall{O(\vq)}$ is the output probability vector without RAG. Let $\eqnsmall{\Phi_M(\cdot)}$ be the CDF of random variable $\eqnsmall{M}$.
     We define a $\eqnsmall{(d^+,\Phi_M)}$-transformer as a single-layer self-attention transformer with (a) non-trivial self-attention layer with $\eqnsmall{\sigma\left((W_K W_E \vq_i)^T (W_Q W_E \vq_j )\right) \ge d^+ > 0}$ for semantically identical examples with $\eqnsmall{g(\vq_i)=g(\vq_j)}$; and (b) the prediction utility that is better than random: $\eqnsmall{\int_{-1}^1 \Phi_M(v)dv > 1}$.
\end{definition}
\vspace{\pspace}
\begin{remark}
    \underline{(R1)} 
    $\eqnsmall{d^+}$ measures the minimal attention scores for positive pairs and reflects the effectiveness of the transformer's embedding, key, and query matrices. Since we always have $\eqnsmall{d^+\ge 0}$ due to the Softmax activation, the condition $\eqnsmall{d^+>0}$ only assumes a non-trivial self-attention layer.
    \underline{(R2)} The integral $\eqnsmall{\int_{-1}^1 \Phi_M(v)dv}$ measures the quality of the embedding, value, and projection matrices.
    Note that a random prediction margin $\eqnsmall{M_{\text{rand}}}$ over a uniform distribution $\eqnsmall{[-1,1]}$ results in $\eqnsmall{\int_{-1}^1 \Phi_{M_{\text{rand}}}(v)dv=1}$. Thus, $\eqnsmall{\int_{-1}^1 \Phi_M(v)dv > 1=\int_{-1}^1 \Phi_{M_{\text{rand}}}(v)dv}$ only indicates better-than-random prediction utility.
\end{remark}
\vspace{\pspace}

\vspace{\pspace}
Next, we prove that RAG in \name achieves a lower conformal generation risk than a single LLM without retrieval with high probability.
\vspace{\minspace}
\begin{theorem}[RAG reduces the conformal generation risk]
\label{thm:gene_rag}
    Consider the setup in \Cref{sec:rag_setup} as well as the \underline{$\eqnsmall{V_{\text{rag}}}$-retrieval model} in \Cref{def:ret_mod} and \underline{$\eqnsmall{(d^+,\Phi_M)}$-transformer} in \Cref{def:trans}.
    Let $\eqnsmall{r^{(c)}_{\text{cal}}}$ and $\eqnsmall{r^{(c)}_{\text{ext}}}$ be as defined in \Cref{lem:rag}.
    We show that the conformal generation risk of RAG $\eqnsmall{\hat{\alpha}_{\text{rag}}}$ is smaller than that of a single LLM $\eqnsmall{\hat{\alpha}}$ with high probability:
    \begin{equation}
    \label{eq:conclu}
    \eqsmall
        \begin{aligned}
             & \sP\left[ \hat{\alpha}_{\text{rag}} < \hat{\alpha} \right] \ge 1 - p_{\text{t}} - p_{\text{r}}, \quad \text{where} \\ & p_{\text{t}} \hspace{-0.1em} = \hspace{-0.1em} \exp \{ \hspace{-0.1em} -2N_{\text{cal}} 
             [ \underbrace{\Phi_M ( \dfrac{1}{2}{\overbrace{d^+( \hspace{-0.2em} \int_{-1}^1 \hspace{-0.5em} \Phi_M(v) dv \hspace{-0.1em} - \hspace{-0.1em} 1 )}^{\text{quality of transformers}}}N_{\text{rag}} ) \hspace{-0.1em} - \hspace{-0.1em} \Phi_M(0)}_{\text{improvement of generation quality with RAG}} ]^2 \hspace{-0.1em} \} \\ 
             & p_{\text{r}} \hspace{-0.1em} =  \hspace{-0.1em} \dfrac{25}{N_{\text{rag}}} ( 4 - 9 \underbrace{\sum_{c=1}^C r^{(c)}_{\text{cal}} (1.5 N_{\text{ext}} - r^{(c)}_{\text{ext}}N_{\text{ext}} )   V_{\text{rag}}^{0.25 r^{(c)}_{\text{ext}}N_{\text{ext}}  }}_{\text{number of retrieved negative examples}} )^{-2}
        \end{aligned}
    \end{equation}
provided that {\footnotesize $\footnotesize{N_{\text{ext}}>{2\sqrt{2 \ln10}}/{\min_c r_{\text{ext}}^{(c)}}}$}, {\footnotesize$\eqnsmall{N_{\text{rag}}>{2}/{d^+}}$} and {\footnotesize$\footnotesize{N_{\text{ext}} V_{\text{rag}}^{0.25\min_c r_{\text{ext}}^{(c)} N_{\text{ext}}}<{4}/{9}}$}.
$\eqnsmall{p_{t},p_{r}}$ are the uncertainty induced by the quality of transformer and retrieval model.
\end{theorem}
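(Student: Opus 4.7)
The plan is to reduce the comparison of conformal risks to a comparison of empirical risks, and then quantify (i) how many of the retrieved in-context examples are truly informative and (ii) by how much their presence shifts the per-sample prediction margin. First, I would observe that the conformal risk $\eqnsmall{\hat\alpha}$ returned by \Cref{risk_guarantee_1} is a monotonically nondecreasing function of the empirical calibration risk $\eqnsmall{\hat R(\hat{\gD}_{\text{cal}})}$ for fixed $\eqnsmall{N_{\text{cal}}}$ and $\eqnsmall{\delta}$ (both the HB term $\eqnsmall{h^{-1}(\ln(1/\delta)/N_{\text{cal}};\cdot)}$ and the binomial-inverse term inherit monotonicity in their last argument). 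Hence $\eqnsmall{\{\hat\alpha_{\text{rag}}<\hat\alpha\}\supseteq\{\hat R_{\text{rag}}(\hat{\gD}_{\text{cal}})<\hat R(\hat{\gD}_{\text{cal}})\}}$, and it suffices to compare the two empirical risks on the same calibration sample.

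Next, for a single test-like calibration point I would pass to the one-layer self-attention decomposition from \Cref{sec:rag_setup}: the RAG logits equal the no-RAG logits $\eqnsmall{W_P W_V W_E\vq_{N_{\text{rag}}+1}}$ (residual) plus a Softmax-weighted sum of $\eqnsmall{W_P W_V W_E \vq_i}$ over the $\eqnsmall{N_{\text{rag}}}$ retrieved tokens. Using the $\eqnsmall{(d^+,\Phi_M)}$-transformer assumption, each of the $\eqnsmall{N_{\text{pos}}}$ retrieved positive examples contributes attention mass at least $\eqnsmall{d^+}$ toward the true-class direction, while the $\eqnsmall{\int_{-1}^1\Phi_M(v)dv>1}$ condition certifies that the value/projection matrices turn each such contribution into a positive expected shift of the margin $\eqnsmall{M}$. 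Aggregating, conditional on $\eqnsmall{N_{\text{pos}}\ge\tfrac12 N_{\text{rag}}}$ the RAG margin obeys $\eqnsmall{M_{\text{rag}}\le M-\tfrac12 d^+\bigl(\int_{-1}^{1}\Phi_M(v)dv-1\bigr)N_{\text{rag}}}$, so the per-sample error probability drops from $\eqnsmall{1-\Phi_M(0)}$ to at most $\eqnsmall{1-\Phi_M\bigl(\tfrac12 d^+(\int\Phi_M-1)N_{\text{rag}}\bigr)}$. The threshold $\eqnsmall{N_{\text{rag}}>2/d^+}$ ensures this shift is in a nontrivial range of $\eqnsmall{\Phi_M}$ so the gap is strictly positive.

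The two failure probabilities are then produced by two independent concentration steps. For $\eqnsmall{p_t}$, I apply Hoeffding's inequality to the centered i.i.d.\ empirical risks on the $\eqnsmall{N_{\text{cal}}}$ calibration samples, using the per-sample gap $\eqnsmall{\Phi_M(\tfrac12 d^+(\int\Phi_M-1)N_{\text{rag}})-\Phi_M(0)}$ derived above, which gives exactly the exponential factor in the stated $\eqnsmall{p_t}$. For $\eqnsmall{p_r}$, I need to guarantee the conditioning event $\eqnsmall{N_{\text{pos}}\ge\tfrac12 N_{\text{rag}}}$: \Cref{lem:rag} already supplies a quantitative lower bound on $\eqnsmall{\mathbb{E}[N_{\text{pos}}]}$, and under the hypotheses $\eqnsmall{N_{\text{ext}}>2\sqrt{2\ln10}/\min_c r^{(c)}_{\text{ext}}}$ and $\eqnsmall{N_{\text{ext}}V_{\text{rag}}^{0.25\min_c r^{(c)}_{\text{ext}}N_{\text{ext}}}<4/9}$ this expectation exceeds $\eqnsmall{\tfrac12 N_{\text{rag}}}$ by a computable margin. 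A Chebyshev bound on $\eqnsmall{N_{\text{pos}}}$ (whose per-retrieval variance is at most $\eqnsmall{1/4}$, summing to $\eqnsmall{N_{\text{rag}}/4}$) then yields a deviation bound of the form $\eqnsmall{25/(N_{\text{rag}}\cdot(\mathrm{gap})^2)}$, matching the $\eqnsmall{p_r}$ term. A union bound over the two failure events closes the argument.

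The hard step will be the margin-shift bound in the second paragraph: the Softmax normalization couples positive contributions, negative contributions, and the residual, so I must argue carefully that (a) positive-pair attention mass lower bounded by $\eqnsmall{d^+}$ translates into a deterministic pull of the pre-Softmax logit for the true class, (b) negative in-context examples do not, in expectation, overwhelm this pull (this is where $\eqnsmall{\int\Phi_M-1>0}$ enters, as it certifies that the value matrix acts informatively on average on any in-context example sharing the latent label), and (c) averaging over the randomness of retrieval and the class proportions $\eqnsmall{r^{(c)}_{\text{cal}},r^{(c)}_{\text{ext}}}$ preserves the sign of the shift. The remaining pieces (monotone reduction, Hoeffding, Chebyshev, union bound) are essentially mechanical once the per-sample margin inequality is in place.
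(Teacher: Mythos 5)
Your proposal follows essentially the same route as the paper's proof: reduce $\sP[\hat{\alpha}_{\text{rag}}<\hat{\alpha}]$ to $\sP[\hat{R}_{\text{rag}}<\hat{R}]$ by monotonicity of the Hoeffding--Bentkus bound in the empirical risk, decompose the single-layer attention output into the residual plus positive/negative retrieved contributions to obtain the margin shift $\tfrac12 d^+(\int_{-1}^1\Phi_M(v)dv-1)N_{\text{rag}}$ conditional on $N_{\text{pos}}\ge\tfrac12 N_{\text{rag}}$, then combine a Chebyshev bound on $N_{\text{pos}}$ (via \Cref{lem:rag}) with Hoeffding on the calibration risks and a union bound. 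You also correctly isolate the genuinely delicate step — converting the attention-mass lower bound $d^+$ and the Softmax normalization into a signed shift of the prediction margin — which is exactly where the paper's argument does its work.
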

\vspace{\pspace}
\begin{remark}
{\underline{(R1)}} The probability of reduced risk with RAG ($\eqnsmall{\sP\left[ \hat{\alpha}_{\text{rag}} < \hat{\alpha} \right]}$) increases with both $\eqnsmall{N_{\text{cal}}}$, which improves risk approximation via enhanced calibration, and $\eqnsmall{N_{\text{rag}}}$ and $\eqnsmall{N_{\text{ext}}}$, which expand the scope of retrieved knowledge. {\underline{(R2)}} The reduced risk probability also increases with the transformer's quality induced by attention scores and prediction capability. \underline{{(R3)}} A low-variance retrieval model (small $\eqnsmall{V_{\text{rag}}}$) enhances generalization and reduces retrieval model uncertainty $\eqnsmall{p_r}$. \underline{{(R4)}} For certification simplicity, we define positive pairs as semantically identical examples, but this can be expanded to pairs similar in the embedding space for boosting attention scores, in-context learning, and generation quality. \underline{{(R5)}} These result can readily extend to various conformal risks such as \cite{angelopoulos2022conformal}. \underline{{(R6)}} For sufficiently large sample size {$\eqnsmall{N_{\text{ext}}}$} in the external knowledge base, we further have $\eqnsmall{\sP\left[ \hat{\alpha}_{\text{rag}} < \hat{\alpha} \right] \ge 1 - p_{t} - {25}/{16 N_{\text{rag}}}}$ 
(\Cref{thm1} in \Cref{app:asym}). In contrast to \Cref{thm:gene_rag}, the bound has no dependency on the external knowledge distribution $\eqnsmall{r_{\text{ext}}^{(c)}}$ and calibration distribution $\eqnsmall{r_{\text{cal}}^{(c)}}$.
\end{remark}

\vspace{-.5em}

\vspace{\minspace}
\section{\name under distribution shifts}
\label{sec:distribution_shift}
\vspace{\pspace}
Here, we present a valid distribution-drift conformal generation risk and prove the benefit of RAG compared to vanilla LLM under test-time distribution shifts.
\vspace{\pspace}
\vspace{-0.2em}
\subsection{Analysis setup}
\vspace{-0.2em}
\label{sec:setup_shft}
\vspace{\pspace}
Conformal risk guarantees often assume that calibration and testing samples come from the same distribution~\cite{bates2021distribution,angelopoulos2022conformal,angelopoulos2021learn}. Next, building on \Cref{sec:rag_setup}, we extend these guarantees to distribution shifts between calibration and testing.
\vspace{\pspace}
\vspace{-0.2em}
\subsection{Conformal generation risk under distribution shifts}
\vspace{-0.2em}
\label{sec:conf_shf}
\vspace{\pspace}
Under test-time distribution shifts, the certification guarantees of prior work \cite{angelopoulos2022conformal,farinhas2023non} are limited to the monotonic risk functions and to distribution shifts caused by changes in sample weights. Here, we provide generation risk certification for any bounded risk function and any distribution shift, which is as follows.
\begin{theorem}[Conformal generation risk under distribution shifts]
\label{pro:conf_shf}
    Suppose that the test instance $\eqnsmall{(X_{\text{test}},Y_{\text{test}})}$ is sampled from a shifted distribution $\eqnsmall{\gQ}$ with bounded Hellinger distance from the calibration distribution $\eqnsmall{\gD}$: $\eqnsmall{H(\gD,\gQ) \le \rho}$. 
    Let $\eqnsmall{\hat{R}=\sum_{i=1}^{N_{\text{cal}}} R(Z_i) / N_{\text{cal}}}$ be the empirical risk on calibration samples $\eqnsmall{Z_i~(i \in \{1,...,N_{\text{cal}}\})}$ and $\eqnsmall{\hat{V}={1}/{N_{\text{cal}}(N_{\text{cal}}-1)} \sum_{1\le i < j \le N_{\text{cal}}} (R(Z_i)-R(Z_j))^2}$ be the unbiased estimator of the risk variance on the calibration set.
    Then we have the following guarantee of conformal generation risk on the shifted distribution $\eqnsmall{\gQ}$: 
    \ifbool{IsTwoColumn}{
      \begin{equation}
    \label{pro:eq1}
    \eqsmall
    \begin{aligned}
        & \sP_{(X_{\text{test}},Y_{\text{test}})\sim \gQ} \left[ R(T_{{\vlambda}}(X_{\text{test}}),Y_{\text{test}})  \le \hat{\alpha}(\rho) \right] \ge 1 - \delta, \quad \text{where}\\
        & \hat{\alpha}(\rho) := \min \left\{ h^{-1}\left(\dfrac{\ln(8/\delta)}{N_{\text{cal}}};\overline{\hat{R}_\rho}\right), \Phi^{-1}_{\text{bin}}\left(\dfrac{\delta}{8e};N_{\text{cal}}, \overline{\hat{R}_\rho} \right) \right\}.
    \end{aligned}
     \end{equation}
    }{
      \begin{equation}
    \label{pro:eq1}
    \eqsmall
    \begin{aligned}
        & \sP \left[ R(T_{\hat{\vlambda}};X_{\text{test}},Y_{\text{test}})  \le \hat{\alpha}(\rho) \right] \ge 1 - \delta, \quad \text{where}
        \quad \hat{\alpha}(\rho) := \min \left\{ h^{-1}\left(\dfrac{\ln(8/\delta)}{N_{\text{cal}}};\overline{\hat{R}_\rho}\right), \Phi^{-1}_{\text{bin}}\left(\dfrac{\delta}{8e};N_{\text{cal}}, \overline{\hat{R}_\rho} \right) \right\}
    \end{aligned}
     \end{equation}
    }
where $\eqnsmall{h^{-1}(\cdot;\cdot)}$ is the partial inverse function as defined in \Cref{risk_guarantee_1} and $\eqnsmall{\overline{\hat{R}_\rho}}$ is formulated as:
     \ifbool{IsTwoColumn}{
       \begin{equation*}
         \eqsmall
         \label{pro:eq2}
         \begin{aligned}
             & \overline{\hat{R}_\rho} = \underbrace{\hat{R} + \rho^2(2-\rho^2)( 1 - \hat{R} )}_{\text{empirical mean scaled by }\rho} + \underbrace{2\rho(1-\rho^2)\sqrt{2-\rho^2} \sqrt{\hat{V}}}_{\text{estimated variance scaled by }\rho}
             + \\
             & \underbrace{(1 \hspace{-0.1em} - \hspace{-0.1em}\rho^2) \hspace{-0.3em} \left(\dfrac{1 -\rho^2}{\sqrt{2N_{\text{cal}}}} \hspace{-0.2em} + \hspace{-0.2em} \dfrac{2\sqrt{2}\rho\sqrt{2-\rho^2}}{\sqrt{N_{\text{cal}}-1}}\right) \hspace{-0.4em} \sqrt{\ln(4/\delta)} + \hspace{-0.3em} \sqrt{\dfrac{\ln{(8/\delta)}}{2N_{\text{cal}}}}}_{\text{finite-sample error}}
         \end{aligned}
         \end{equation*}
    }{
       \begin{equation*}
     \small
     \label{pro:eq2}
     \begin{aligned}
          \overline{\hat{R}_\rho} = \underbrace{\hat{R} + \rho^2(2-\rho^2)( 1 - \hat{R} )}_{\text{empirical mean scaled by }\rho} + \underbrace{2\rho(1-\rho^2)\sqrt{2-\rho^2} \sqrt{\hat{V}}}_{\text{estimated variance scaled by }\rho}
         + 
          \underbrace{(1 \hspace{-0.1em} - \hspace{-0.1em}\rho^2) \hspace{-0.3em} \left(\dfrac{1 -\rho^2}{\sqrt{2N_{\text{cal}}}} \hspace{-0.2em} + \hspace{-0.2em} \dfrac{2\sqrt{2}\rho\sqrt{2-\rho^2}}{\sqrt{N_{\text{cal}}-1}}\right) \hspace{-0.4em} \sqrt{\ln(4/\delta)} + \hspace{-0.3em} \sqrt{\dfrac{\ln{(8/\delta)}}{2N_{\text{cal}}}}}_{\text{finite-sample error}}
     \end{aligned}
     \end{equation*}
    }
    where the radius $\eqnsmall{\rho}$ satisfies the following: $\rho^2 \le 1 - \big[ 1 + \big( \nicefrac{\hat{R} - 1 + \sqrt{\nicefrac{\ln(4/\delta)}{2N_{\text{cal}}}}\big)^2}{\big(\sqrt{\hat{V}} + \sqrt{\nicefrac{2 \ln(2/\delta)}{(N_{\text{cal}}-1)}}\big)^2 } \big]^{-2}$.
\end{theorem}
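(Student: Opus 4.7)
The plan is to chain three ingredients: a deterministic Hellinger-based inequality controlling $E_\gQ[R]$ in terms of $E_\gD[R]$ and the population variance under $\gD$, finite-sample concentration of the calibration empirical mean (Hoeffding) and square-root empirical variance (Maurer--Pontil), and the HB inversion from \Cref{risk_guarantee_1} applied with $\overline{\hat R_\rho}$ in place of the usual empirical estimator.

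First, I would prove that for any risk $R$ bounded in $[0,1]$ and any pair of distributions with $H(\gD,\gQ)\le \rho$,
\begin{equation*}
E_\gQ[R] \,\le\, (1-\rho^2)^2\, E_\gD[R] \,+\, \rho^2(2-\rho^2) \,+\, 2\rho(1-\rho^2)\sqrt{2-\rho^2}\,\sqrt{\mathrm{Var}_\gD(R)}.
\end{equation*}
Setting $r:=\sqrt{d\gQ/d\gD}$, one has $E_\gD[r] = 1-H^2$ and $E_\gD[(r-E_\gD[r])^2] = H^2(2-H^2)$. Expanding $r^2 = (E_\gD[r])^2 + 2 E_\gD[r](r-E_\gD[r]) + (r-E_\gD[r])^2$ decomposes $E_\gQ[R]=E_\gD[R r^2]$ into a leading $(1-H^2)^2 E_\gD[R]$ term, a centered cross term $2(1-H^2)\,E_\gD[(R-\mu_\gD)(r-E_\gD[r])]$ that Cauchy--Schwarz bounds by $2(1-H^2)\sqrt{\mathrm{Var}_\gD(R)}\cdot H\sqrt{2-H^2}$, and a residual $E_\gD[R(r-E_\gD[r])^2]\le H^2(2-H^2)$ using $R\le 1$. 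A monotonicity check in $H$ over the feasible regime permits the replacement $H\mapsto \rho$. The Bernoulli extremal case, parameterized by a rotation of unit vectors satisfying $\sqrt{pq}+\sqrt{(1-p)(1-q)}=1-\rho^2$, saturates this bound and serves as a sanity check.

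Next, I would replace the population quantities by calibration-set estimates. Hoeffding's one-sided inequality gives $E_\gD[R]\le \hat R + \sqrt{\ln(4/\delta)/(2N_{\text{cal}})}$ with probability at least $1-\delta/4$, and the Maurer--Pontil empirical Bernstein bound for the square-root sample variance gives $\sqrt{\mathrm{Var}_\gD(R)}\le \sqrt{\hat V} + \sqrt{2\ln(4/\delta)/(N_{\text{cal}}-1)}$ with probability at least $1-\delta/4$. Substituting these into the Hellinger inequality and collecting terms, the coefficient $(1-\rho^2)^2$ multiplies the mean slack while $2\rho(1-\rho^2)\sqrt{2-\rho^2}$ multiplies the variance slack, reproducing exactly the finite-sample error terms inside $\overline{\hat R_\rho}$. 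The stated constraint on $\rho^2$ is precisely what guarantees $\overline{\hat R_\rho}\le 1$, placing it in the domain where $h^{-1}(\cdot\,;\cdot)$ and $\Phi^{-1}_{\text{bin}}$ are well-defined. A final invocation of the HB/Bentkus inversion of \Cref{risk_guarantee_1} with $\overline{\hat R_\rho}$ as the plug-in estimator and confidence budget $\delta/2$ (absorbing the additional $\sqrt{\ln(8/\delta)/(2N_{\text{cal}})}$ summand as the Hoeffding slack between the transport-adjusted proxy risk and its population counterpart on $\gQ$), combined with a union bound over the three failure events, yields total confidence $1-\delta$ and the announced $\hat\alpha(\rho)$.

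The main obstacle I expect is Step one: the Cauchy--Schwarz decomposition produces the correct coefficients transparently, but the monotonicity of the resulting upper bound in $H$ is not immediate because the derivative of the cross-term factor $(1-H^2)H\sqrt{2-H^2}$ changes sign inside $[0,1]$, so whether the sup over $H\in[0,\rho]$ is attained at $H=\rho$ depends on the ratio of $\sqrt{\mathrm{Var}_\gD(R)}$ to $E_\gD[R]$. Establishing the worst case will require either a regime-dependent case split or a variational argument showing that the Bernoulli-extremal rotation envelopes all admissible shift distributions. A secondary difficulty is to justify the precise origin of the stated constraint on $\rho^2$ and the additional $\sqrt{\ln(8/\delta)/(2N_{\text{cal}})}$ term, both of which I anticipate to arise jointly from enforcing $\overline{\hat R_\rho}\in[0,1]$ and from the one-sided HB inversion, rather than from the Hellinger step itself.
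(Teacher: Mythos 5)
Your plan reproduces the paper's proof architecture exactly: (i) a Hellinger-transport inequality bounding $\mathbb{E}_{\gQ}[R]$ by the mean and standard deviation of $R$ under $\gD$, (ii) Hoeffding for the empirical mean and the Maurer--Pontil bound \cite{maurer2009empirical} for $\sqrt{\hat V}$, each at level $\delta/4$, and (iii) the Hoeffding--Bentkus inversion of \Cref{risk_guarantee_1} applied to the inflated estimator $\overline{\hat R_\rho}$, with a union bound closing the argument. The one genuine difference is in step (i): the paper does not prove the transport inequality but imports it wholesale as the Grammian bound of \cite{weber2022certifying} (\Cref{lem:gramian-bound}), then discards the negative term $-\rho^2(2-\rho^2)V_{\gD}/(1-R_{\gD})$ to arrive at precisely the expression you derive via the decomposition $r^2=(\mathbb{E}[r])^2+2\mathbb{E}[r](r-\mathbb{E}[r])+(r-\mathbb{E}[r])^2$ with $r=\sqrt{d\gQ/d\gD}$ and Cauchy--Schwarz. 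Your self-contained derivation is correct and buys independence from the external lemma (at the price of a slightly looser intermediate bound, which is irrelevant since the paper drops the extra term anyway); the paper's route buys something you correctly flag as your main obstacle. Specifically, the worst-case passage from the realized distance $H$ to the radius $\rho$, and the provenance of the constraint on $\rho^2$, are both handled inside \Cref{lem:gramian-bound}: that lemma is stated as the value of the variational problem $\max_{\gQ:\,H(\gD,\gQ)\le\rho}\mathbb{E}_{\gQ}[\ell]$, and the condition $\rho^2\le 1-(1+(T-E)^2/V)^{-1/2}$ is the regime in which its closed form is valid --- it is not, as you conjecture, the condition ensuring $\overline{\hat R_\rho}\le 1$. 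If you insist on a from-scratch proof you must either carry out the regime analysis you describe or simply restrict attention to the boundary case via the variational formulation; otherwise the cleanest fix is to cite the lemma as the paper does and reserve your Cauchy--Schwarz computation as motivation. With that one point repaired, the remaining steps (including your reading of the extra $\sqrt{\ln(8/\delta)/(2N_{\text{cal}})}$ term as the Hoeffding slack between $R_{\gQ}$ and its empirical proxy, and the $\delta$-budget bookkeeping) match the paper's argument.
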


\begin{remark}
\Cref{pro:conf_shf} offers a distribution-drift conformal generation risk $\eqnsmall{\hat{\alpha}(\rho)}$, under the distribution shift with radius $\eqnsmall{\rho}$.
   \underline{{(R1)}} 
   We adopt the Hellinger distance for measuring distribution distances due to its f-divergence properties and direct applicability to total variation distance between $\eqnsmall{\gD}$ and $\eqnsmall{\gQ}$ (\cite{steerneman1983total}, Equation 1).
    \underline{{(R2)}} For conformal guarantees under distribution shifts, we derive an empirical risk upper bound considering worst-case shifts in \Cref{pro:eq2}, which is efficiently calculable with empirical statistics on calibration distribution $\eqnsmall{\gD}$.
 \underline{{(R3)}} We recover the empirical mean {\small $\hat{R}$} from {\small $\overline{\hat{R}_\rho}$} when {\small $\rho \rightarrow 0$} and {\small $N_{\text{cal}} \rightarrow \infty$} in \Cref{pro:eq2}.
\end{remark}

\vspace{\pspace}
\subsection{RAG achieves provably lower conformal generation risk than a single LLM under distribution shifts}
\label{sec:conf_risk_dist_shft}
\vspace{\pspace}
Next, we prove that RAG mitigates conformal generation risk better than a single LLM under distribution shifts.\footnote{Additionally, we examine retrieval quality and prove a lower bound of retrieved positive examples under test-time distribution shifts. 
We leave the analysis to \Cref{app:rag_dist_shft} for interested readers.
} 
\begin{theorem}[RAG reduces conformal generation risk even under distribution shifts]
\label{thm:comp_shft}
    Suppose that the shifted test distribution $\eqnsmall{\gQ}$ is within bounded Hellinger distance $\eqnsmall{\rho>0}$ to the calibration distribution $\eqnsmall{\gD}$. Consider the same setup and assumptions as \Cref{thm:gene_rag}. Consider also a \underline{$\eqnsmall{V_{\text{rag}}}$-retrieval model} in \Cref{def:ret_mod} and \underline{$\eqnsmall{(d^+,\Phi_M)}$-transformer} in \Cref{def:trans}.
    Under the condition that $\eqnsmall{N_{\text{ext}}>{2\sqrt{2 \ln10}} / {r_{\text{ext}}^m}}$, $\eqnsmall{N_{\text{ext}} V_{\text{rag}}(\rho)^{0.25r_{\text{ext}}^m N_{\text{ext}}}<{8} / {17}}$, and $\eqnsmall{N_{\text{rag}}>{2}/{d^+}}$ , we have:
    \ifbool{IsTwoColumn}{
       \begin{equation}
    \eqsmall
    \label{eq:conclu_shft}
        \begin{aligned}
             & \sP\left[ \hat{\alpha}_{\text{rag}}(\rho) < \alpha(\rho) \right]\ge 1 - p_{t} - p_r(\rho), \quad \text{where} \\ & p_r(\rho) =  \dfrac{100}{N_{\text{rag}}} \left( 8 - 17  N_{\text{ext}}  V_{\text{rag}}(\rho)^{0.25 \left(\min_c r^{(c)}_{\text{ext}}N_{\text{ext}} \right)} \right)^{-2},
        \end{aligned}
    \vspace{-0.7em}
    \end{equation}
    }{
       \begin{equation}
    \eqsmall
    \label{eq:conclu_shft}
        \begin{aligned}
             & \sP\left[ \hat{\alpha}_{\text{rag}}(\rho) < \alpha(\rho) \right]\ge 1 - p_{t} - p_r(\rho), \quad \text{where}\quad p_r(\rho) =  \dfrac{100}{N_{\text{rag}}} \left( 8 - 17  N_{\text{ext}}  V_{\text{rag}}(\rho)^{0.25 \left(\min_c r^{(c)}_{\text{ext}}N_{\text{ext}} \right)} \right)^{-2},
        \end{aligned}
    \end{equation}
    }
    where $\eqnsmall{p_{\text{t}}}$ is the uncertainty induced by the transformer quality as \Cref{eq:conclu} and $\eqnsmall{p_r(\rho)}$ is the uncertainty induced by the retrieval model. Moreover, $\eqnsmall{V_{\text{rag}}(\rho)=m(\rho) V_{\text{rag}}}$ quantifies the quality of retrieval models under distance $\eqnsmall{\rho}$, where
    \ifbool{IsTwoColumn}{
      \begin{equation*}
    \eqsmall
    \begin{aligned}
    \vspace{-1em}
         m(\rho) = \underbrace{\left( \dfrac{\sqrt{-6\rho^4+12\rho^2+1}-4\rho(1-\rho^2)\sqrt{2-\rho^2}}{1-16\rho^2+8\rho^4} \right)^{-2}}_{\text{retrieval model quality decay factor by distribution shifts}}.
    \end{aligned}
    \end{equation*}
    }{
      \begin{equation*}
    \eqsmall
    \label{eq:v_rag_rho}
    \begin{aligned}
    \vspace{-1em}
         V_{\text{rag}}(\rho):= m(\rho) V_{\text{rag}}, \quad \text{where} \quad
          m(\rho) = \underbrace{\left( \dfrac{\sqrt{-6\rho^4+12\rho^2+1}-4\rho(1-\rho^2)\sqrt{2-\rho^2}}{1-16\rho^2+8\rho^4} \right)^{-2}}_{\text{retrieval model quality decay factor under distribution shifts}}.
    \end{aligned}
    \end{equation*}
    }
\end{theorem}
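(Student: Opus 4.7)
The plan is to mirror the proof of \Cref{thm:gene_rag} but with the distribution-shift conformal risk guarantee (\Cref{pro:conf_shf}) replacing the standard conformal guarantee, and with the retrieval-quality bound of \Cref{lem:rag} adapted to the shifted test distribution $\gQ$. The target event $\{\hat{\alpha}_{\text{rag}}(\rho) < \hat{\alpha}(\rho)\}$ will be controlled by a union bound over a transformer event and a retrieval event, which yields the $p_t$ and $p_r(\rho)$ terms respectively.

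First, I would reduce the risk-ordering event to an ordering of the shifted empirical risk estimators $\overline{\hat{R}_\rho}$. Since the maps $h^{-1}$ and $\Phi^{-1}_{\text{bin}}$ appearing in \Cref{pro:conf_shf} are monotone increasing in their empirical-risk argument, it suffices to show that, with high probability, the shifted empirical risk under RAG generations is strictly smaller than the shifted empirical risk without retrieval. The key observation is that $\overline{\hat{R}_\rho}$ is an affine-plus-concave function of $\hat{R}$ and $\sqrt{\hat{V}}$, so a pointwise reduction of the per-sample risk on the calibration set propagates to a reduction in $\overline{\hat{R}_\rho}$ and hence in $\hat{\alpha}(\rho)$.

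Second, I would handle the transformer contribution. Conditional on retrieving a sufficient fraction of positive in-context examples, the $(d^+,\Phi_M)$-transformer assumption of \Cref{def:trans} guarantees that the self-attention weight on each positive is at least $d^+$, and the prediction margin of the one-layer transformer is improved by $\Phi_M\bigl(\tfrac{1}{2} d^+(\int \Phi_M - 1)N_{\text{rag}}\bigr) - \Phi_M(0)$ relative to the no-retrieval baseline. Applying Hoeffding's inequality across the $N_{\text{cal}}$ i.i.d.\ calibration samples yields exactly the $p_t$ term, and this term is inherited verbatim from \Cref{thm:gene_rag} because the transformer-quality analysis is agnostic to the test distribution once positives are in hand.

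Third, and this is the genuinely new work, I would adapt the retrieval-quality bound of \Cref{lem:rag} to the shifted distribution. The $V_{\text{rag}}$-retrieval-model definition constrains the mean and variance of the contrastive score gap $s_\theta(x,x^+) - s_\theta(x,x^-)$ under $\gD$; transporting these moments to $\gQ$ requires the same Hellinger-Cauchy--Schwarz calculus used in \Cref{pro:conf_shf}, i.e.\ writing $\mathbb{E}_\gQ[f] \le \mathbb{E}_\gD[f] + (1-\rho^2 \text{ and } \rho\sqrt{2-\rho^2})$-scaled correction terms controlled by $H(\gD,\gQ) \le \rho$. The resulting transported bound multiplies the effective variance by precisely the decay factor $m(\rho)$, so $V_{\text{rag}}$ becomes $V_{\text{rag}}(\rho) = m(\rho)V_{\text{rag}}$. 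Substituting this shifted variance into the Chebyshev-style concentration argument that produced $p_r$ in \Cref{thm:gene_rag}, and simplifying under the hypotheses $N_{\text{ext}}>2\sqrt{2\ln 10}/r_{\text{ext}}^m$ and $N_{\text{ext}} V_{\text{rag}}(\rho)^{0.25 r_{\text{ext}}^m N_{\text{ext}}} < 8/17$, yields the stated $p_r(\rho) = 100/N_{\text{rag}} \cdot (8 - 17 N_{\text{ext}} V_{\text{rag}}(\rho)^{0.25 r_{\text{ext}}^m N_{\text{ext}}})^{-2}$. A final union bound over the transformer event and the retrieval event completes the argument.

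The hard part will be pinning down the closed form of $m(\rho) = \bigl(\tfrac{\sqrt{-6\rho^4+12\rho^2+1} - 4\rho(1-\rho^2)\sqrt{2-\rho^2}}{1 - 16\rho^2 + 8\rho^4}\bigr)^{-2}$. Its structure mirrors the $(1-\rho^2)$-mean and $\rho\sqrt{2-\rho^2}$-variance corrections that appeared in $\overline{\hat{R}_\rho}$, so I expect it to arise from applying Hellinger--Cauchy--Schwarz to the second moment of the contrastive-score gap and then inverting to isolate the variance factor; however, matching the exact rational-in-$\rho$ numerator and denominator requires careful bookkeeping of both the additive mean shift and the multiplicative variance blow-up, and it is here that the constants $-6\rho^4 + 12\rho^2 + 1$ and $1 - 16\rho^2 + 8\rho^4$ (which are the only parts not obviously inherited from \Cref{pro:conf_shf}) must be derived. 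I expect the ancillary constants $8/17$ and $100$ in $p_r(\rho)$ to emerge from this same computation, essentially as a recalibration of the $4/9$ and $25$ constants in \Cref{eq:conclu} once $V_{\text{rag}}$ is replaced by $V_{\text{rag}}(\rho)$.
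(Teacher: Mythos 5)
Your proposal follows the paper's own route almost exactly: the paper likewise (i) reduces the event $\left\{\hat{\alpha}_{\text{rag}}(\rho)<\hat{\alpha}(\rho)\right\}$ to $\left\{\hat{R}_{\text{rag}}<\hat{R}\right\}$ by monotonicity of the bound in \Cref{pro:conf_shf}, (ii) inherits the transformer margin analysis and the Hoeffding step (hence $p_t$) verbatim from \Cref{thm:gene_rag}, and (iii) obtains $m(\rho)$ by transporting the first and second moments of the contrastive score gap $S=s_\theta(x,x^+)-s_\theta(x,x^-)$ from $\gD$ to $\gQ$ with the Gramian/Hellinger bound, feeding the inflated ratio $\mathbb{V}_\gQ[S]/\mathbb{E}_\gQ[S]^2\le m(\rho)\,\mathbb{V}_\gD[S]/\mathbb{E}_\gD[S]^2$ into the Chebyshev argument of \Cref{lem:rag} (this is the paper's \Cref{pro:rag_shft}). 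A worst case over the shifted label proportions, absorbed by the hypothesis $N_{\text{ext}}>2\sqrt{2\ln 10}/\min_c r^{(c)}_{\text{ext}}$, is what removes the $r^{(c)}_{\text{cal}}$ dependence and recalibrates the constants to $8/17$ and $100$, as you anticipated.

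The one step whose justification would fail as written is the reduction in your first paragraph. You argue that a pointwise reduction of the per-sample risks propagates to a reduction in $\overline{\hat{R}_\rho}$; but $\overline{\hat{R}_\rho}$ also contains the term $2\rho(1-\rho^2)\sqrt{2-\rho^2}\,\sqrt{\hat{V}}$, and lowering the per-sample risks does not lower the sample variance $\hat{V}$ (nor does the argument actually produce a pointwise reduction --- it only yields $\mathbb{E}[R-R_{\text{rag}}]>0$ plus concentration). The paper sidesteps this by bounding $\sqrt{\hat{V}}\le 1$ uniformly (the risk lives in $[0,1]$) before comparing the two conformal bounds, so that $\overline{\hat{R}_\rho}$ becomes a monotone function of $\hat{R}$ alone and the comparison reduces to $\hat{R}_{\text{rag}}<\hat{R}$, handled by Hoeffding exactly as in \Cref{thm:gene_rag}. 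You need either this uniform bound on $\hat{V}$ or a separate argument that RAG does not increase the variance term; with that repair your outline matches the paper's proof.
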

\begin{remark}
    Our result rigorously characterizes the effect of distribution shift on the reduced risk guarantee of RAG. 
    \underline{{(R1)}} Compared to \Cref{thm:gene_rag}, only the uncertainty of retrieval model $\eqnsmall{p_r(\rho)}$ is affected by the distribution shift $\eqnsmall{\rho}$. This affect is reflected on the the retrieval quality $\eqnsmall{V_{\text{rag}}(\rho)}$. In particular, a large distance radius $\eqnsmall{\rho}$ will downgrade the retrieval quality $\eqnsmall{V_{\text{rag}}(\rho)}$ and thus lead to a higher uncertainty $\eqnsmall{p_r(\rho)}$. However, the influence of $\eqnsmall{\rho}$ on $\eqnsmall{p_r(\rho)}$ can be reduced by $\eqnsmall{N_{\text{rag}}}$ inverse proportionally and by $\eqnsmall{N_{\text{ext}}}$ exponentially, demonstrating the robustness of RAG with more retrieval knowledge.
    \underline{{(R2)}} Since $\eqnsmall{V_{\text{rag}}(\rho)}$ is proportional to model variance $\eqnsmall{V_{\text{rag}}}$, a low-variance retrieval model demonstrates better robustness against distribution drifts, aligning with existing empirical observations \cite{lam2016robust,gotoh2018robust,namkoong2017variance}, which evaluate the generalization ability of low-variance retrieval models under distribution shifts.
    \underline{(R3)} Compared to \Cref{thm:gene_rag}, \Cref{thm:comp_shft} has no dependence on varying label portions $\eqnsmall{r^{(c)}_{\text{cal}}}$ during distribution shifts, as long as the size of external knowledge base $\eqnsmall{N_{\text{ext}}}$ is moderately large,
    a condition often met in practice with large knowledge bases.
\end{remark}

\vspace{\stspace}
\section{Evaluation}
\vspace{\pspace}
We evaluate \name on four datasets using different retrieval models.
We find that \textbf{{(1)}} our conformal generation risks in \Cref{risk_guarantee_1} is empirically sound and tight in \Cref{sec:exp_no_dist}, \textbf{{(2)}} RAG reduces the conformal generation risks for different retrieval models, which empirically validates \Cref{thm:gene_rag} in \Cref{sec:exp_no_dist}, \textbf{{(3)}} the conformal generation risk under distribution shifts in \Cref{pro:conf_shf} is empirically sound and tight for varying distances in \Cref{sec:exp_dist}, \textbf{{(4)}} multi-dimensional RAG configurations maintain sound and tight conformal generation risks in \Cref{sec:exp_multi}, and \textbf{{(5)}} \name computes valid configurations with empirical risks always below the desired risk level in \Cref{sec:exp_multi_2}.

The codes are publicly available at \url{https://github.com/kangmintong/C-RAG}.

\subsection{Evaluation setup}
\label{sec:exp_setup}
\vspace{\pspace}
\paragraph{Datasets \& knowledge base} We evaluate \name on four widely used NLP datasets, including AESLC \cite{zhang2019email}, CommonGen \cite{lin2019commongen}, DART \cite{nan2020dart}, and E2E \cite{novikova2017e2e}.
Following \cite{wang2023learning,cheng2023uprise}, we construct the knowledge base as a collection of 30 public datasets from 9 distinct categories with over 6 million documents.

\vspace{\pspace}
\vspace{-0.7em}
\paragraph{Retrieval models} We consider four retrieval models: (1) BM25 \cite{robertson2009probabilistic} with token-level matching scores, (2) BAAI/bge \cite{zhang2023retrieve} as SOTA embedding model in MTEB benchmark \cite{muennighoff2022mteb}, (3) OpenAI/ada as SOTA close source text embedding model, and (4) Biencoder-SFT \cite{wang2023learning} as a biencoder retriever trained with in-domain data samples.
\vspace{\pspace}

\vspace{-0.7em}
\paragraph{RAG Generation protocol}  
We use our generation protocol (\Cref{alg:gen_pro} in \Cref{app:cons_gen_rag}) controlled by the number of retrieved examples $\eqnsmall{N_{\text{rag}}}$, generation set size $\eqnsmall{\lambda_g}$, and diversity threshold $\eqnsmall{\lambda_s}$.
We use Llama-2-7b for inference and perform conformal calibration on validation sets with uncertainty $\eqnsmall{\delta=0.1}$.
We use $\eqnsmall{1-\text{ROUGE-L}}$ as the risk function.
See \Cref{app:setup} for more details of evaluation setup.

\ifbool{IsTwoColumn}
{
\begin{figure*}[thb]
\subfigure{
    \rotatebox{90}{\hspace{-3.5em} Evaluation Risk}
    \begin{minipage}{0.2\linewidth}
    \centerline{\footnotesize{\quad AESLC}}
 	\vspace{1pt}
\centerline{\includegraphics[width=1.0\textwidth]{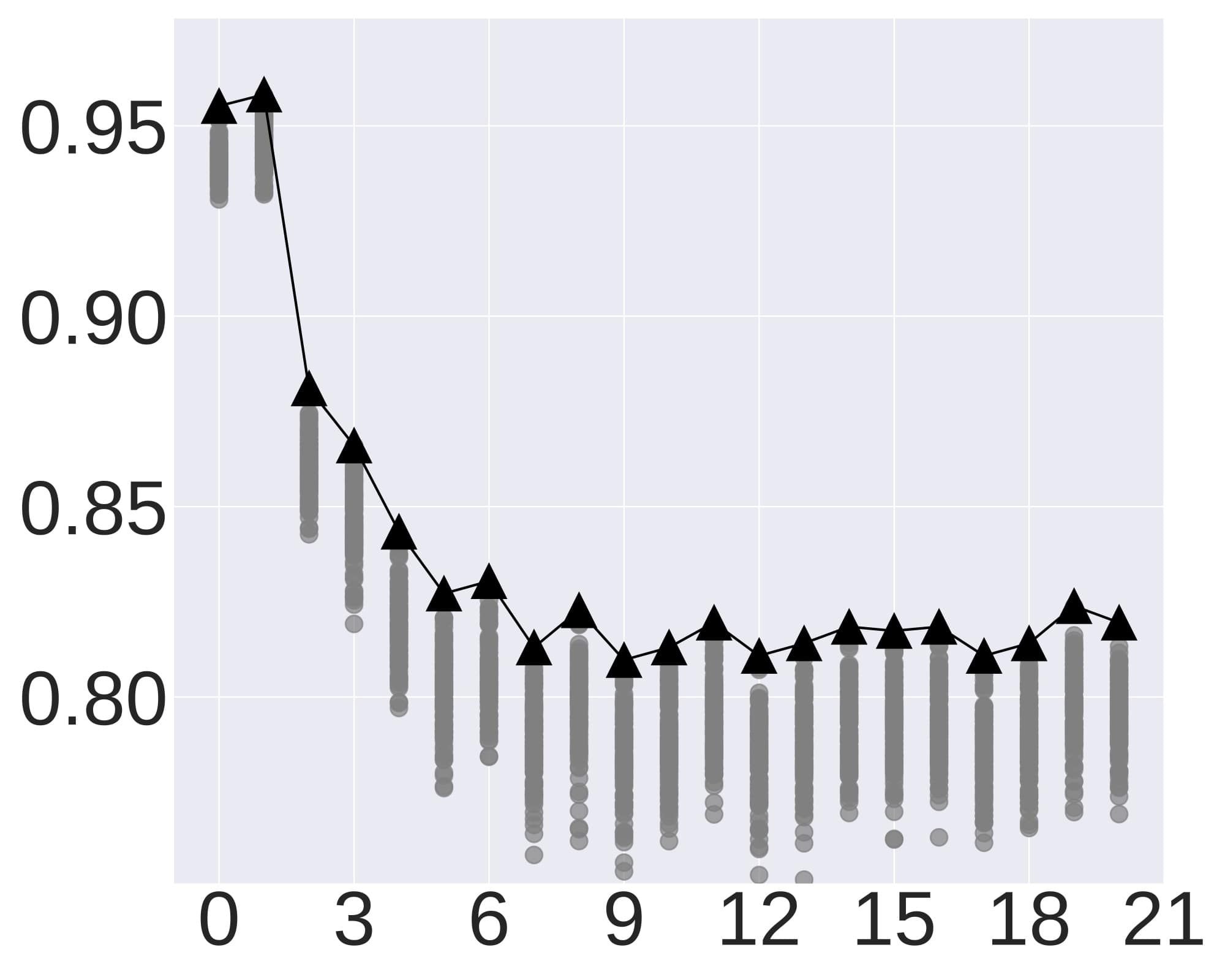}}

 	\centerline{\footnotesize{~~~\# Retrieved examples $N_{\text{rag}}$}}
  \vspace{-0.5em}
 \end{minipage}
 \hspace{+2em}
 \begin{minipage}{0.2\linewidth}
    \centerline{\footnotesize{\quad CommonGen}}
 	\vspace{1pt}
 	\centerline{\includegraphics[width=1.0\textwidth]{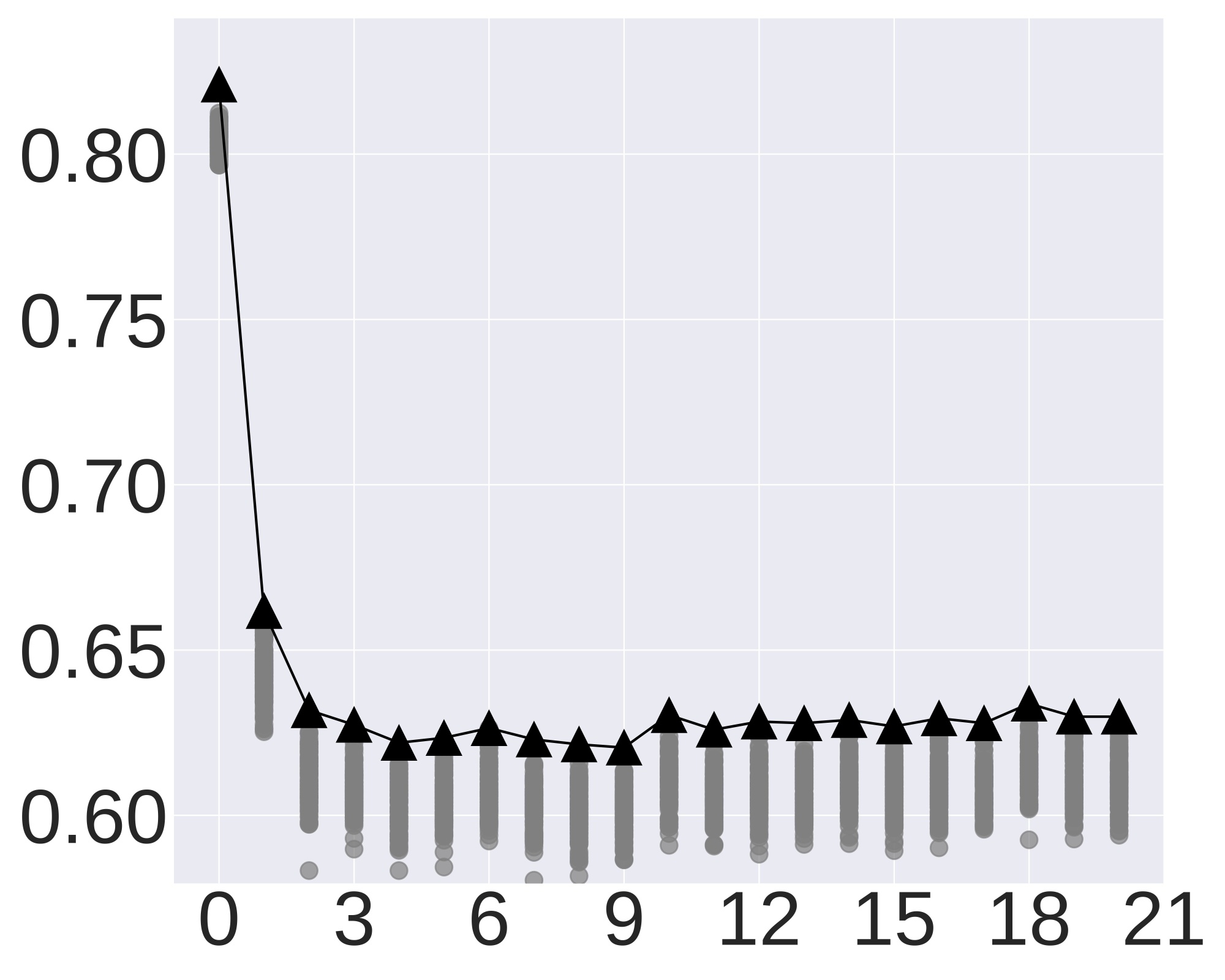}}
 	\centerline{\footnotesize{~~~\# Retrieved examples $N_{\text{rag}}$}}
  \vspace{-0.5em}
 \end{minipage}
 \hspace{+2em}
 \begin{minipage}{0.2\linewidth}
    \centerline{\footnotesize{\quad DART}}
 	\vspace{1pt}
 	\centerline{\includegraphics[width=1.0\textwidth]{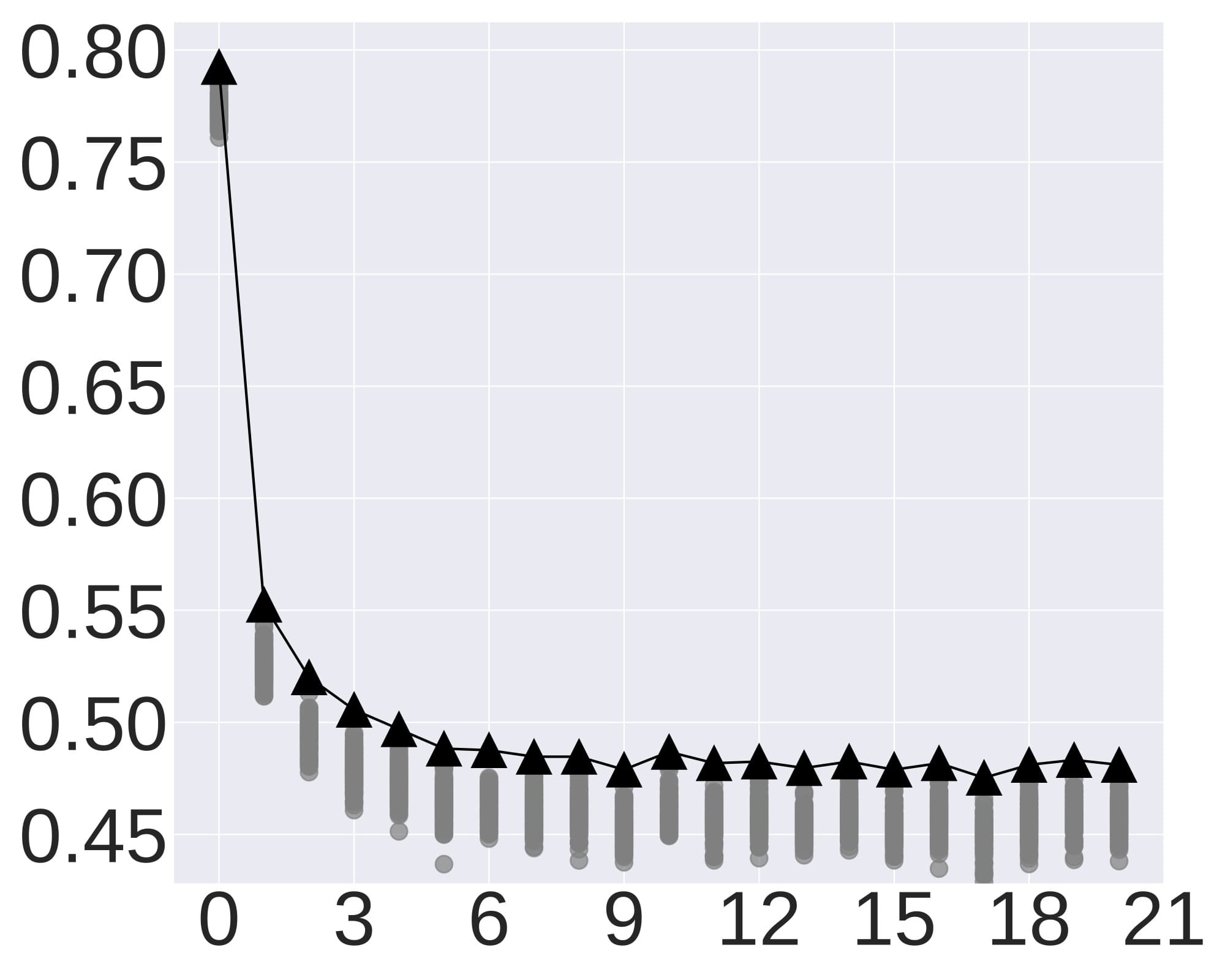}}
 	\centerline{\footnotesize{~~~\# Retrieved examples $N_{\text{rag}}$}}
  \vspace{-0.5em}
 \end{minipage}
 \hspace{+2em}
 \begin{minipage}{0.2\linewidth}
    \centerline{\footnotesize{\quad E2E}}
 	\vspace{1pt}
 	\centerline{\includegraphics[width=1.0\textwidth]{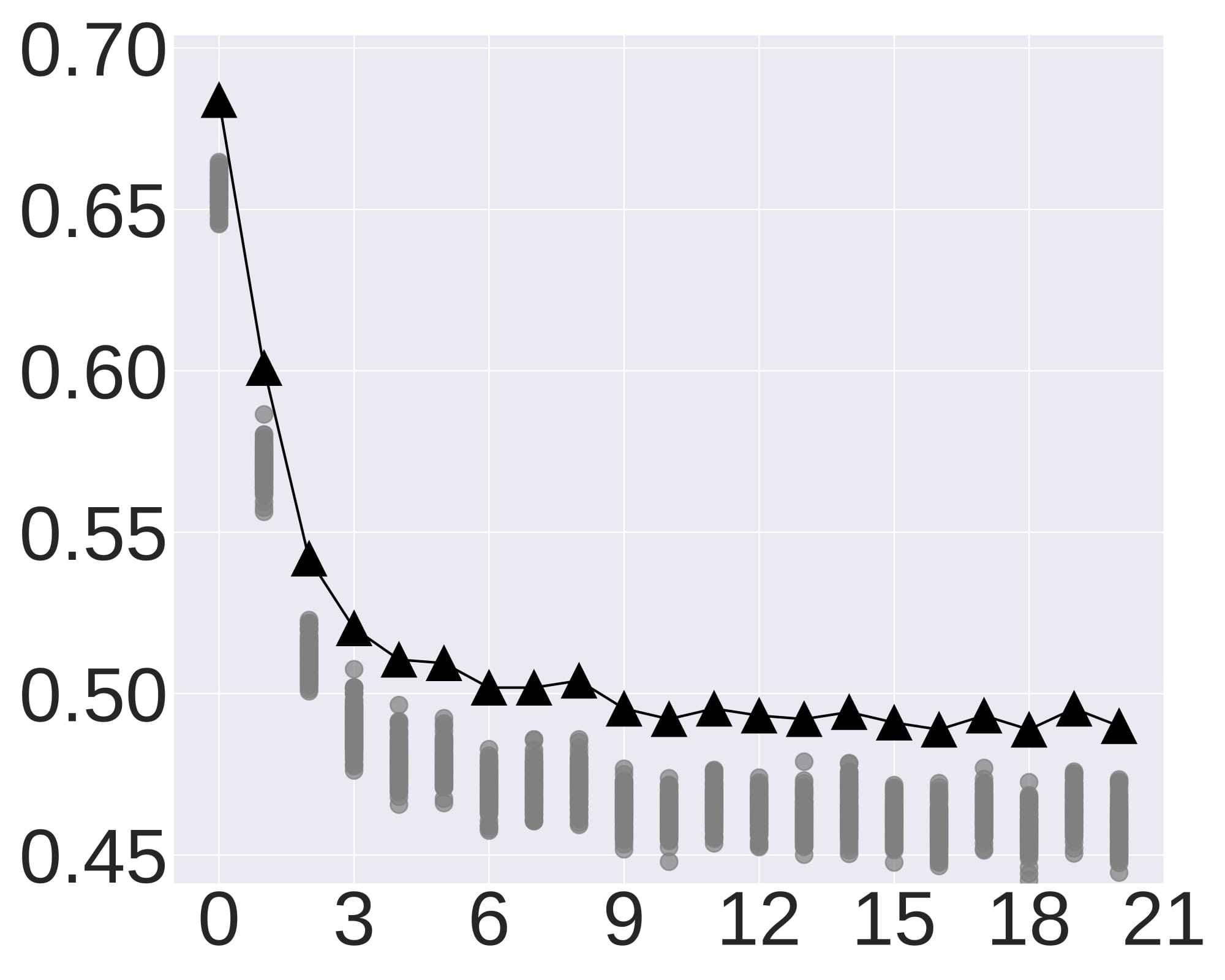}}
 	\centerline{\footnotesize{~~~\# Retrieved examples $N_{\text{rag}}$}}
  \vspace{-0.5em}
 \end{minipage}
}
\subfigure{
\centerline{\includegraphics[width=0.4\textwidth]{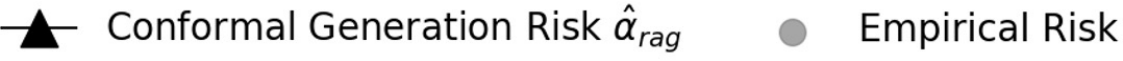}}}
\vspace{-2em}
\caption{\small Conformal generation risk $\eqnsmall{\hat{\alpha}_{\text{rag}}}$ and empirical risk based on retrieval model OpenAI/ada taking different $\eqnsmall{N_{\text{rag}}}$ ($\eqnsmall{\lambda_g=1,\lambda_s=1.0}$). 
{We observe that our conformal generation risk (\Cref{risk_guarantee_1}) is valid and tight; 
larger $\eqnsmall{N_{\text{rag}}}$ reduces risk $\eqnsmall{\hat{\alpha}_{\text{rag}}}$ (empirically validating \Cref{thm:gene_rag})}.
}
\label{fig:bound_and_simulation_ada}
\end{figure*}
\begin{figure*}[thb]
\subfigure{
    \rotatebox{90}{\hspace{-5.0em} Conformal Risk $\hat{\alpha}_{\text{rag}}$}
    \begin{minipage}{0.2\linewidth}
    \centerline{\footnotesize{\quad AESLC}}
 	\vspace{1pt}
\centerline{\includegraphics[width=1.0\textwidth]{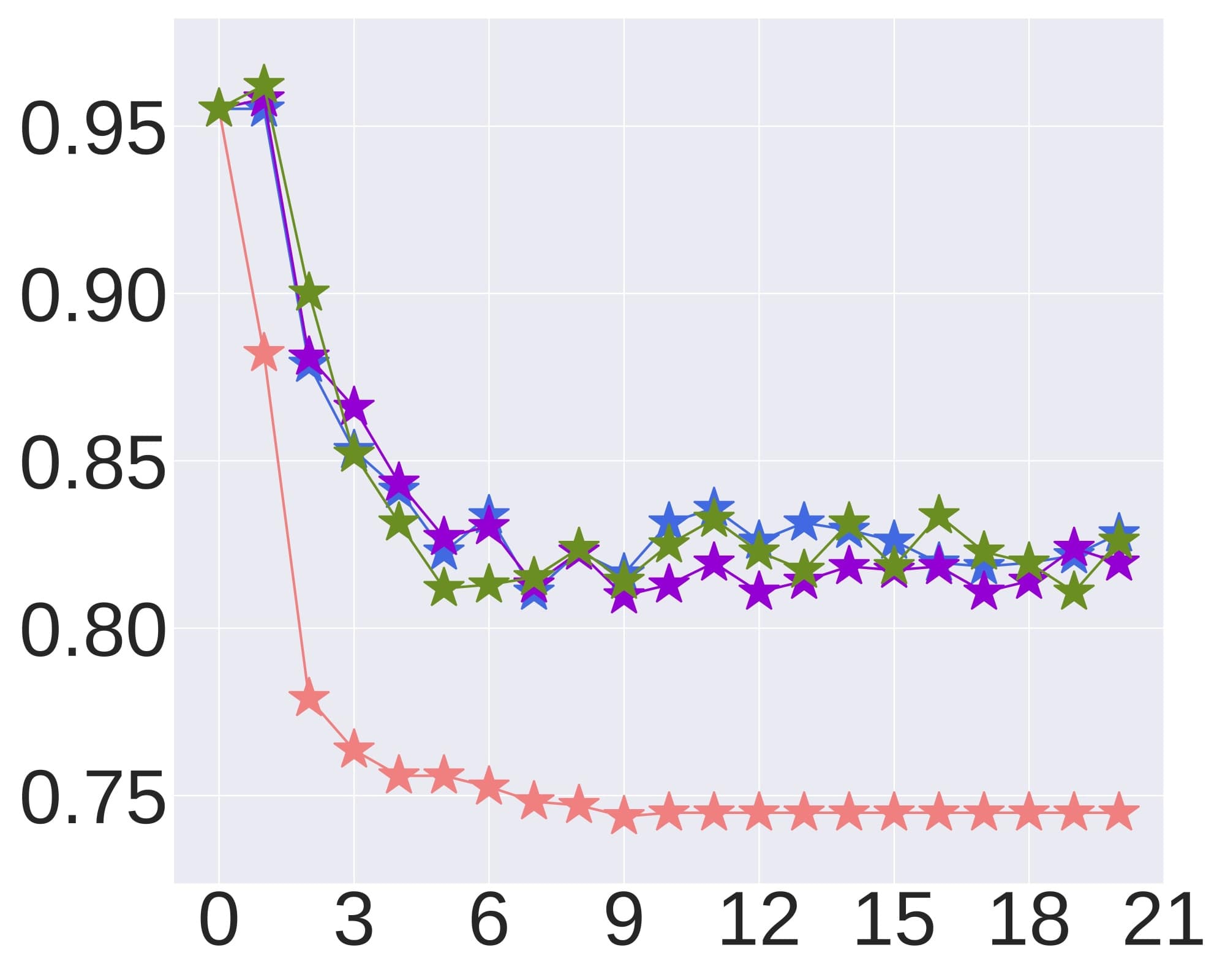}}
 	\centerline{\footnotesize{~~~\# Retrieved examples $N_{\text{rag}}$}}
  \vspace{-0.5em}
 \end{minipage}
 \hspace{+2em}
 \begin{minipage}{0.2\linewidth}
    \centerline{\footnotesize{\quad CommonGen}}
 	\vspace{1pt}
 	\centerline{\includegraphics[width=1.0\textwidth]{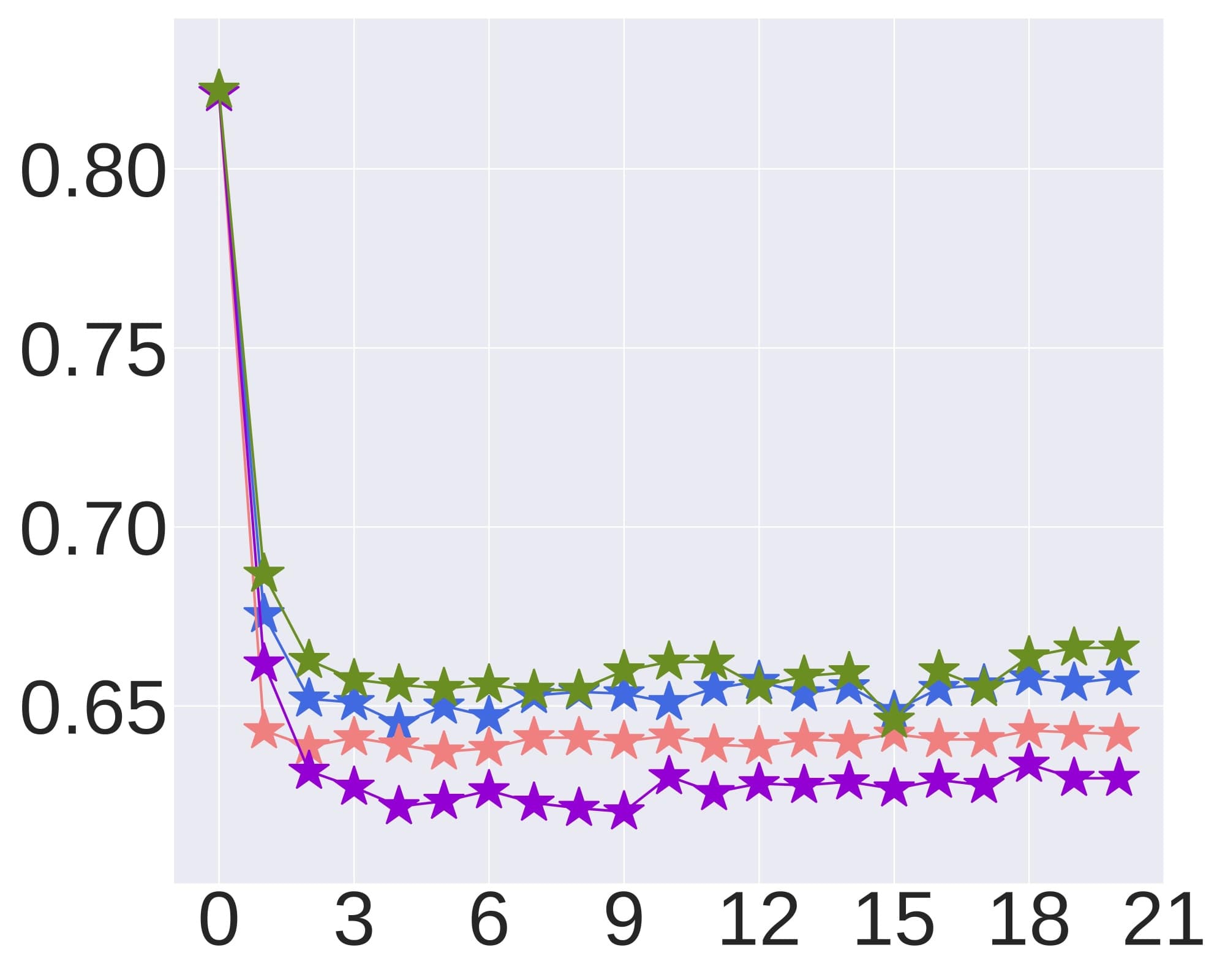}}
 	\centerline{\footnotesize{~~~\# Retrieved examples $N_{\text{rag}}$}}
  \vspace{-0.5em}
 \end{minipage}
 \hspace{+2em}
 \begin{minipage}{0.2\linewidth}
    \centerline{\footnotesize{\quad DART}}
 	\vspace{1pt}
 	\centerline{\includegraphics[width=1.0\textwidth]{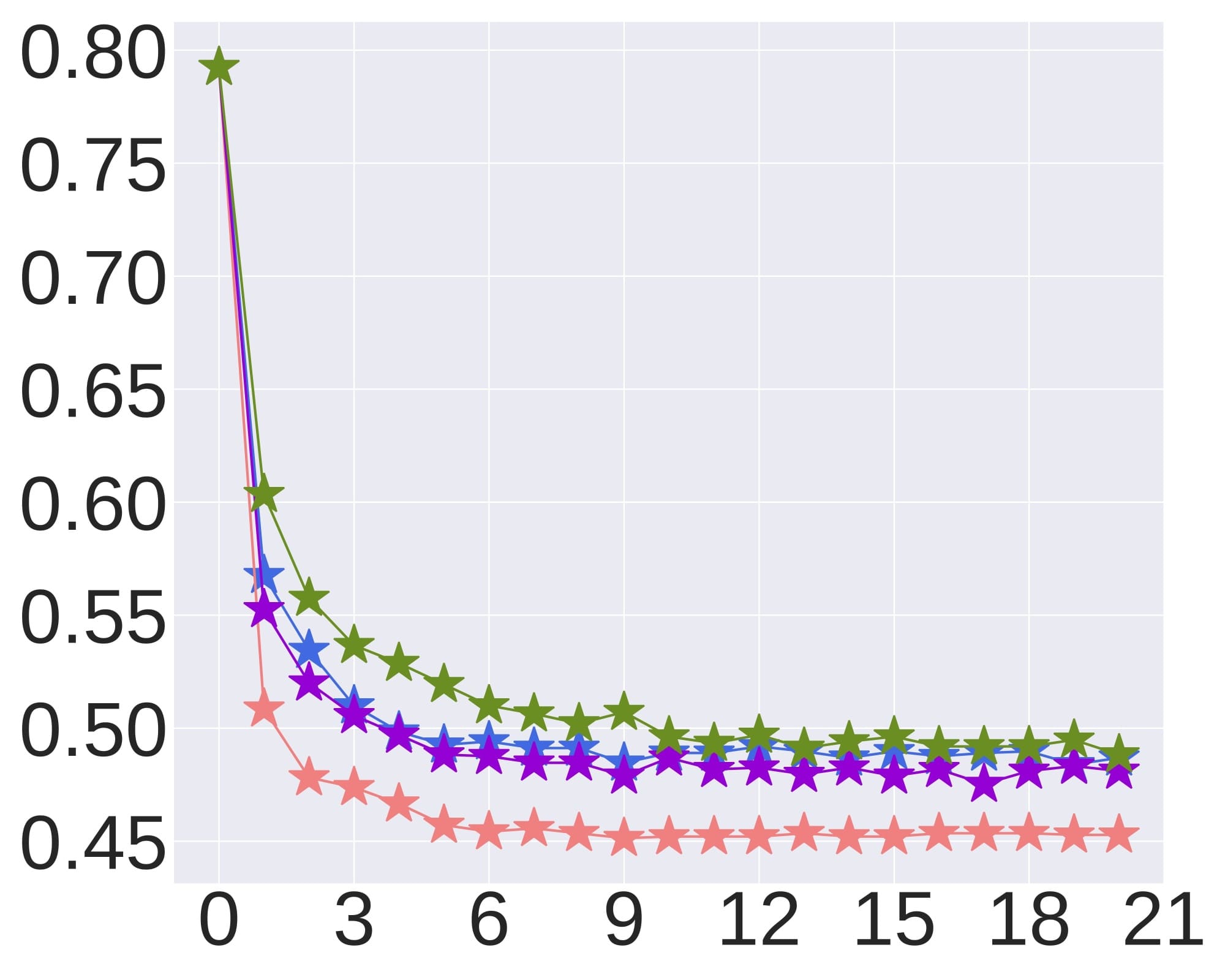}}
 	\centerline{\footnotesize{~~~\# Retrieved examples $N_{\text{rag}}$}}
  \vspace{-0.5em}
 \end{minipage}
 \hspace{+2em}
 \begin{minipage}{0.2\linewidth}
    \centerline{\footnotesize{\quad E2E}}
 	\vspace{1pt}
 	\centerline{\includegraphics[width=1.0\textwidth]{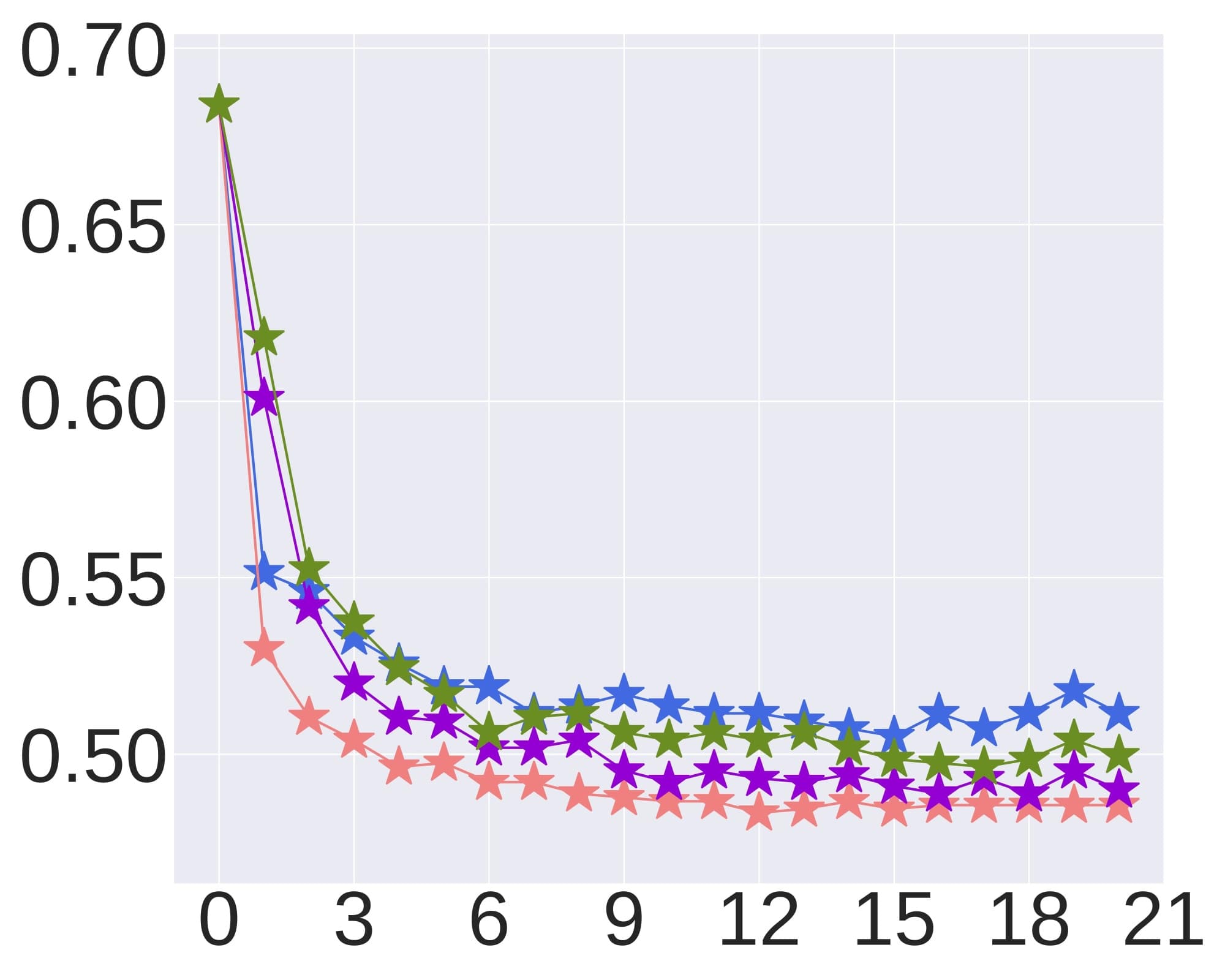}}
 	\centerline{\footnotesize{~~~\# Retrieved examples $N_{\text{rag}}$}}
  \vspace{-0.5em}
 \end{minipage}
}
\subfigure{
\centerline{\includegraphics[width=0.5\textwidth]{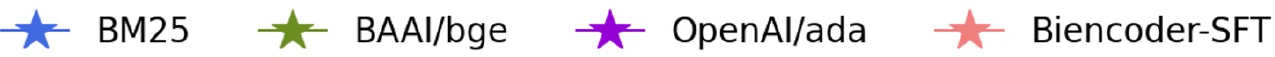}}}
\vspace{-2em}
\caption{\small Conformal generation risk $\eqnsmall{\hat{\alpha}_{\text{rag}}}$ with different $\eqnsmall{N_{\text{rag}}}$ using different retrieval models ($\eqnsmall{\lambda_g=1,\lambda_s=1.0}$). 
{We observe that large $\eqnsmall{N_{\text{rag}}}$ effectively reduces $\eqnsmall{\hat{\alpha}_{\text{rag}}}$ for different models; the trained Biencoder-SFT usually leads to the lowest conformal generation risk.}}
\label{fig:comparison_retrieval}
\vspace{-1em}
\end{figure*}

\begin{figure*}[t]
\subfigure{
    \rotatebox{90}{\hspace{-3.5em} Evaluation Risk}
    \begin{minipage}{0.2\linewidth}
    \centerline{\footnotesize{\quad AESLC}}
 	\vspace{1pt}
\centerline{\includegraphics[width=1.0\textwidth]{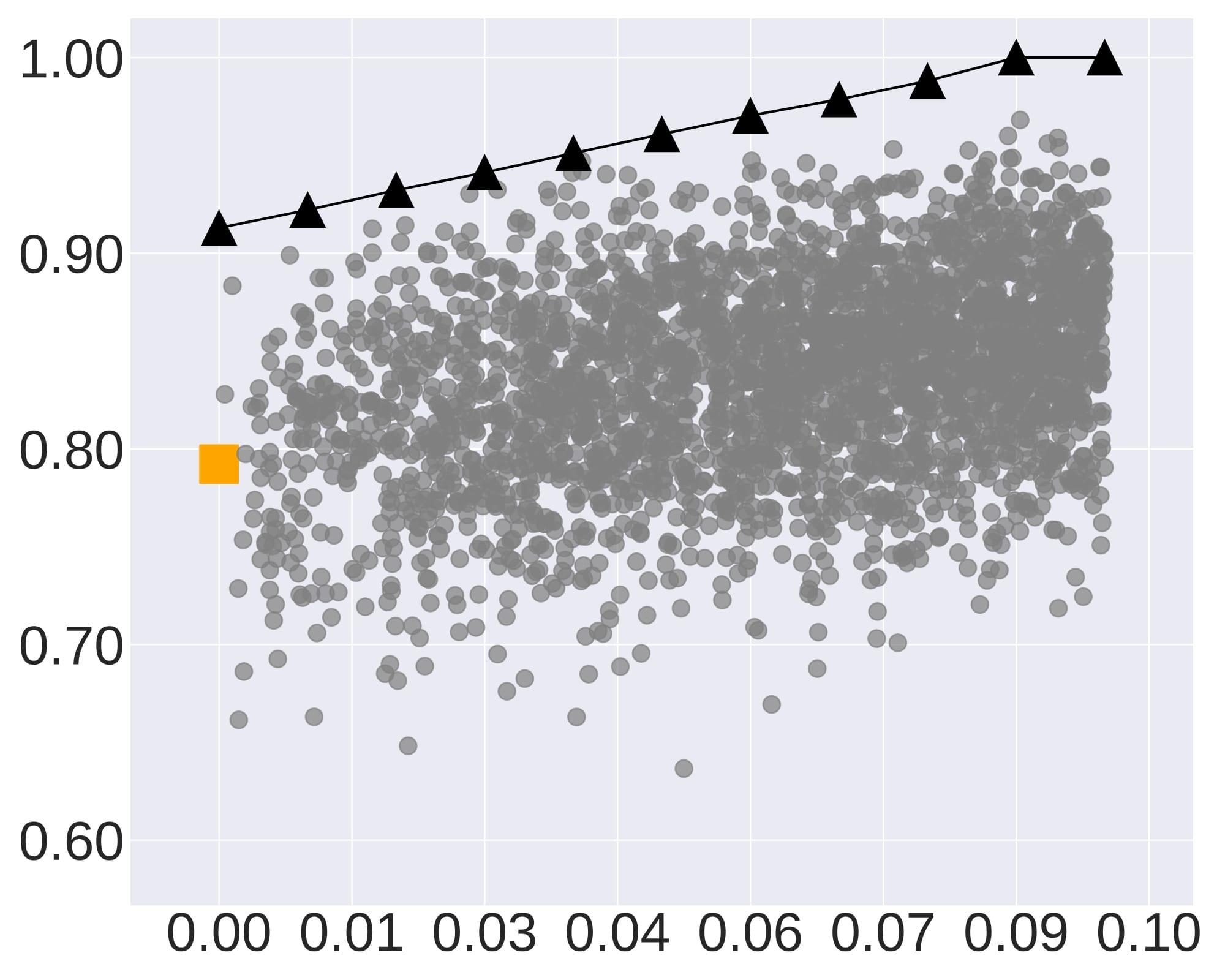}}
 	\centerline{\footnotesize{~~~Hellinger Distance $\rho$}}
\vspace{-0.5em}
 \end{minipage}
 \hspace{+2em}
 \begin{minipage}{0.2\linewidth}
    \centerline{\footnotesize{\quad CommonGen}}
 	\vspace{1pt}
 	\centerline{\includegraphics[width=1.0\textwidth]{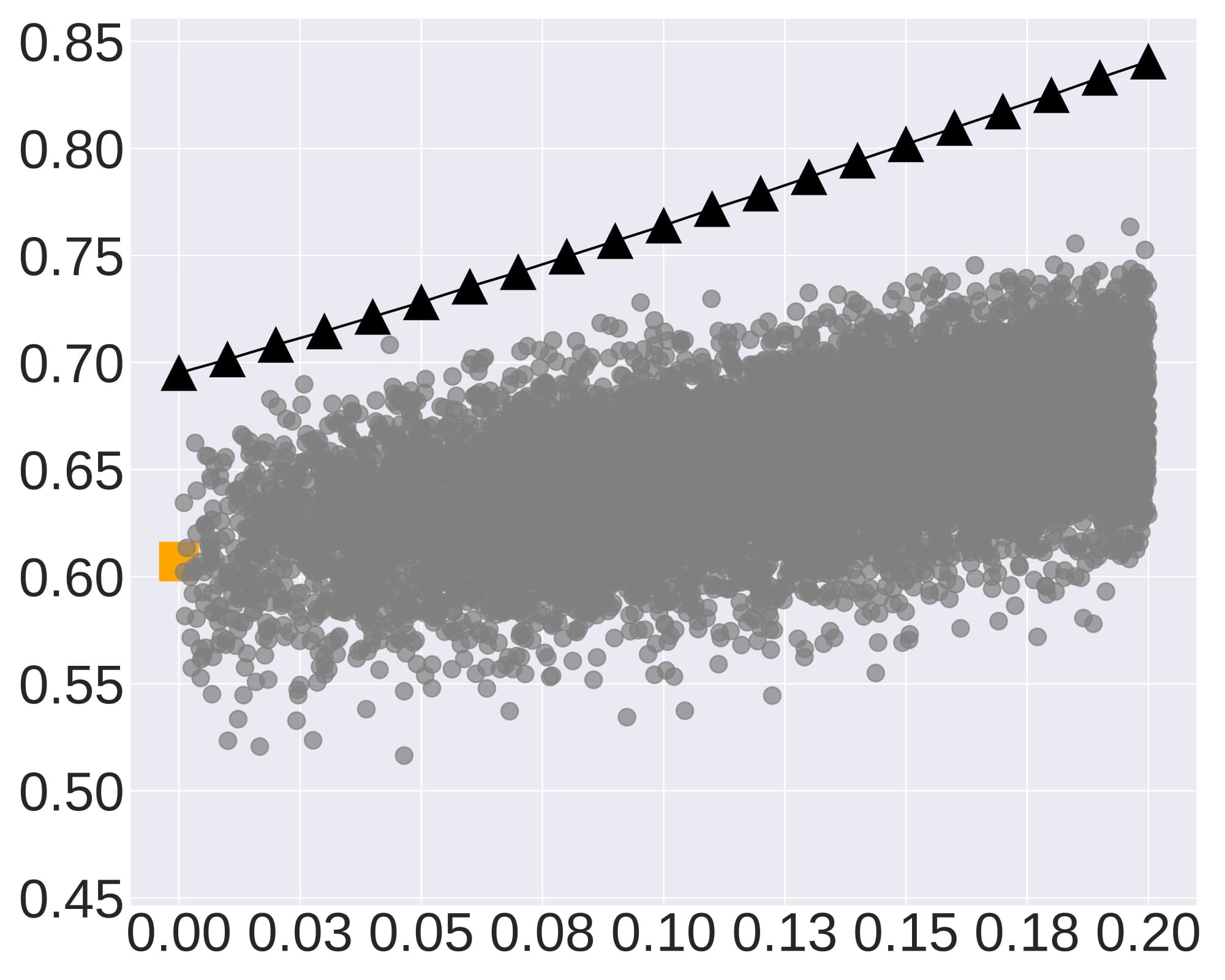}}

 	\centerline{\footnotesize{~~~Hellinger Distance $\rho$}}
\vspace{-0.5em}
 \end{minipage}
 \hspace{+2em}
 \begin{minipage}{0.2\linewidth}
    \centerline{\footnotesize{\quad DART}}
 	\vspace{1pt}
 	\centerline{\includegraphics[width=1.0\textwidth]{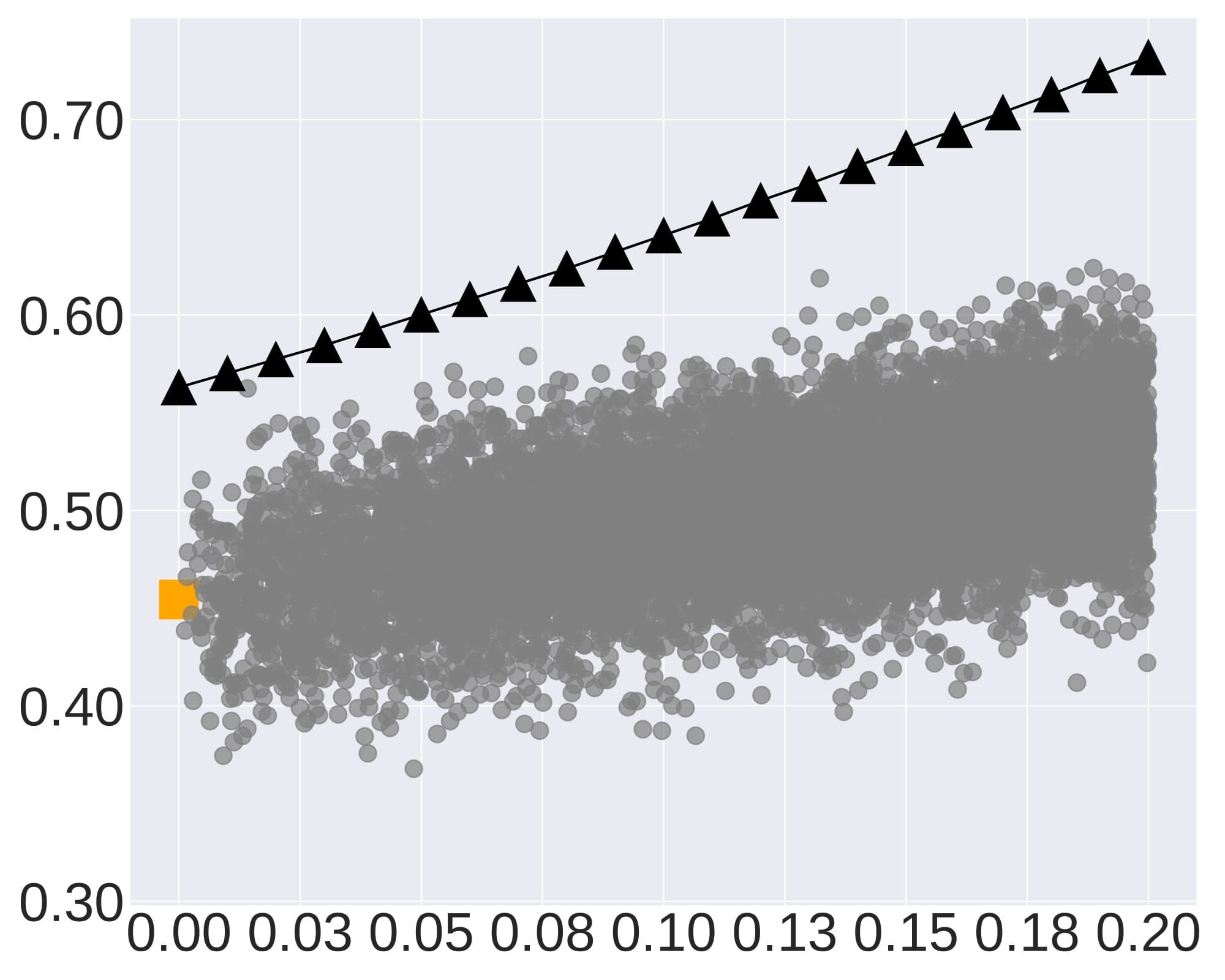}}

 	\centerline{\footnotesize{~~~Hellinger Distance $\rho$}}
\vspace{-0.5em}
 \end{minipage}
 \hspace{+2em}
 \begin{minipage}{0.2\linewidth}
    \centerline{\footnotesize{\quad E2E}}
 	\vspace{1pt}
 	\centerline{\includegraphics[width=1.0\textwidth]{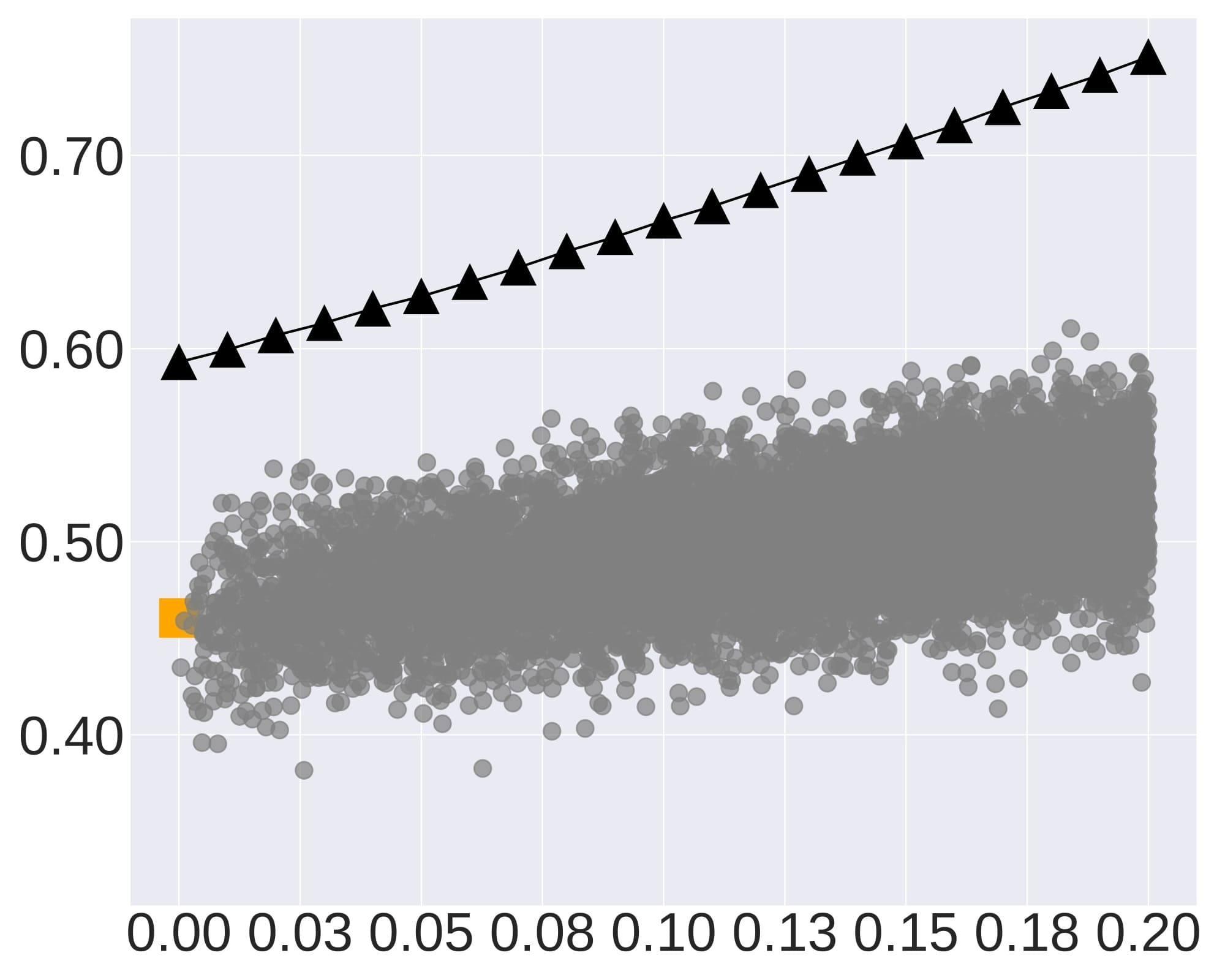}}
\centerline{\footnotesize{~~~Hellinger Distance $\rho$}}
\vspace{-0.5em}
 \end{minipage}
}
\subfigure{
\centerline{\includegraphics[width=0.75\textwidth]{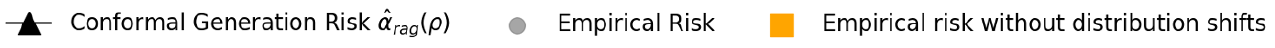}}}
\vspace{-2em}
\caption{\small 
{Conformal generation risk $\eqnsmall{\hat{\alpha}_{\text{rag}}(\rho)}$ and empirical risks based on retrieval model OpenAI/ada under distribution shifts ($\eqnsmall{N_{\text{rag}}=15, \lambda_g=1, \lambda_s=1.0}$).
We observe that our distribution-drift conformal generation risk (\Cref{pro:conf_shf}) is empirically valid and tight.
}}
\label{fig:bound_and_simulation_ada_dist_shft}
\end{figure*}

\begin{figure}[thb]
\subfigure{
    \hspace{+1em}
    \begin{minipage}{0.42\linewidth}
    \centerline{\footnotesize{\quad AESLC}}
 	\vspace{1pt}
\centerline{\includegraphics[width=1.0\textwidth]{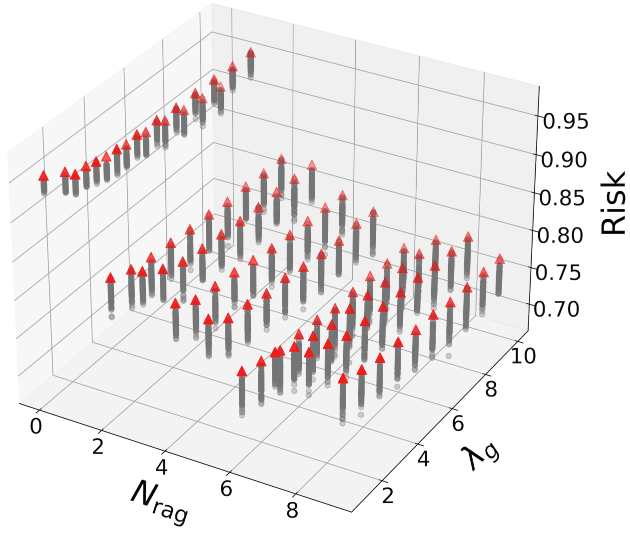}}
 \vspace{-0.5em}
 \end{minipage}
 \hspace{+1em}
 \begin{minipage}{0.42\linewidth}
    \centerline{\footnotesize{\quad CommonGen}}
 	\vspace{1pt}
\centerline{\includegraphics[width=1.0\textwidth]{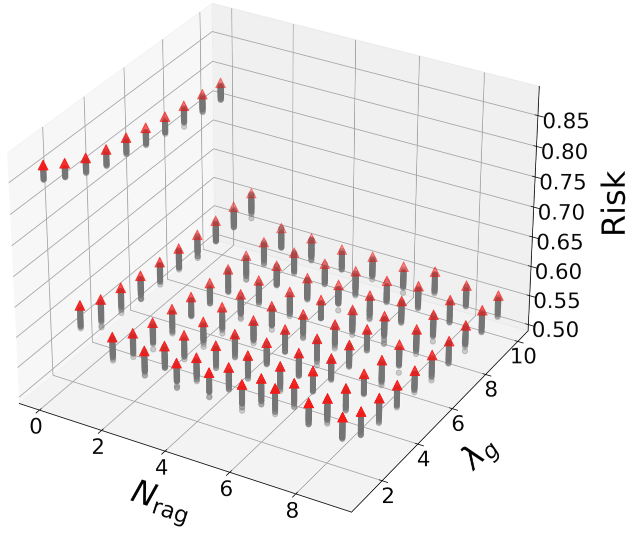}}
 \vspace{-0.5em}
 \end{minipage}
}
\subfigure{
\centerline{\includegraphics[width=0.4\textwidth]{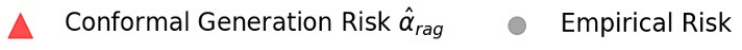}}}
\vspace{-2em}
\caption{\small Conformal generation risk $\eqnsmall{\hat{\alpha}_{\text{rag}}}$ and empirical risks with different $\eqnsmall{\lambda_g}$ and $\eqnsmall{N_{\text{rag}}}$ for OpenAI/ada.}
\label{fig:bound_and_simulation_openai_multi_gen}
\end{figure}
\vspace{-2em}
}
{

\begin{figure*}[t]
\subfigure{
    \rotatebox{90}{\hspace{-3.5em} Evaluation Risk}
    \begin{minipage}{0.24\linewidth}
    \centerline{\footnotesize{\quad AESLC}}
 	\vspace{1pt}
\centerline{\includegraphics[width=1.0\textwidth]{figures/aeslc_openai_conformal_bound_simulation.jpg}}
 	\centerline{\footnotesize{~~~\# Retrieved examples $N_{\text{rag}}$}}
 \end{minipage}
 \begin{minipage}{0.24\linewidth}
    \centerline{\footnotesize{\quad CommonGen}}
 	\vspace{1pt}
 	\centerline{\includegraphics[width=1.0\textwidth]{figures/common_gen_openai_conformal_bound_simulation.jpg}}
 	\centerline{\footnotesize{~~~\# Retrieved examples $N_{\text{rag}}$}}
 \end{minipage}
 \begin{minipage}{0.24\linewidth}
    \centerline{\footnotesize{\quad DART}}
 	\vspace{1pt}
 	\centerline{\includegraphics[width=1.0\textwidth]{figures/dart_openai_conformal_bound_simulation.jpg}}
 	\centerline{\footnotesize{~~~\# Retrieved examples $N_{\text{rag}}$}}
 \end{minipage}
 \begin{minipage}{0.24\linewidth}
    \centerline{\footnotesize{\quad E2E}}
 	\vspace{1pt}
 	\centerline{\includegraphics[width=1.0\textwidth]{figures/e2e_nlg_openai_conformal_bound_simulation.jpg}}
 	\centerline{\footnotesize{~~~\# Retrieved examples $N_{\text{rag}}$}}
 \end{minipage}
}
\subfigure{
\centerline{\includegraphics[width=0.5\textwidth]{figures/legend.pdf}}}
\caption{Conformal generation risk $\eqnsmall{\hat{\alpha}_{\text{rag}}}$ and empirical risk based on retrieval model OpenAI/ada taking different $\eqnsmall{N_{\text{rag}}}$ ($\eqnsmall{\lambda_g=1,\lambda_s=1.0}$). 
{We observe that our conformal generation risk (\Cref{risk_guarantee_1}) is valid and tight; 
larger $\eqnsmall{N_{\text{rag}}}$ reduces risk $\eqnsmall{\hat{\alpha}_{\text{rag}}}$ (empirically validating \Cref{thm:gene_rag})}.
}
\label{fig:bound_and_simulation_ada}
\end{figure*}

\begin{figure*}[t]
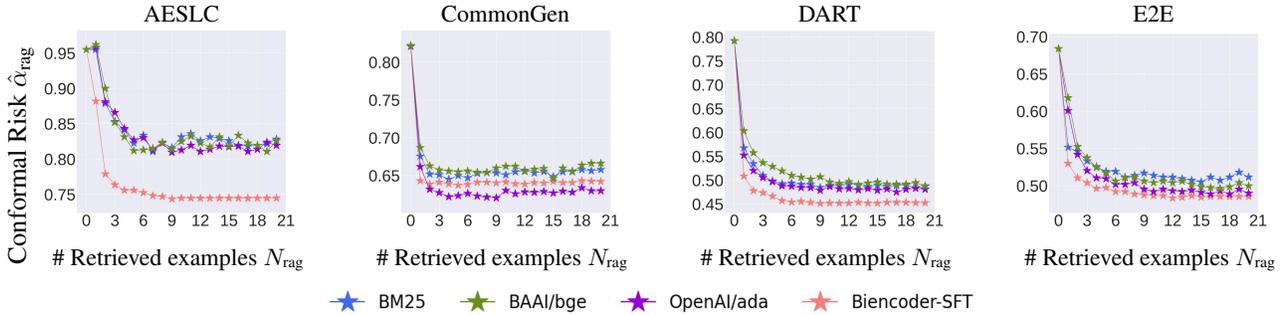

\subfigure{
    \rotatebox{90}{\hspace{-5.0em} Conformal Risk $\hat{\alpha}_{\text{rag}}$}
    \begin{minipage}{0.24\linewidth}
    \centerline{\footnotesize{\quad AESLC}}
 	\vspace{1pt}
\centerline{\includegraphics[width=1.0\textwidth]{figures/aeslc_conformal_risk.jpg}}
 	\centerline{\footnotesize{~~~\# Retrieved examples $N_{\text{rag}}$}}
 \end{minipage}
 \begin{minipage}{0.24\linewidth}
    \centerline{\footnotesize{\quad CommonGen}}
 	\vspace{1pt}
 	\centerline{\includegraphics[width=1.0\textwidth]{figures/common_gen_conformal_risk.jpg}}
 	\centerline{\footnotesize{~~~\# Retrieved examples $N_{\text{rag}}$}}
 \end{minipage}
 \begin{minipage}{0.24\linewidth}
    \centerline{\footnotesize{\quad DART}}
 	\vspace{1pt}
 	\centerline{\includegraphics[width=1.0\textwidth]{figures/dart_conformal_risk.jpg}}
 	\centerline{\footnotesize{~~~\# Retrieved examples $N_{\text{rag}}$}}
 \end{minipage}
 \begin{minipage}{0.24\linewidth}
    \centerline{\footnotesize{\quad E2E}}
 	\vspace{1pt}
 	\centerline{\includegraphics[width=1.0\textwidth]{figures/e2e_nlg_conformal_risk.jpg}}
 	\centerline{\footnotesize{~~~\# Retrieved examples $N_{\text{rag}}$}}
 \end{minipage}
}
\subfigure{
\centerline{\includegraphics[width=0.5\textwidth]{figures/legend2.pdf}}}
\caption{Conformal generation risk $\eqnsmall{\hat{\alpha}_{\text{rag}}}$ with different $\eqnsmall{N_{\text{rag}}}$ using different retrieval models ($\eqnsmall{\lambda_g=1,\lambda_s=1.0}$). 
{We observe that large $\eqnsmall{N_{\text{rag}}}$ effectively reduces $\eqnsmall{\hat{\alpha}_{\text{rag}}}$ for different models; the trained Biencoder-SFT usually leads to the lowest conformal generation risk.}}
\label{fig:comparison_retrieval}
\end{figure*}

\begin{figure*}[t]
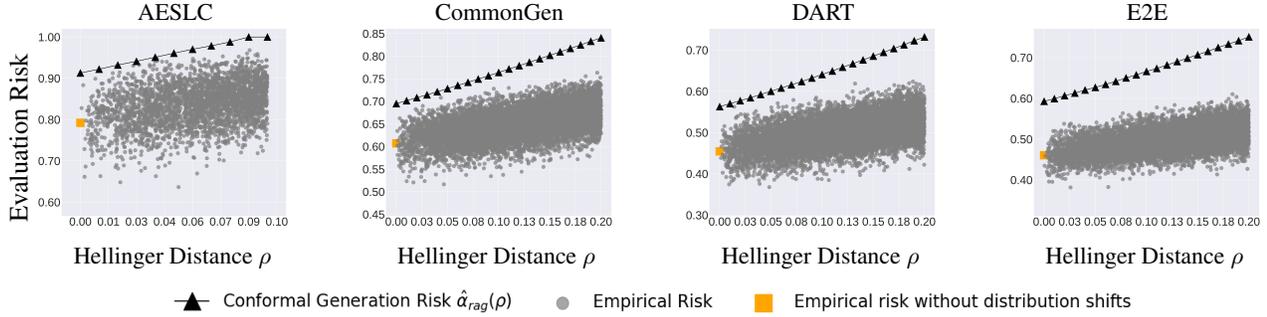

\subfigure{
    \rotatebox{90}{\hspace{-3.5em} Evaluation Risk}
    \begin{minipage}{0.24\linewidth}
    \centerline{\footnotesize{\quad AESLC}}
 	\vspace{1pt}
\centerline{\includegraphics[width=1.0\textwidth]{figures/aeslc_openai_dist_shift_conformal_bound_simulation.jpg}}
 	\centerline{\footnotesize{~~~Hellinger Distance $\rho$}}
 \end{minipage}
 \begin{minipage}{0.24\linewidth}
    \centerline{\footnotesize{\quad CommonGen}}
 	\vspace{1pt}
 	\centerline{\includegraphics[width=1.0\textwidth]{figures/common_gen_openai_dist_shift_conformal_bound_simulation.jpg}}
 	\centerline{\footnotesize{~~~Hellinger Distance $\rho$}}
 \end{minipage}
 \begin{minipage}{0.24\linewidth}
    \centerline{\footnotesize{\quad DART}}
 	\vspace{1pt}
 	\centerline{\includegraphics[width=1.0\textwidth]{figures/dart_openai_dist_shift_conformal_bound_simulation.jpg}}
 	\centerline{\footnotesize{~~~Hellinger Distance $\rho$}}
 \end{minipage}
 \begin{minipage}{0.24\linewidth}
    \centerline{\footnotesize{\quad E2E}}
 	\vspace{1pt}
 	\centerline{\includegraphics[width=1.0\textwidth]{figures/e2e_nlg_openai_dist_shift_conformal_bound_simulation.jpg}}
 	\centerline{\footnotesize{~~~Hellinger Distance $\rho$}}
 \end{minipage}
}
\subfigure{
\centerline{\includegraphics[width=0.85\linewidth]{figures/legend3.pdf}}}
\caption{
{Conformal generation risk $\eqnsmall{\hat{\alpha}_{\text{rag}}(\rho)}$ and empirical risks based on retrieval model OpenAI/ada under distribution shifts ($\eqnsmall{N_{\text{rag}}=15, \lambda_g=1, \lambda_s=1.0}$).
We observe that our distribution-drift conformal generation risk (\Cref{pro:conf_shf}) is empirically valid and tight.
}}
\label{fig:bound_and_simulation_ada_dist_shft}

\end{figure*}

\begin{figure}[t]
\subfigure{
    \hspace{+1em}
    \begin{minipage}{0.4\linewidth}
    \centerline{\footnotesize{\quad AESLC}}
 	\vspace{1pt}
\centerline{\includegraphics[width=1.0\linewidth]{figures/aeslc_openai_nrag_lambdag_conformal_risk.png}}
 \end{minipage}
 \hspace{+0.1\linewidth}
 \begin{minipage}{0.4\linewidth}
    \centerline{\footnotesize{\quad CommonGen}}
 	\vspace{1pt}
\centerline{\includegraphics[width=1.0\linewidth]{figures/common_gen_openai_nrag_lambdag_conformal_risk.png}}
 \end{minipage}
}
\subfigure{
\centerline{\includegraphics[width=0.5\textwidth]{figures/legend4.pdf}}}
\vspace{-2em}
\caption{Conformal generation risk $\eqnsmall{\hat{\alpha}_{\text{rag}}}$ and empirical risks with different $\eqnsmall{\lambda_g}$ and $\eqnsmall{N_{\text{rag}}}$ for OpenAI/ada.}
\label{fig:bound_and_simulation_openai_multi_gen}
\end{figure}
\vspace{-1em}
}

\vspace{\stspace}
\vspace{+1.5em}
\subsection{{Evaluation of conformal generation risks}}
\label{sec:exp_no_dist}
\vspace{\pspace}
\paragraph{Soundness and tightness of conformal generation risks} 
To achieve generation risk guarantee in \Cref{eq:conformal_guarantee}, \name computes the conformal generation risk using \Cref{eq:conformal_risk_}.
We evaluate the conformal generation risks of RAG models $\eqnsmall{\hat{\alpha}_{\text{rag}}}$ under different numbers of retrieved examples $\eqnsmall{N_{\text{rag}}}$ by calibration statistics on the validation set. 
To validate the soundness and tightness of the conformal generation risk guarantee, we evaluate the empirical risks on randomly sampled test instances. The sampling protocol is detailed in \Cref{alg:simulation_no_shft} in \Cref{app:exp_no_dist}.
We provide the results using OpenAI/ada retrieval model in \Cref{fig:bound_and_simulation_ada} and results for BM25, BAAI/bge, Biencoder-SFT in \Cref{fig:bound_and_simulation_bm25,fig:bound_and_simulation_baai,fig:bound_and_simulation_llmr} in \Cref{app:exp_no_dist}.
The results show that (1) the conformal generation risks $\eqnsmall{\hat{\alpha}_{\text{rag}}}$ upper bound the empirical risks of the sampled test instances, (2) for some test instances, the empirical risks nearly reach the conformal generation risk, demonstrating the soundness and tightness of our generation risk guarantees, (3)
the conformal generation risk decreases as the number of retrieved examples $\eqnsmall{N_{\text{rag}}}$ increases, which shows the effectiveness of RAG models and aligns with our theoretical analysis in \Cref{thm:gene_rag}.
\vspace{\stspace}
\vspace{-0.7em}
\paragraph{Comparisons of different SOTA retrieval models} We compare the conformal generation risks for token-level BM25 scores, and SOTA embedding models BAAI/bge, OpenAI/ada, and Biencoder-SFT. The results in \Cref{fig:comparison_retrieval} show that RAG achieves lower conformal generation risks {than LLM without retrieval (i.e., $\eqnsmall{N_{\text{rag}}=0}$)} for different retrieval models. 
Biencoder-SFT, trained with in-domain data samples, leads to lower conformal generation risk in general compared with other retrieval models. OpenAI/ada, which is known of high quality and trained on large open corpus, also demonstrates low conformal generation risks.

\vspace{\stspace}
\subsection{Conformal generation risk under distribution shifts}
\label{sec:exp_dist}
\vspace{\pspace}
\paragraph{Soundness and tightness of conformal generation risk under distribution shifts}
In practice, user input text may deviate from the calibration distribution.
In \Cref{pro:conf_shf}, we provide the first conformal generation risk under distribution shifts for general bounded risk functions. 
We evaluate the conformal generation risk $\eqnsmall{\hat{\alpha}(\rho)}$ in \Cref{pro:eq1}.
To empirically verify the soundness, we create test sets with covariate shifts by varying sample weights. 
The Hellinger distance is computed using original and shifted sample weights, with details in \Cref{alg:simulation} in \Cref{app:exp_dist}.
We compare the conformal generation risk and empirical risks with $\eqnsmall{N_{\text{rag}}=15}$ using OpenAI/ada in \Cref{fig:bound_and_simulation_ada_dist_shft}, and using BM25, BAAI/bge and Biencoder-SFT in \Cref{fig:bound_and_simulation_bm25_dist_shft,fig:bound_and_simulation_baai_dist_shft,fig:bound_and_simulation_llmr_dist_shft} in \Cref{app:exp_dist}.
The results show that (1) our conformal generation risks under distribution shifts are sound and tight across various models, and (2) conformal generation risks increase linearly with Hellinger distance $\eqnsmall{\rho}$, remaining non-trivial up to $\eqnsmall{\rho = 0.2}$.
\vspace{\pspace}
\vspace{-0.7em}
\paragraph{Comparison of SOTA retrieval models under distribution shifts}
We compare conformal generation risks for different retrieval models under distribution shifts in \Cref{fig:comparison_retrieval_dist_shft} in \Cref{app:exp_dist}. 
All models show a linear rise in risk with increasing Hellinger distance, with BiEncoder-SFT and OpenAI/ada showing lower risks at varying distances.

\ifbool{IsTwoColumn}
{
\begin{figure}[t]
\subfigure{
    \hspace{+1em}
    \begin{minipage}{0.42\linewidth}
    \centerline{\footnotesize{\quad AESLC}}
 	\vspace{1pt}
\centerline{\includegraphics[width=1.0\textwidth]{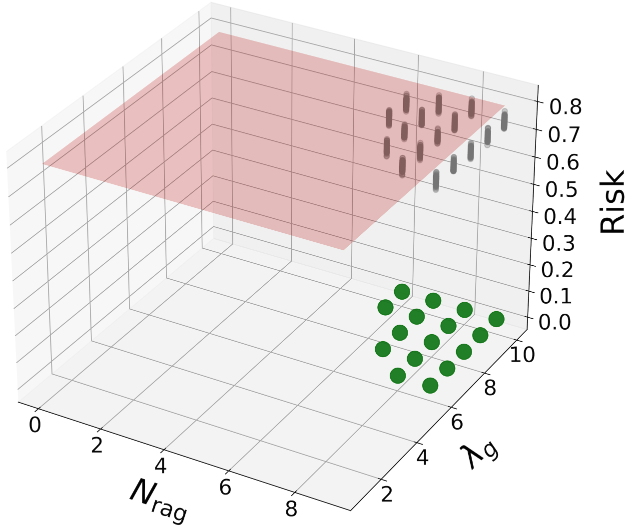}}
  
 \vspace{-0.5em}
 \end{minipage}
 \hspace{+1em}
 \begin{minipage}{0.42\linewidth}
    \centerline{\footnotesize{\quad CommonGen}}
 	\vspace{1pt}
\centerline{\includegraphics[width=1.0\textwidth]{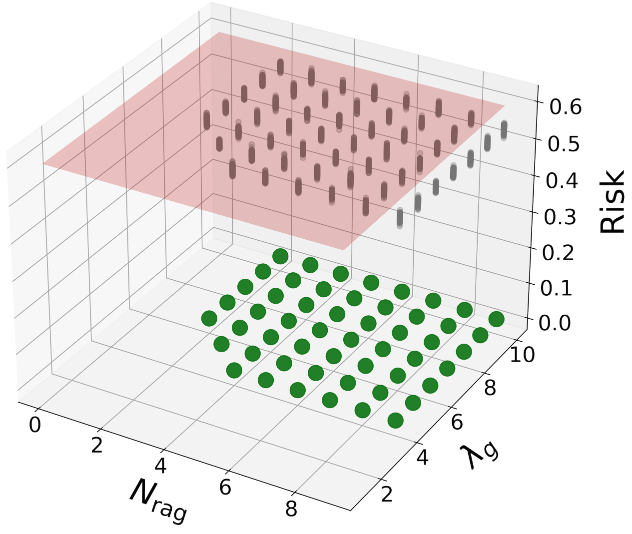}}

 \vspace{-0.5em}
 \end{minipage}
}

\subfigure{
\centerline{\hspace{-0.6em}\includegraphics[width=0.48\textwidth]{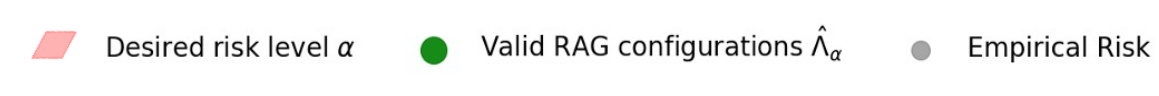}}}
\vspace{-2em}
\caption{\small Valid configurations $\eqnsmall{\hat{\Lambda}_\alpha}$ given a desired risk level $\eqnsmall{\alpha}$ and the empirical risks with different $\eqnsmall{\lambda_g}$ and $\eqnsmall{N_{\text{rag}}}$ for OpenAI/ada.
}

\label{fig:bound_and_simulation_openai_multi_gen_2}
\vspace{-1em}
\end{figure}
}
{
\begin{figure}[t]
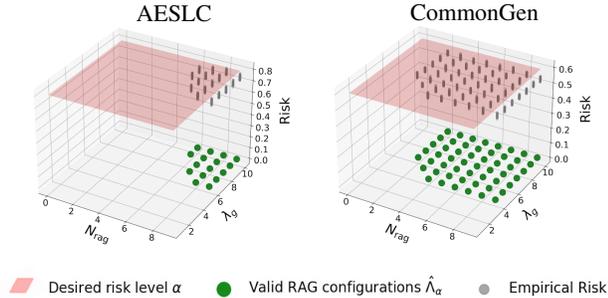

\subfigure{
    \hspace{+1em}
    \begin{minipage}{0.42\linewidth}
    \centerline{\footnotesize{\quad AESLC}}
 	\vspace{1pt}
\centerline{\includegraphics[width=1.0\textwidth]{figures/aeslc_multi.png}}

 \vspace{-0.5em}
 \end{minipage}
 \hspace{+1em}
 \begin{minipage}{0.42\linewidth}
    \centerline{\footnotesize{\quad CommonGen}}
 	\vspace{1pt}
\centerline{\includegraphics[width=1.0\textwidth]{figures/common_multi.png}}

 \vspace{-0.5em}
 \end{minipage}
}
\subfigure{
\centerline{\hspace{-0.6em}\includegraphics[width=0.7\textwidth]{figures/legend6.pdf}}}
\vspace{-2em}
\caption{Valid configurations $\eqnsmall{\hat{\Lambda}_\alpha}$ given a desired risk level $\eqnsmall{\alpha}$ and the empirical risks with different $\eqnsmall{\lambda_g}$ and $\eqnsmall{N_{\text{rag}}}$ for OpenAI/ada.
}
\label{fig:bound_and_simulation_openai_multi_gen_2}
\end{figure}
}

\vspace{\pspace}

\subsection{\name with multi-dimensional RAG configurations}
\label{sec:exp_multi}
\vspace{\pspace}

So far, we demonstrate the effectiveness of retrieved in-context examples \merveedit{quantified} by $\eqnsmall{N_{\text{rag}}}$. 
To further improve the conformal generation risk, we can adjust the RAG configurations, such as the number of generations $\eqnsmall{\lambda_g}$ and the diversity-controlling similarity threshold $\eqnsmall{\lambda_s}$. 
We follow {RAG generation protocol} in \Cref{alg:gen_pro} and define the risk function as the minimal risk among $\lambda_g$ candidate generations. 
Our tests on AESLC, CommonGen, DART, and E2E datasets (see \Cref{fig:bound_and_simulation_openai_multi_gen} and \Cref{fig:bound_and_simulation_openai_multi_gen_2_dart} in \Cref{app:exp_multi}) show that the multi-dimensional RAG configurations maintain sound and tight conformal generation risks.
Notably, a higher $\eqnsmall{N_{\text{rag}}}$ reduces generation risks more effectively than adjusting $\eqnsmall{\lambda_g}$.


\vspace{\pspace}

\subsection{Valid configurations given desired risk levels}
\label{sec:exp_multi_2}
\vspace{\pspace}

In risk guarantee \textbf{\underline{(2)}} outlined in \Cref{sec:crc1}, given a desired risk level $\eqnsmall{\alpha}$, \name computes a valid RAG configuration set $\eqnsmall{\hat{\Lambda}_\alpha}$, such that configurations within this set will lead to generation risks below $\eqnsmall{\alpha}$. 
We apply the {Bonferroni correction in \Cref{risk_guarantee_2}} for rigorous family-wise error rate control and assess empirical risks on random test sets with the identified valid configurations. 
We provide the results on AESLC and CommenGen in \Cref{fig:bound_and_simulation_openai_multi_gen_2} and results on DART and ECE in \Cref{fig:bound_and_simulation_openai_multi_gen_2_dart} in \Cref{app:exp_multi_2}.
These results validate our certification, as the empirical risks of generated configurations $\eqnsmall{\hat{\Lambda}_\alpha}$ are consistently below the given conformal generation risk $\eqnsmall{\alpha}$.
The results also show that a high number of retrieved examples $\eqnsmall{N_{\text{rag}}}$ and a larger generation set size $\eqnsmall{\lambda_g}$ contribute to reducing conformal generation risk.
The impact of the diversity threshold $\eqnsmall{\lambda_s}$ is explored in \Cref{app:exp_multi_2}.

In \Cref{subsec:llm}, we provide comparisons of the conformal generation risks for different LLMs as inference models.
In \Cref{subsec:qualitative_example}, we also provide a quantitative example to show how the constrained generation protocol benefits in reducing LLM hallucination risks.

\vspace{\pspace}
\vspace{-0.2em}
\section{Conclusion}
\label{sec:conclude}
\vspace{\pspace}
\vspace{-0.4em}
In this paper, we propose \name to provide conformal generation risk guarantees for RAG models.
\name certifies \textbf{\underline{(1)}} a conformal generation risk for a given RAG configuration, and \textbf{\underline{(2)}} a valid configuration set for a given desired risk level.
We theoretically show that RAG reduces conformal generation risks of a single LLM.
We empirically validate the soundness and tightness of our risk guarantees.

\vspace{-0.2em}
\section*{Acknowledgements}
\vspace{-0.2em}
This work is partially supported by the National Science Foundation under grant No. 1910100, No. 2046726, No. 2229876, DARPA GARD, the National Aeronautics and Space Administration (NASA) under grant No. 80NSSC20M0229, the Alfred P. Sloan Fellowship, the Amazon research award, and the eBay research award.

\vspace{-0.2em}
\section*{Impact Statement}
\vspace{-0.2em}
The \name framework is able to safeguard LLMs' practical deployment and applications against ethical and societal concerns. 
Existing research shows that the responses of LLMs can be biased towards some demographic groups and not be aligned with human ethics.
With \name, we can define a bias/ethics risk function and control the generation risk below a desired level.
The risk guarantee provided by \name enhances the use of LLMs, addressing societal issues and regulatory infringements. We do not expect any negative societal consequences for our work.


\bibliography{references}

\begin{thebibliography}{69}
\providecommand{\natexlab}[1]{#1}
\providecommand{\url}[1]{\texttt{#1}}
\expandafter\ifx\csname urlstyle\endcsname\relax
  \providecommand{\doi}[1]{doi: #1}\else
  \providecommand{\doi}{doi: \begingroup \urlstyle{rm}\Url}\fi

\bibitem[Agrawal \& Jia(2017)Agrawal and Jia]{agrawal2017optimistic}
Agrawal, S. and Jia, R.
\newblock Optimistic posterior sampling for reinforcement learning: worst-case regret bounds.
\newblock \emph{Advances in Neural Information Processing Systems}, 30, 2017.

\bibitem[Angelopoulos et~al.(2021)Angelopoulos, Bates, Cand{\`e}s, Jordan, and Lei]{angelopoulos2021learn}
Angelopoulos, A.~N., Bates, S., Cand{\`e}s, E.~J., Jordan, M.~I., and Lei, L.
\newblock Learn then test: Calibrating predictive algorithms to achieve risk control.
\newblock \emph{arXiv preprint arXiv:2110.01052}, 2021.

\bibitem[Angelopoulos et~al.(2022)Angelopoulos, Bates, Fisch, Lei, and Schuster]{angelopoulos2022conformal}
Angelopoulos, A.~N., Bates, S., Fisch, A., Lei, L., and Schuster, T.
\newblock Conformal risk control.
\newblock \emph{arXiv preprint arXiv:2208.02814}, 2022.

\bibitem[Basu et~al.(2023)Basu, Rawat, and Zaheer]{basu2023statistical}
Basu, S., Rawat, A.~S., and Zaheer, M.
\newblock A statistical perspective on retrieval-based models.
\newblock In \emph{International Conference on Machine Learning}, pp.\  1852--1886. PMLR, 2023.

\bibitem[Bates et~al.(2021)Bates, Angelopoulos, Lei, Malik, and Jordan]{bates2021distribution}
Bates, S., Angelopoulos, A., Lei, L., Malik, J., and Jordan, M.
\newblock Distribution-free, risk-controlling prediction sets.
\newblock \emph{Journal of the ACM (JACM)}, 68\penalty0 (6):\penalty0 1--34, 2021.

\bibitem[Bentkus(2004)]{bentkus2004hoeffding}
Bentkus, V.
\newblock On hoeffding’s inequalities.
\newblock 2004.

\bibitem[Brown et~al.(2020)Brown, Mann, Ryder, Subbiah, Kaplan, Dhariwal, Neelakantan, Shyam, Sastry, Askell, Agarwal, Herbert-Voss, Krueger, Henighan, Child, Ramesh, Ziegler, Wu, Winter, Hesse, Chen, Sigler, Litwin, Gray, Chess, Clark, Berner, McCandlish, Radford, Sutskever, and Amodei]{brown2020language}
Brown, T.~B., Mann, B., Ryder, N., Subbiah, M., Kaplan, J., Dhariwal, P., Neelakantan, A., Shyam, P., Sastry, G., Askell, A., Agarwal, S., Herbert-Voss, A., Krueger, G., Henighan, T., Child, R., Ramesh, A., Ziegler, D.~M., Wu, J., Winter, C., Hesse, C., Chen, M., Sigler, E., Litwin, M., Gray, S., Chess, B., Clark, J., Berner, C., McCandlish, S., Radford, A., Sutskever, I., and Amodei, D.
\newblock Language models are few-shot learners, 2020.

\bibitem[Chen et~al.(2023)Chen, Zhang, Han, Chen, Shi, Xu, and Xu]{chen2023vlp}
Chen, F.-L., Zhang, D.-Z., Han, M.-L., Chen, X.-Y., Shi, J., Xu, S., and Xu, B.
\newblock Vlp: A survey on vision-language pre-training.
\newblock \emph{Machine Intelligence Research}, 20\penalty0 (1):\penalty0 38--56, 2023.

\bibitem[Chen et~al.(2024{\natexlab{a}})Chen, Li, Liu, and Yang]{chen2024decix}
Chen, S., Li, Z., Liu, C., and Yang, W.
\newblock Decix: Explain deep learning based code generation applications.
\newblock \emph{arXiv preprint arXiv}, 2024{\natexlab{a}}.

\bibitem[Chen et~al.(2024{\natexlab{b}})Chen, Wang, Xue, Zhang, Yang, Li, Shen, Gu, and Chen]{chen2024unified}
Chen, X., Wang, C., Xue, Y., Zhang, N., Yang, X., Li, Q., Shen, Y., Gu, J., and Chen, H.
\newblock Unified hallucination detection for multimodal large language models.
\newblock \emph{arXiv preprint arXiv:2402.03190}, 2024{\natexlab{b}}.

\bibitem[Cheng et~al.(2023)Cheng, Huang, Bi, Zhan, Liu, Wang, Sun, Wei, Deng, and Zhang]{cheng2023uprise}
Cheng, D., Huang, S., Bi, J., Zhan, Y., Liu, J., Wang, Y., Sun, H., Wei, F., Deng, D., and Zhang, Q.
\newblock Uprise: Universal prompt retrieval for improving zero-shot evaluation.
\newblock \emph{arXiv preprint arXiv:2303.08518}, 2023.

\bibitem[Farinhas et~al.(2023{\natexlab{a}})Farinhas, Zerva, Ulmer, and Martins]{farinhas2023non}
Farinhas, A., Zerva, C., Ulmer, D., and Martins, A.~F.
\newblock Non-exchangeable conformal risk control.
\newblock \emph{arXiv preprint arXiv:2310.01262}, 2023{\natexlab{a}}.

\bibitem[Farinhas et~al.(2023{\natexlab{b}})Farinhas, Zerva, Ulmer, and Martins]{farinhas2023nonexchangeable}
Farinhas, A., Zerva, C., Ulmer, D., and Martins, A. F.~T.
\newblock Non-exchangeable conformal risk control, 2023{\natexlab{b}}.

\bibitem[Gatt \& Krahmer(2018)Gatt and Krahmer]{gatt2018survey}
Gatt, A. and Krahmer, E.
\newblock Survey of the state of the art in natural language generation: Core tasks, applications and evaluation.
\newblock \emph{Journal of Artificial Intelligence Research}, 61:\penalty0 65--170, 2018.

\bibitem[Gotoh et~al.(2018)Gotoh, Kim, and Lim]{gotoh2018robust}
Gotoh, J.-y., Kim, M.~J., and Lim, A.~E.
\newblock Robust empirical optimization is almost the same as mean--variance optimization.
\newblock \emph{Operations research letters}, 46\penalty0 (4):\penalty0 448--452, 2018.

\bibitem[Gou et~al.(2023)Gou, Shao, Gong, Shen, Yang, Duan, and Chen]{gou2023critic}
Gou, Z., Shao, Z., Gong, Y., Shen, Y., Yang, Y., Duan, N., and Chen, W.
\newblock Critic: Large language models can self-correct with tool-interactive critiquing.
\newblock \emph{arXiv preprint arXiv:2305.11738}, 2023.

\bibitem[Han et~al.(2023)Han, Wang, Zhao, and Ji]{han2023context}
Han, C., Wang, Z., Zhao, H., and Ji, H.
\newblock In-context learning of large language models explained as kernel regression.
\newblock \emph{arXiv preprint arXiv:2305.12766}, 2023.

\bibitem[Hermans et~al.(2017)Hermans, Beyer, and Leibe]{hermans2017defense}
Hermans, A., Beyer, L., and Leibe, B.
\newblock In defense of the triplet loss for person re-identification.
\newblock \emph{arXiv preprint arXiv:1703.07737}, 2017.

\bibitem[Hoeffding(1994)]{hoeffding1994probability}
Hoeffding, W.
\newblock Probability inequalities for sums of bounded random variables.
\newblock \emph{The collected works of Wassily Hoeffding}, pp.\  409--426, 1994.

\bibitem[Holm(1979)]{holm1979simple}
Holm, S.
\newblock A simple sequentially rejective multiple test procedure.
\newblock \emph{Scandinavian journal of statistics}, pp.\  65--70, 1979.

\bibitem[Huang et~al.(2023)Huang, Ping, Xu, Shoeybi, Chang, and Catanzaro]{huang2023raven}
Huang, J., Ping, W., Xu, P., Shoeybi, M., Chang, K. C.-C., and Catanzaro, B.
\newblock Raven: In-context learning with retrieval augmented encoder-decoder language models.
\newblock \emph{arXiv preprint arXiv:2308.07922}, 2023.

\bibitem[Jiang et~al.(2024)Jiang, Wu, Xiong, Ruan, Ding, Guo, Wen, Zhou, and Deng]{jiang2024hummer}
Jiang, L., Wu, Y., Xiong, J., Ruan, J., Ding, Y., Guo, Q., Wen, Z., Zhou, J., and Deng, X.
\newblock Hummer: Towards limited competitive preference dataset.
\newblock \emph{arXiv preprint arXiv:2405.11647}, 2024.

\bibitem[Kang et~al.(2023)Kang, Lin, Sun, Xiao, and Li]{kang2023certifiably}
Kang, M., Lin, Z., Sun, J., Xiao, C., and Li, B.
\newblock Certifiably byzantine-robust federated conformal prediction.
\newblock 2023.

\bibitem[Kang et~al.(2024{\natexlab{a}})Kang, G{\"u}rel, Li, and Li]{kang2024colep}
Kang, M., G{\"u}rel, N.~M., Li, L., and Li, B.
\newblock Colep: Certifiably robust learning-reasoning conformal prediction via probabilistic circuits.
\newblock \emph{arXiv preprint arXiv:2403.11348}, 2024{\natexlab{a}}.

\bibitem[Kang et~al.(2024{\natexlab{b}})Kang, Song, and Li]{kang2024diffattack}
Kang, M., Song, D., and Li, B.
\newblock Diffattack: Evasion attacks against diffusion-based adversarial purification.
\newblock \emph{Advances in Neural Information Processing Systems}, 36, 2024{\natexlab{b}}.

\bibitem[Karpukhin et~al.(2020)Karpukhin, O{\u{g}}uz, Min, Lewis, Wu, Edunov, Chen, and Yih]{karpukhin2020dense}
Karpukhin, V., O{\u{g}}uz, B., Min, S., Lewis, P., Wu, L., Edunov, S., Chen, D., and Yih, W.-t.
\newblock Dense passage retrieval for open-domain question answering.
\newblock \emph{arXiv preprint arXiv:2004.04906}, 2020.

\bibitem[Kishore et~al.(2023)Kishore, Wan, Lovelace, Artzi, and Weinberger]{kishore2023incdsi}
Kishore, V., Wan, C., Lovelace, J., Artzi, Y., and Weinberger, K.~Q.
\newblock Incdsi: incrementally updatable document retrieval.
\newblock In \emph{International Conference on Machine Learning}, pp.\  17122--17134. PMLR, 2023.

\bibitem[Lam(2016)]{lam2016robust}
Lam, H.
\newblock Robust sensitivity analysis for stochastic systems.
\newblock \emph{Mathematics of Operations Research}, 41\penalty0 (4):\penalty0 1248--1275, 2016.

\bibitem[Lei et~al.(2013)Lei, Robins, and Wasserman]{lei2013distribution}
Lei, J., Robins, J., and Wasserman, L.
\newblock Distribution-free prediction sets.
\newblock \emph{Journal of the American Statistical Association}, 108\penalty0 (501):\penalty0 278--287, 2013.

\bibitem[Lewis et~al.(2020)Lewis, Perez, Piktus, Petroni, Karpukhin, Goyal, K{\"u}ttler, Lewis, Yih, Rockt{\"a}schel, et~al.]{lewis2020retrieval}
Lewis, P., Perez, E., Piktus, A., Petroni, F., Karpukhin, V., Goyal, N., K{\"u}ttler, H., Lewis, M., Yih, W.-t., Rockt{\"a}schel, T., et~al.
\newblock Retrieval-augmented generation for knowledge-intensive nlp tasks.
\newblock \emph{Advances in Neural Information Processing Systems}, 33:\penalty0 9459--9474, 2020.

\bibitem[Li et~al.(2023)Li, Dong, Lin, and Guerin]{li2023compressing}
Li, Y., Dong, B., Lin, C., and Guerin, F.
\newblock Compressing context to enhance inference efficiency of large language models.
\newblock \emph{arXiv preprint arXiv:2310.06201}, 2023.

\bibitem[Liang et~al.(2022)Liang, Bommasani, Lee, Tsipras, Soylu, Yasunaga, Zhang, Narayanan, Wu, Kumar, et~al.]{liang2022holistic}
Liang, P., Bommasani, R., Lee, T., Tsipras, D., Soylu, D., Yasunaga, M., Zhang, Y., Narayanan, D., Wu, Y., Kumar, A., et~al.
\newblock Holistic evaluation of language models.
\newblock \emph{arXiv preprint arXiv:2211.09110}, 2022.

\bibitem[Lin et~al.(2019)Lin, Zhou, Shen, Zhou, Bhagavatula, Choi, and Ren]{lin2019commongen}
Lin, B.~Y., Zhou, W., Shen, M., Zhou, P., Bhagavatula, C., Choi, Y., and Ren, X.
\newblock Commongen: A constrained text generation challenge for generative commonsense reasoning.
\newblock \emph{arXiv preprint arXiv:1911.03705}, 2019.

\bibitem[Lin(2004)]{lin2004rouge}
Lin, C.-Y.
\newblock Rouge: A package for automatic evaluation of summaries.
\newblock In \emph{Text summarization branches out}, pp.\  74--81, 2004.

\bibitem[Liu et~al.(2023)Liu, Yao, Ton, Zhang, Guo, Cheng, Klochkov, Taufiq, and Li]{liu2023trustworthy}
Liu, Y., Yao, Y., Ton, J.-F., Zhang, X., Guo, R., Cheng, H., Klochkov, Y., Taufiq, M.~F., and Li, H.
\newblock Trustworthy llms: a survey and guideline for evaluating large language models' alignment, 2023.

\bibitem[Luo et~al.(2023)Luo, Xu, Zhao, Geng, Tao, Ma, Lin, and Jiang]{luo2023augmented}
Luo, Z., Xu, C., Zhao, P., Geng, X., Tao, C., Ma, J., Lin, Q., and Jiang, D.
\newblock Augmented large language models with parametric knowledge guiding.
\newblock \emph{arXiv preprint arXiv:2305.04757}, 2023.

\bibitem[Maurer \& Pontil(2009)Maurer and Pontil]{maurer2009empirical}
Maurer, A. and Pontil, M.
\newblock Empirical bernstein bounds and sample variance penalization.
\newblock \emph{arXiv preprint arXiv:0907.3740}, 2009.

\bibitem[Min et~al.(2022)Min, Lyu, Holtzman, Artetxe, Lewis, Hajishirzi, and Zettlemoyer]{min2022rethinking}
Min, S., Lyu, X., Holtzman, A., Artetxe, M., Lewis, M., Hajishirzi, H., and Zettlemoyer, L.
\newblock Rethinking the role of demonstrations: What makes in-context learning work?
\newblock \emph{arXiv preprint arXiv:2202.12837}, 2022.

\bibitem[Muennighoff et~al.(2022)Muennighoff, Tazi, Magne, and Reimers]{muennighoff2022mteb}
Muennighoff, N., Tazi, N., Magne, L., and Reimers, N.
\newblock Mteb: Massive text embedding benchmark.
\newblock \emph{arXiv preprint arXiv:2210.07316}, 2022.

\bibitem[Namkoong \& Duchi(2017)Namkoong and Duchi]{namkoong2017variance}
Namkoong, H. and Duchi, J.~C.
\newblock Variance-based regularization with convex objectives.
\newblock \emph{Advances in neural information processing systems}, 30, 2017.

\bibitem[Nan et~al.(2020)Nan, Radev, Zhang, Rau, Sivaprasad, Hsieh, Tang, Vyas, Verma, Krishna, et~al.]{nan2020dart}
Nan, L., Radev, D., Zhang, R., Rau, A., Sivaprasad, A., Hsieh, C., Tang, X., Vyas, A., Verma, N., Krishna, P., et~al.
\newblock Dart: Open-domain structured data record to text generation.
\newblock \emph{arXiv preprint arXiv:2007.02871}, 2020.

\bibitem[Novikova et~al.(2017)Novikova, Du{\v{s}}ek, and Rieser]{novikova2017e2e}
Novikova, J., Du{\v{s}}ek, O., and Rieser, V.
\newblock The e2e dataset: New challenges for end-to-end generation.
\newblock \emph{arXiv preprint arXiv:1706.09254}, 2017.

\bibitem[OpenAI et~al.(2023)OpenAI, :, Achiam, Adler, Agarwal, Ahmad, Akkaya, Aleman, Almeida, Altenschmidt, Altman, Anadkat, Avila, Babuschkin, Balaji, Balcom, Baltescu, Bao, Bavarian, Belgum, Bello, Berdine, Bernadett-Shapiro, Berner, Bogdonoff, Boiko, Boyd, Brakman, Brockman, Brooks, Brundage, Button, Cai, Campbell, Cann, Carey, Carlson, Carmichael, Chan, Chang, Chantzis, Chen, Chen, Chen, Chen, Chen, Chess, Cho, Chu, Chung, Cummings, Currier, Dai, Decareaux, Degry, Deutsch, Deville, Dhar, Dohan, Dowling, Dunning, Ecoffet, Eleti, Eloundou, Farhi, Fedus, Felix, Fishman, Forte, Fulford, Gao, Georges, Gibson, Goel, Gogineni, Goh, Gontijo-Lopes, Gordon, Grafstein, Gray, Greene, Gross, Gu, Guo, Hallacy, Han, Harris, He, Heaton, Heidecke, Hesse, Hickey, Hickey, Hoeschele, Houghton, Hsu, Hu, Hu, Huizinga, Jain, Jain, Jang, Jiang, Jiang, Jin, Jin, Jomoto, Jonn, Jun, Kaftan, Łukasz Kaiser, Kamali, Kanitscheider, Keskar, Khan, Kilpatrick, Kim, Kim, Kim, Kirchner, Kiros, Knight, Kokotajlo, Łukasz Kondraciuk,
  Kondrich, Konstantinidis, Kosic, Krueger, Kuo, Lampe, Lan, Lee, Leike, Leung, Levy, Li, Lim, Lin, Lin, Litwin, Lopez, Lowe, Lue, Makanju, Malfacini, Manning, Markov, Markovski, Martin, Mayer, Mayne, McGrew, McKinney, McLeavey, McMillan, McNeil, Medina, Mehta, Menick, Metz, Mishchenko, Mishkin, Monaco, Morikawa, Mossing, Mu, Murati, Murk, Mély, Nair, Nakano, Nayak, Neelakantan, Ngo, Noh, Ouyang, O'Keefe, Pachocki, Paino, Palermo, Pantuliano, Parascandolo, Parish, Parparita, Passos, Pavlov, Peng, Perelman, de~Avila Belbute~Peres, Petrov, de~Oliveira~Pinto, Michael, Pokorny, Pokrass, Pong, Powell, Power, Power, Proehl, Puri, Radford, Rae, Ramesh, Raymond, Real, Rimbach, Ross, Rotsted, Roussez, Ryder, Saltarelli, Sanders, Santurkar, Sastry, Schmidt, Schnurr, Schulman, Selsam, Sheppard, Sherbakov, Shieh, Shoker, Shyam, Sidor, Sigler, Simens, Sitkin, Slama, Sohl, Sokolowsky, Song, Staudacher, Such, Summers, Sutskever, Tang, Tezak, Thompson, Tillet, Tootoonchian, Tseng, Tuggle, Turley, Tworek, Uribe, Vallone,
  Vijayvergiya, Voss, Wainwright, Wang, Wang, Wang, Ward, Wei, Weinmann, Welihinda, Welinder, Weng, Weng, Wiethoff, Willner, Winter, Wolrich, Wong, Workman, Wu, Wu, Wu, Xiao, Xu, Yoo, Yu, Yuan, Zaremba, Zellers, Zhang, Zhang, Zhao, Zheng, Zhuang, Zhuk, and Zoph]{openai2023gpt4}
OpenAI, :, Achiam, J., Adler, S., Agarwal, S., Ahmad, L., Akkaya, I., Aleman, F.~L., Almeida, D., Altenschmidt, J., Altman, S., Anadkat, S., Avila, R., Babuschkin, I., Balaji, S., Balcom, V., Baltescu, P., Bao, H., Bavarian, M., Belgum, J., Bello, I., Berdine, J., Bernadett-Shapiro, G., Berner, C., Bogdonoff, L., Boiko, O., Boyd, M., Brakman, A.-L., Brockman, G., Brooks, T., Brundage, M., Button, K., Cai, T., Campbell, R., Cann, A., Carey, B., Carlson, C., Carmichael, R., Chan, B., Chang, C., Chantzis, F., Chen, D., Chen, S., Chen, R., Chen, J., Chen, M., Chess, B., Cho, C., Chu, C., Chung, H.~W., Cummings, D., Currier, J., Dai, Y., Decareaux, C., Degry, T., Deutsch, N., Deville, D., Dhar, A., Dohan, D., Dowling, S., Dunning, S., Ecoffet, A., Eleti, A., Eloundou, T., Farhi, D., Fedus, L., Felix, N., Fishman, S.~P., Forte, J., Fulford, I., Gao, L., Georges, E., Gibson, C., Goel, V., Gogineni, T., Goh, G., Gontijo-Lopes, R., Gordon, J., Grafstein, M., Gray, S., Greene, R., Gross, J., Gu, S.~S., Guo, Y.,
  Hallacy, C., Han, J., Harris, J., He, Y., Heaton, M., Heidecke, J., Hesse, C., Hickey, A., Hickey, W., Hoeschele, P., Houghton, B., Hsu, K., Hu, S., Hu, X., Huizinga, J., Jain, S., Jain, S., Jang, J., Jiang, A., Jiang, R., Jin, H., Jin, D., Jomoto, S., Jonn, B., Jun, H., Kaftan, T., Łukasz Kaiser, Kamali, A., Kanitscheider, I., Keskar, N.~S., Khan, T., Kilpatrick, L., Kim, J.~W., Kim, C., Kim, Y., Kirchner, H., Kiros, J., Knight, M., Kokotajlo, D., Łukasz Kondraciuk, Kondrich, A., Konstantinidis, A., Kosic, K., Krueger, G., Kuo, V., Lampe, M., Lan, I., Lee, T., Leike, J., Leung, J., Levy, D., Li, C.~M., Lim, R., Lin, M., Lin, S., Litwin, M., Lopez, T., Lowe, R., Lue, P., Makanju, A., Malfacini, K., Manning, S., Markov, T., Markovski, Y., Martin, B., Mayer, K., Mayne, A., McGrew, B., McKinney, S.~M., McLeavey, C., McMillan, P., McNeil, J., Medina, D., Mehta, A., Menick, J., Metz, L., Mishchenko, A., Mishkin, P., Monaco, V., Morikawa, E., Mossing, D., Mu, T., Murati, M., Murk, O., Mély, D., Nair, A.,
  Nakano, R., Nayak, R., Neelakantan, A., Ngo, R., Noh, H., Ouyang, L., O'Keefe, C., Pachocki, J., Paino, A., Palermo, J., Pantuliano, A., Parascandolo, G., Parish, J., Parparita, E., Passos, A., Pavlov, M., Peng, A., Perelman, A., de~Avila Belbute~Peres, F., Petrov, M., de~Oliveira~Pinto, H.~P., Michael, Pokorny, Pokrass, M., Pong, V., Powell, T., Power, A., Power, B., Proehl, E., Puri, R., Radford, A., Rae, J., Ramesh, A., Raymond, C., Real, F., Rimbach, K., Ross, C., Rotsted, B., Roussez, H., Ryder, N., Saltarelli, M., Sanders, T., Santurkar, S., Sastry, G., Schmidt, H., Schnurr, D., Schulman, J., Selsam, D., Sheppard, K., Sherbakov, T., Shieh, J., Shoker, S., Shyam, P., Sidor, S., Sigler, E., Simens, M., Sitkin, J., Slama, K., Sohl, I., Sokolowsky, B., Song, Y., Staudacher, N., Such, F.~P., Summers, N., Sutskever, I., Tang, J., Tezak, N., Thompson, M., Tillet, P., Tootoonchian, A., Tseng, E., Tuggle, P., Turley, N., Tworek, J., Uribe, J. F.~C., Vallone, A., Vijayvergiya, A., Voss, C., Wainwright, C.,
  Wang, J.~J., Wang, A., Wang, B., Ward, J., Wei, J., Weinmann, C., Welihinda, A., Welinder, P., Weng, J., Weng, L., Wiethoff, M., Willner, D., Winter, C., Wolrich, S., Wong, H., Workman, L., Wu, S., Wu, J., Wu, M., Xiao, K., Xu, T., Yoo, S., Yu, K., Yuan, Q., Zaremba, W., Zellers, R., Zhang, C., Zhang, M., Zhao, S., Zheng, T., Zhuang, J., Zhuk, W., and Zoph, B.
\newblock Gpt-4 technical report, 2023.

\bibitem[Quach et~al.(2023)Quach, Fisch, Schuster, Yala, Sohn, Jaakkola, and Barzilay]{quach2023conformal}
Quach, V., Fisch, A., Schuster, T., Yala, A., Sohn, J.~H., Jaakkola, T.~S., and Barzilay, R.
\newblock Conformal language modeling.
\newblock \emph{arXiv preprint arXiv:2306.10193}, 2023.

\bibitem[Robertson et~al.(2009)Robertson, Zaragoza, et~al.]{robertson2009probabilistic}
Robertson, S., Zaragoza, H., et~al.
\newblock The probabilistic relevance framework: Bm25 and beyond.
\newblock \emph{Foundations and Trends{\textregistered} in Information Retrieval}, 3\penalty0 (4):\penalty0 333--389, 2009.

\bibitem[Rubin et~al.(2021)Rubin, Herzig, and Berant]{rubin2021learning}
Rubin, O., Herzig, J., and Berant, J.
\newblock Learning to retrieve prompts for in-context learning.
\newblock \emph{arXiv preprint arXiv:2112.08633}, 2021.

\bibitem[Saw et~al.(1984)Saw, Yang, and Mo]{saw1984chebyshev}
Saw, J.~G., Yang, M.~C., and Mo, T.~C.
\newblock Chebyshev inequality with estimated mean and variance.
\newblock \emph{The American Statistician}, 38\penalty0 (2):\penalty0 130--132, 1984.

\bibitem[Stankeviciute et~al.(2021)Stankeviciute, M~Alaa, and van~der Schaar]{stankeviciute2021conformal}
Stankeviciute, K., M~Alaa, A., and van~der Schaar, M.
\newblock Conformal time-series forecasting.
\newblock \emph{Advances in neural information processing systems}, 34:\penalty0 6216--6228, 2021.

\bibitem[Steerneman(1983)]{steerneman1983total}
Steerneman, T.
\newblock On the total variation and hellinger distance between signed measures; an application to product measures.
\newblock \emph{Proceedings of the American Mathematical Society}, 88\penalty0 (4):\penalty0 684--688, 1983.

\bibitem[Tay et~al.(2022)Tay, Tran, Dehghani, Ni, Bahri, Mehta, Qin, Hui, Zhao, Gupta, et~al.]{tay2022transformer}
Tay, Y., Tran, V., Dehghani, M., Ni, J., Bahri, D., Mehta, H., Qin, Z., Hui, K., Zhao, Z., Gupta, J., et~al.
\newblock Transformer memory as a differentiable search index.
\newblock \emph{Advances in Neural Information Processing Systems}, 35:\penalty0 21831--21843, 2022.

\bibitem[Touvron et~al.(2023)Touvron, Lavril, Izacard, Martinet, Lachaux, Lacroix, Rozi{\`e}re, Goyal, Hambro, Azhar, et~al.]{touvron2023llama}
Touvron, H., Lavril, T., Izacard, G., Martinet, X., Lachaux, M.-A., Lacroix, T., Rozi{\`e}re, B., Goyal, N., Hambro, E., Azhar, F., et~al.
\newblock Llama: Open and efficient foundation language models.
\newblock \emph{arXiv preprint arXiv:2302.13971}, 2023.

\bibitem[Von~Oswald et~al.(2023)Von~Oswald, Niklasson, Randazzo, Sacramento, Mordvintsev, Zhmoginov, and Vladymyrov]{von2023transformers}
Von~Oswald, J., Niklasson, E., Randazzo, E., Sacramento, J., Mordvintsev, A., Zhmoginov, A., and Vladymyrov, M.
\newblock Transformers learn in-context by gradient descent.
\newblock In \emph{International Conference on Machine Learning}, pp.\  35151--35174. PMLR, 2023.

\bibitem[Vovk et~al.(1999)Vovk, Gammerman, and Saunders]{vovk1999machine}
Vovk, V., Gammerman, A., and Saunders, C.
\newblock Machine-learning applications of algorithmic randomness.
\newblock 1999.

\bibitem[Vovk et~al.(2005)Vovk, Gammerman, and Shafer]{vovk2005algorithmic}
Vovk, V., Gammerman, A., and Shafer, G.
\newblock \emph{Algorithmic learning in a random world}, volume~29.
\newblock Springer, 2005.

\bibitem[Wang et~al.(2022{\natexlab{a}})Wang, Min, Deng, Shen, Wu, Zettlemoyer, and Sun]{wang2022towards}
Wang, B., Min, S., Deng, X., Shen, J., Wu, Y., Zettlemoyer, L., and Sun, H.
\newblock Towards understanding chain-of-thought prompting: An empirical study of what matters.
\newblock \emph{arXiv preprint arXiv:2212.10001}, 2022{\natexlab{a}}.

\bibitem[Wang et~al.(2023{\natexlab{a}})Wang, Chen, Pei, Xie, Kang, Zhang, Xu, Xiong, Dutta, Schaeffer, Truong, Arora, Mazeika, Hendrycks, Liu, Cheng, Keyejo, Song, and Li]{decowang}
Wang, B., Chen, W., Pei, H., Xie, C., Kang, M., Zhang, C., Xu, C., Xiong, Z., Dutta, R., Schaeffer, R., Truong, S., Arora, S., Mazeika, M., Hendrycks, D., Liu, Z., Cheng, Y., Keyejo, S., Song, D., and Li, B.
\newblock Decodingtrust: A comprehensive assessment of trustworthiness in gpt models.
\newblock \emph{NeurIPS}, 2023{\natexlab{a}}.

\bibitem[Wang et~al.(2023{\natexlab{b}})Wang, Chen, Pei, Xie, Kang, Zhang, Xu, Xiong, Dutta, Schaeffer, et~al.]{wang2023decodingtrust}
Wang, B., Chen, W., Pei, H., Xie, C., Kang, M., Zhang, C., Xu, C., Xiong, Z., Dutta, R., Schaeffer, R., et~al.
\newblock Decodingtrust: A comprehensive assessment of trustworthiness in gpt models.
\newblock \emph{arXiv preprint arXiv:2306.11698}, 2023{\natexlab{b}}.

\bibitem[Wang et~al.(2023{\natexlab{c}})Wang, Yang, and Wei]{wang2023learning}
Wang, L., Yang, N., and Wei, F.
\newblock Learning to retrieve in-context examples for large language models.
\newblock \emph{arXiv preprint arXiv:2307.07164}, 2023{\natexlab{c}}.

\bibitem[Wang et~al.(2022{\natexlab{b}})Wang, Hou, Wang, Miao, Wu, Chen, Xia, Chi, Zhao, Liu, et~al.]{wang2022neural}
Wang, Y., Hou, Y., Wang, H., Miao, Z., Wu, S., Chen, Q., Xia, Y., Chi, C., Zhao, G., Liu, Z., et~al.
\newblock A neural corpus indexer for document retrieval.
\newblock \emph{Advances in Neural Information Processing Systems}, 35:\penalty0 25600--25614, 2022{\natexlab{b}}.

\bibitem[Weber et~al.(2022)Weber, Li, Wang, Zhao, Li, and Zhang]{weber2022certifying}
Weber, M.~G., Li, L., Wang, B., Zhao, Z., Li, B., and Zhang, C.
\newblock Certifying out-of-domain generalization for blackbox functions.
\newblock In \emph{Proceedings of the 39th International Conference on Machine Learning}, 2022.

\bibitem[Wei et~al.(2021)Wei, Bosma, Zhao, Guu, Yu, Lester, Du, Dai, and Le]{wei2021finetuned}
Wei, J., Bosma, M., Zhao, V.~Y., Guu, K., Yu, A.~W., Lester, B., Du, N., Dai, A.~M., and Le, Q.~V.
\newblock Finetuned language models are zero-shot learners.
\newblock \emph{arXiv preprint arXiv:2109.01652}, 2021.

\bibitem[Xiong et~al.(2020)Xiong, Xiong, Li, Tang, Liu, Bennett, Ahmed, and Overwijk]{xiong2020approximate}
Xiong, L., Xiong, C., Li, Y., Tang, K.-F., Liu, J., Bennett, P., Ahmed, J., and Overwijk, A.
\newblock Approximate nearest neighbor negative contrastive learning for dense text retrieval.
\newblock \emph{arXiv preprint arXiv:2007.00808}, 2020.

\bibitem[Xu \& Xie(2021)Xu and Xie]{xu2021conformal}
Xu, C. and Xie, Y.
\newblock Conformal prediction interval for dynamic time-series.
\newblock In \emph{International Conference on Machine Learning}, pp.\  11559--11569. PMLR, 2021.

\bibitem[Yang \& Kuchibhotla(2021)Yang and Kuchibhotla]{yang2021finite}
Yang, Y. and Kuchibhotla, A.~K.
\newblock Finite-sample efficient conformal prediction.
\newblock \emph{arXiv preprint arXiv:2104.13871}, 2021.

\bibitem[Zaffran et~al.(2022)Zaffran, F{\'e}ron, Goude, Josse, and Dieuleveut]{zaffran2022adaptive}
Zaffran, M., F{\'e}ron, O., Goude, Y., Josse, J., and Dieuleveut, A.
\newblock Adaptive conformal predictions for time series.
\newblock In \emph{International Conference on Machine Learning}, pp.\  25834--25866. PMLR, 2022.

\bibitem[Zhang et~al.(2024)Zhang, Yu, Li, Dong, Su, Chu, and Yu]{zhang2024mm}
Zhang, D., Yu, Y., Li, C., Dong, J., Su, D., Chu, C., and Yu, D.
\newblock Mm-llms: Recent advances in multimodal large language models.
\newblock \emph{arXiv preprint arXiv:2401.13601}, 2024.

\bibitem[Zhang et~al.(2023{\natexlab{a}})Zhang, Xiao, Liu, Dou, and Nie]{zhang2023retrieve}
Zhang, P., Xiao, S., Liu, Z., Dou, Z., and Nie, J.-Y.
\newblock Retrieve anything to augment large language models.
\newblock \emph{arXiv preprint arXiv:2310.07554}, 2023{\natexlab{a}}.

\bibitem[Zhang \& Tetreault(2019)Zhang and Tetreault]{zhang2019email}
Zhang, R. and Tetreault, J.
\newblock This email could save your life: Introducing the task of email subject line generation.
\newblock \emph{arXiv preprint arXiv:1906.03497}, 2019.

\bibitem[Zhang et~al.(2023{\natexlab{b}})Zhang, Frei, and Bartlett]{zhang2023trained}
Zhang, R., Frei, S., and Bartlett, P.~L.
\newblock Trained transformers learn linear models in-context.
\newblock \emph{arXiv preprint arXiv:2306.09927}, 2023{\natexlab{b}}.

\end{thebibliography}
\bibliographystyle{icml2024}

\newpage
\appendix
\onecolumn

\section*{Appendices} 
\DoToC

\clearpage

\section{Discussions of limitations and future work}
\label{app:discussion}

\paragraph{Limitations} One potential challenge of applying \name practically could be the collection of calibration data. In practice, the user input texts are sampled from a time-series data distribution. Therefore, accessing in-distribution calibration samples requires collecting real-time query samples, which could pose the challenge of computational resources and system latency.
Another potential limitation may lie in the probability of the guarantee. Since \name can only provide a high-confidence risk bound via conformal risk analysis, generations with excessive risks can exist.
Therefore, we may need more calibration samples to counter for a higher confidence level, and thus mitigate the appearance of outliers to a large extent.
\merveedit{Also, although the analysis in \name shows the benefits of a large external knowledge base to a low conformal generation risk, the large knowledge base may induce a larger time complexity of KNN searching and space complexity of storing the examples, leading to a trade-off between the generalization/utility and inference efficiency. }

\paragraph{Future work} One interesting future work is to provide conformal risk analysis for time-series data. Conformal prediction for time series \cite{zaffran2022adaptive,xu2021conformal,stankeviciute2021conformal} adaptively adjusts the prediction coverage for sequential data for the regression and classification task. However, the adaptive risk calibration for time series is unexplored but important to practical deployments. Therefore, conformal risk analysis for time series can further motivate the application of conformal risk analysis for LLMs.

\paragraph{Further discussions on calibration data collection.} In principle, if test and calibration instances are from the same distribution, randomly sampling from this distribution with a sufficient sample size $N_{\text{cal}}$ already provides competitive generation risk guarantees (Proposition 1 and Theorem 1). Otherwise, if sampling from the test distribution is impractical, we should sample instances from a proposal distribution with a small distribution distance ($\rho$) to the test distribution and sample variance ($\hat{V}$) so the distribution of the calibration set mimics that of test data (Theorem 2).
We can further use the following techniques for calibration data selection: (1) rejection sampling: drawing samples from a proposal distribution and then rejecting some of these samples based on the known criterion of the test distribution, (2) importance sampling: adjusting the sample weights to closely match the target distribution, and (3) variance reduction such as input normalization. To exemplify, consider a composite domain with medical support, wiki question answering, and service assistance fields, where only a broad proposal distribution is available. We can leverage the strategies mentioned above as follows. We can (1) reject out-of-scope samples, (2) perform importance sampling by adjusting the sample weights based on the proposal distribution and the test distribution, and (3) normalize samples to minimize the distribution gap and variance, for instance, through a unified prompt reformulation.
To improve the probability of the risk guarantee given fixed sample sizes, one can seek advanced concentration analysis with additional constraints on data distribution, which may lead to tighter risk bounds in practice.

\section{Additional related work}
\label{app:related_work}

\textbf{Retrieval augmented generation} (RAG) is a framework for improving the generation quality of LLMs via retrieving relevant information from the external knowledge base and grounding the model on the information for conditional generations. Biencoder retrieval methods \cite{lewis2020retrieval,karpukhin2020dense,xiong2020approximate} leverage two encoders to map the query text and candidate texts into the embedding space and retrieve candidate texts with high embedding similarity to the query text embedding. End-to-end retrieval methods \cite{tay2022transformer,wang2022neural,kishore2023incdsi} train a model to map the query text to the id of relevant candidate documents directly.
Another line of work \cite{luo2023augmented,gou2023critic} leverages external tools such as LLMs to retrieve relevant documents via prompting design.
Although RAG demonstrates impressive capacities, the theoretical analysis of retrieval models for LLM generations is limited.
\citeauthor{basu2023statistical} analyze the retrieval model of constrained function class from a statistical perspective, but the results cannot be generalized to the self-attention transformers. 
In this work, we provide the first analysis of how RAG enhances the generation quality and mitigate generation risks of self-attention transformers.

\textbf{Conformal prediction} is a statistical tool to construct the prediction set with guaranteed prediction coverage \citep{vovk1999machine,vovk2005algorithmic,lei2013distribution,yang2021finite,kang2023certifiably,kang2024colep}, assuming that the data is exchangeable. 
However, conformal prediction can only provide guarantees for the regression and classification tasks and is not directly applicable to the generation tasks, which are more relevant for LLMs.
Conformal risk controlling methods  \cite{bates2021distribution,angelopoulos2021learn,angelopoulos2022conformal,quach2023conformal} provide a high-confidence risk guarantee with the data exchangeability assumption for any black-box risk functions.
We can define a specific risk function for a RAG model and certify a risk upper bound of generations based on statistics on in-distribution calibration set.
However, the risk guarantee is violated under distribution shifts at test time.
\citeauthor{angelopoulos2022conformal, farinhas2023nonexchangeable} offer a valid conformal risk for monotonic risk functions under distribution shifts, but the monotonicity assumption may not always hold in practice. In this work, we introduce the first conformal risk bound for general bounded risk functions under test-time distribution shifts.

\section{Conformal generation risks for RAG models}
\label{app:conf_gen}

\subsection{Constrained generation protocol for RAG models}
\label{app:cons_gen_rag}

To safeguard diverse foundation model-based applications \cite{chen2024unified,chen2024decix,jiang2024hummer,li2023compressing,chen2023vlp,zhang2024mm}, we typically leverage RAG to enhance the trustworthiness of generations \cite{wang2023decodingtrust,kang2024diffattack}.
RAG models \cite{wang2023learning,rubin2021learning,huang2023raven} combine a retrieval model and a generation LM. The retrieval model retrieves $\eqnsmall{N_{\text{rag}}}$ relevant examples to the query from an external knowledge base, and the LM learns in-context from these examples. The knowledge base contains $\eqnsmall{N_{\text{ext}}}$ samples in $\eqnsmall{\hat{\gD}_{\text{ext}}=\{(X_i,Y_i)\}_{i\in[N_{\text{ext}}]}}$. The retrieval model uses an encoder to map instances into an embedding space, and then identifies the relevant examples to the query $\eqnsmall{X_{\text{test}}}$ based on similarity. This similarity, defined by $\eqnsmall{s_{\theta_r}(\cdot,\cdot): \gX \times \gX \mapsto \sR}$ and parameterized by $\eqnsmall{\theta_r}$, is used to find the nearest examples using KNN search in the embedding space.

We arrange the retrieved $\eqnsmall{N_{\text{rag}}}$ in-context examples and the test example $\eqnsmall{X_{\text{test}}}$ into augmented input text $\eqnsmall{X^{\text{(rag)}}}$ using a template.
We then sample the generation from $\eqnsmall{p_{\theta_l}(\cdot|X^{\text{(rag)}})}$ repeatedly until $\eqnsmall{\lambda_g}$ generations are collected. To control the diversity of generations, we reject those with a similarity higher than a threshold $\eqnsmall{\lambda_s}$ to the previous generations. 
In essence, the constrained generation protocol is controlled by configuration $\eqnsmall{\vlambda=[N_{\text{rag}}, \lambda_g, \lambda_s]}$ and output a generation set $\eqnsmall{T_{\vlambda,p_{\theta_l}}(x)}$ 
based on the configuration $\eqnsmall{\vlambda}$ and input $\eqnsmall{x}$. We refer to \Cref{alg:gen_pro} for the pseudocode of the protocol.

\begin{algorithm}[t]
    \caption{Constrained generation protocol for RAG}
    \label{alg:gen_pro}
    \begin{algorithmic}[1]
        \STATE {\bfseries Input:} input prompt $X_{\text{test}}$, LM $p_{\theta_l}(y|x)$, generation set size $\lambda_s$, retrieved example size $N_{\text{rag}}$, 
        external knowledge base $\hat{\gD}_{\text{ext}}$,
        similarity measurement function $s_{\theta_{r}}(\cdot,\cdot)$ with embedding model parameterized by $\theta_{r}$, 
        generation similarity threshold $\lambda_g$, 
        parameter configuration $\vlambda=[N_{\text{rag}}, \lambda_g, \lambda_s]$
        \STATE {\bfseries Output:} Generation set $\gG_\vlambda(X_{\text{test}})$
        \STATE $\gG_\vlambda(X_{\text{test}}) \gets \Phi$
        \STATE $\gZ \gets \text{KNN}(X_{\text{test}},N_{\text{rag}};\hat{\gD}_{\text{ext}},s_{\theta_r})$ \COMMENT{Retrieve $N_{\text{rag}}$ examples from $\hat{\gD}_{\text{ext}}$ via KNN search with similarity measurement $s_{\theta_r}(\cdot,\cdot)$}
        \STATE $X^{\text{(rag)}} \gets \text{Template}(X_{\text{test}}, \gZ )$ \COMMENT{Augmented prompt with $X_{\text{test}}$ and retrieved examples $\gZ$ with a template}
        \WHILE{$ |\gG_\vlambda(X_{\text{test}})| < \lambda_s$}
        \STATE $y \sim p_{\theta_l}(\cdot | X^{\text{(rag)}})$
        \WHILE{$\exists g \in \gG_\vlambda, s_{\theta_r}(y,g)>\lambda_g$}
            \STATE $y \sim p_{\theta_l}(\cdot | X^{\text{(rag)}})$ \COMMENT{Reject sampling}
        \ENDWHILE
        \STATE $\gG_\vlambda(X_{\text{test}}) = \gG_\vlambda(X_{\text{test}}) \cup \left\{  y \right\}$
        \ENDWHILE
        \STATE \textbf{Return} $\gG_\vlambda(X_{\text{test}})$
    \end{algorithmic}
\end{algorithm}

\section{Risk Guarantees}
\label{app:pre}

\subsection{Risk guarantee \textbf{\underline{(1)}} (\Cref{risk_guarantee_1})}
\label{app:risk_guarantee_1}

\begin{proof}[Proof of \Cref{risk_guarantee_1}]
    The proof sketch follows \cite{angelopoulos2021learn}. 
    Since the risk function $R(\cdot, \cdot)$ is upper bounded by $1$, we can apply a tighter version of Hoeffding's inequality \cite{hoeffding1994probability} for $\hat{\alpha}>\mathbb{E}[R(T_{{\vlambda},p_{\theta_l}}(x), y)]$:
    \begin{equation}
    \label{eq;hoeff_1}
        \sP\left[ R(T_{{\vlambda},p_{\theta_l}}(x), y) \ge \hat{\alpha} \right] \le \exp\left\{ -N_{\text{cal}}h(\hat{R}(\hat{\gD}_{\text{cal}}),\hat{\alpha}) \right\}
    \end{equation}
    Also, applying Bentkus inequality \cite{bentkus2004hoeffding}, we have:
    \begin{equation}
    \label{eq:bentk_2}
        \sP\left[ R(T_{{\vlambda},p_{\theta_l}}(x), y) \ge \hat{\alpha} \right] \le e\sP\left[\text{Bin}(N_{\text{cal}}, \hat{\alpha}) \le \left\lceil N_{\text{cal}} \hat{R}(\hat{\gD}_{\text{cal}}) \right\rceil \right]
    \end{equation}
    Combining \Cref{eq;hoeff_1,eq:bentk_2}, we have:
    \begin{equation}
        \sP\left[ R(T_{{\vlambda},p_{\theta_l}}(x), y) \ge \hat{\alpha} \right] \le  \min\left( \exp\left\{-N_{\text{cal}} h\left(\hat{R}(\hat{\gD}_{\text{cal}}), \hat{\alpha}) \right) \right\}, e\sP\left[\text{Bin}(N_{\text{cal}}, \hat{\alpha}) \le \left\lceil N_{\text{cal}} \hat{R}(\hat{\gD}_{\text{cal}}) \right\rceil \right] \right)
    \end{equation}
    Or equivalently, given uncertainty $1-\delta$, we have:
    \begin{equation}
        \delta = \min\left( \exp\left\{-N_{\text{cal}} h\left(\hat{R}(\hat{\gD}_{\text{cal}}), \hat{\alpha}) \right) \right\}, e\sP\left[\text{Bin}(N_{\text{cal}}, \hat{\alpha}) \le \left\lceil N_{\text{cal}} \hat{R}(\hat{\gD}_{\text{cal}}) \right\rceil \right] \right),
    \end{equation}
    which leads to the following by formulating an inverse function:
    \begin{equation}
        \hat{\alpha} = \min \left\{ h^{-1}\left(\dfrac{\ln(1/\delta)}{N_{\text{cal}}};\hat{R}(\hat{\gD}_{\text{cal}})\right),\Phi^{-1}_{\text{bin}}\left(\dfrac{\delta}{e};N_{\text{cal}},\hat{R}(\hat{\gD}_{\text{cal}})\right) \right\}
    \end{equation}
\end{proof}

\begin{remark}
    Given the constrained generation protocol $T_{\vlambda,p_{\theta_l}}$, RAG generation parameter $\vlambda$, a calibration set $\hat{\gD}_{\text{cal}}$, and a risk function $R(\cdot,\cdot): 2^\gY \times \gY \mapsto \sR$, we aim to provide a risk guarantee of the test sample $(X_{\text{test}},Y_{\text{test}})$:
\begin{equation}
    \sP\left[ R(T_{{\vlambda}, p_{\theta_l}}(X_{\text{test}}),Y_{\text{test}})  \le \hat{\alpha} \right] \ge 1 - \delta,
\end{equation}
where $\hat{\alpha}$ is the conformal risk upper bound, and the confidence level $1-\delta$ can be computed by Hoeffding-Bentkus inequalities \cite{bates2021distribution}:
\begin{equation}
\label{eq:hb_ineq}
\begin{aligned}
    \delta = \min\left( \exp\left\{-N_{\text{cal}} h\left(\hat{R}(\hat{\gD}_{\text{cal}}), \hat{\alpha}) \right) \right\}, e\sP\left[\text{Bin}(N_{\text{cal}}, \hat{\alpha}) \le \left\lceil N_{\text{cal}} \hat{R}(\hat{\gD}_{\text{cal}}) \right\rceil \right] \right),
\end{aligned}
\end{equation}
where $h(a,b)=a\log(a/b) + (1-a)\log((1-a)/(1-b))$, $\text{Bin}(\cdot,\cdot)$ denotes the binomial distribution, $N_{\text{cal}}$ is the number of samples in the calibration set, and $\hat{R}(\cdot)$ computes the empirical risk on the calibration set.

Given the confidence level $1-\delta$, we can also inversely compute the conformal risk upper bound $\hat{\alpha}$ as the following:
\begin{equation}
\label{eq:conformal_bnd_1}
    \hat{\alpha} = \min \left\{ h^{-1}\left(\dfrac{\ln(1/\delta)}{N_{\text{cal}}};\hat{R}(\hat{\gD}_{\text{cal}})\right),\Phi^{-1}_{\text{bin}}\left(\dfrac{\delta}{e};N_{\text{cal}},\hat{R}(\hat{\gD}_{\text{cal}})\right) \right\}
\end{equation}
where $h^{-1}(\cdot;\cdot)$ is the partial inverse function such that $h^{-1}(h(a,b);a)=b$ with $h(a,b)=a\log(a/b) + (1-a)\log((1-a)/(1-b))$, and $\Phi^{-1}_{\text{bin}}$ denotes the inverse of CDF of binomial distribution. The HB bound uses the empirical risk on the calibration set as test statistics and provides finite-sample statistical results with surprising empirical effectiveness.
\end{remark}

\paragraph{Alternative approach for Risk Guarantee \Cref{risk_guarantee_1}} We can obtain a tighter guarantee of the conformal risk if we assume that the given configuration vector $\vlambda$ has dimension $1$ (i.e., $B=1$) and the risk function $R(\cdot,\cdot)$ monotonically increases in parameter $\vlambda$ and is upper bounded by $C$ $(R: 2^\gY \times \gY \mapsto (-\infty,C])$. Then, we can have the following conformal risk guarantee according to \cite{angelopoulos2022conformal}:
\begin{equation}
\label{eq:conform2}
\begin{aligned}
    \dfrac{N_{\text{cal}}}{N_{\text{cal}}+1} \hat{R}(\hat{\gD}_{\text{cal}}) - \dfrac{C}{N_{\text{cal}}+1} \le \mathbb{E}\left[ R(T_{\hat{\vlambda},p_{\theta_l}}(X_{\text{test}}),Y_{\text{test}}) \right]  \le \dfrac{N_{\text{cal}}}{N_{\text{cal}}+1} \hat{R}(\hat{\gD}_{\text{cal}}) + \dfrac{C}{N_{\text{cal}}+1}.
\end{aligned}
\end{equation}

Although the guarantee in \Cref{eq:conform2} is tighter with the guarantee of the upper bound and the lower bound, the additional assumption of single dimensionality and monotonicity does not hold for many practical generation protocols. Therefore, we mainly consider the conformal risk bound in \Cref{eq:conformal_bnd_1} across the analysis.
We also add discussions that \name is flexible in considering different types of conformal risk bounds, which basically presents an explicit function of the controlled risk with respect to the empirical risk.
Since we build the connection between the empirical risk $\hat{R}$ to the retrieval model in the risk in the analysis, we only need to directly connect the empirical risk to the controlled risk via the explicit function to achieve end-to-end certification.

\subsection{Risk guarantee \textbf{\underline{(2)}} (\Cref{risk_guarantee_2})}
\label{app:fwer}

\begin{proof}[Proof of \Cref{risk_guarantee_2}]
    The proof follows \cite{holm1979simple}. 
    We consider $|\Lambda|$ independent hypothesis test corresponding to the $|\Lambda|$ Null hypothesis. By the Bonferroni method, each test is performed at a significance level of $\dfrac{\delta}{|\Lambda|}$.
    Therefore, The probability of not making a Type I error in a single test is $1-\dfrac{\delta}{|\Lambda|}$.
    The probability of making no Type I error in all $|\Lambda|$ tests is $(1-\dfrac{\delta}{|\Lambda|})^{|\Lambda|}$.
     The probability of making at least one Type I error (i.e., FWER) is the complement of making no Type I errors, which is $1-(1-\dfrac{\delta}{|\Lambda|})^{|\Lambda|}\le \delta$.
     Therefore, we prove that the familywise error rate is $\delta$ for Bonferroni correction. 
     Thus, going back to the risk guarantee, we have:
     \begin{equation}
         \sP \left[ \sup_{ \hat{\vlambda} \in \hat{\Lambda}_\alpha} \left\{R\left(T_{\hat{\vlambda},p_{\theta_l}}(x),y \right)  \right\} \le \alpha \right] \ge 1 - \delta
     \end{equation}
\end{proof}

\begin{remark}
    To achieve conformal analysis \textbf{\underline{(2)}}, we follow the procedure in \cite{angelopoulos2021learn}: (a) for each parameter configuration $\vlambda$ in the feasible region $\Lambda$, associate the null hypothesis: $\gH_j: R(T_{\vlambda,p_{\theta_l}})>\alpha$ (rejecting the null hypothesis implies controlling the risk below $\alpha$ with hyperparameter $\vlambda$), (b) for each null hypothesis, compute a finite-sample valid p-value $p_j$ using Hoeffding-Bentkus inequality, and (c) return $\hat{\Lambda}_\alpha=\gA(p_1,...,p_{|\Lambda|})$, where $\gA$ is an algorithm that controls the family-wise error rate (FWER). 
Essentially, FWER controls the error by union bounds over the hyperparameter space.
The Bonferroni correction yields $\hat{\Lambda}_\alpha = \{ \hat{\vlambda}_j: \delta_j \le \delta / |\Lambda| \}$. The graph-based search in \Cref{alg:multi_search} dynamically assigns the error levels and yields a tighter certification. Specifically, we maintain a directed graph with nodes denoting the error rate of the parameter configuration and edges denoting the correlations between two parameters. The correlations can be instantiated randomly. We first randomly assign error rates to all feasible parameter configurations, and then once we search for one valid parameter with a smaller p-value than the assigned error rate, we will add the parameter to the valid set and propagate the excessive error rate to other nodes. The procedure repeats until no valid parameter can be found.\\
\textbf{Computation of p-values}. Due to the duality between p-values and confidence intervals \cite{bates2021distribution}, we compute the p-value by applying the Hoeffding-Bentkus inequality as the following:
\begin{equation}
    p_j = \min\left( \exp\left\{-N_{\text{cal}} h\left( \hat{\mathbb{E}}[R(T_{\vlambda_j,p_{\theta_l}}(x),y)], \hat{\alpha}) \right) \right\}, e\sP\left[\text{Bin}(N_{\text{cal}}, \hat{\alpha}) \le \left\lceil N_{\text{cal}} \hat{\mathbb{E}}[R(T_{\vlambda_j,p_{\theta_l}}(x),y)] \right\rceil \right] \right),
\end{equation}
where $\hat{\mathbb{E}}[R(T_{\vlambda_j,p_{\theta_l}}(x),y)]$ denotes the empirical mean risk with configuration $\vlambda_j~(j \in \{1,2,..,|\Lambda|\})$.
\end{remark}

\begin{algorithm*}[t]
    \caption{Graph-based valid configurations search}
    \label{alg:multi_search}
    \begin{algorithmic}[1]
        \STATE {\bfseries Input}: confidence error level $\delta$, parameter configurations $\Lambda=\{\vlambda_1,...,\vlambda_N \}$, $p-$values $(p_1,...,p_N)$, graph $\gG$, initial error budget $\delta_i$ such that $\sum_i \delta_i = \delta$
        \STATE {\bfseries Output}: valid configurations set with certified conformal risk $\hat{\Lambda}$
        \STATE $\hat{\Lambda} \gets \Phi$
        \WHILE{$\exists i: p_i \le \delta_i$}
            \STATE Select any $i$ such that $p_i \le \delta_i$
            \STATE $\hat{\Lambda} \gets \hat{\Lambda} \cup \{\vlambda_i\}$
            \STATE Update the error level and the graph:\\
            \begin{equation}
                \delta_j \gets \left\{ \begin{aligned}
                    &\delta_j + \delta_i g_{i,j},~~\lambda_j \in \Lambda \backslash \hat{\Lambda} \\
                    &0,~~\text{otherwise}
                \end{aligned} \right. \quad \text{and} \quad
                g_{k,j} \gets \left\{ \begin{aligned}
                    &\dfrac{g_{k,j}+g_{k,i}g_{i,j}}{1-g_{k,i}g_{i,k}}, ~~ \lambda_k, \lambda_j \in \Lambda \backslash \hat{\Lambda}, k \neq j \\ &0, ~~\text{otherwise}
                \end{aligned} \right.
            \end{equation}
        \ENDWHILE
        \STATE \textbf{Return} $\hat{\Lambda}$
    \end{algorithmic}
\end{algorithm*}

\section{Grammian generalization bound}

\begin{lemma}[\cite{weber2022certifying}]
Let $\gD$ and $\gQ$ denote two distributions supported on $\gX\times\gY$.
Let $h_\theta: \gX \mapsto \gY$ be any black-box pretrained model. Consider any risk/loss function $\ell: \gY \times \gY \mapsto \sR$ such that $0\le \ell(h_\theta(X),Y) \le T$,
    then
    \begin{equation}
        \begin{aligned}
            & \max_{\gQ, \theta} \E_{(X,Y)\sim\gQ} [\ell(h_\theta(X),Y)] \quad \mathrm{s.t.} \quad H(\gD,\gQ) \le \rho \\
            \le & \E_{(X,Y)\sim\gD} [\ell(h_\theta(X),Y)] + 2C_\rho \sqrt{\sV_{(X,Y)\sim\gD}[\ell(h_\theta(X),Y)]} + \\
            & \rho^2 (2-\rho^2) 
            \left(
            T - \E_{(X,Y)\sim\gD}[\ell(h_\theta(X),Y)] - \dfrac{\sV_{(X,Y)\sim\gD}[\ell(h_\theta(X),Y)]}{T - \E_{(X,Y)\sim\gD}[\ell(h_\theta(X),Y)]}
            \right),
        \end{aligned}
    \end{equation}
    where $C_\rho = \sqrt{\rho^2 (1 - \rho^2)^2 (2 - \rho^2)}$, for any given distance bound $\rho > 0$ that satisfies
    \begin{equation}
        \rho^2 \le 1 - \left( 1 + \dfrac{(T - \E_{(X,Y)\sim\gD}[\ell(h_\theta(X),Y)])^2}{\sV_{(X,Y)\sim\gD}[\ell(h_\theta(X),Y)]} \right)^{-1/2}.
    \end{equation}
    \label{lem:gramian-bound}
\end{lemma}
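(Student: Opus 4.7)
}

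My plan is to convert the worst-case problem into a quadratic optimization on the unit sphere in $L^2(\mu)$, solve it via Lagrangian stationarity, and then verify that the claimed Hellinger-radius restriction is exactly the feasibility condition for the resulting extremal density. Write $p=\sqrt{d\gD/d\mu}$ and $q=\sqrt{d\gQ/d\mu}$, so that both lie on the unit sphere $\{f\in L^2(\mu):\|f\|_2=1,\ f\ge 0\}$, the Hellinger constraint becomes $\|p-q\|_2^2\le 2\rho^2$ (equivalently $\langle p,q\rangle\ge 1-\rho^2$), and $\E_\gD[\ell]=\int \ell p^2\,d\mu$, $\E_\gQ[\ell]=\int \ell q^2\,d\mu$. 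The task is then to maximize $\int \ell q^2\,d\mu$ over admissible $q$.

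\paragraph{Decomposition and first-order term.} Setting $\xi=q-p$, the constraints $\|q\|_2=1$ and $\|\xi\|_2^2\le 2\rho^2$ give $\int p\xi\,d\mu=-\tfrac12\|\xi\|_2^2$ and $\|\xi\|_2^2\le 2\rho^2$. Expanding,
\begin{equation*}
\E_\gQ[\ell]-\E_\gD[\ell]=2\int \ell p\xi\,d\mu+\int \ell\,\xi^2\,d\mu.
\end{equation*}
Subtracting the mean $m=\E_\gD[\ell]$ from $\ell$ in the linear term and using $\int p\xi\,d\mu=-\|\xi\|_2^2/2$, the linear part splits as $2\int (\ell-m)p\xi\,d\mu - m\|\xi\|_2^2$. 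I will then solve the constrained maximization of $\int (\ell-m)p\xi\,d\mu$ over $\xi$ subject to $\|\xi\|_2^2=2\rho^2$ and $\int p\xi\,d\mu=-\rho^2$ by the Lagrangian method. The stationarity condition forces $\xi\propto a(\ell-m)p+bp$, and substituting into the two scalar constraints reduces everything to a $2\times 2$ Gram matrix in the inner products $\langle p,p\rangle=1$, $\langle (\ell-m)p,p\rangle=0$, $\langle (\ell-m)p,(\ell-m)p\rangle=v$ (where $v=\V_\gD[\ell]$). Solving this $2\times 2$ system yields the extremal value $2\sqrt{v}\cdot\sqrt{\rho^2(1-\rho^2)^2(2-\rho^2)}=2C_\rho\sqrt{v}$, which is precisely the Cauchy-Schwarz-type contribution appearing in the claimed bound.

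\paragraph{Second-order term and boundedness.} For the quadratic part $\int \ell\,\xi^2\,d\mu$, use $0\le \ell\le T$ to write $\ell=T-(T-\ell)$, giving $\int \ell\,\xi^2\,d\mu\le T\|\xi\|_2^2-\int(T-\ell)\xi^2\,d\mu$. The subtracted term is bounded below by using the extremal $\xi$ from the previous step together with the identity $\E_\gD[(T-\ell)\ell]=m(T-m)-v$, which controls how much mass of $\xi^2$ can sit on the region where $\ell$ is close to $T$. Combining this with the offset $-m\|\xi\|_2^2=-2m\rho^2$ from the linear expansion and $\|\xi\|_2^2=2\rho^2$ produces after algebra the factor $\rho^2(2-\rho^2)\bigl(T-m-v/(T-m)\bigr)$. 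Summing with the linear contribution gives the stated bound.

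\paragraph{Feasibility and main obstacle.} The condition $\rho^2\le 1-\bigl(1+(T-m)^2/v\bigr)^{-1/2}$ is not cosmetic; it is precisely the requirement that the candidate extremizer $q=p+\xi$ remains a nonnegative element of $L^2(\mu)$ and that the Hellinger constraint is actually binding at the maximum. I will derive it by computing when $p+\xi\ge 0$ pointwise under the worst-case $\ell$, taking values in $[0,T]$, and showing that the critical $\rho$ is exactly the threshold at which the positivity cone starts to bind. The principal difficulty will be Step 4: coupling the variance-type Cauchy-Schwarz bound (which is sharp only when $\xi$ aligns with $(\ell-m)p$) to the boundedness-based control of $\int \ell\xi^2\,d\mu$ (which is sharp when $\xi^2$ concentrates where $\ell$ is largest). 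These two extremizers are in tension, and the Gramian/$2\times 2$ system is precisely what resolves the tension into a single closed-form expression; getting the cross terms to cancel cleanly to produce the exact coefficient $\rho^2(2-\rho^2)$ and the correction $-v/(T-m)$ is the most delicate algebraic part.
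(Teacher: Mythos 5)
First, a point of reference: the paper does not prove \Cref{lem:gramian-bound} at all — it is imported verbatim from \cite{weber2022certifying} — so there is no in-paper proof to match, and I am judging your plan against the argument it would actually have to deliver. Your geometric setup is the right one (square-root densities on the unit sphere of $L^2(\mu)$, the Hellinger ball as a spherical cap, $\langle p,q\rangle\ge 1-\rho^2$), but there is a concrete error at the heart of the plan. The $2\times 2$ system you describe — maximize $\int(\ell-m)p\,\xi\,d\mu$ subject to $\|\xi\|_2^2=2\rho^2$ and $\int p\xi\,d\mu=-\rho^2$, with Gram entries $\langle p,p\rangle=1$, $\langle(\ell-m)p,p\rangle=0$, $\|(\ell-m)p\|^2=v$ — has the closed-form solution $\xi=a(\ell-m)p+bp$ with $b=-\rho^2$ and $a^2v=2\rho^2-\rho^4$, so the extremal value of $2\int(\ell-m)p\,\xi\,d\mu$ is $2\sqrt{\rho^2(2-\rho^2)\,v}$, \emph{not} $2C_\rho\sqrt{v}=2(1-\rho^2)\sqrt{\rho^2(2-\rho^2)\,v}$. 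The missing factor $(1-\rho^2)$ cannot be produced by the additive perturbation $q=p+\xi$ at first order; it comes from the multiplicative decomposition $q=\cos\theta\,p+\sin\theta\,u$ with $u\perp p$, $\|u\|_2=1$, $\cos\theta=\langle p,q\rangle\ge 1-\rho^2$, which gives $\E_\gQ[\ell]=\cos^2\theta\,m+2\cos\theta\sin\theta\int(\ell-m)pu\,d\mu+\sin^2\theta\int\ell u^2\,d\mu$ and hence the coefficient $2\cos\theta\sin\theta=2C_\rho$ and the prefactor $\sin^2\theta=\rho^2(2-\rho^2)$ directly. So the step you flag as ``the most delicate algebraic part'' is not a detail to be cleaned up later: as currently set up it produces the wrong constant, and the entire content of the lemma lives exactly there.

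On the quadratic term, your identity $\E_\gD[(T-\ell)\ell]=m(T-m)-v$ is true but is not the inequality that closes the argument. What is needed is that along the direction saturating the cross term, $u=(\ell-m)p/\sqrt{v}$, one has $\int\ell u^2\,d\mu=\E_\gD[\ell(\ell-m)^2]/v\le T-v/(T-m)$, which follows from the Cauchy--Schwarz bound $\E_\gD[(T-\ell)(\ell-m)^2]\ge\big(\E_\gD[(T-\ell)(\ell-m)]\big)^2/\E_\gD[T-\ell]=v^2/(T-m)$. The genuine difficulty your plan does not address is that the $u$ maximizing the cross term and the $u$ maximizing $\int\ell u^2\,d\mu$ differ, so one must solve the joint optimization (this is the actual Gramian positive-semidefiniteness argument of \cite{weber2022certifying}); and the admissibility condition $\rho^2\le 1-\big(1+(T-m)^2/v\big)^{-1/2}$, i.e.\ $\cos\theta\ge\big(1+(T-m)^2/v\big)^{-1/2}$, is the regime in which that joint optimum takes the stated closed form — it is not obtained by checking pointwise nonnegativity of $p+\xi$ as you propose. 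I would restart from the $\cos\theta/\sin\theta$ decomposition.
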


This theorem provides a closed-form expression that upper bounds the risk of $h_\theta(\cdot)$ on shifted distribution~(namely $\mathbb{E}_{\gQ} [\ell(h_\theta(X),Y)]$), given bounded Hellinger distance $H(\gD,\gQ)$ and the mean $E$ and variance $V$ of loss on $\gD$ under two mild conditions: 
(1)~the function is positive and bounded~(denote the upper bound by $T$); and (2)~the distance $H(\gD,\gQ)$ is not too large~(specifically, $H(\gD,\gQ)^2 \le \bar\gamma^2 := 1 - (1+(T-E)^2/V)^{-\frac12}$).
Since \Cref{lem:gramian-bound} holds for arbitrary models and risk functions $\ell(h_\theta(\cdot),\cdot)$ as long as the function value is bounded by $[0, T]$, using \Cref{lem:gramian-bound} allows us to provide a generic and succinct retrieval analysis and conformal risk certificate in \Cref{pro:conf_shf} and \Cref{thm:comp_shft} that holds for generic models including DNNs without engaging complex model architectures.
Indeed, we only need to query the mean and variance under $\gD$ for the retrieval model to compute the certificate in \Cref{lem:gramian-bound}.

\section{Proofs and detailed remarks in \Cref{sec:rag_conf}}
\label{app:proof_first}

\subsection{Detailed remark of \Cref{def:ret_mod}}

\begin{remark}
    {\underline{(R1)}} Note that a retrieval model with random initialization can achieve $\mathbb{E}[s_\theta(x,x^+)]=\mathbb{E}[s_\theta(x,x^-)]$ asymptotically. We only assume a $V_{\text{rag}}$-retrieval model that differentiates positive examples from negative examples slightly better than random with the condition $\mathbb{E}[s_\theta(x,x^+)-s_\theta(x,x^-)]>0$.
    {\underline{(R2)}} We also only assume a moderate stability characterized with bounded variance $\mathbb{V}[s_\theta(x,x^+)-s_\theta(x,x^-)]^{1/2}<\ln(\exp\{-L_\tau\}/(1-\exp\{-L_\tau\}))$, which implicitly assumes a moderate generalization of the retrieval model by the variance-generalization connection \cite{lam2016robust,gotoh2018robust,namkoong2017variance}.
    The moderate generalization ability of the retrieval model is essential since we do not assume that the knowledge base distribution is identical to the calibration/test distribution.
    Since the frequently adopted cosine similarity is bounded in $[-1,1]$, the variance of difference in similarities $\mathbb{V}[s_\theta(x,x^+)-s_\theta(x,x^-)]$ is upper bounded by $1$ (derived by the variance bound $\mathbb{V}[X] \le (b-a)^2/4$ for random variable $X$ bounded in $[a,b]$). Therefore, as long as $L_\tau$ is small such that $\ln(\exp\{-L_\tau\}/(1-\exp\{-L_\tau\}))>1$, the variance requirement can be automatically satisfied.
    {\underline{(R3)}} We define the $V_{\text{rag}}$-retrieval model with a commonly used contrastive loss for retrieval model training \cite{wang2023learning,rubin2021learning}, but we also allow for flexibility in considering other types of contrastive loss such as the triplet loss \cite{hermans2017defense}. Towards that, we only need to connect a different loss formulation to the denominator of the formulation of $V_{\text{rag}}$ (i.e., $\ln(\exp\{-L_\tau\}/(1-\exp\{-L_\tau\}))$).
\end{remark}

\subsection{Detailed remark of \Cref{def:trans}}

\begin{remark}
    \underline{(R1)} $d^+$ represents the attention score of positive pairs and quantifies the utility of embedding matrix $W_E$, key matrix $W_K$, and query matrix $W_Q$ of the transformer. Note that the attention scores are always non-negative after the Softmax activation, so $d^+>0$ usually holds.
    \underline{(R2)} $\int_{-1}^1 \Phi_M(v)dv$ characterizes the quality of the embedding matrix $W_E$, value matrix $W_V$, and projection matrix $W_P$. We have a Softmax normalization for output probability vectors and $M$ is bounded in $[-1,1]$, so the integral over $[-1,1]$ traverses the support of $M$. Predictions aligning well with the reference text induce a generally small value of $M$ and thus a large integral of CDF $\int_{-1}^1 \Phi_M(v)dv$. Also, note that a random prediction margin $M_{\text{rand}}$ with a uniform distribution over $[-1,1]$ satisfies $\int_{-1}^1 \Phi_{M_{\text{rand}}}(v)dv=1$. Therefore, $\int_{-1}^1 \Phi_M(v)dv > 1=\int_{-1}^1 \Phi_{M_{\text{rand}}}(v)dv$ only assumes a better-than-random prediction margin.
\end{remark}

\subsection{Proof and detailed remark of \Cref{lem:rag}}

\begin{remark}
    \Cref{eq:lem_rag} shows the lower bound of the expectation of the retrieved positive examples by the retrieval model. 
    \merveedit{{\underline{(R1)}} For a sufficiently large number of instances in the external knowledge base $N_{\text{ext}}$ (a typical scenario in practice), the lower bound approximately scales with $0.9N_{\text{rag}}$ and this scaling occurs at an exponential rate with respect to $N_{\text{ext}}$. These findings imply that with a large external knowledge base, a large ratio of the retrieved examples is positive (i.e., with the same groundtruth output), which is valuable as
    the retrieved positive examples which share similar semantic meanings as the query examples can improve in-context learning of LLMs \cite{min2022rethinking,wang2022towards}.
 To formulate this observation in a rigorous way, we theoretically show the benefits of retrieved positive in-context examples in achieving low conformal generation risks in \Cref{thm:gene_rag}.}
    {\underline{(R2)}} The lower bound of the expected retrieved positive examples also correlates with the balance in the external knowledge base (i.e., $r^{(c)}_{\text{ext}}$). The correlation implies that if the knowledge base is highly long-tailed such that samples of certain reference texts are rare (i.e., $r_{\text{ext}}^{(c)}$ is small), we require a larger sample size of knowledge base $N_{\text{ext}}$ to compensate for the long-tail distribution and achieve comparable retrieval quality.
    {\underline{(R3)}} The bound also shows that a low-variance retrieval model (i.e., a small $V_{\text{rag}}$) can generalize well to test distribution and induce a better retrieval quality.  
\end{remark}

\textit{Proof sketch}. We first formulate the expectation of similarity difference between positive pairs and negative pairs $\mathbb{E}[s_{\theta_r}(x,x^+)-s_{\theta_r}(x,x^-)]$ as a function of the contrastive loss of the retrieval model $L_\tau$. Then, we apply Chebyshev's inequality to upper bound the failure probability $\sP[ s_\theta(x,x^+) < s_\theta(x,x^-)]$ as a function of $V_{\text{rag}}$. We then derive a lower bound of the number of retrieved positive examples, which follows a binomial distribution with $N_{\text{rag}}$ trials and the failure rate as a function of $V_{\text{rag}}$. We finally correct the bound with the finite-sample errors of the knowledge base by the tail bound of categorical distribution.

\begin{proof}[Proof of \Cref{lem:rag}]
    Let $\gD$ be the data distribution, which is also the training distribution of the retrieval model, conformal calibration distribution, and test distribution. For a sample $(x,y) \sim \gD$, we denote $\gD_{\text{ext}}^+(x)$, $\gD_{\text{ext}}^-(x)$ be the distribution of positive examples (with the same groundtruth output $y$) and negative examples (with different groundtruth output to $y$) of sample $x$ in the external knowledge base. Then we can formulate the expectation of contrastive loss of the retrieval model as:
    \begin{equation}
    \begin{aligned}
        L_\tau &= \mathbb{E}_{x\sim \gD, x^+ \sim \gD^+_{\text{ext}}(x), x^- \sim \gD^-_{\text{ext}}(x)}\left[ \gL_{\text{contrastive}}(x,x^+,x^-) \right]\\
        &= \mathbb{E}_{x\sim \gD, x^+ \sim \gD^+_{\text{ext}}(x), x^- \sim \gD^-_{\text{ext}}(x)}\left[ - \log \dfrac{\exp\left\{s_\theta(x,x^+)\right\}}{\exp\left\{s_\theta(x,x^+)\right\} + \exp\left\{s_\theta(x,x^-)\right\} } \right],
    \end{aligned}
    \end{equation}
    which is equivalent to
    \begin{equation}
        \mathbb{E}_{x\sim \gD, x^+ \sim \gD^+_{\text{ext}}(x), x^- \sim \gD^-_{\text{ext}}(x)}\left[ s_\theta(x,x^+) - s_\theta(x,x^-) \right] = \ln \dfrac{\exp\left\{ -L_\tau \right\}}{1 - \exp\left\{ -L_\tau \right\}} > 0.
    \end{equation}
    Then we can apply Chebyshev's inequality \cite{saw1984chebyshev} to the random variable $s_\theta(x,x^+) - s_\theta(x,x^-)$ and get the following:
    \begin{align}
        \sP\left[ s_\theta(x,x^+) < s_\theta(x,x^-) \right] &=  \sP_{x\sim \gD, x^+ \sim \gD^+_{\text{ext}}(x), x^- \sim \gD^-_{\text{ext}}(x)}\left[ s_\theta(x,x^+) - s_\theta(x,x^-) < 0 \right]\\ 
        &\le \dfrac{\mathbb{V}\left[s_\theta(x,x^+) - s_\theta(x,x^-)\right]}{\mathbb{E}\left[s_\theta(x,x^+) - s_\theta(x,x^-)\right]^2} \\
        &\le \left( \dfrac{\sqrt{\mathbb{V}\left[s_\theta(x,x^+) - s_\theta(x,x^-)\right]}}{\mathbb{E}\left[s_\theta(x,x^+) - s_\theta(x,x^-)\right]} \right)^2 = V_{\text{rag}}^2 < 1. \label{ineq:bnd_sim}
    \end{align}

    We first focus on one demonstration example $Z_r$ retrieved by $s_\theta(\cdot,\cdot)$. According to the retrieval mechanism, the example $Z_r$ has the highest similarity (measured by $s_\theta(\cdot,\cdot)$) to the query sample $Z_q$. 
    Recall that $r^{(c)}_{\text{cal}}$ and $r^{(c)}_{\text{ext}}$ be the event probability of the $c$-th category in the categorical calibration distribution and categorical external knowledge distribution.
    Let $\vr_{\text{ext}}=\left[ r^{(1)}_{\text{ext}}, r^{(2)}_{\text{ext}},...,r^{(C)}_{\text{ext}}\right]$. Since we only have $N_{\text{ext}}$ finite sample drawn from $\gD_{\text{ext}}$ in the external knowledge base in practice, we notate the empirical categorical portions as $\hat{\vq}_{\text{ext}} = \left[ \hat{r}^{(1)}_{\text{ext}}, \hat{r}^{(2)}_{\text{ext}},...,\hat{r}^{(C)}_{\text{ext}}\right]$, where $\hat{r}^{(c)}_{\text{ext}}~(c \in \{1,..,C\})$ represents the portion of samples with grountruth text $c \in \gY$ in the external knowledge base.
    Then we can apply the concentration bound of categorical distribution as \cite{agrawal2017optimistic}:
    \begin{align}
        \sP\left[ \|\hat{\vr}_{\text{ext}} - \vr_{\text{ext}} \|_1 \ge \dfrac{\sqrt{2\ln{(1/\delta_{\text{ext}})}}}{N_{\text{ext}}} \right] \le \delta_{\text{ext}}, \label{ineq:fin_cat}
    \end{align}
    where $\delta_{\text{ext}}>0$ represents the confidence level of the tail bound.
    Then we can upper bound the probability that the groundtruth output $g(Z_r)$ of the retrieved sample $Z_r$ is not equal to that of the query sample $g(Z_q)$ as the following:
    \begin{align}
        \sP\left[ g(Z_r) \notin  g(Z_q) \right] &= \sP_{Z_q \sim \gD}\left[ g(Z_r) \notin  g(Z_q)  \left| s_\theta(Z_q, Z_r) \ge \max_{z \in \gD_{\text{ext}}} s_\theta (z, Z_q) \right. \right] \\
        &= \sP_{Z_q \sim \gD}\left[ \max_{Z^- \in \gD^-_{\text{ext}}(Z_q)}  s_\theta(Z^-, Z_q) \ge  \max_{Z^+ \in \gD^+_{\text{ext}}(Z_q)} s_\theta(Z^+, Z_q) \right] \\
        &= \sP_{Z_q \sim \gD}\left[ s_\theta(Z^-, Z_q) \ge  s_\theta(Z^+, Z_q), ~~\forall Z^+ \in \gD^+_{\text{ext}}(Z_q),~ \exists Z^- \in \gD^-_{\text{ext}}(Z_q)  \right] \\
        &\le \sum_{c=1}^C r^{(c)}_{\text{cal}} \left(1 - r^{(c)}_{\text{ext}}  \right) N_{\text{ext}} \sP\left[ s_\theta(x,x^+) < s_\theta(x,x^-) \right]^{N_{\text{ext}} r^{(c)}_{\text{ext}}}, \label{ineq:rag_ext}
    \end{align}
    where \Cref{ineq:rag_ext} is derived by applying the union bound. Considering finite-sample error of categorical distribution in \Cref{ineq:fin_cat} and combining \Cref{ineq:bnd_sim}, we finally have:
    \begin{align}
        \sP\left[ g(Z_r) \notin  g(Z_q) \right] &\le \sum_{c=1}^C r^{(c)}_{\text{cal}} \left(1 - r^{(c)}_{\text{ext}}  \right) N_{\text{ext}} \sP\left[ s_\theta(x,x^+) < s_\theta(x,x^-) \right]^{N_{\text{ext}} r^{(c)}_{\text{ext}}} \\
        &\le \sum_{c=1}^C r^{(c)}_{\text{cal}} \left(1 - r^{(c)}_{\text{ext}} + \dfrac{\sqrt{2\ln{(1/\delta_{\text{ext}})}}}{N_{\text{ext}}} \right) N_{\text{ext}} V_{\text{rag}}^{0.5N_{\text{ext}} \left(r^{(c)}_{\text{ext}} - {\sqrt{2\ln{(1/\delta_{\text{ext}})}}}/{N_{\text{ext}}} \right)}. \label{ineq:failrate}
    \end{align}
    Since we assume that the retrieval model retrieves samples identically from the external knowledge base, the number of retrieved positive examples $N_{\text{pos}}$ follows a Binomial distribution with $N_{\text{rag}}$ trials and failure rate in \Cref{ineq:failrate}. Therefore, we can lower bound the expectation of $N_{\text{pos}}$ as the following:
    \begin{equation}
        \mathbb{E}\left[ N_{\text{pos}} \right] \ge N_{\text{rag}} \left(1-\delta_{\text{ext}}\right)\left(1- \sum_{c=1}^C r^{(c)}_{\text{cal}} \left(1 - r^{(c)}_{\text{ext}} + \dfrac{\sqrt{2\ln{(1/\delta_{\text{ext}})}}}{N_{\text{ext}}} \right) N_{\text{ext}} V_{\text{rag}}^{0.5N_{\text{ext}} \left(r^{(c)}_{\text{ext}} - {\sqrt{2\ln{(1/\delta_{\text{ext}})}}}/{N_{\text{ext}}} \right)}  \right),
    \end{equation}
    which holds for any $\delta_{\text{ext}}>0$. Therefore, letting $\delta_{\text{ext}}=0.1$, we can finally conclude that:
     \begin{equation}
        \mathbb{E}\left[ N_{\text{pos}} \right] \ge \dfrac{9}{10}N_{\text{rag}}\left(1- \sum_{c=1}^C r^{(c)}_{\text{cal}} \left(N_{\text{ext}} - r^{(c)}_{\text{ext}}N_{\text{ext}} + {\sqrt{2\ln{10}}} \right)  V_{\text{rag}}^{0.5 \left(r^{(c)}_{\text{ext}}N_{\text{ext}} - {\sqrt{2\ln{10}}} \right)}  \right).
    \end{equation}
\end{proof}

\subsection{Proof and detailed remark of \Cref{thm:gene_rag}}
\label{app:thm1}

\begin{remark}
In \Cref{thm:gene_rag}, we theoretically show that the conformal generation risk with RAG $\hat{\alpha}_{\text{rag}}$ is smaller than the risk without RAG $\hat{\alpha}$ with a high probability.
{\underline{(R1)}} We can observe that the probability monotonically increases in the sample size of the calibration set $N_{\text{cal}}$, the size of retrieved examples $N_{\text{rag}}$, and the number of instances in the external knowledge base $N_{\text{ext}}$. In particular, a large $N_{\text{cal}}$ reduces the finite-sample error during the calibration and induces a better approximation of the true generation risk with the empirical risk on the calibration set. A large $N_{\text{rag}}$ and $N_{\text{ext}}$ brings in related background information from a more knowledge-intensive knowledge base, which enhances the quality of generations augmented by retrieval.
{\underline{(R2)}} Furthermore, the probability $\sP\left[ \hat{\alpha}_{\text{rag}} < \hat{\alpha} \right]$ increases with the increase in transformer's quality, which is quantified by the attention scores for a positive pair (i.e., $d^+$ 
) and the prediction capability (without RAG) (i.e., $\eqsmall \int_{-1}^1 \Phi_M(v) dv - 1 $ 
).
Since $1-\Phi_M(0)$ represents the population risk without RAG, the difference of the prediction margin CDF $\Phi_M(\cdot)$ (monotonically increasing) directly characterizes the benefit of generation quality with RAG. 
 The quality improvement provided by RAG also exponentially induces a larger probability of reducing the conformal generation risk of a single LLM.
The transformer uncertainty $p_t$ also decreases exponentially with a large number of retrieved examples, indicating that more examples retrieved by a good retrieval model benefit a lower conformal generation risk.
\underline{{(R3)}} 
The retrieval model uncertainty $p_r$ decreases with a low-variance retrieval model (small $V_{\text{rag}}$), which can generalize well to test distribution.
\underline{{(R4)}} We focus on the conformal generation risk formulated in \Cref{eq:conformal_risk_}, 
but we can easily adapt the results to any other forms of conformal risks. 
Since we build the connection between the empirical risk $\hat{R}$ to the retrieval model quality, we only need to directly connect the empirical risk to the certified generation risk via the explicit function to achieve end-to-end certification.
\underline{{(R5)}} We define the positive pairs as examples sharing the same semantic meanings of reference texts for simplicity of the certification results. In the certification framework, we can also consider a relaxed definition of positive pairs by the similarity of reference texts in the embedding space. Similarly, the examples with high similarity of reference texts to the query example will induce high attention scores and benefit the generation with the attention mechanism.
\end{remark}

\textit{Proof sketch}. We decompose the one-layer self-attention mapping as the combinations of attention with positive examples and attention with negative examples. Based on the explicit formulation, we then derive a lower bound of the logit difference of the ground truth token by taking a lower bound of the number of positive examples (derived from \Cref{lem:rag}) and the attention scores with positive examples. Next, we get a lower bound of the risk difference between the transformer without RAG and the RAG model. Finally, we apply Hoeffding's inequality to derive a lower bound of the difference in empirical risks, and accordingly, conformal risk bounds. Applying union bounds over all uncertainty levels concludes the proof.

\begin{proof}[Proof of \Cref{thm:gene_rag}]

From \Cref{lem:rag}, we prove that:
\begin{equation}
    \mathbb{E}\left[ N_{\text{pos}} \right] \ge \underline{N}_{\text{pos}} := \dfrac{9}{10}N_{\text{rag}}\left(1- \sum_{c=1}^C r^{(c)}_{\text{cal}} \left(N_{\text{ext}} - r^{(c)}_{\text{ext}}N_{\text{ext}} + {\sqrt{2\ln{10}}} \right)  V_{\text{rag}}^{0.5 \left(r^{(c)}_{\text{ext}}N_{\text{ext}} - {\sqrt{2\ln{10}}} \right)}  \right).
\end{equation}
Since $N_{\text{pos}}$ is a binomial random variable with $N_{\text{rag}}$ trials, we have the upper bound of the variance $\mathbb{V}[N_{\text{pos}}] \le \dfrac{N_{\text{rag}}}{4}$.
Applying Chebyshev's inequality to the random variable $N_{\text{pos}}$, the following holds $\forall n_{\text{pos}}< \underline{N}_{\text{pos}}$:
\begin{equation}
    \sP\left[ N_{\text{pos}} \ge n_{\text{pos}} \right] \ge 1 - \dfrac{\mathbb{V}[N_{\text{rag}}]}{(\underline{N}_{\text{pos}}-n_{\text{pos}})^2} 
    \ge 1 - \dfrac{N_{\text{rag}}}{4(\underline{N}_{\text{pos}}-n_{\text{pos}})^2}, \label{eq:npos_cheby}
\end{equation}
which implicates that we can do the analysis with $N_{\text{pos}} \ge n_{\text{pos}}$ with probability $1 - \dfrac{\mathbb{V}[N_{\text{rag}}]}{(\underline{N}_{\text{pos}}-n_{\text{pos}})^2} $.

Since the query example is encoded at the last position of the sequence (i.e., $N_{\text{rag}}+1$-th position), we let $N:=N_{\text{rag}}+1$ for ease of notation. We denote the probability vector at the position as $O^{\text{(rag)}}(\vq)$ with RAG and $O(\vq_N)$ without RAG (without RAG, the input text is only the query sample $\vq_N$).
Recall the single-layer self-attention transformer:
\begin{equation}
\label{eq:lin_trans}
    O^{\text{(rag)}}(\vq) =  W_P\left\{ W_V W_E \vq_N + (W_V W_E \vq) \sigma \left( (W_K W_E \vq)^T W_Q W_E \vq_N \right) \right\}.
\end{equation}

Note that each raw vector of the linear projection matrix (fully connected layer) represents the prototype embedding denoted as $\vp_c$ of the corresponding groundtruth output $c$. Formally, we have $W_P:=[\vp_1,\vp_2,...,\vp_C]^T$.
Recall that we denote $g(\vq_N)$ as the groundtruth output of example $\vq_N$.
Then we can reformulate \Cref{eq:lin_trans} as the following:
\begin{equation}
    O^{\text{(rag)}}(\vq) =  [\vp_1,\vp_2,...,\vp_C]^T \left\{ W_V W_E \vq_N + (W_V W_E \vq) \sigma \left( (W_K W_E \vq)^T W_Q W_E \vq_N \right) \right\},
\end{equation}
which indicates the formulation of $O_c^{\text{(rag)}}(\vq)$ denoting the probability of query sample $\vq_N$ being with groundtruth output $c$:
\begin{equation}
\label{eq:output_c}
    O_c^{\text{(rag)}}(\vq) = \vp_c^T \left\{  W_V W_E \vq_N + (W_V W_E \vq) \sigma \left( (W_K W_E \vq)^T W_Q W_E \vq_N \right) \right\}.
\end{equation}
We can also similarly formulate the prediction without RAG:
\begin{equation}
\label{eq:out_wo_rag_rep}
    O_c(\vq) = \vp_c^T W_V W_E \vq_N.
\end{equation}

Then we will focus on analyzing $O_c^{\text{(rag)}}(\vq)$ and connect it with the quantities of characterizing the quality of the transformer (i.e., $d^+,d^-,t^+,t^-$) and the quality of retrieved examples.
Towards that, we let $\gI^+(\vq_N)$ be the index set of retrieved examples with the same groundtruth output as $\vq_N$ (i.e., positive examples), and $\gI^-(\vq_N)$ be the index set of retrieved examples with the different groundtruth output to $\vq_N$ (i.e., negative examples). Then we can reformulate \Cref{eq:output_c} as the following:
\begin{equation}
\label{eq:out_decomp}
    \begin{aligned}
        O_c^{\text{(rag)}}(\vq) =& \vp_c^T  W_V W_E \vq_N + \vp_c^T (W_V W_E \vq) \sigma \left( (W_K W_E \vq)^T W_Q W_E \vq_N \right) \\
        =& \vp_c^T  W_V W_E \vq_N + \sum_{i^+ \in \gI^+(\vq_N)} \sigma \left( (W_K W_E \vq_{i^+}) W_Q W_E \vq_N \right) \vp_c^T (W_V W_E \vq_{i^+}) \\
        &+ \sum_{i^- \in \gI^-(\vq_N)} \sigma \left( (W_K W_E \vq_{i^-}) W_Q W_E \vq_N \right) \vp_c^T (W_V W_E \vq_{i^-})
    \end{aligned}
\end{equation}

Recall that we have the following assumption:
\begin{equation}
    \begin{aligned}
        & \sigma \left((W_K W_E \vq_i)^T (W_Q W_E \vq_j )\right) \ge d^+ > 0, \quad \text{for}~ g(\vq_i)=g(\vq_j), \\
    \end{aligned}
\end{equation}

We denote $N_{\text{pos}}$ as the number of positive retrieved examples to the query sample $\vq_N$ and the lower bound of it $n_\text{pos}$ with probability $1 - \dfrac{\mathbb{V}[N_{\text{rag}}]}{(\underline{N}_{\text{pos}}-n_{\text{pos}})^2} $ according to \Cref{eq:npos_cheby}. 
Note that the attention scores are normalized by Softmax with the summation of them being $1$.
By \Cref{eq:out_decomp}, $\forall c \neq g(\vq_N)$, we have:
\begin{equation}
\label{eq:der}
    \begin{aligned}
       & \mathbb{E} \left[ O_{g(\vq_N)}^{\text{(rag)}}(\vq) - O_c^{\text{(rag)}}(\vq) - (\vp_{g(\vq_N)}^T  W_V W_E \vq_N - \vp_c^T  W_V W_E \vq_N) \right] \\ =& \mathbb{E} \left[ \sum_{i^+ \in \gI^+(\vq_N)} \sigma \left( (W_K W_E \vq_{i^+}) W_Q W_E \vq_N \right) (\vp_{g(\vq_N)}-\vp_c)^T (W_V W_E \vq_{i^+}) \right. \\
        & \left. + \sum_{i^- \in \gI^-(\vq_N)} \sigma \left( (W_K W_E \vq_{i^-}) W_Q W_E \vq_N \right) (\vp_{g(\vq_N)}-\vp_c)^T (W_V W_E \vq_{i^-}) \right] \\
        \ge& \mathbb{E} \left[ d^+ \sum_{i^+ \in \gI^+(\vq_N)} (\vp_{g(\vq_N)}-\vp_c)^T (W_V W_E \vq_{i^+}) + (1- n_{\text{pos}} d^+) \left( -\sum_{i^+ \in \gI^+(\vq_N)} (\vp_{g(\vq_N)}-\vp_c)^T (W_V W_E \vq_{i^+})  \right) \right] \\
        \ge& \left( (n_{\text{pos}}+1)d^+ - 1 \right) \mathbb{E} \left[ \sum_{i^+ \in \gI^+(\vq_N)} (\vp_{g(\vq_N)}-\vp_c)^T (W_V W_E \vq_{i^+}) \right] \\
        \ge& \left( (n_{\text{pos}}+1)d^+ - 1 \right) n_{\text{pos}} \mathbb{E} \left[ \vp_{g(\vq_N)}^T W_V W_E \vq_{N} - \vp_c^T W_V W_E \vq_{N} \right] \\
        \ge& \left( (n_{\text{pos}}+1)d^+ - 1 \right) n_{\text{pos}} \mathbb{E} \left[ \vp_{g(\vq_N)}^T W_V W_E \vq_{N} - \max_{c \neq g(\vq_N)} \vp_c^T W_V W_E \vq_{N} \right]
    \end{aligned}
\end{equation}

Recall that $\Phi_M(\cdot)$ is the CDF function of the random variable of prediction marigin $\max_{c \neq g(\vq_N)} O_c(\vq) - O_{g(\vq_N)}(\vq)$ such that $\Phi_M(v)=\sP[\max_{c \neq g(\vq_N)} O_c(\vq) - O_{g(\vq_N)}(\vq) < v]$. Since the output probability of the transformer is bounded in $[0,1]$, we define a new random variable $X=\max_{c \neq g(\vq_N)} O_c(\vq) - O_{g(\vq_N)}(\vq)+1$ with $P[0 \le X \le 2]=1$. Then we have the following:
\begin{align}
    \mathbb{E}\left[O_{g(\vq_N)}(\vq) - \max_{c \neq g(\vq_N)} O_c(\vq) \right] &= 1 - \mathbb{E}\left[ X \right] \\
    &= 1 - \int_0^2 \left( 1 - \Phi_X(x) \right)dx \\
    &= 1 - \int_{-1}^1 \left( 1 - \Phi_M(v) \right)dv \\
    &= \int_{-1}^1 \Phi_M(v) dv - 1 \label{eq:o_phi}
\end{align}

Note that from \Cref{eq:out_wo_rag_rep}, we have $O_{g(\vq_N)}(\vq) - O_c(\vq) = \vp_{g(\vq_N)}^T  W_V W_E \vq_N - \vp_c^T  W_V W_E \vq_N$. Combining \Cref{eq:der} and \Cref{eq:o_phi}, the following holds $\forall c \neq g(\vq_N)$:
\begin{equation}
    \mathbb{E}\left[O_{g(\vq_N)}^{\text{(rag)}}(\vq) - O_c^{\text{(rag)}}(\vq) \right] \ge \mathbb{E} \left[ O_{g(\vq_N)}(\vq) - O_c(\vq) \right] + \left( (n_{\text{pos}}+1)d^+ - 1 \right) n_{\text{pos}} \left(\int_{-1}^1 \Phi_M(v) dv - 1 \right).
\end{equation}

Letting $c^* = \argmax_{c \neq g(\vq_N)} O_c^{\text{(rag)}}(\vq)$, we have:
\begin{equation}
\begin{aligned}
    \mathbb{E}\left[ O_{g(\vq_N)}^{\text{(rag)}}(\vq) - \max_{c \neq g(\vq_N)} O_{c*}^{\text{(rag)}}(\vq) \right] &= \mathbb{E}\left[ O_{g(\vq_N)}^{\text{(rag)}}(\vq) - O_{c*}^{\text{(rag)}}(\vq) \right] \\
    & \ge \mathbb{E}\left[ O_{g(\vq_N)}(\vq) - O_{c^*}(\vq) \right] + \left( (n_{\text{pos}}+1)d^+ - 1 \right) n_{\text{pos}} \left(\int_{-1}^1 \Phi_M(v) dv - 1 \right) \\
    & \ge \mathbb{E}\left[ O_{g(\vq_N)}(\vq) - \max_{c \neq g(\vq_N)} O_{c}(\vq) \right] + \left( (n_{\text{pos}}+1)d^+ - 1 \right) n_{\text{pos}} \left(\int_{-1}^1 \Phi_M(v) dv - 1 \right),
\end{aligned}
\end{equation}
which implies the following:
\begin{equation}
    \mathbb{E}\left[O_{g(\vq_N)}^{\text{(rag)}}(\vq) - \max_{c \neq g(\vq_N)} O_c^{\text{(rag)}}(\vq) \right] - \mathbb{E}\left[O_{g(\vq_N)}(\vq) - \max_{c \neq g(\vq_N)} O_c(\vq) \right] \ge \left( (n_{\text{pos}}+1)d^+ - 1 \right) n_{\text{pos}} \left(\int_{-1}^1 \Phi_M(v) dv - 1 \right),
\end{equation}
which is equivalent to the following:
\begin{equation}
\label{eq:rea_inter}
    \mathbb{E}\left[\max_{c \neq g(\vq_N)} O_c^{\text{(rag)}}(\vq) - O_{g(\vq_N)}^{\text{(rag)}}(\vq) \right] \le \mathbb{E}\left[\max_{c \neq g(\vq_N)} O_c(\vq) - O_{g(\vq_N)}(\vq) \right] - \left( (n_{\text{pos}}+1)d^+ - 1 \right) n_{\text{pos}} \left(\int_{-1}^1 \Phi_M(v) dv - 1 \right),
\end{equation}

Recall that we define the risk as $1-\text{Accuracy}$ and notate $R$ and $R_{\text{rag}}$ as the risk without RAG and with RAG, respectively.
Then we have:
\begin{align}
    \mathbb{E}\left[ R \right] &= 1 - \mathbb{E}\left[ \sI\left[ \max_{c \neq g(\vq_N)} O_c(\vq) - O_{g(\vq_N)}(\vq) < 0 \right] \right] \\
    &= 1 - \sP \left[\max_{c \neq g(\vq_N)} O_c(\vq) - O_{g(\vq_N)}(\vq) < 0 \right] \\
    &= 1 - \Phi_M(0). \label{eq:r}
\end{align}

Similarly for the risk with RAG $R_{\text{rag}}$, we have:
\begin{align}
    \mathbb{E}\left[ R_{\text{rag}} \right] &= 1 - \mathbb{E}\left[ \sI\left[ \max_{c \neq g(\vq_N)} O^{\text{(rag)}}_c(\vq) - O^{\text{(rag)}}_{g(\vq_N)}(\vq) < 0 \right] \right] \\
    &= 1 - \sP \left[\max_{c \neq g(\vq_N)} O^{\text{(rag)}}_c(\vq) - O^{\text{(rag)}}_{g(\vq_N)}(\vq) < 0 \right] \\
    &\le 1 - \sP \left[\max_{c \neq g(\vq_N)} O_c(\vq) - O_{g(\vq_N)}(\vq) < \left( (n_{\text{pos}}+1)d^+ - 1 \right) n_{\text{pos}} \left(\int_{-1}^1 \Phi_M(v) dv - 1 \right) \right] \label{eq:interm} \\
    &\le 1- \Phi_M\left( \left( (n_{\text{pos}}+1)d^+ - 1 \right) n_{\text{pos}} \left(\int_{-1}^1 \Phi_M(v) dv - 1 \right) \right), \label{eq:r_rag}
\end{align}
where \Cref{eq:interm} holds by applying \Cref{eq:rea_inter}.
Therefore, combining \Cref{eq:r} and \Cref{eq:r_rag}, the following holds:
\begin{equation}
    \mathbb{E}\left[ R - R_{\text{rag}} \right] \ge \Phi_M\left( \left( (n_{\text{pos}}+1)d^+ - 1 \right) n_{\text{pos}} \left(\int_{-1}^1 \Phi_M(v) dv - 1 \right) \right) - \Phi_M(0). \label{eq:dif_r}
\end{equation}

Let $n_{\text{rag}}=N_{\text{rag}}/2$. Combining \Cref{eq:dif_r} and \Cref{eq:npos_cheby}, we get that if $\underline{N}_{\text{pos}} > {N_{\text{rag}}}/{2} > 1 / d^+$, with probability $1 - \dfrac{N_{\text{rag}}}{4(\underline{N}_{\text{pos}}-{N_{\text{rag}}}/{2})^2}$, we have:
\begin{equation}
    \mathbb{E}\left[ R - R_{\text{rag}} \right] \ge \Phi_M\left( \dfrac{d^+\left(\int_{-1}^1 \Phi_M(v) dv - 1 \right)N_{\text{rag}}}{2} \right) - \Phi_M(0). \label{eq:r_dif}
\end{equation}

Let $R(Z)$ be the risk of $Z$ sampled from the distribution $\gD$.
Define the empirical risk $\hat{R}$ and $\hat{R}_{\text{rag}}$ as the following:
\begin{align}
    \hat{R} = \dfrac{1}{N_{\text{cal}}} \sum_{i=1}^{N_{\text{cal}}} R(Z_i), \quad \hat{R}_{\text{rag}} = \dfrac{1}{N_{\text{cal}}} \sum_{i=1}^{N_{\text{cal}}} R_{\text{rag}}(Z_i)
\end{align}

According to \Cref{eq:hb_ineq}, the statistical guarantee of conformal risk $\hat{\alpha}$ and $\hat{\alpha}_{\text{rag}}$ with confidence $1-\delta$ can be formulated as:
\begin{align}
    \hat{\alpha} &= \min \left\{ h^{-1}\left(\dfrac{\ln(1/\delta)}{N_{\text{cal}}};\hat{R}\right),\Phi^{-1}_{\text{bin}}\left(\dfrac{\delta}{e};N_{\text{cal}},\hat{R}\right) \right\}, \\
    \hat{\alpha}_{\text{rag}} &= \min \left\{ h^{-1}\left(\dfrac{\ln(1/\delta)}{N_{\text{cal}}};\hat{R}_{\text{rag}}\right),\Phi^{-1}_{\text{bin}}\left(\dfrac{\delta}{e};N_{\text{cal}},\hat{R}_{\text{rag}}\right) \right\},
\end{align}
where $\Phi^{-1}_{\text{bin}}(\cdot)$ is the inverse function of CDF of binomial distribution.
Noting that $\hat{\alpha}$ is monotonically increasing in $\hat{R}$, the following holds by Hoeffding's inequality: 
\begin{equation}
     \sP\left[ \hat{\alpha}_{\text{rag}} < \hat{\alpha} \right] \ge \sP\left[ \hat{R}_{\text{rag}} < \hat{R} \right] \ge 1 - \exp\left\{ -2N_{\text{cal}} \mathbb{E}\left[ R - R_{\text{rag}} \right]^2 \right\}. \label{eq:comp_alp}
\end{equation}

Combining \Cref{eq:r_dif} and \Cref{eq:comp_alp} and using the union bound, under the condition that $\underline{N}_{\text{pos}} > {N_{\text{rag}}}/{2} > 1 / d^+$, we have:
\begin{equation}
    \sP\left[ \hat{\alpha}_{\text{rag}} < \hat{\alpha} \right] \ge 1 - \exp\left\{ -2N_{\text{cal}} \left[ \Phi_M\left( \dfrac{d^+\left(\int_{-1}^1 \Phi_M(v) dv - 1 \right)N_{\text{rag}}}{2} \right) - \Phi_M(0) \right]^2 \right\} - \dfrac{N_{\text{rag}}}{4(\underline{N}_{\text{pos}}-{N_{\text{rag}}}/{2})^2}. \label{eq:final-1}
\end{equation}

Let $r_{\text{ext}}^m:=\min_{c \in \{1,..,C\}} r_{\text{ext}}^{(c)}$.
Then we can show that one sufficient condition of $\underline{N}_{\text{pos}} > {N_{\text{rag}}}/{2} > 1 / d^+$ is that $N_{\text{ext}}>\dfrac{2\sqrt{2 \ln10}}{r_{\text{ext}}^m}$, $N_{\text{ext}} V_{\text{rag}}^{0.25r_{\text{ext}}^m N_{\text{ext}}}<\dfrac{4}{9}$, and $N_{\text{rag}}>\dfrac{2}{d^+}$.
Rearranging the terms and considering the sufficient condition in \Cref{eq:final-1}, we can finally conclude that, under the condition that $N_{\text{ext}}>\dfrac{2\sqrt{2 \ln10}}{r_{\text{ext}}^m}$, $N_{\text{ext}} V_{\text{rag}}^{0.25r_{\text{ext}}^m N_{\text{ext}}}<\dfrac{4}{9}$, and $N_{\text{rag}}>\dfrac{2}{d^+}$, the following holds:
\begin{equation}
    \begin{aligned}
         \sP\left[ \hat{\alpha}_{\text{rag}} < \hat{\alpha} \right] \ge& 1 - \exp\left\{ -2N_{\text{cal}} \left[ \Phi_M\left( \dfrac{d^+\left(\int_{-1}^1 \Phi_M(v) dv - 1 \right)N_{\text{rag}}}{2} \right) - \Phi_M(0) \right]^2 \right\} \\ &- \dfrac{25}{N_{\text{rag}}} \left( 4 - 9 \sum_{c=1}^C r^{(c)}_{\text{cal}} \left(N_{\text{ext}} - r^{(c)}_{\text{ext}}N_{\text{ext}} + {\sqrt{2\ln{10}}} \right)  V_{\text{rag}}^{0.5 \left(r^{(c)}_{\text{ext}}N_{\text{ext}} - {\sqrt{2\ln{10}}} \right)} \right)^{-2}.
    \end{aligned}
\end{equation}
\end{proof}

\section{\Cref{thm1}: Asymtotic result of \Cref{thm:gene_rag}}
\label{app:asym}

\begin{corollary}[\Cref{thm:gene_rag} with a sufficiently large external knwoledge base]
\label{thm1}
Under the same conditions as \Cref{thm:gene_rag}, suppose that we have a \textbf{sufficiently large sample size} {$N_{\text{ext}}$} in the external knowledge base. We then have the following guarantee:
    \begin{equation}
    \label{eq:rag_benefit}
    \eqsmall
    \begin{aligned}
        & \sP\left[ \hat{\alpha}_{\text{rag}} < \hat{\alpha} \right] \ge 1 - p_{t} - \dfrac{25}{16 N_{\text{rag}}},
    \end{aligned}
    \end{equation}
where $p_{\text{t}}$ is the uncertainty induced by the quality of the transformer as formulated in \Cref{eq:conclu}, and $\hat{\alpha}_{\text{rag}}$ and $\hat{\alpha}$ are the conformal generation risks with and without RAG, respectively.
\end{corollary}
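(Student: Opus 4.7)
The plan is to obtain Corollary \ref{thm1} as a direct asymptotic consequence of Theorem \ref{thm:gene_rag}. Since the transformer uncertainty term $p_t$ depends only on $N_{\text{cal}}$, $N_{\text{rag}}$, $d^+$, and $\Phi_M$, it is unchanged in the corollary, so the entire task reduces to analyzing the limiting behavior of the retrieval uncertainty term $p_r$ as $N_{\text{ext}}$ grows.

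First, I would recall from Theorem \ref{thm:gene_rag} the exact form
$$p_r = \dfrac{25}{N_{\text{rag}}} \left( 4 - 9 \sum_{c=1}^C r^{(c)}_{\text{cal}} \left( N_{\text{ext}} - r^{(c)}_{\text{ext}} N_{\text{ext}} + \sqrt{2\ln 10} \right) V_{\text{rag}}^{0.5(r^{(c)}_{\text{ext}} N_{\text{ext}} - \sqrt{2\ln 10})} \right)^{-2}.$$
The key observation is that by Definition \ref{def:ret_mod} we have $V_{\text{rag}} < 1$, so the factor $V_{\text{rag}}^{0.5 r^{(c)}_{\text{ext}} N_{\text{ext}}}$ decays \emph{exponentially} in $N_{\text{ext}}$, while the coefficient $(N_{\text{ext}} - r^{(c)}_{\text{ext}} N_{\text{ext}} + \sqrt{2\ln 10})$ grows only \emph{linearly}. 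Hence exponential decay dominates and each summand vanishes.

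Next, I would bound the sum uniformly: for each $c$ with $r^{(c)}_{\text{ext}} > 0$, the product $N_{\text{ext}} \cdot V_{\text{rag}}^{0.5 r^{(c)}_{\text{ext}} N_{\text{ext}}}$ tends to $0$ as $N_{\text{ext}} \to \infty$. Because there are only finitely many categories and $\sum_c r^{(c)}_{\text{cal}} \le 1$, the entire sum inside the parentheses vanishes, the bracketed quantity converges to $4$, and therefore $p_r \to \tfrac{25}{N_{\text{rag}} \cdot 4^2} = \tfrac{25}{16 N_{\text{rag}}}$ by continuity of $x \mapsto x^{-2}$ away from zero. Plugging this limit into the bound of Theorem \ref{thm:gene_rag} and applying monotonicity of probability yields exactly \Cref{eq:rag_benefit}.

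Finally, I would verify that the three preconditions of Theorem \ref{thm:gene_rag}, namely $N_{\text{ext}} > 2\sqrt{2\ln 10}/\min_c r^{(c)}_{\text{ext}}$, $N_{\text{ext}} V_{\text{rag}}^{0.25 \min_c r^{(c)}_{\text{ext}} N_{\text{ext}}} < 4/9$, and $N_{\text{rag}} > 2/d^+$, are automatically absorbed into the hypothesis that $N_{\text{ext}}$ is sufficiently large (combined with the standing transformer-quality assumptions on $N_{\text{rag}}$ and $d^+$). The second condition, in particular, is governed by the same exponential-versus-linear scaling used above and thus holds for all $N_{\text{ext}}$ beyond an explicit threshold depending on $V_{\text{rag}}$ and $\min_c r^{(c)}_{\text{ext}}$. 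There is no substantive obstacle here: the argument is a routine limit computation, and the only subtle point is choosing ``sufficiently large'' large enough to simultaneously enforce all three regularity conditions and make the residual contribution of the sum negligible so that the constant $\tfrac{25}{16 N_{\text{rag}}}$ is attained cleanly.
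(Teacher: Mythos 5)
Your proposal is correct and takes essentially the same route as the paper: the paper's own proof is a one-line remark that \Cref{thm1} follows by taking \Cref{eq:conclu} in the asymptotic limit $N_{\text{ext}} \to +\infty$, which is exactly the limit computation you carry out (with the useful extra observation that $V_{\text{rag}}<1$ from \Cref{def:ret_mod} makes the exponential factor dominate the linear one, and that the preconditions of \Cref{thm:gene_rag} force $\min_c r^{(c)}_{\text{ext}}>0$ so every summand indeed vanishes).
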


\begin{proof}[Proof of \Cref{thm1}]
    The proof directly follows that of \Cref{thm:gene_rag}. Considering \Cref{eq:conclu} in the asymptoptic limit (i.e., $N_{\text{ext}} \rightarrow +\infty$), we obtain the formulation in \Cref{eq:rag_benefit}.
\end{proof}

\begin{remark}
    \underline{{(R1)}} \Cref{thm1} shows that the conformal generation risk of transformer with RAG $\hat{\alpha}_{\text{rag}}$ is smaller than that of without RAG $\hat{\alpha}$ with high probability (RHS of \Cref{eq:rag_benefit}), which asymptotically approaches $1$ with a sufficiently large sizes of the calibration set $N_{\text{cal}}$ and retrieved augmented examples $N_{\text{rag}}$. 
    \underline{{(R2)}} \merveedit{In contrast to \Cref{thm:gene_rag}, the bound has no dependency on distributions in the external knowledge base \merveedit{$r_{\text{ext}}^{(c)}$} since a sufficiently large knowledge base can cover also rare examples.}
    \underline{{(R3)}} Compared to \Cref{thm:gene_rag}, the bound also has no dependency on the distribution of the calibration/test distribution \merveedit{$r_{\text{cal}}^{(c)}$}, showing that a sufficiently large external knowledge base can better generalize to unknown test distributions. 
    Additionally, the lower bound in \Cref{eq:rag_benefit} is tighter than that in \Cref{eq:conclu}, \merveedit{which demonstrates the benefit of the large external knowledge base}.
\end{remark}

\section{\Cref{pro:rag_shft}: Retrieval quality analysis under distribution shifts}
\label{app:rag_dist_shft}

Under test-time distribution shifts, retrieval model quality declines. To derive conformal generation risk, we first examine the lower bound of retrieved positive examples.
\vspace{-.2em}
\begin{proposition}[Lower bound to the retrieved positive examples under test-time distribution shifts]
\label{pro:rag_shft}
    Suppose that the potential test distribution $\eqnsmall{\gQ}$ is shifted from the original test distribution $\eqnsmall{\gD}$ with bounded Hellinger distance $\eqnsmall{\rho>0}$.
    Consider the same setup as \Cref{lem:rag} and a large external knowledge base where $\eqnsmall{N_{\text{ext}} > 2\sqrt{2\ln10}/\min_c r^{(c)}_{\text{ext}}}$.
    We have:
    \vspace{-.3em}
    \begin{equation}
    \label{eq:rag_shift}
    \eqsmall
        \mathbb{E}\left[ N_{\text{pos}} \right] \ge \dfrac{9}{10}N_{\text{rag}}\left(1- 1.5N_{\text{ext}}  V_{\text{rag}}(\rho)^{0.25 \left(\min_c r^{(c)}_{\text{ext}}N_{\text{ext}} \right)}  \right),
            \vspace{-.5em}
    \end{equation}
    where $\eqnsmall{V_{\text{rag}}(\rho):=m(\rho) V_{\text{rag}}}$ and
    \vspace{-.3em}
    \ifbool{IsTwoColumn}{
      \begin{equation*}
    \eqsmall
    \label{eq:v_rag_rho}
    \begin{aligned}
    \vspace{-1em}
         m(\rho) = \underbrace{\left( \dfrac{\sqrt{-6\rho^4+12\rho^2+1}-4\rho(1-\rho^2)\sqrt{2-\rho^2}}{1-16\rho^2+8\rho^4} \right)^{-2}}_{\text{retrieval model quality decay factor by distribution shifts}}.
    \end{aligned}
    \end{equation*}
    }{
      \begin{equation*}
    \eqsmall
    \label{eq:v_rag_rho}
    \begin{aligned}
    \vspace{-1em}
         V_{\text{rag}}(\rho):= m(\rho) V_{\text{rag}}, \quad \text{where} \quad
          m(\rho) = \underbrace{\left( \dfrac{\sqrt{-6\rho^4+12\rho^2+1}-4\rho(1-\rho^2)\sqrt{2-\rho^2}}{1-16\rho^2+8\rho^4} \right)^{-2}}_{\text{retrieval model quality decay factor under distribution shifts}}.
    \end{aligned}
    \end{equation*}
    }
\end{proposition}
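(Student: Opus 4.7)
\textbf{Proof proposal for \Cref{pro:rag_shft}.} The plan is to lift the proof of \Cref{lem:rag} to the shifted distribution $\gQ$ by replacing the in-distribution retrieval quality $V_{\text{rag}}$ with a distribution-shifted analogue $V_{\text{rag}}(\rho)$, and then to absorb the dependence on the calibration label proportions $r^{(c)}_{\text{cal}}$ via a uniform-over-classes bound enabled by the large-$N_{\text{ext}}$ assumption. The starting point is the Chebyshev step from the proof of \Cref{lem:rag}: for any sample $x$ and positive/negative retrieval candidates $x^+,x^-$, one has $\sP[s_{\theta_r}(x,x^+)<s_{\theta_r}(x,x^-)] \le V^2$, where $V$ is the ratio of the standard deviation to the mean of $s_{\theta_r}(x,x^+)-s_{\theta_r}(x,x^-)$ under whichever distribution we sample from. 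Under the shifted distribution $\gQ$, this ratio must be bounded in terms of the corresponding quantities under $\gD$ and the Hellinger radius $\rho$.

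The key technical step will be to invoke the Grammian generalization bound (\Cref{lem:gramian-bound}) simultaneously on two scalar random variables derived from the contrastive loss of the $V_{\text{rag}}$-retrieval model: first on $1-s_{\theta_r}(x,x^+)+s_{\theta_r}(x,x^-)$ (bounded in $[0,T]$ with $T$ depending on the similarity range) to lower bound the $\gQ$-mean $\mathbb{E}_\gQ[s_{\theta_r}(x,x^+)-s_{\theta_r}(x,x^-)]$, and second on $(s_{\theta_r}(x,x^+)-s_{\theta_r}(x,x^-))^2$ to upper bound the second moment and hence the $\gQ$-variance. Combining these two bounds gives an upper bound on $V_{\text{rag}}(\rho):=\sqrt{\mathbb{V}_\gQ[\cdot]}/\mathbb{E}_\gQ[\cdot]$ of the form $m(\rho)\,V_{\text{rag}}$. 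I expect the algebra to collapse to the stated closed form
\[
m(\rho)=\left(\tfrac{\sqrt{-6\rho^4+12\rho^2+1}-4\rho(1-\rho^2)\sqrt{2-\rho^2}}{1-16\rho^2+8\rho^4}\right)^{-2},
\]
where the numerator inside the square root traces back to the mean-correction term $\rho^2(2-\rho^2)$ of the Grammian bound and the denominator $1-16\rho^2+8\rho^4$ arises from composing the variance correction $2\rho(1-\rho^2)\sqrt{2-\rho^2}$ with the squaring used to pass to the second moment. The small-$\rho$ constraint in \Cref{eq:rho_condition} guarantees this factor is well-defined and positive.

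With the shifted Chebyshev estimate $\sP_\gQ[s_{\theta_r}(x,x^+)<s_{\theta_r}(x,x^-)]\le V_{\text{rag}}(\rho)^2$ in hand, I would repeat the union-bound-over-negatives argument of \Cref{lem:rag}: for each query class $c$ there are at most $(1-r^{(c)}_{\text{ext}})N_{\text{ext}}$ negative candidates and at least $r^{(c)}_{\text{ext}}N_{\text{ext}}$ positives, yielding failure probability at most $(1-r^{(c)}_{\text{ext}})N_{\text{ext}}\, V_{\text{rag}}(\rho)^{r^{(c)}_{\text{ext}}N_{\text{ext}}}$. Here the assumption $N_{\text{ext}}>2\sqrt{2\ln 10}/\min_c r^{(c)}_{\text{ext}}$ is exactly what is needed so that the finite-sample correction of the categorical concentration bound is absorbed into a constant factor (yielding the $\tfrac{9}{10}$ prefactor and the $1.5$ multiplicative slack), and moreover allows a single uniform bound over the class index $c$ by replacing each $r^{(c)}_{\text{ext}}$ in the exponent by $\min_c r^{(c)}_{\text{ext}}$. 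This uniformization is what removes the $r^{(c)}_{\text{cal}}$ dependence present in \Cref{eq:lem_rag}.

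The main obstacle I anticipate is the careful bookkeeping in the second step: applying the Grammian bound to both the first and second moments in a way that produces the precise factor $m(\rho)$ with matching numerator and denominator, rather than a looser multiplicative bound. In particular, one must verify that the variance term on $\gQ$ can be controlled with the exact constants $-6\rho^4+12\rho^2+1$ and $1-16\rho^2+8\rho^4$; this will require expanding $(a+b)^2$-type expressions from \Cref{lem:gramian-bound} and discarding higher-order terms only when they are genuinely negative. Once this factor is established, substituting $V_{\text{rag}}(\rho)$ for $V_{\text{rag}}$ in the binomial lower-bound argument of \Cref{lem:rag} and applying the large-$N_{\text{ext}}$ simplification yields \Cref{eq:rag_shift} directly.
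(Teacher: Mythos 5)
Your proposal follows essentially the same route as the paper's proof: Chebyshev's inequality applied to $S=s_{\theta_r}(x,x^+)-s_{\theta_r}(x,x^-)$, the Grammian bound applied to $S^2$ to control the second moment under $\gQ$ together with its lower-bound variant for $\mathbb{E}_\gQ[S]$ (the paper invokes Theorem A.2 of Weber et al.\ directly rather than your $T-S$ reformulation, but the two are equivalent), and then the union bound over negative candidates with the class-dependent terms collapsed via $\min_c r^{(c)}_{\text{ext}}$ under the large-$N_{\text{ext}}$ condition, exactly as you describe. The only detail worth noting is that the $\tfrac{9}{10}$ prefactor comes from choosing $\delta_{\text{ext}}=0.1$ in the categorical concentration bound (as in \Cref{lem:rag}), while the assumption $N_{\text{ext}}>2\sqrt{2\ln 10}/\min_c r^{(c)}_{\text{ext}}$ is what yields the $0.25$ exponent and the $1.5$ multiplicative slack.
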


\textit{Proof sketch}. We first formulate the expectation of similarity difference between positive pairs and negative pairs $\mathbb{E}[s_{\theta_r}(x,x^+)-s_{\theta_r}(x,x^-)]$ as a function of the contrastive loss of the retrieval model $L_\tau$. Then, we apply Chebyshev's inequality to upper bound the failure probability $\sP[ s_\theta(x,x^+) < s_\theta(x,x^-)]$ as a function of the variance and expectation of the similarity difference. Considering the distribution shifts, the variance and expectation bound can be derived via Grammian bound as \cite{weber2022certifying}. We then plug in the failure rate corrected by distribution shifts and follow the proof structure of \Cref{lem:rag}.

\begin{remark}
    \underline{{(R1)}} 
    Different from \Cref{lem:rag}, the quality of retrieval models under distribution shifts is decreased from $\eqnsmall{V_{\text{rag}}}$ to $\eqnsmall{V_{\text{rag}}(\rho)}$ with a linear decay factor $\eqnsmall{m(\rho)}$.
    As we require $\eqnsmall{V_{\text{rag}}(\rho)<1}$ to ensure high retrieval quality, large distribution shift radius $\eqnsmall{\rho}$ must be compensated by small $\eqnsmall{V_{\text{rag}}}$. This is consistent with the existing observations that low-variance models can generalize better under distribution shifts \cite{lam2016robust,gotoh2018robust,namkoong2017variance}. 
    \underline{(R2)} Compared to \Cref{lem:rag}, \Cref{pro:rag_shft} removes the dependency on varying label portions $\eqnsmall{r^{(c)}_{\text{cal}}}$ during distribution shifts, as long as the size of external knowledge base is sufficiently large $\eqnsmall{N_{\text{ext}}}$
    to offset the worst-case long-tail distributions, a condition often met in practice with large knowledge bases.
\end{remark}

\begin{proof}[Proof of \Cref{pro:rag_shft}]
    Let $\gD$ be the data distribution, which is also the training distribution of the retrieval model and conformal calibration distribution. 
    Let $\gQ$ be the test distribution where the test samples are drawn from.
    $\gQ$ is within Hellinger distance $\rho$ from the distribution $\gD$: $H(\gD,\gQ) \le \rho$.
    
    For a sample $(x,y) \sim \gD$, we denote $\gD_{\text{ext}}^+(x)$, $\gD_{\text{ext}}^-(x)$ be the distribution of positive examples (with the same groundtruth output $y$) and negative examples (with different groundtruth output to $y$) of sample $x$ in the external knowledge base. Then we can formulate the expectation of contrastive loss of the retrieval model as:
    \begin{equation}
    \begin{aligned}
        L_\tau &= \mathbb{E}_{x\sim \gD, x^+ \sim \gD^+_{\text{ext}}(x), x^- \sim \gD^-_{\text{ext}}(x)}\left[ \gL_{\text{contrastive}}(x,x^+,x^-) \right]\\
        &= \mathbb{E}_{x\sim \gD, x^+ \sim \gD^+_{\text{ext}}(x), x^- \sim \gD^-_{\text{ext}}(x)}\left[ - \log \dfrac{\exp\left\{s_\theta(x,x^+)\right\}}{\exp\left\{s_\theta(x,x^+)\right\} + \exp\left\{s_\theta(x,x^-)\right\} } \right].
    \end{aligned}
    \end{equation}
    
    
    We can apply Chebyshev's inequality \cite{saw1984chebyshev} to the random variable $s_\theta(x,x^+) - s_\theta(x,x^-)$ and get the following:
    \begin{align}
        \sP\left[ s_\theta(x,x^+) < s_\theta(x,x^-) \right] &=  \sP_{x\sim \gQ, x^+ \sim \gD^+_{\text{ext}}(x), x^- \sim \gD^-_{\text{ext}}(x)}\left[ s_\theta(x,x^+) - s_\theta(x,x^-) < 0 \right]\\ 
        &\le \dfrac{\mathbb{V}\left[s_\theta(x,x^+) - s_\theta(x,x^-)\right]}{\mathbb{E}\left[s_\theta(x,x^+) - s_\theta(x,x^-)\right]^2}. \label{ineq:bnd_sim_shft}
    \end{align}
    For ease of notation, we notate random variable $S=s_\theta(x,x^+) - s_\theta(x,x^-)$. Then we have $\sP[-2 \le S \le 2]=1$, and $\mathbb{E}[S] \ge \underline{\mathbb{E}[S]} := \dfrac{\exp\left\{ -L_\tau(\rho) \right\}}{1 - \exp\left\{ -L_\tau(\rho) \right\}}$.
    Note that $\sP[0\le S^2 \le 4]=1$, we have $\mathbb{V}[S^2] \le (4-0)^2 / 4 = 4$.
    Then by \Cref{lem:gramian-bound}, we have the following:
    \begin{align}
        \mathbb{E}_\gQ \left[ S^2 \right] &\le \mathbb{E}_\gD \left[ S^2 \right] + \rho^2(2-\rho^2)\left(1 - \mathbb{E}_\gD \left[ S^2 \right] \right) + 2 \rho(1-\rho^2)\sqrt{2-\rho^2} \sqrt{\mathbb{V}_\gD \left[ S^2 \right]} \\
        &\le (1-\rho^2)^2 \mathbb{E}_\gD \left[ S^2 \right] + \rho^2(2-\rho^2) + 4 \rho(1-\rho^2)\sqrt{2-\rho^2}. \label{eq:up_bnd_eq_s}
    \end{align}
    By applying the lower bound of expectation values in Theorem A.2 in \cite{weber2022certifying},which is a straightforward variation of \Cref{lem:gramian-bound}, we have the following:
    \begin{equation}
        \mathbb{E}_\gQ\left[S\right] \ge (1-\rho^2)^2 \mathbb{E}_\gD\left[S\right] - 2 \rho(1-\rho^2)\sqrt{2-\rho^2} \sqrt{\mathbb{V}_\gD\left[S\right]} + \rho^2(2-\rho^2) \dfrac{\mathbb{V}_\gD\left[S\right]}{\mathbb{E}_\gD\left[S\right]}.
    \end{equation}
    Since we assume that $\mathbb{E}_\gD\left[S\right] = \mathbb{E}_\gD\left[s_\theta(x,x^+) - s_\theta(x,x^-)\right]>0$, we have the following:
    \begin{equation}
    \label{eq:low_bnd_eq_s}
         \mathbb{E}_\gQ\left[S\right] \ge (1-\rho^2)^2 \mathbb{E}_\gD\left[S\right] - 2 \rho(1-\rho^2)\sqrt{2-\rho^2} \sqrt{\mathbb{V}_\gD\left[S\right]}.
    \end{equation}
    Combining \Cref{eq:up_bnd_eq_s,eq:low_bnd_eq_s}, we have the following:
    \begin{align}
        \dfrac{\mathbb{V}_\gQ\left[S\right]}{\mathbb{E}_\gQ\left[S\right]^2} &= \dfrac{\mathbb{E}_\gQ \left[ S^2 \right] - \mathbb{E}_\gQ \left[ S \right]^2}{\mathbb{E}_\gQ\left[S\right]^2} \\
        &\le \dfrac{(1-\rho^2)^2 \left(\mathbb{V}_\gD \left[ S \right] + \mathbb{E}_\gD\left[S\right]^2 \right) + \rho^2(2-\rho^2) + 4 \rho(1-\rho^2)\sqrt{2-\rho^2}}{\mathbb{E}_\gQ\left[S\right]^2} - 1 \\
        &\le \dfrac{(1-\rho^2)^2 \left(\mathbb{V}_\gD \left[ S \right] + \mathbb{E}_\gD\left[S\right]^2 \right) + \rho^2(2-\rho^2) + 4 \rho(1-\rho^2)\sqrt{2-\rho^2}}{\left( (1-\rho^2)^2 \mathbb{E}_\gD\left[S\right] - 2 \rho(1-\rho^2)\sqrt{2-\rho^2} \sqrt{\mathbb{V}_\gD\left[S\right]} \right)^2} - 1.
    \end{align}
    Through some algebraic rearrangement, we can show that:
    \begin{equation}
    \label{eq:bnd_v_e2}
        \dfrac{\mathbb{V}_\gQ\left[S\right]}{\mathbb{E}_\gQ\left[S\right]^2} \le V_{\text{rag}}(\rho):= \dfrac{\mathbb{V}_\gD\left[S\right]}{\mathbb{E}_\gD\left[S\right]^2} \left( \dfrac{\sqrt{-6\rho^4+12\rho^2+1}-4\rho(1-\rho^2)\sqrt{2-\rho^2}}{1-16\rho^2+8\rho^4} \right)^{-2}.
    \end{equation}
    Therefore, one sufficient condition of $V_{\text{rag}}(\rho)<1$ is that:
    \begin{align}
        \sqrt{\mathbb{V}_\gD\left[S\right]} &\le \left( \dfrac{\sqrt{-6\rho^4+12\rho^2+1}-4\rho(1-\rho^2)\sqrt{2-\rho^2}}{1-16\rho^2+8\rho^4} \right) \mathbb{E}_\gD\left[S\right] \\
        &= \left( \dfrac{\sqrt{-6\rho^4+12\rho^2+1}-4\rho(1-\rho^2)\sqrt{2-\rho^2}}{1-16\rho^2+8\rho^4} \right) \ln \dfrac{\exp\left\{ -L_\tau \right\}}{1 - \exp\left\{ -L_\tau \right\}}
    \end{align}

    Then, we follow a similar procedure as the proof of \Cref{lem:rag} to analyze the expected retrieved positive examples. 
    We first focus on one demonstration example $Z_r$ retrieved by $s_\theta(\cdot,\cdot)$. According to the retrieval mechanism, the example $Z_r$ has the highest similarity (measured by $s_\theta(\cdot,\cdot)$) to the query sample $Z_q$. 
    Recall that $r^{(c)}_{\text{test}}(\gQ)$ and $r^{(c)}_{\text{ext}}$ be the event probability of the $c$-th category in the categorical test distribution (i.e., $\gQ$) and categorical external knowledge distribution.
    Let $\vr_{\text{ext}}=\left[ r^{(1)}_{\text{ext}}, r^{(2)}_{\text{ext}},...,r^{(C)}_{\text{ext}}\right]$. Since we only have $N_{\text{ext}}$ finite sample drawn from $\gD_{\text{ext}}$ in the external knowledge base in practice, we notate the empirical categorical portions as $\hat{\vq}_{\text{ext}} = \left[ \hat{r}^{(1)}_{\text{ext}}, \hat{r}^{(2)}_{\text{ext}},...,\hat{r}^{(C)}_{\text{ext}}\right]$, where $\hat{r}^{(c)}_{\text{ext}}~(c \in \{1,..,C\})$ represents the portion of samples with grountruth output $c$ in the external knowledge base.
    Then we can apply the concentration bound of categorical distribution as \cite{agrawal2017optimistic}:
    \begin{align}
        \sP\left[ \|\hat{\vr}_{\text{ext}} - \vr_{\text{ext}} \|_1 \ge \dfrac{\sqrt{2\ln{(1/\delta_{\text{ext}})}}}{N_{\text{ext}}} \right] \le \delta_{\text{ext}}, \label{ineq:fin_cat_shft}
    \end{align}
    where $\delta_{\text{ext}}>0$ represents the confidence level of the tail bound.
    Then we can upper bound the probability that the groundtruth output $g(Z_r)$ of the retrieved sample $Z_r$ is not equal to the groundtruth output of the query sample $g(Z_q)$ as the following:
    \begin{align}
        \sP\left[ g(Z_r) \notin  g(Z_q) \right] &= \sP_{Z_q \sim \gD}\left[ g(Z_r) \notin  g(Z_q)  \left| s_\theta(Z_q, Z_r) \ge \max_{z \in \gD_{\text{ext}}} s_\theta (z, Z_q) \right. \right] \\
        &= \sP_{Z_q \sim \gD}\left[ \max_{Z^- \in \gD^-_{\text{ext}}(Z_q)}  s_\theta(Z^-, Z_q) \ge  \max_{Z^+ \in \gD^+_{\text{ext}}(Z_q)} s_\theta(Z^+, Z_q) \right] \\
        &= \sP_{Z_q \sim \gD}\left[ s_\theta(Z^-, Z_q) \ge  s_\theta(Z^+, Z_q), ~~\forall Z^+ \in \gD^+_{\text{ext}}(Z_q),~ \exists Z^- \in \gD^-_{\text{ext}}(Z_q)  \right] \\
        &\le \max_\gQ \sum_{c=1}^C r^{(c)}_{\text{test}}(\gQ) \left(1 - r^{(c)}_{\text{ext}}  \right) N_{\text{ext}} \sP\left[ s_\theta(x,x^+) < s_\theta(x,x^-) \right]^{N_{\text{ext}} r^{(c)}_{\text{ext}}}, \label{ineq:rag_ext_shft}
    \end{align}
    where \Cref{ineq:rag_ext_shft} is derived by applying the union bound. Considering finite-sample error of categorical distribution in \Cref{ineq:fin_cat_shft} and combining \Cref{ineq:bnd_sim_shft,eq:bnd_v_e2}, we finally have:
    \begin{align}
        \sP\left[ g(Z_r) \notin  g(Z_q) \right] &\le \max_\gQ \sum_{c=1}^C r^{(c)}_{\text{test}}(\gQ) \left(1 - r^{(c)}_{\text{ext}}  \right) N_{\text{ext}} \sP\left[ s_\theta(x,x^+) < s_\theta(x,x^-) \right]^{N_{\text{ext}} r^{(c)}_{\text{ext}}} \\
        &\le \max_\gQ \sum_{c=1}^C r^{(c)}_{\text{test}}(\gQ) \left(1 - r^{(c)}_{\text{ext}} + \dfrac{\sqrt{2\ln{(1/\delta_{\text{ext}})}}}{N_{\text{ext}}} \right) N_{\text{ext}} V_{\text{rag}}(\rho)^{0.5N_{\text{ext}} \left(r^{(c)}_{\text{ext}} - {\sqrt{2\ln{(1/\delta_{\text{ext}})}}}/{N_{\text{ext}}} \right)} \label{ineq:failrate_shft}
    \end{align}
    Since we assume that the retrieval model retrieves samples identically from the external knowledge base, the number of retrieved positive examples $N_{\text{pos}}$ follows a Binomial distribution with $N_{\text{rag}}$ trials and failure rate in \Cref{ineq:failrate_shft}. Therefore, we can lower bound the expectation of $N_{\text{pos}}$ as the following:
    \ifbool{IsTwoColumn}
    {
    \begin{align}
        \mathbb{E}\left[ N_{\text{pos}} \right] &\ge \min_\gQ N_{\text{rag}} \left(1-\delta_{\text{ext}}\right)\left(1- \sum_{c=1}^C r^{(c)}_{\text{test}}(\gQ) \left(1 - r^{(c)}_{\text{ext}} + \dfrac{\sqrt{2\ln{(1/\delta_{\text{ext}})}}}{N_{\text{ext}}} \right) N_{\text{ext}} V_{\text{rag}}(\rho)^{0.5N_{\text{ext}} \left(r^{(c)}_{\text{ext}} - {\sqrt{2\ln{(1/\delta_{\text{ext}})}}}/{N_{\text{ext}}} \right)}  \right)
    \end{align}
    }
    {
    \small{
    \begin{align}
        \mathbb{E}\left[ N_{\text{pos}} \right] &\ge \min_\gQ N_{\text{rag}} \left(1-\delta_{\text{ext}}\right)\left(1- \sum_{c=1}^C r^{(c)}_{\text{test}}(\gQ) \left(1 - r^{(c)}_{\text{ext}} + \dfrac{\sqrt{2\ln{(1/\delta_{\text{ext}})}}}{N_{\text{ext}}} \right) N_{\text{ext}} V_{\text{rag}}(\rho)^{0.5N_{\text{ext}} \left(r^{(c)}_{\text{ext}} - {\sqrt{2\ln{(1/\delta_{\text{ext}})}}}/{N_{\text{ext}}} \right)}  \right)
    \end{align}
    }}
    which holds for any $\delta_{\text{ext}}>0$. Therefore, letting $\delta_{\text{ext}}=0.1$, we can finally derive the following:
     \begin{align}
        \mathbb{E}\left[ N_{\text{pos}} \right] &\ge \min_\gQ \dfrac{9}{10}N_{\text{rag}}\left(1- \sum_{c=1}^C r^{(c)}_{\text{test}}(\gQ) \left(N_{\text{ext}} - r^{(c)}_{\text{ext}}N_{\text{ext}} + {\sqrt{2\ln{10}}} \right)  V_{\text{rag}}(\rho)^{0.5 \left(r^{(c)}_{\text{ext}}N_{\text{ext}} - {\sqrt{2\ln{10}}} \right)}  \right) \\
        &\ge \dfrac{9}{10}N_{\text{rag}}\left(1- 1.5N_{\text{ext}}  V_{\text{rag}}(\rho)^{0.25 \left(\min_c r^{(c)}_{\text{ext}}N_{\text{ext}} \right)}  \right),
    \end{align}
    with a sample size in the external knowledge base such that $N_{\text{ext}} > 2\sqrt{2\ln10}/\min_c r^{(c)}_{\text{ext}}$.
    
\end{proof}

\section{Proofs and detailed remarks in \Cref{sec:distribution_shift}}
\label{app:proof_second}

\subsection{Proof and detailed remark of \Cref{pro:conf_shf}}

\begin{remark}
    \underline{{(R1)}} 
    Different from \Cref{lem:rag}, the quality of retrieval models under distribution shifts is decreased from $\eqnsmall{V_{\text{rag}}}$ to $\eqnsmall{V_{\text{rag}}(\rho)}$ with a linear decay factor $\eqnsmall{m(\rho)}$.
    As we require $\eqnsmall{V_{\text{rag}}(\rho)<1}$ to ensure high retrieval quality, large distribution shift radius $\eqnsmall{\rho}$ must be compensated by small $\eqnsmall{V_{\text{rag}}}$. This is consistent with the existing observations that low-variance models can generalize better under distribution shifts \cite{lam2016robust,gotoh2018robust,namkoong2017variance}. 
    \underline{(R2)} Compared to \Cref{lem:rag}, \Cref{pro:rag_shft} removes the dependency on varying label portions $\eqnsmall{r^{(c)}_{\text{cal}}}$ during distribution shifts, as long as the size of external knowledge base is sufficiently large $\eqnsmall{N_{\text{ext}}}$
    to offset the worst-case long-tail distributions, a condition often met in practice with large knowledge bases.
\end{remark}

\textit{Proof sketch}. We first formulate the expectation of similarity difference between positive pairs and negative pairs $\mathbb{E}[s_{\theta_r}(x,x^+)-s_{\theta_r}(x,x^-)]$ as a function of the contrastive loss of the retrieval model $L_\tau$. Then, we apply Chebyshev's inequality to upper bound the failure probability $\sP[ s_\theta(x,x^+) < s_\theta(x,x^-)]$ as a function of the variance and expectation of the similarity difference. Considering the distribution shifts, the variance and expectation bound can be derived via Grammian bound as \cite{weber2022certifying}. We then plug in the failure rate corrected by distribution shifts and follow the proof structure of \Cref{lem:rag}.

\begin{proof}[Proof of \Cref{pro:conf_shf}]
    Recall that the calibration samples $(Z_1,Z_2,...,Z_{N_{\text{cal}}})$ are drawn from the distribution $\gD$ and the test sample $Z_{\text{test}}=(X_{\text{test}},Y_{\text{test}})$ is sampled from the distribution $\gQ$.
    The distribution $\gQ$ and $\gD$ have bounded Hellinger distance $H(\gQ,\gD)\le \rho$ and the risk is upper bounded by $1$.
    Let $R_\gQ$, $R_\gD$ be the population risk on distribution $\gQ$, $\gD$ and $V_\gD$ be the variance of risk on distribution $\gD$.
    We can apply \Cref{lem:gramian-bound} and get the following:
    \begin{equation}
    \label{eq:gram_1}
        R_\gQ \le R_\gD + 2 \rho(1-\rho^2)\sqrt{2-\rho^2} \sqrt{V_\gD} + \rho^2(2-\rho^2)\left(1 - R_\gD - \dfrac{V_\gD}{1-R_\gD} \right),
    \end{equation}
    with any given distance bound $\rho>0$ that satisfies:
    \begin{equation}
        \rho^2 \le 1 - \left( 1 + \dfrac{(1 - R_\gD)^2}{V_\gD} \right)^{-1/2}.
    \end{equation}
    Since the variance is non-negative, \Cref{eq:gram_1} further implies that:
    \begin{equation}
    \label{eq:gram_2}
        R_\gQ \le R_\gD + 2 \rho(1-\rho^2)\sqrt{2-\rho^2} \sqrt{V_\gD} + \rho^2(2-\rho^2)\left(1 - R_\gD \right).
    \end{equation}
    Then we will consider the finite-sample error of the calibration set. From Hoeffding's inequality \cite{hoeffding1994probability}, with probability $1-\delta/4$, we have:
    \begin{equation}
    \label{eq:fin_mean}
        R_\gD \le \hat{R} + \sqrt{\dfrac{\ln(4/\delta)}{2N_{\text{cal}}}}.
    \end{equation}
    From sample variance bound in \cite{maurer2009empirical}, with probability $1-\delta/4$, we have:
    \begin{equation}
    \label{eq:fin_var}
        \sqrt{V_\gD}  \le  \sqrt{\hat{V}} + \sqrt{\dfrac{2 \ln(4/\delta)}{N_{\text{cal}}-1}}.
    \end{equation}
    Note that the RHS of \Cref{eq:gram_2} monotonically increases in $R_\gD$ and $V_\gD$, combining \Cref{eq:fin_mean,eq:fin_var,eq:gram_2} and applying the union bound, with probability $1-\delta/2$, we have:
    \begin{equation}
    \label{eq:fin_gram_1}
    \begin{aligned}
        R_\gQ \le \hat{R} + \rho^2(2-\rho^2)\left( 1 - \hat{R} \right) + 2\rho(1-\rho^2)\sqrt{2-\rho^2} \sqrt{\hat{V}} 
         + (1-\rho^2)(\dfrac{1-\rho^2}{\sqrt{2N_{\text{cal}}}}+\dfrac{2\sqrt{2}\rho\sqrt{2-\rho^2}}{\sqrt{N_{\text{cal}}-1}}) \sqrt{\ln(4/\delta)}.
    \end{aligned}
    \end{equation}
    By two-sided Hoeffding's inequality, with probability $1-\delta/4$, we have:
    \begin{equation}
    \label{eq:fin_gram_2}
        \hat{R}_\rho \le R_\gQ + \sqrt{\dfrac{\ln{(8/\delta)}}{2N_{\text{cal}}}}.
    \end{equation}
    Combining \Cref{eq:fin_gram_1,eq:fin_gram_2} and applying the union bound, with probability $1-3\delta/4$, we have:
    \begin{equation}
    \label{eq:r_hat_ubnd}
        \hat{R}_\rho \le \hat{R} + \rho^2(2-\rho^2)\left( 1 - \hat{R} \right) + 2\rho(1-\rho^2)\sqrt{2-\rho^2} \sqrt{\hat{V}} 
         + (1-\rho^2)(\dfrac{1-\rho^2}{\sqrt{2N_{\text{cal}}}}+\dfrac{2\sqrt{2}\rho\sqrt{2-\rho^2}}{\sqrt{N_{\text{cal}}-1}}) \sqrt{\ln(4/\delta)} + \sqrt{\dfrac{\ln{(8/\delta)}}{2N_{\text{cal}}}}.
    \end{equation}
    Recall that the controlled conformal risk $\hat{\alpha}$ can be formulated as a function of the empirical risk $\hat{R}$ as \Cref{eq:hb_ineq}. When $(X_{\text{test}},Y_{\text{test}})$ is sampled from $\gD$, the guarantee of conformal risk is as follows:
    \begin{equation}
    \label{eq:shf_final-2}
        \sP \left[ R(T_{\hat{\vlambda}};X_{\text{test}},Y_{\text{test}})  \le \hat{\alpha} := \min \left\{ h^{-1}\left(\dfrac{\ln(8/\delta)}{N_{\text{cal}}};\hat{R} \right),\Phi^{-1}_{\text{bin}}\left(\dfrac{\delta}{8e};N_{\text{cal}},\hat{R} \right) \right\}\right] \ge 1 - \delta/4,
    \end{equation}
     where $h^{-1}(\cdot;\cdot)$ is the partial inverse function such that $h^{-1}(h(a,b);a)=b$ with $h_1(a,b)=a\log(a/b) + (1-a)\log((1-a)/(1-b))$. 
     Note that $\hat{\alpha}$ monotonically increases in $\hat{R}$.
     By \Cref{eq:r_hat_ubnd}, with probability $1-3\delta/4$, the following holds about the conformal risk on distribution $\gQ$ (denoted by $\hat{\alpha}_\rho$), which is within bounded Hellinger distance $\rho$ to the original distribution $\gD$:
     \begin{equation}
     \label{eq:shf_final-1}
         \hat{\alpha}_\rho \le \min \left\{ h^{-1}\left(\dfrac{\ln(8/\delta)}{N_{\text{cal}}};\overline{\hat{R}_\rho} \right),\Phi^{-1}_{\text{bin}}\left(\dfrac{\delta}{8e};N_{\text{cal}},\overline{\hat{R}_\rho} \right) \right\},
     \end{equation}
     where $\overline{\hat{R}_\rho}$ is formulated as:
     \begin{equation}
         \overline{\hat{R}_\rho} = \hat{R} + \rho^2(2-\rho^2)\left( 1 - \hat{R} \right) + 2\rho(1-\rho^2)\sqrt{2-\rho^2} \sqrt{\hat{V}} 
         + (1-\rho^2)(\dfrac{1-\rho^2}{\sqrt{2N_{\text{cal}}}}+\dfrac{2\sqrt{2}\rho\sqrt{2-\rho^2}}{\sqrt{N_{\text{cal}}-1}}) \sqrt{\ln(4/\delta)} + \sqrt{\dfrac{\ln{(8/\delta)}}{2N_{\text{cal}}}}.
     \end{equation}
     Combining \Cref{eq:shf_final-1,eq:shf_final-2} and applying the union bound, we can finally conclude that:
     \begin{equation}
         \sP \left[ R(T_{\hat{\vlambda}};X_{\text{test}},Y_{\text{test}})  \le \hat{\alpha}_\rho := \min \left\{ h^{-1}\left(\dfrac{\ln(8/\delta)}{N_{\text{cal}}};\overline{\hat{R}_\rho}\right),\Phi^{-1}_{\text{bin}}\left(\dfrac{\delta}{8e};N_{\text{cal}}, \overline{\hat{R}_\rho} \right) \right\}\right] \ge 1 - \delta,
     \end{equation}
     where $\overline{\hat{R}_\rho}$ is formulated as:
     \begin{equation}
         \overline{\hat{R}_\rho} = \hat{R} + \rho^2(2-\rho^2)\left( 1 - \hat{R} \right) + 2\rho(1-\rho^2)\sqrt{2-\rho^2} \sqrt{\hat{V}} 
         + (1-\rho^2)(\dfrac{1-\rho^2}{\sqrt{2N_{\text{cal}}}}+\dfrac{2\sqrt{2}\rho\sqrt{2-\rho^2}}{\sqrt{N_{\text{cal}}-1}}) \sqrt{\ln(4/\delta)} + \sqrt{\dfrac{\ln{(8/\delta)}}{2N_{\text{cal}}}}.
     \end{equation}
\end{proof}

\subsection{Proof and detailed remark of \Cref{thm:comp_shft}}

\begin{remark}
    Our result rigorously characterizes the effect of distribution shift on the reduced risk guarantee of RAG. 
    \underline{{(R1)}} Compared to \Cref{thm:gene_rag}, only the uncertainty of retrieval model $\eqnsmall{p_r(\rho)}$ is affected by the distribution shift $\eqnsmall{\rho}$. This affect is reflected on the the retrieval quality $\eqnsmall{V_{\text{rag}}(\rho)}$. In particular, a large distance radius $\eqnsmall{\rho}$ will downgrade the retrieval quality $\eqnsmall{V_{\text{rag}}(\rho)}$ and thus lead to a higher uncertainty $\eqnsmall{p_r(\rho)}$. However, the influence of $\eqnsmall{\rho}$ on $\eqnsmall{p_r(\rho)}$ can be reduced by $\eqnsmall{N_{\text{rag}}}$ inverse proportionally and by $\eqnsmall{N_{\text{ext}}}$ exponentially, demonstrating the robustness of RAG with more retrieval knowledge.
    \underline{{(R2)}} Since $\eqnsmall{V_{\text{rag}}(\rho)}$ is proportional to model variance $\eqnsmall{V_{\text{rag}}}$, a low-variance retrieval model demonstrates better robustness against distribution drifts, aligning with existing empirical observations \cite{lam2016robust,gotoh2018robust,namkoong2017variance}, which evaluate the generalization ability of low-variance retrieval models under distribution shifts.
     Different from \Cref{lem:rag}, the quality of retrieval models under distribution shifts is decreased from $\eqnsmall{V_{\text{rag}}}$ to $\eqnsmall{V_{\text{rag}}(\rho)}$ with a linear decay factor $\eqnsmall{m(\rho)}$.
    As we require $\eqnsmall{V_{\text{rag}}(\rho)<1}$ to ensure high retrieval quality, large distribution shift radius $\eqnsmall{\rho}$ must be compensated by small $\eqnsmall{V_{\text{rag}}}$. This is consistent with the existing observations that low-variance models can generalize better under distribution shifts \cite{lam2016robust,gotoh2018robust,namkoong2017variance}. 
    \underline{(R3)} Compared to \Cref{thm:gene_rag}, \Cref{thm:comp_shft} has no dependence on varying label portions $\eqnsmall{r^{(c)}_{\text{cal}}}$ during distribution shifts, as long as the size of external knowledge base $\eqnsmall{N_{\text{ext}}}$ is moderately large,
    ($\eqnsmall{N_{\text{ext}}>2\sqrt{2\ln10}/\min_c r^{(c)}_{\text{ext}}}$)
    to offset the worst-case long-tail distributions, 
    a condition often met in practice with large knowledge bases.
\end{remark}

\textit{Proof sketch}. We apply \Cref{pro:rag_shft} to provide the lower bound of the retrieved positive examples under distribution shifts. We plug in the term and analyze the functionality (logit difference statistics) of the self-attention transformer as proof of \Cref{thm:gene_rag}. Connecting the logit difference statistics to the empirical risks and the distribution-shifted conformal risk bound in \Cref{pro:conf_shf} finally concludes the proof.

\begin{proof}[Proof of \Cref{thm:comp_shft}]
    From \Cref{pro:rag_shft}, we prove that:
    \begin{equation}
    \label{eq:n_pos_lbnd_shft}
        \mathbb{E}\left[ N_{\text{pos}} \right] \ge \underline{N}_{\text{pos}} := \dfrac{9}{10}N_{\text{rag}}\left(1- 1.5N_{\text{ext}}  V_{\text{rag}}(\rho)^{0.25 \left(\min_c r^{(c)}_{\text{ext}}N_{\text{ext}} \right)}  \right).
    \end{equation}
    Since $N_{\text{pos}}$ is a binomial random variable with $N_{\text{rag}}$ trials, we have the upper bound of the variance $\mathbb{V}[N_{\text{pos}}] \le \dfrac{N_{\text{rag}}}{4}$.
    Applying Chebyshev's inequality to the random variable $N_{\text{pos}}$, the following holds $\forall n_{\text{pos}}< \underline{N}_{\text{pos}}$:
    \begin{equation}
        \sP\left[ N_{\text{pos}} \ge n_{\text{pos}} \right] \ge 1 - \dfrac{\mathbb{V}[N_{\text{rag}}]}{(\underline{N}_{\text{pos}}-n_{\text{pos}})^2} 
        \ge 1 - \dfrac{N_{\text{rag}}}{4(\underline{N}_{\text{pos}}-n_{\text{pos}})^2}, \label{eq:npos_cheby_shft}
    \end{equation}
    which implicates that we can do the analysis with $N_{\text{pos}} \ge n_{\text{pos}}$ with probability $1 - \dfrac{\mathbb{V}[N_{\text{rag}}]}{(\underline{N}_{\text{pos}}-n_{\text{pos}})^2} $.

    According to the proof of \Cref{thm:gene_rag}, by \Cref{eq:dif_r}, we have the following:
    \begin{equation}
        \mathbb{E}\left[ R - R_{\text{rag}} \right] \ge \Phi_M\left( \left( (n_{\text{pos}}+1)d^+ - 1 \right) n_{\text{pos}} \left(\int_{-1}^1 \Phi_M(v) dv - 1 \right) \right) - \Phi_M(0).
        \label{eq:dif_r_shft}
    \end{equation}.

    Let $n_{\text{rag}}=N_{\text{rag}}/2$. Combining \Cref{eq:dif_r_shft} and \Cref{eq:npos_cheby_shft}, we get that if $\underline{N}_{\text{pos}} > {N_{\text{rag}}}/{2} > 1 / d^+$, with probability $1 - \dfrac{N_{\text{rag}}}{4(\underline{N}_{\text{pos}}-{N_{\text{rag}}}/{2})^2}$, we have:
    \begin{equation}
        \mathbb{E}\left[ R - R_{\text{rag}} \right] \ge \Phi_M\left( \dfrac{d^+\left(\int_{-1}^1 \Phi_M(v) dv - 1 \right)N_{\text{rag}}}{2} \right) - \Phi_M(0). \label{eq:r_dif_shft}
    \end{equation}

    Let $R(Z)$ be the risk of $Z$ sampled from the distribution $\gQ$.
    Define the empirical risk $\hat{R}$ and $\hat{R}_{\text{rag}}$ as the following:
    \begin{align}
        \hat{R} = \dfrac{1}{N_{\text{cal}}} \sum_{i=1}^{N_{\text{cal}}} R(Z_i), \quad \hat{R}_{\text{rag}} = \dfrac{1}{N_{\text{cal}}} \sum_{i=1}^{N_{\text{cal}}} R_{\text{rag}}(Z_i)
    \end{align}

    Note that since the risk $R(\cdot)$ is bounded in $[0,1]$, the variance estimator $\hat{V}=\dfrac{1}{N_{\text{cal}}(N_{\text{cal}}-1)} \sum_{1\le i < j \le N_{\text{cal}}} (R(Z_i)-R(Z_j))^2$ is bounded in $[0,1]$.
    Leveraging the fact and according to \Cref{pro:conf_shf}, the statistical guarantee of conformal risk $\hat{\alpha}_\rho$ and $\hat{\alpha}_\rho^{\text{rag}}$ with confidence $1-\delta$ can be formulated as:
    \begin{equation}
        \begin{aligned}
            & \hat{\alpha}_\rho := \min \left\{ h^{-1}\left(\dfrac{\ln(8/\delta)}{N_{\text{cal}}};\overline{\hat{R}_\rho}\right),\Phi^{-1}_{\text{bin}}\left(\dfrac{\delta}{8e};N_{\text{cal}}, \overline{\hat{R}_\rho} \right) \right\} \\
            \text{where} ~~ & \overline{\hat{R}_\rho} = {\hat{R} + \rho^2(2-\rho^2)( 1 - \hat{R} )} + 2\rho(1-\rho^2)\sqrt{2-\rho^2} 
         + C(\rho,N_{\text{cal}}),
        \end{aligned}
    \end{equation}
    \begin{equation}
        \begin{aligned}
            & \hat{\alpha}_\rho^{\text{rag}} := \min \left\{ h^{-1}\left(\dfrac{\ln(8/\delta)}{N_{\text{cal}}};\overline{\hat{R}_\rho^{\text{rag}}}\right),\Phi^{-1}_{\text{bin}}\left(\dfrac{\delta}{8e};N_{\text{cal}}, \overline{\hat{R}_\rho^{\text{rag}}} \right) \right\} \\
            \text{where} ~~ & \overline{\hat{R}_\rho^{\text{rag}}} = {\hat{R}_{\text{rag}} + \rho^2(2-\rho^2)( 1 - \hat{R}_{\text{rag}} )} + 2\rho(1-\rho^2)\sqrt{2-\rho^2} 
         + C(\rho,N_{\text{cal}}),
        \end{aligned}
    \end{equation}
    where $C(\rho,N_{\text{cal}})={(1-\rho^2)\left(\dfrac{1-\rho^2}{\sqrt{2N_{\text{cal}}}}+\dfrac{2\sqrt{2}\rho\sqrt{2-\rho^2}}{\sqrt{N_{\text{cal}}-1}}\right) \sqrt{\ln(4/\delta)} + \sqrt{\dfrac{\ln{(8/\delta)}}{2N_{\text{cal}}}}}$.

    Noting that $\hat{\alpha}_\rho$ is monotonically increasing in $\overline{\hat{R}_\rho}$ and $\overline{\hat{R}_\rho}$ is monotonically increasing in $\hat{R}$, $\hat{\alpha}_\rho$ is monotonically increasing in $\hat{R}$.
    Then, the following holds by Hoeffding's inequality: 
    \begin{equation}
         \sP\left[ \hat{\alpha}_\rho^{\text{rag}} < \hat{\alpha}_\rho \right] \ge \sP\left[ \hat{R}_{\text{rag}} < \hat{R} \right] \ge 1 - \exp\left\{ -2N_{\text{cal}} \mathbb{E}\left[ R - R_{\text{rag}} \right]^2 \right\}. \label{eq:comp_alp_shft}
    \end{equation}

    Combining \Cref{eq:r_dif_shft} and \Cref{eq:comp_alp_shft} and using the union bound, under the condition that $\underline{N}_{\text{pos}} > {N_{\text{rag}}}/{2} > 1 / d^+$, we have:
    \begin{equation}
        \sP\left[ \hat{\alpha}_{\text{rag}} < \hat{\alpha} \right] \ge 1 - \exp\left\{ -2N_{\text{cal}} \left[ \Phi_M\left( \dfrac{d^+\left(\int_{-1}^1 \Phi_M(v) dv - 1 \right)N_{\text{rag}}}{2} \right) - \Phi_M(0) \right]^2 \right\} - \dfrac{N_{\text{rag}}}{4(\underline{N}_{\text{pos}}-{N_{\text{rag}}}/{2})^2}. \label{eq:final-1_shft}
    \end{equation}

    Combining \Cref{eq:n_pos_lbnd_shft,eq:final-1_shft}, under the condition that $N_{\text{ext}}>\dfrac{2\sqrt{2 \ln10}}{r_{\text{ext}}^m}$, $N_{\text{ext}} V_{\text{rag}}(\rho)^{0.25r_{\text{ext}}^m N_{\text{ext}}}<\dfrac{8}{17}$, and $N_{\text{rag}}>\dfrac{2}{d^+}$, we can finally conclude that:
    \begin{equation}
    \begin{aligned}
         \sP\left[ \hat{\alpha}_{\text{rag}} < \hat{\alpha} \right] \ge& 1 - \exp\left\{ -2N_{\text{cal}} \left[ \Phi_M\left( \dfrac{d^+\left(\int_{-1}^1 \Phi_M(v) dv - 1 \right)N_{\text{rag}}}{2} \right) - \Phi_M(0) \right]^2 \right\} \\ &- \dfrac{100}{N_{\text{rag}}} \left( 8 - 17  N_{\text{ext}}  V_{\text{rag}}(\rho)^{0.25 \left(\min_c r^{(c)}_{\text{ext}}N_{\text{ext}} \right)} \right)^{-2}.
    \end{aligned}
\end{equation}
    
\end{proof}

\section{Additional evaluation results}

\begin{algorithm}[t]
    \caption{Test distribution sampling protocol.}
    \label{alg:simulation_no_shft}
    \begin{algorithmic}[1]
        \STATE {\bfseries Input}: original test set $\gD$, test sample pool $\gD_{\text{pool}}$, Risk function $R(\cdot,\cdot): \gX \times \gY \mapsto \sR$
        \STATE {\bfseries Output}: empirical risk on the sampled test set $\gQ$ $\hat{R}_\gQ$
        \STATE Randomly sample $\gQ$ from $\gD_{\text{pool}}$ with equalized set size as $\gD$: $|\gQ|=|\gD|$
        \STATE Evaluate empirical risks for all samples in $\gQ$: $\hat{{R}}_\gQ = \dfrac{1}{|\gQ|} \sum_{(x,y)\in \gQ} R(x,y)$
        \STATE \textbf{Return} $\hat{{R}}_\gQ$
    \end{algorithmic}
\end{algorithm}

\begin{figure*}[t]
\subfigure{
    \rotatebox{90}{\hspace{-3.5em} Evaluation Risk}
    \begin{minipage}{0.24\linewidth}
    \centerline{\footnotesize{\quad AESLC}}
 	\vspace{1pt}
\centerline{\includegraphics[width=1.0\textwidth]{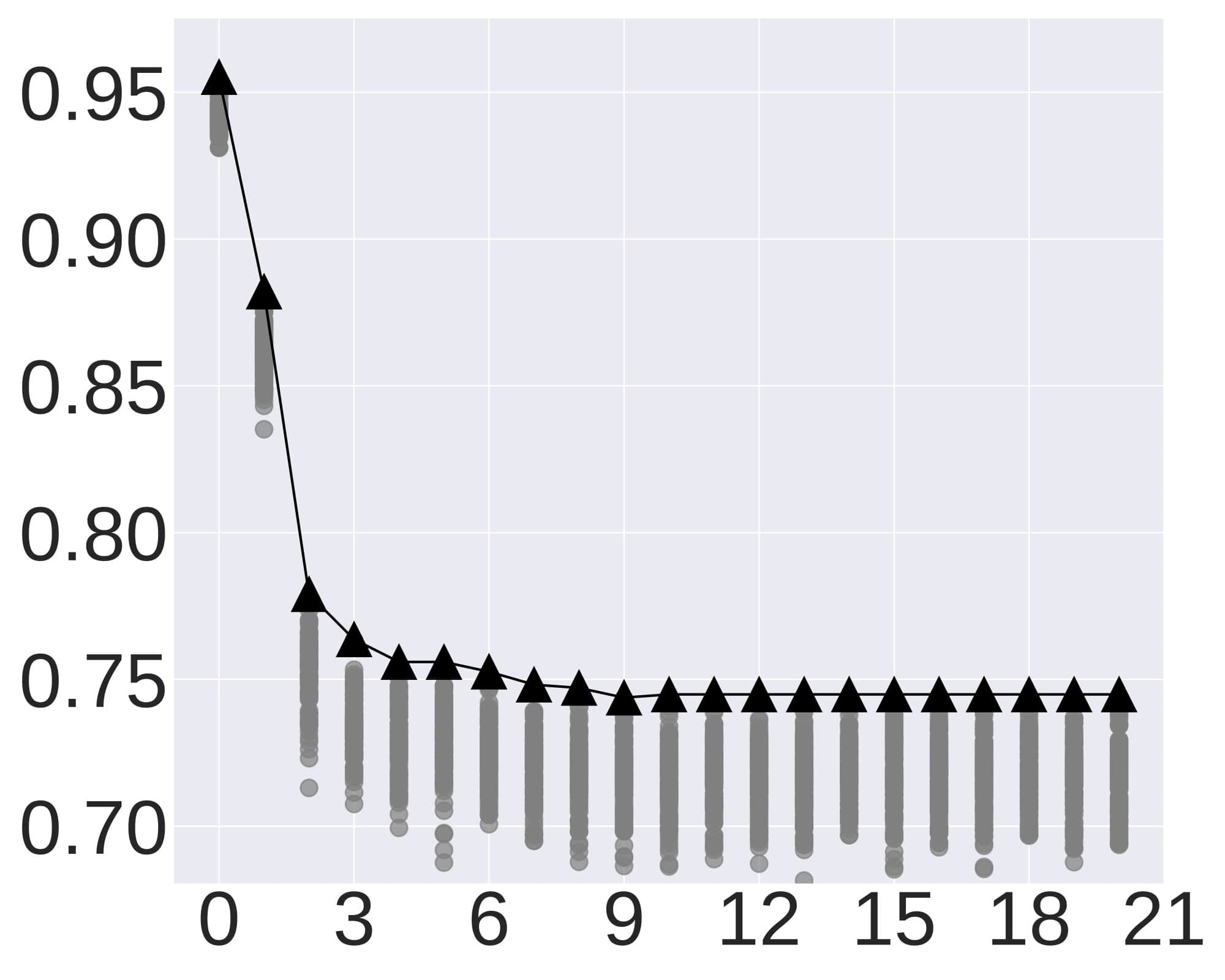}}
  \vspace{-0.5em}
 	\centerline{\footnotesize{~~~\# Retrieved examples $N_{\text{rag}}$}}
 \vspace{-0.5em}
 \end{minipage}
 \begin{minipage}{0.24\linewidth}
    \centerline{\footnotesize{\quad CommonGen}}
 	\vspace{1pt}
 	\centerline{\includegraphics[width=1.0\textwidth]{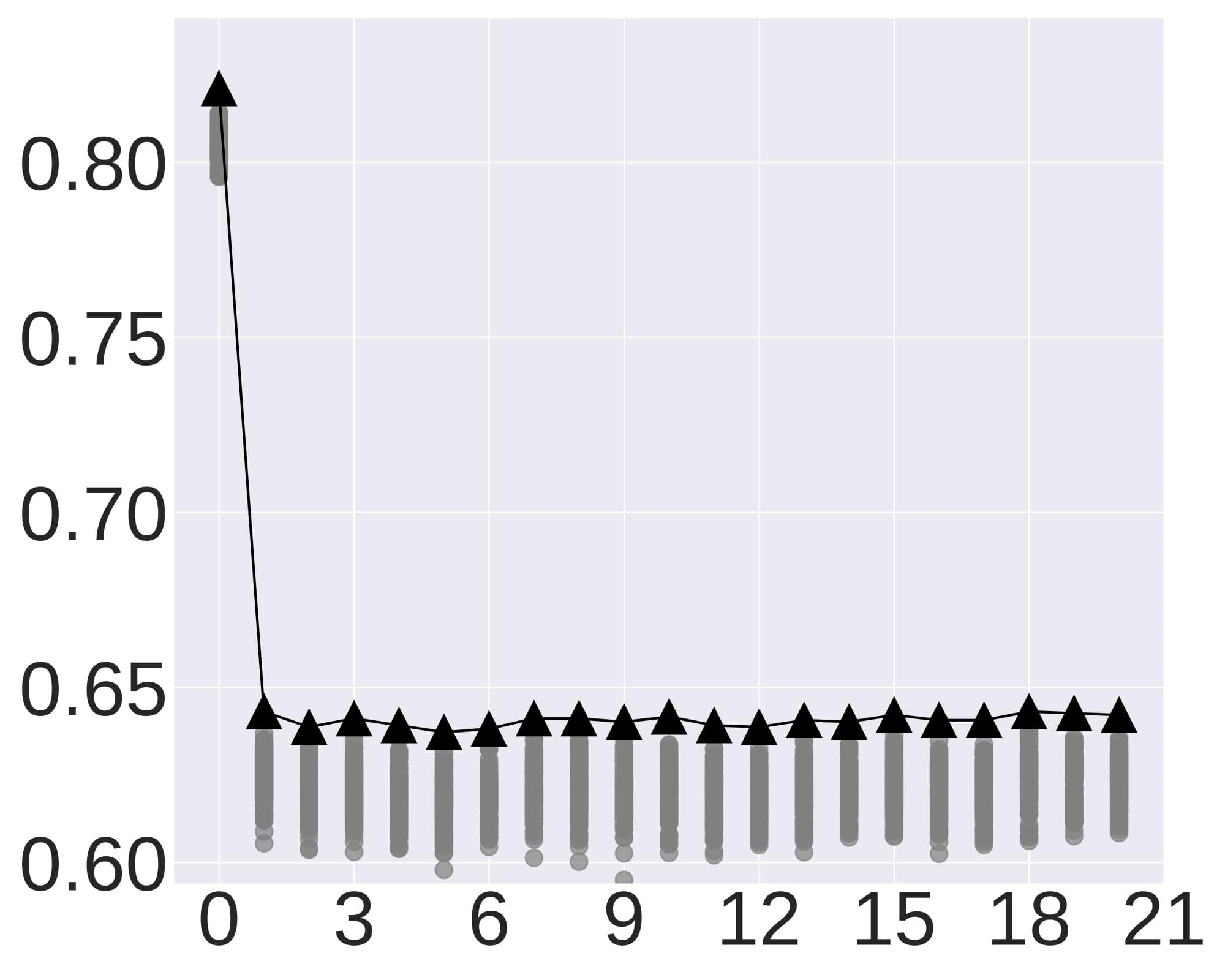}}
  \vspace{-0.5em}
 	\centerline{\footnotesize{~~~\# Retrieved examples $N_{\text{rag}}$}}
 \vspace{-0.5em}
 \end{minipage}
 \begin{minipage}{0.24\linewidth}
    \centerline{\footnotesize{\quad DART}}
 	\vspace{1pt}
 	\centerline{\includegraphics[width=1.0\textwidth]{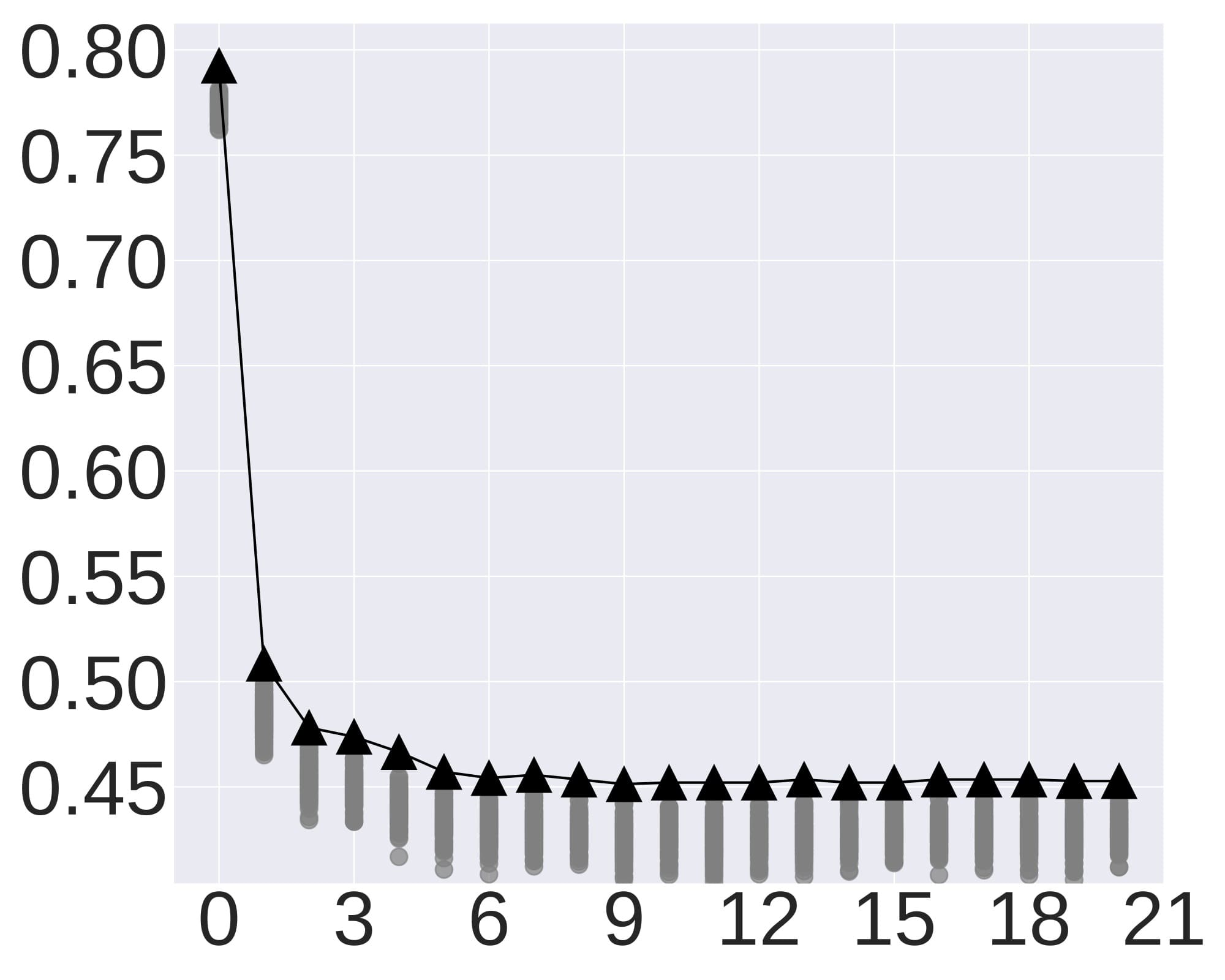}}
  \vspace{-0.5em}
 	\centerline{\footnotesize{~~~\# Retrieved examples $N_{\text{rag}}$}}
 \vspace{-0.5em}
 \end{minipage}
 \begin{minipage}{0.24\linewidth}
    \centerline{\footnotesize{\quad E2E}}
 	\vspace{1pt}
 	\centerline{\includegraphics[width=1.0\textwidth]{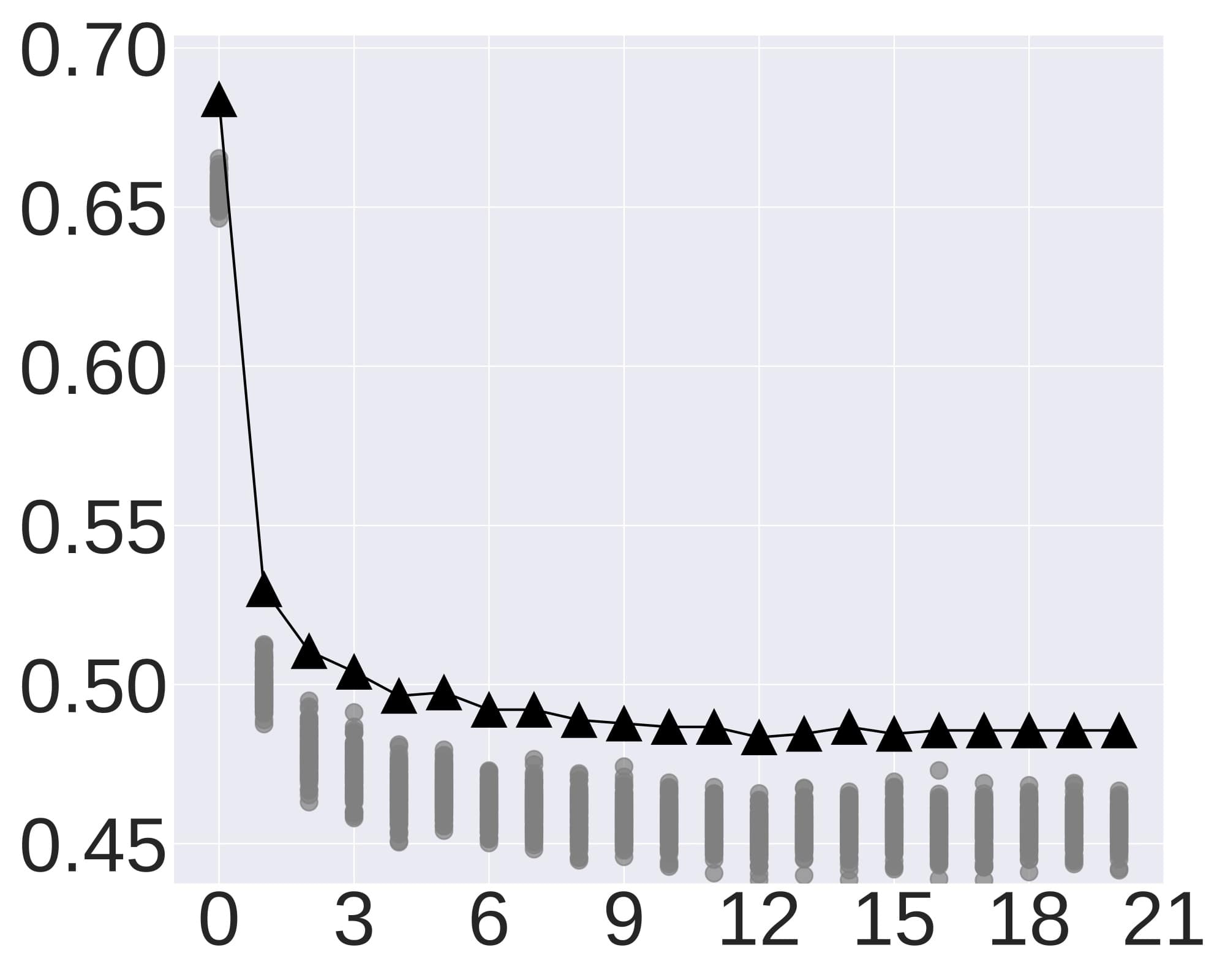}}
  \vspace{-0.5em}
 	\centerline{\footnotesize{~~~\# Retrieved examples $N_{\text{rag}}$}}
 \vspace{-0.5em}
 \end{minipage}
}
\subfigure{
\centerline{\includegraphics[width=0.4\textwidth]{figures/legend.pdf}}}
\caption{Conformal generation risk $\hat{\alpha}_{\text{rag}}$ and simulations of empirical risks with Biencoder-SFT for different $N_{\text{rag}}$ and fixed $\lambda_g=1,\lambda_s=1.0$.}
\label{fig:bound_and_simulation_llmr}
\end{figure*}

\begin{figure*}[t]
\subfigure{
    \rotatebox{90}{\hspace{-3.5em} Evaluation Risk}
    \begin{minipage}{0.24\linewidth}
    \centerline{\footnotesize{\quad AESLC}}
 	\vspace{1pt}
\centerline{\includegraphics[width=1.0\textwidth]{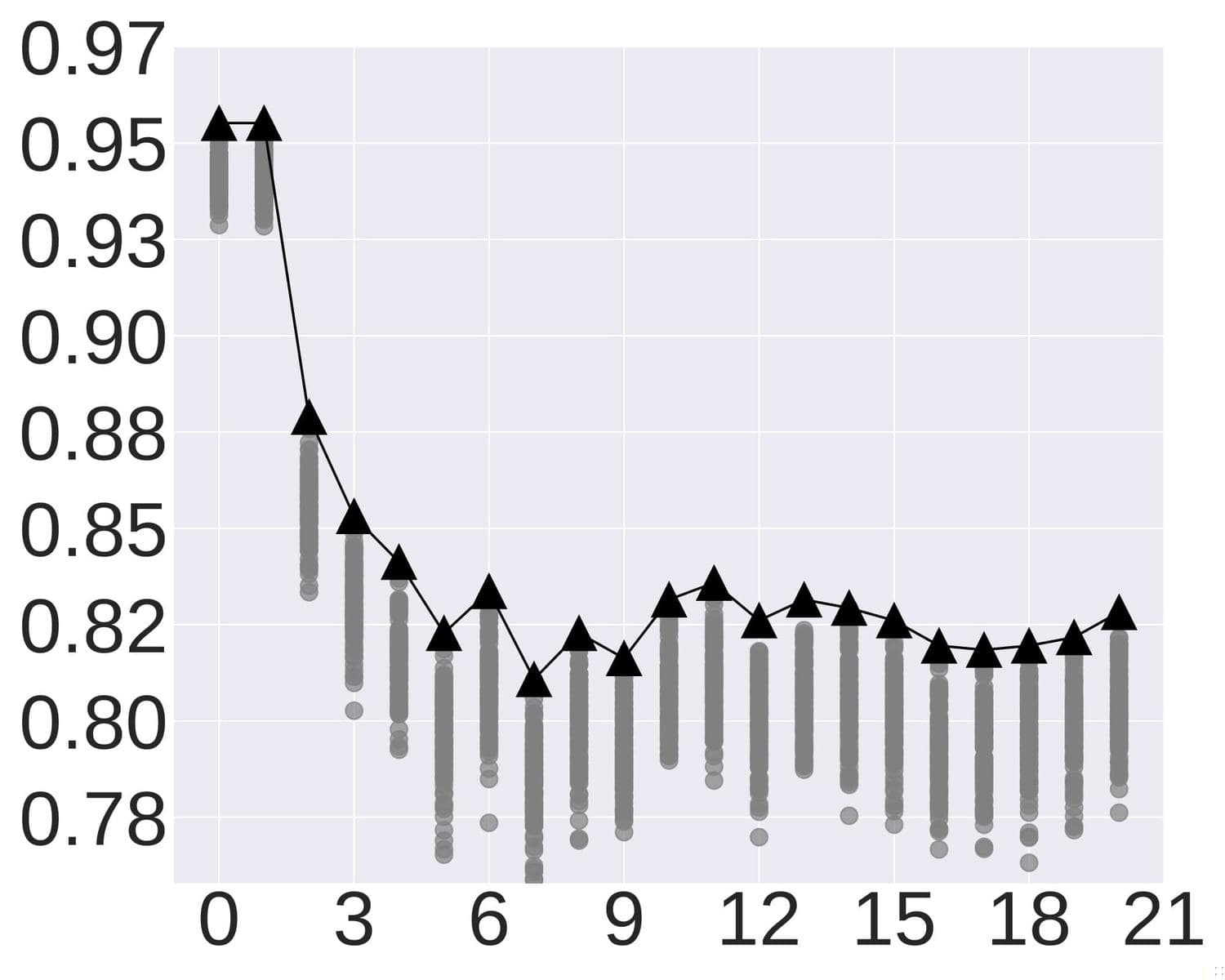}}
  \vspace{-0.5em}
 	\centerline{\footnotesize{~~~\# Retrieved examples $N_{\text{rag}}$}}
 \vspace{-0.5em}
 \end{minipage}
 \begin{minipage}{0.24\linewidth}
    \centerline{\footnotesize{\quad CommonGen}}
 	\vspace{1pt}
 	\centerline{\includegraphics[width=1.0\textwidth]{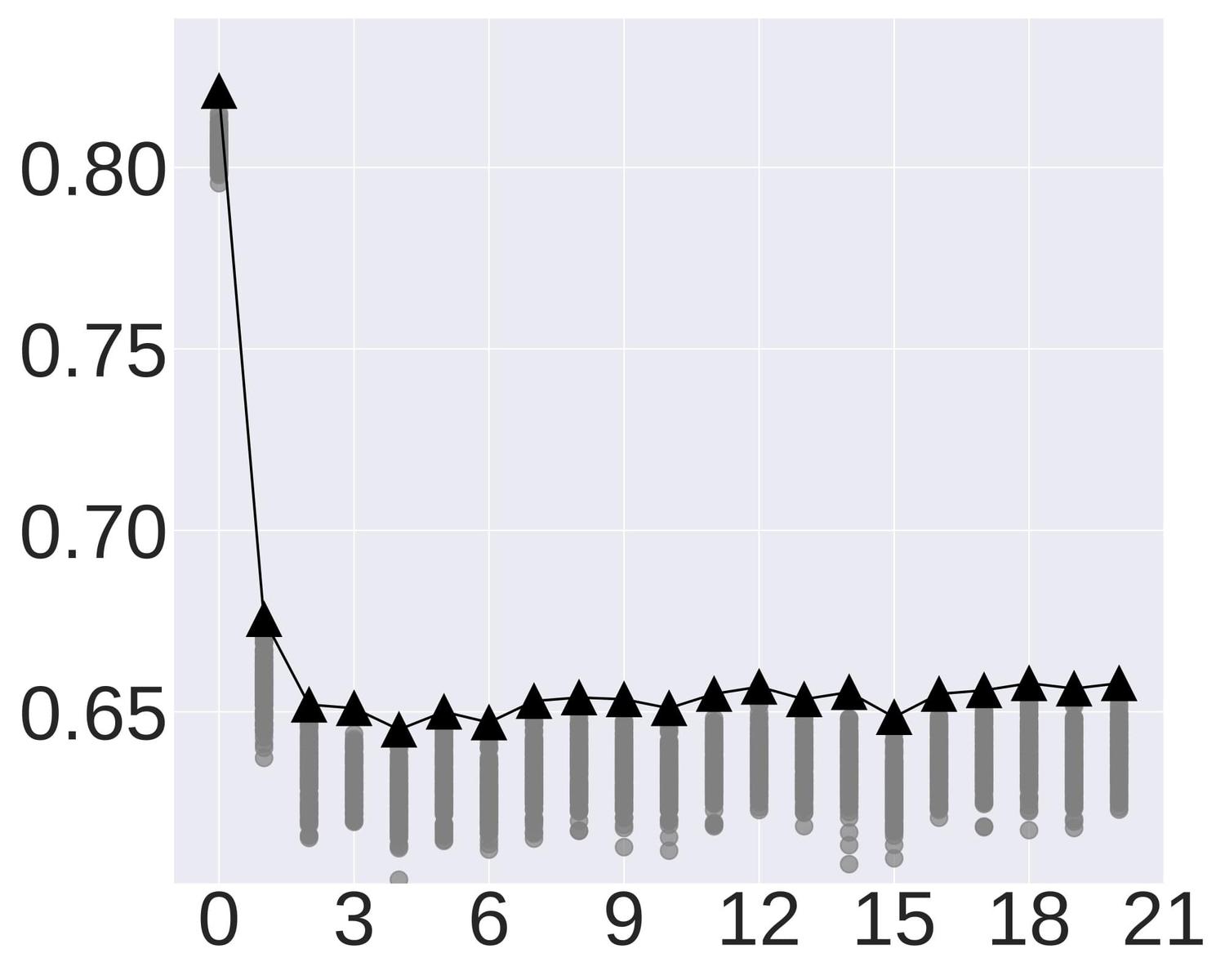}}
  \vspace{-0.5em}
 	\centerline{\footnotesize{~~~\# Retrieved examples $N_{\text{rag}}$}}
 \vspace{-0.5em}
 \end{minipage}
 \begin{minipage}{0.24\linewidth}
    \centerline{\footnotesize{\quad DART}}
 	\vspace{1pt}
 	\centerline{\includegraphics[width=1.0\textwidth]{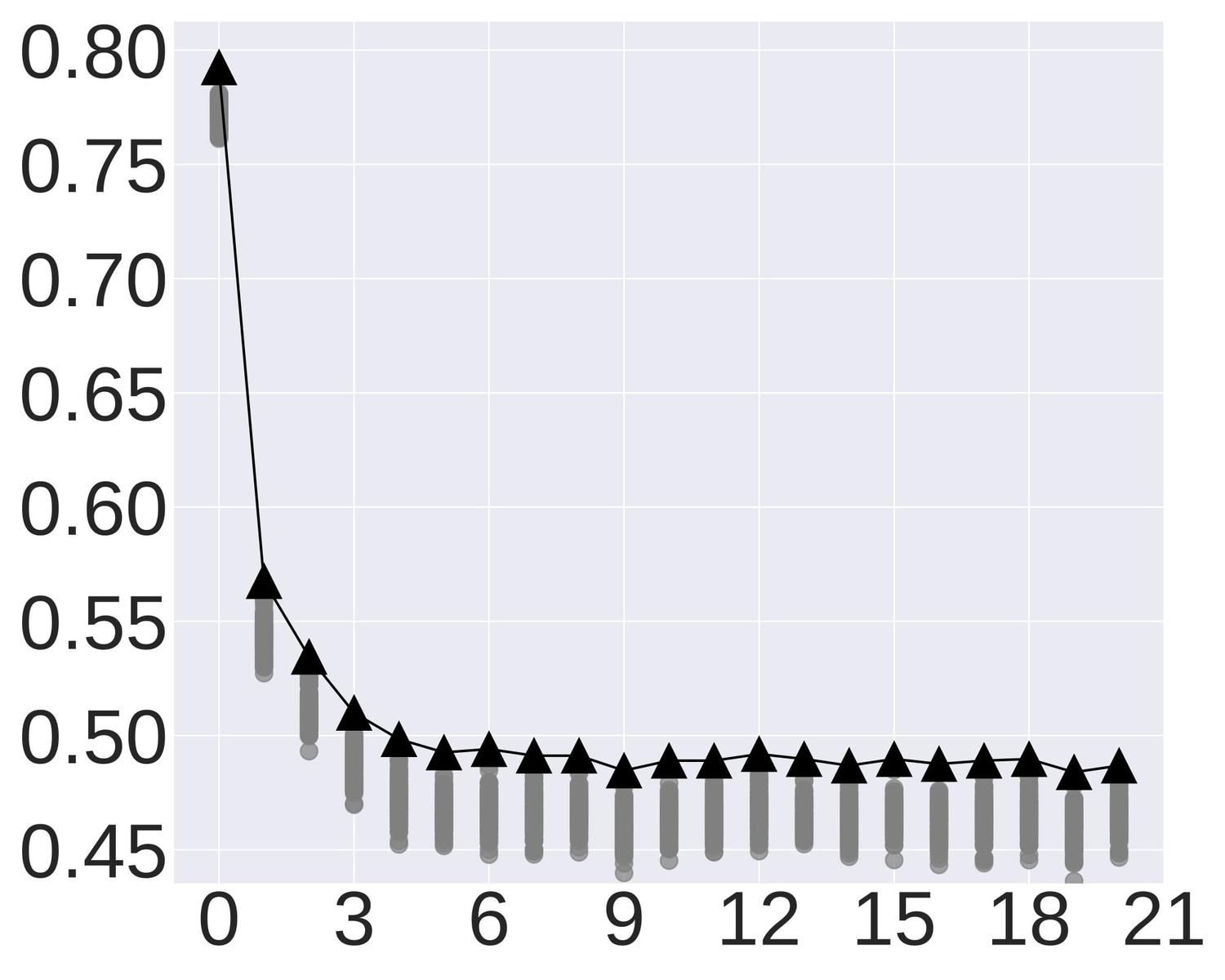}}
  \vspace{-0.5em}
 	\centerline{\footnotesize{~~~\# Retrieved examples $N_{\text{rag}}$}}
 \vspace{-0.5em}
 \end{minipage}
 \begin{minipage}{0.24\linewidth}
    \centerline{\footnotesize{\quad E2E}}
 	\vspace{1pt}
 	\centerline{\includegraphics[width=1.0\textwidth]{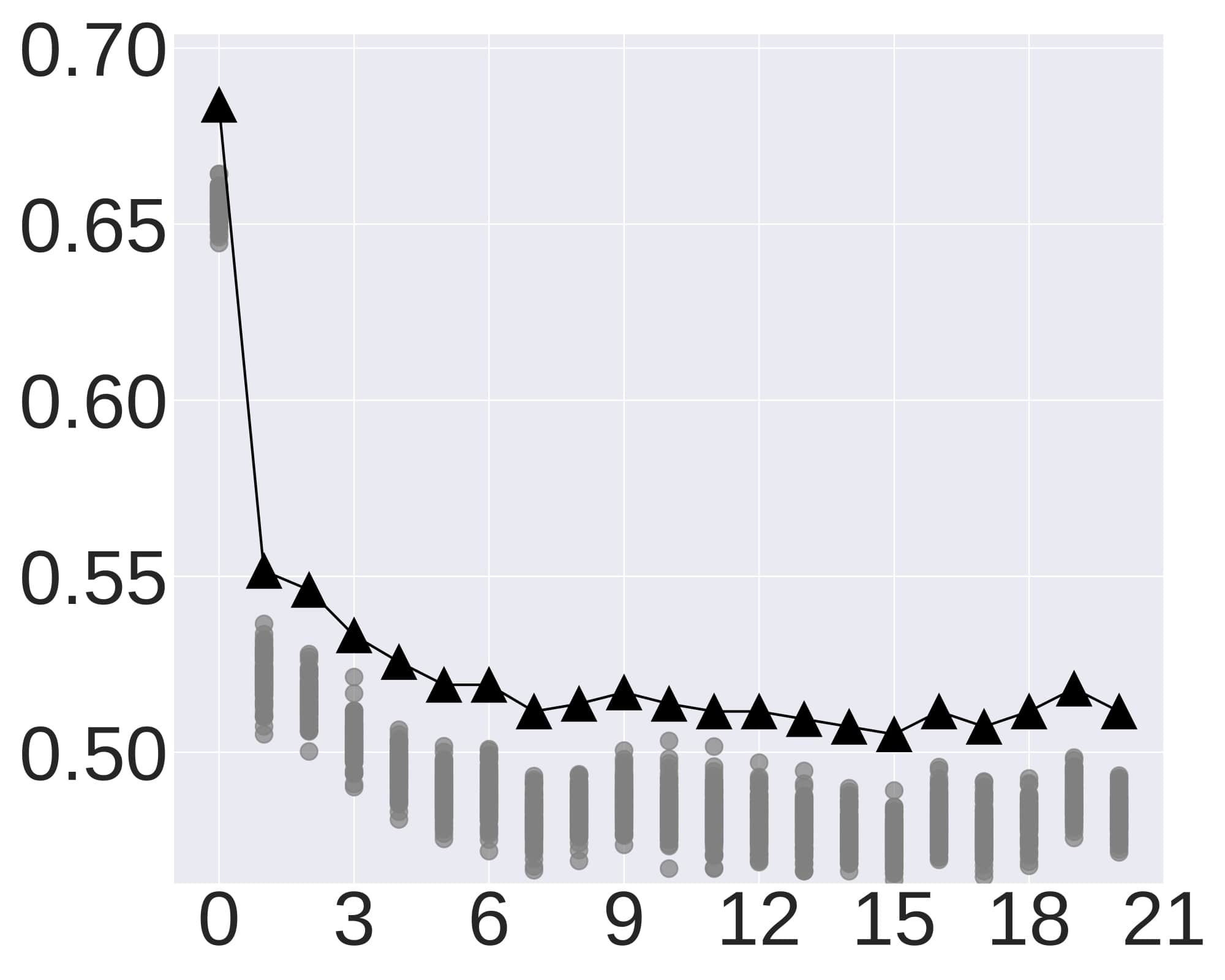}}
  \vspace{-0.5em}
 	\centerline{\footnotesize{~~~\# Retrieved examples $N_{\text{rag}}$}}
 \vspace{-0.5em}
 \end{minipage}
}
\subfigure{
\centerline{\includegraphics[width=0.4\textwidth]{figures/legend.pdf}}}
\caption{ Conformal generation risk $\hat{\alpha}_{\text{rag}}$ and simulations of empirical risks with BM25 for different $N_{\text{rag}}$ and fixed $\lambda_g=1,\lambda_s=1.0$.}
\label{fig:bound_and_simulation_bm25}
\end{figure*}

\begin{figure*}[t]
\subfigure{
    \rotatebox{90}{\hspace{-3.5em} Evaluation Risk}
    \begin{minipage}{0.24\linewidth}
    \centerline{\footnotesize{\quad AESLC}}
 	\vspace{1pt}
\centerline{\includegraphics[width=1.0\textwidth]{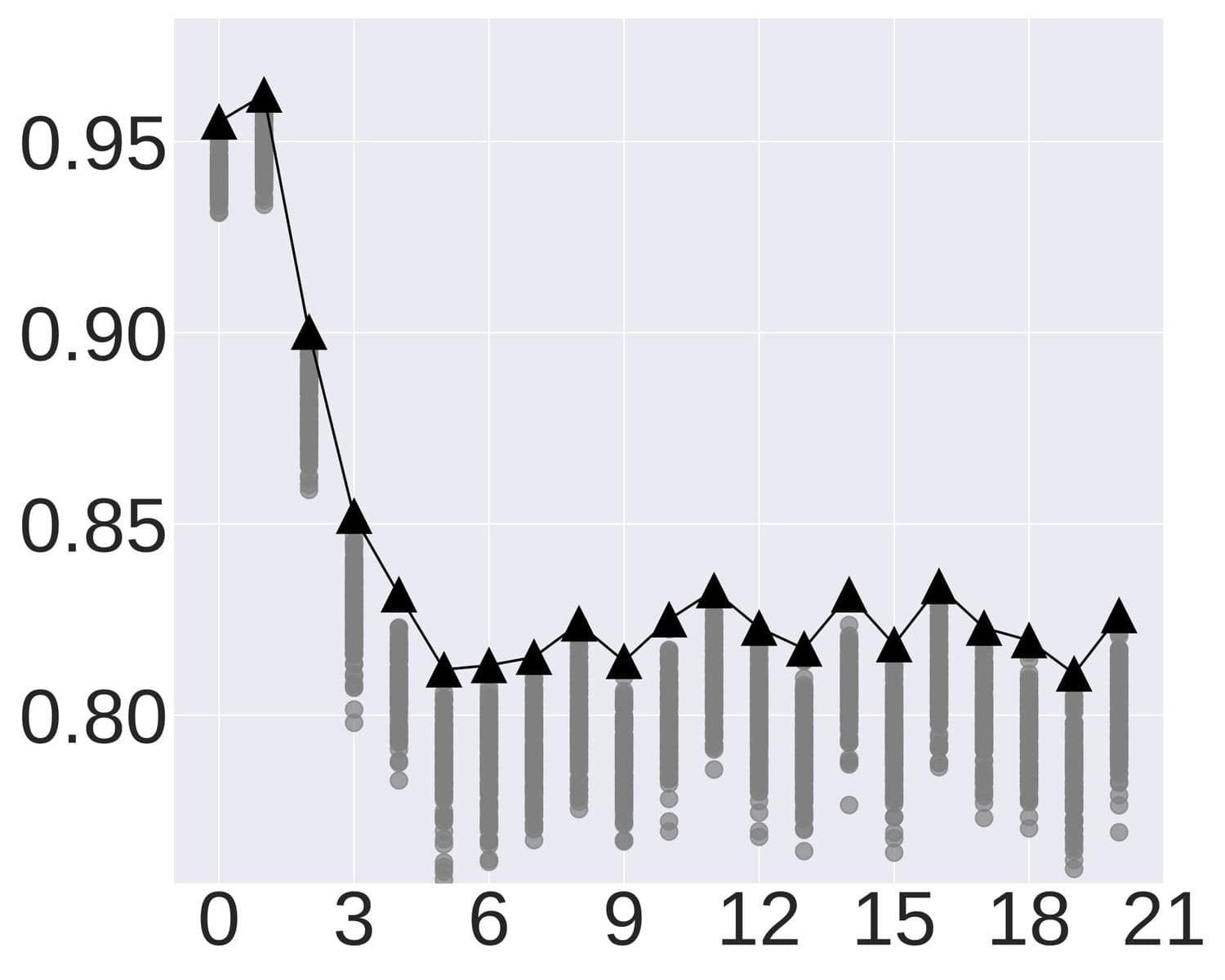}}
  \vspace{-0.5em}
 	\centerline{\footnotesize{~~~\# Retrieved examples $N_{\text{rag}}$}}
 \vspace{-0.5em}
 \end{minipage}
 \begin{minipage}{0.24\linewidth}
    \centerline{\footnotesize{\quad CommonGen}}
 	\vspace{1pt}
 	\centerline{\includegraphics[width=1.0\textwidth]{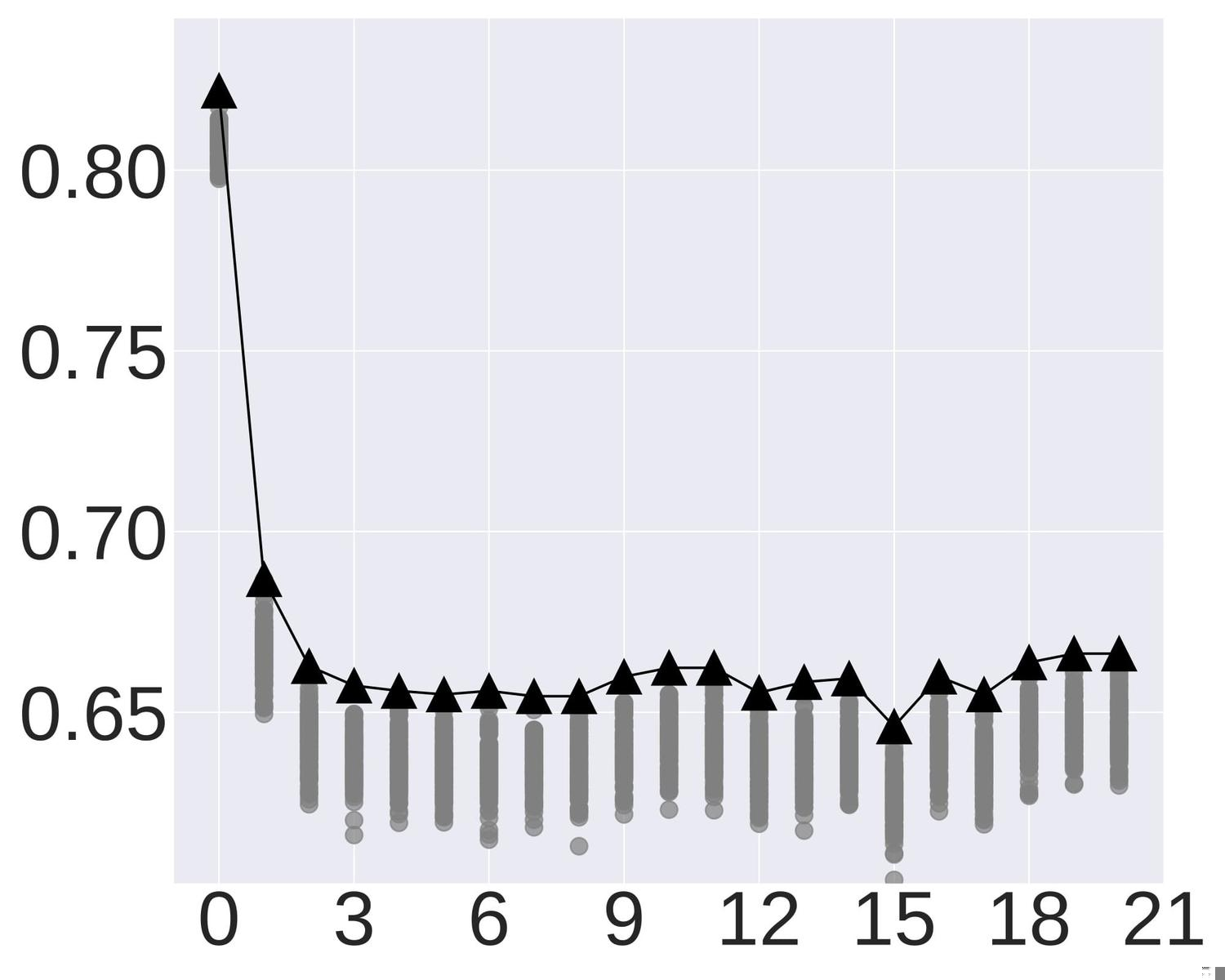}}
  \vspace{-0.5em}
 	\centerline{\footnotesize{~~~\# Retrieved examples $N_{\text{rag}}$}}
 \vspace{-0.5em}
 \end{minipage}
 \begin{minipage}{0.24\linewidth}
    \centerline{\footnotesize{\quad DART}}
 	\vspace{1pt}
 	\centerline{\includegraphics[width=1.0\textwidth]{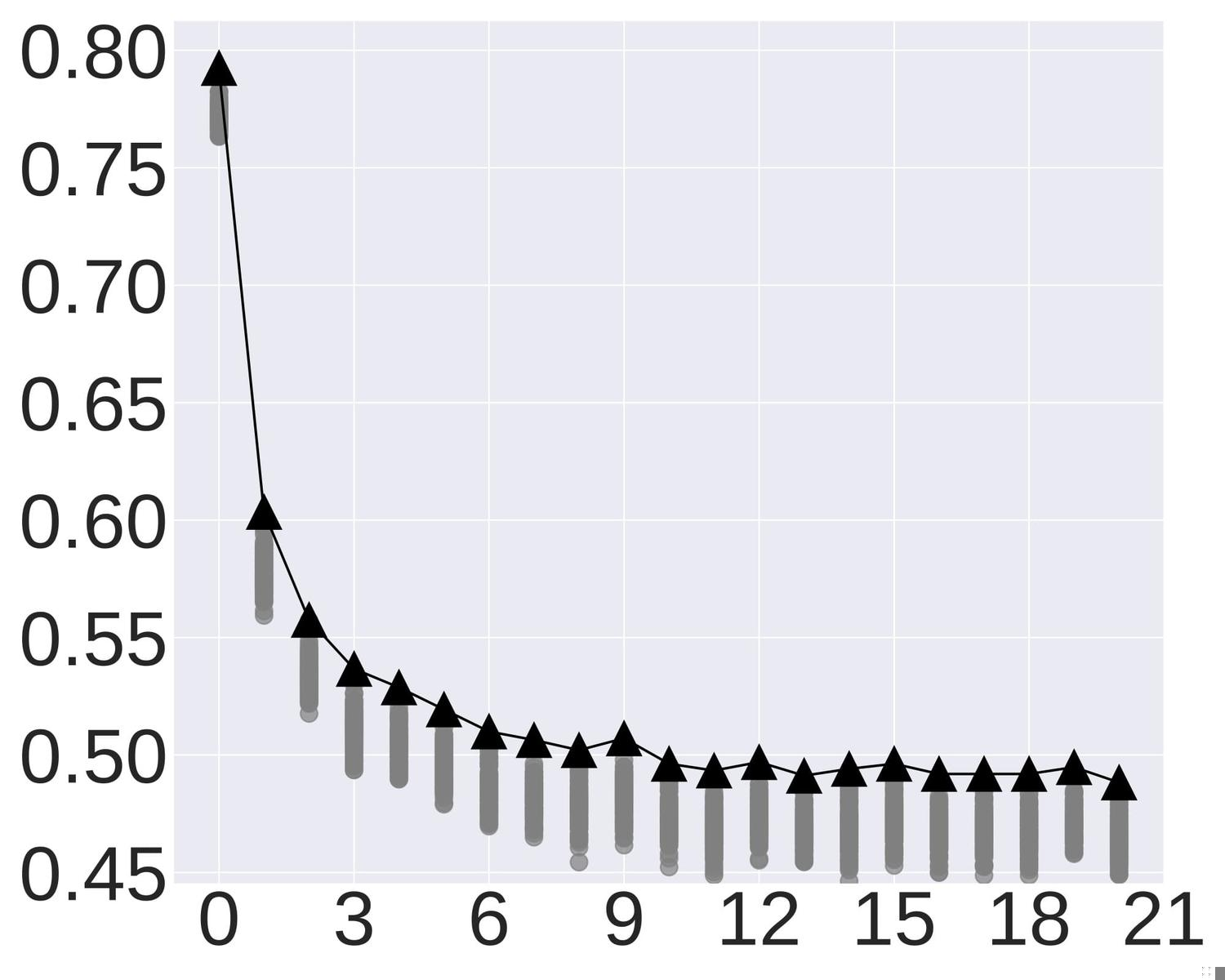}}
  \vspace{-0.5em}
 	\centerline{\footnotesize{~~~\# Retrieved examples $N_{\text{rag}}$}}
 \vspace{-0.5em}
 \end{minipage}
 \begin{minipage}{0.24\linewidth}
    \centerline{\footnotesize{\quad E2E}}
 	\vspace{1pt}
 	\centerline{\includegraphics[width=1.0\textwidth]{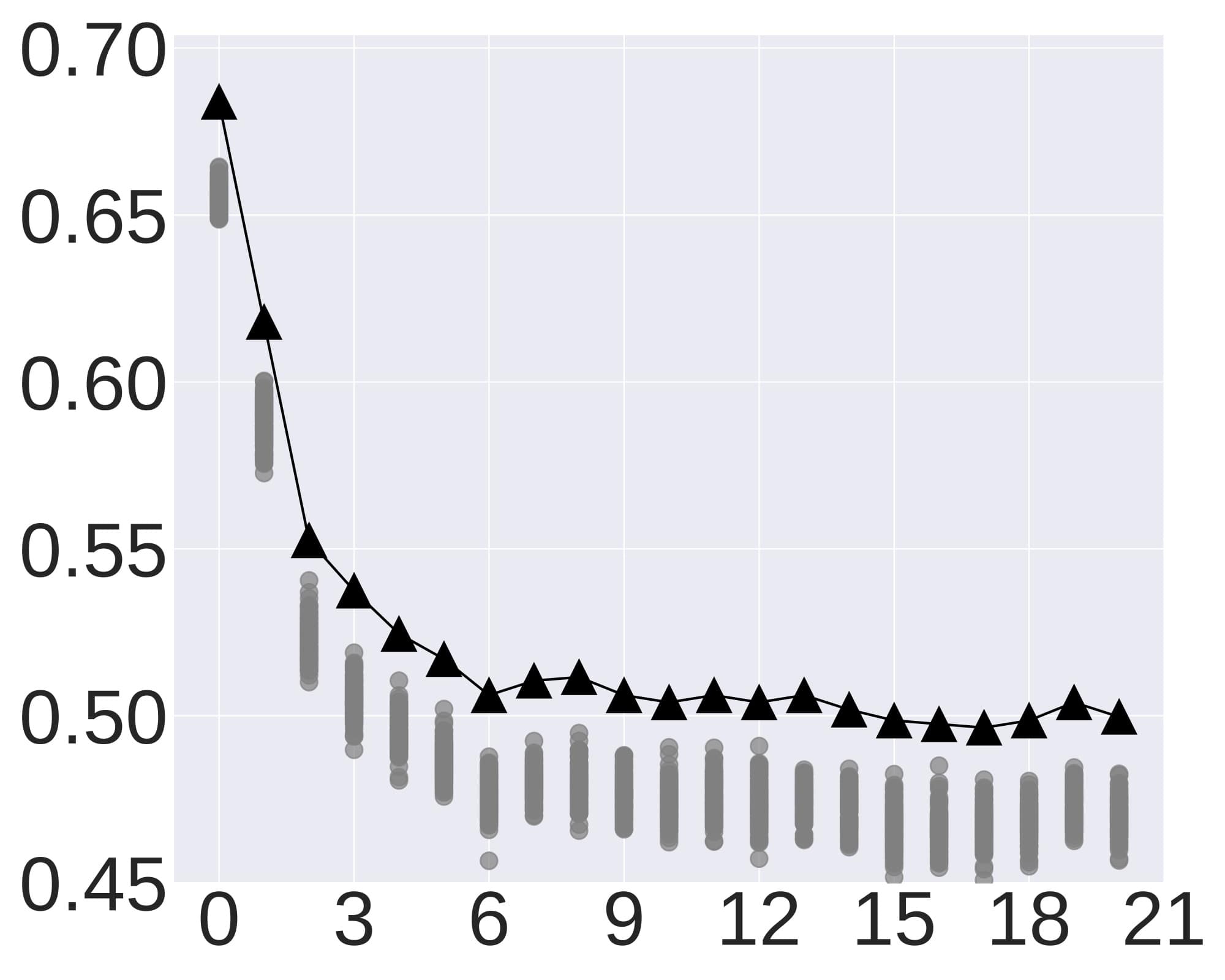}}
  \vspace{-0.5em}
 	\centerline{\footnotesize{~~~\# Retrieved examples $N_{\text{rag}}$}}
 \vspace{-0.5em}
 \end{minipage}
}
\vspace{-0.9em}
\subfigure{
\centerline{\includegraphics[width=0.4\textwidth]{figures/legend.pdf}}}
\vspace{-1.0em}
\caption{Conformal generation risk $\hat{\alpha}_{\text{rag}}$ and simulations of empirical risks with BAAI/bge for different $N_{\text{rag}}$ and fixed $\lambda_g=1,\lambda_s=1.0$.}
\label{fig:bound_and_simulation_baai}
\end{figure*}

\begin{algorithm}[t]
    \caption{Shifted distribution sampling protocol.}
    \label{alg:simulation}
    \begin{algorithmic}[1]
        \STATE {\bfseries Input}: original test set $\gD$, test sample pool $\gD_{\text{pool}}$, Risk function $R(\cdot,\cdot): \gX \times \gY \mapsto \sR$
        \STATE {\bfseries Output}: empirical risk on the shifted test set $\gQ$ $\hat{R}_\gQ$, Hellinger distance between $\gD$ and $\gQ$ $H_{\gP\gQ}$
        \STATE Randomly sample $\gQ$ from $\gD_{\text{pool}}$ with equalized set size as $\gD$: $|\gQ|=|\gD|$
        \STATE Randomly sample the sample weight vector of $\gQ$ $\vw \in \Delta^{|\gD|}$
        \STATE Compute Hellinger distance as $H_{\gP\gQ}=\sqrt{1-\sum_{i=1}^{|\gD|}\sqrt{{\vw_i}/{|\gD|}}}$
        \STATE Evaluate risks for all samples in $\gQ$ with risk function $R(\cdot,\cdot)$: $\hat{\bm{R}}_\gQ \in \sR^{|\gD|}$
        \STATE Compute the empirical risk on $\gQ$ with weight vector $\vw$: $\hat{R}_\gQ = \vw^T \hat{\bm{R}}_\gQ$
        \STATE \textbf{Return} $\hat{R}_\gQ$, $H_{\gP\gQ}$
    \end{algorithmic}
\end{algorithm}

\begin{figure*}[t]
\subfigure{
    \rotatebox{90}{\hspace{-3.5em} Evaluation Risk}
    \begin{minipage}{0.24\linewidth}
    \centerline{\footnotesize{\quad AESLC}}
 	\vspace{1pt}
\centerline{\includegraphics[width=1.0\textwidth]{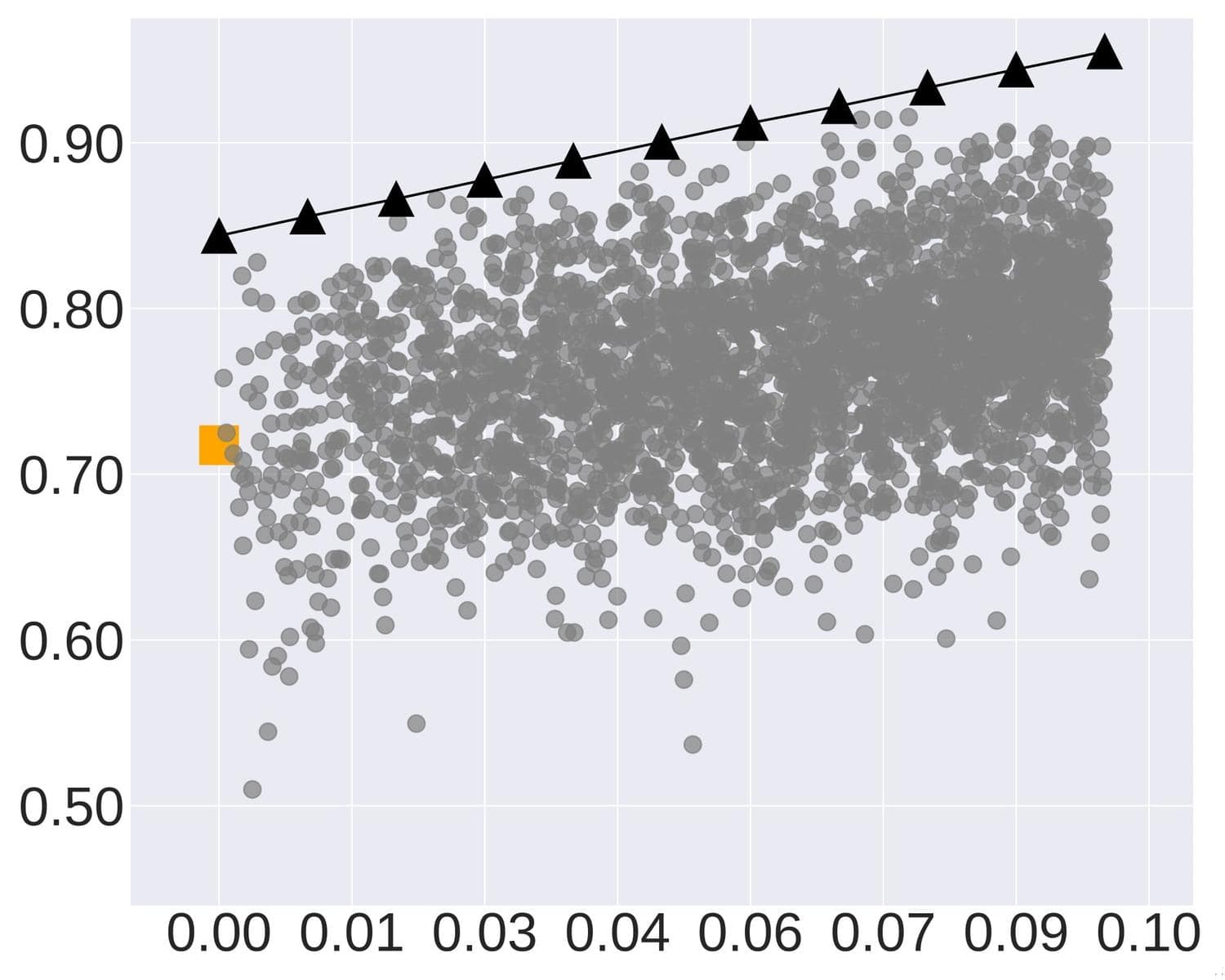}}
  \vspace{-0.5em}
 	\centerline{\footnotesize{~~~Hellinger Distance}}
 \vspace{-0.5em}
 \end{minipage}
 \begin{minipage}{0.24\linewidth}
    \centerline{\footnotesize{\quad CommonGen}}
 	\vspace{1pt}
 	\centerline{\includegraphics[width=1.0\textwidth]{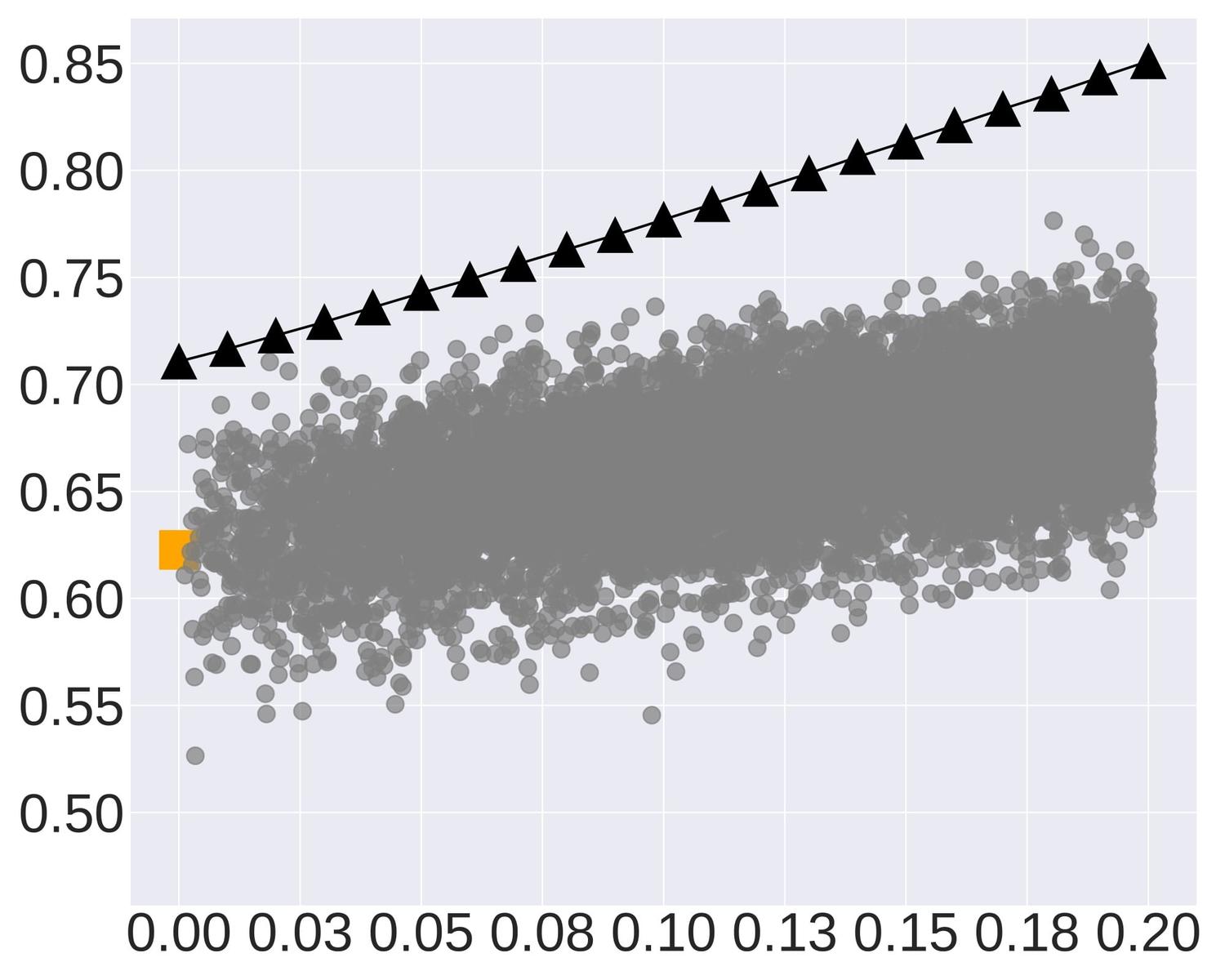}}
  \vspace{-0.5em}
 	\centerline{\footnotesize{~~~Hellinger Distance}}
 \vspace{-0.5em}
 \end{minipage}
 \begin{minipage}{0.24\linewidth}
    \centerline{\footnotesize{\quad DART}}
 	\vspace{1pt}
 	\centerline{\includegraphics[width=1.0\textwidth]{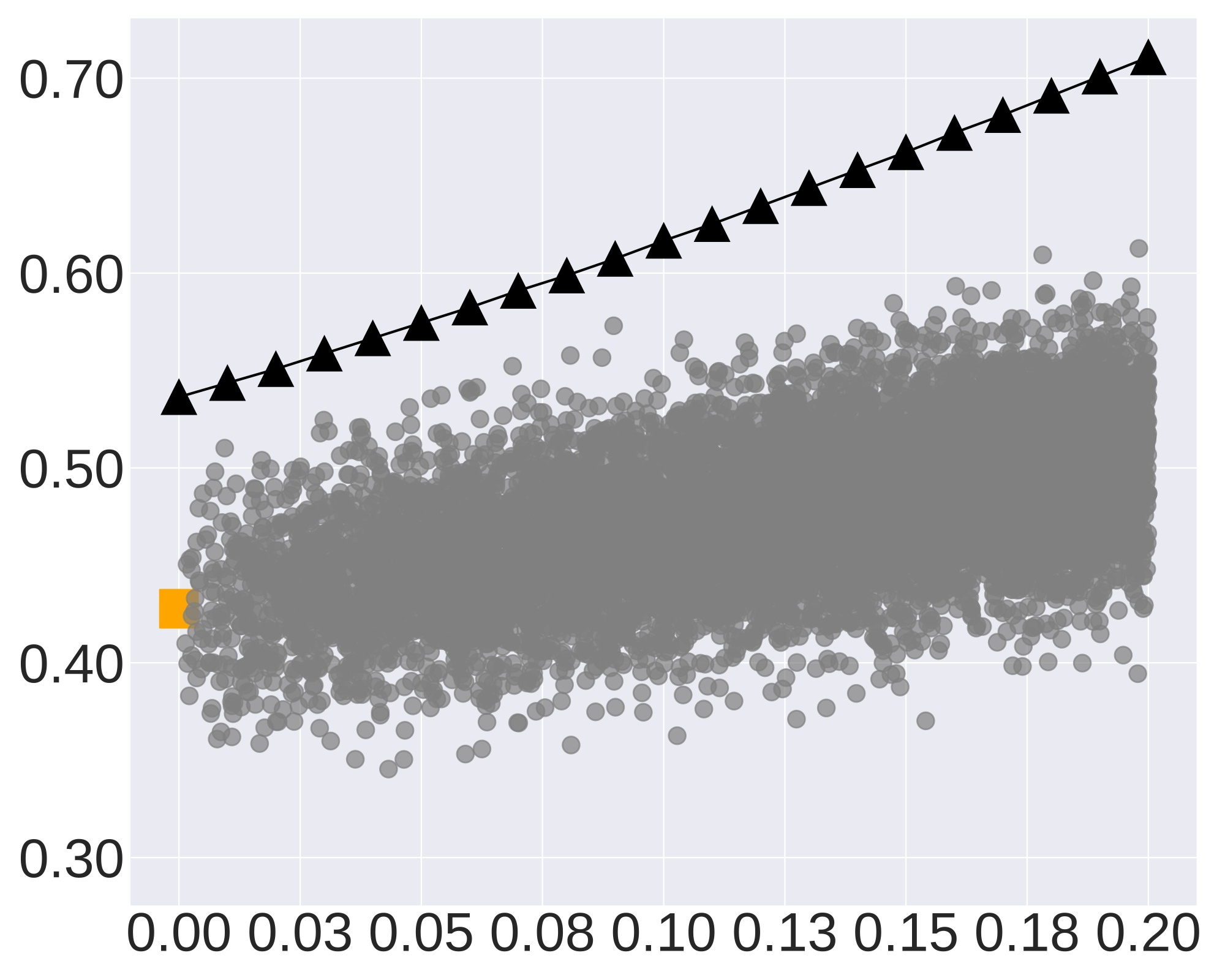}}
  \vspace{-0.5em}
 	\centerline{\footnotesize{~~~Hellinger Distance}}
 \vspace{-0.5em}
 \end{minipage}
 \begin{minipage}{0.24\linewidth}
    \centerline{\footnotesize{\quad E2E}}
 	\vspace{1pt}
 	\centerline{\includegraphics[width=1.0\textwidth]{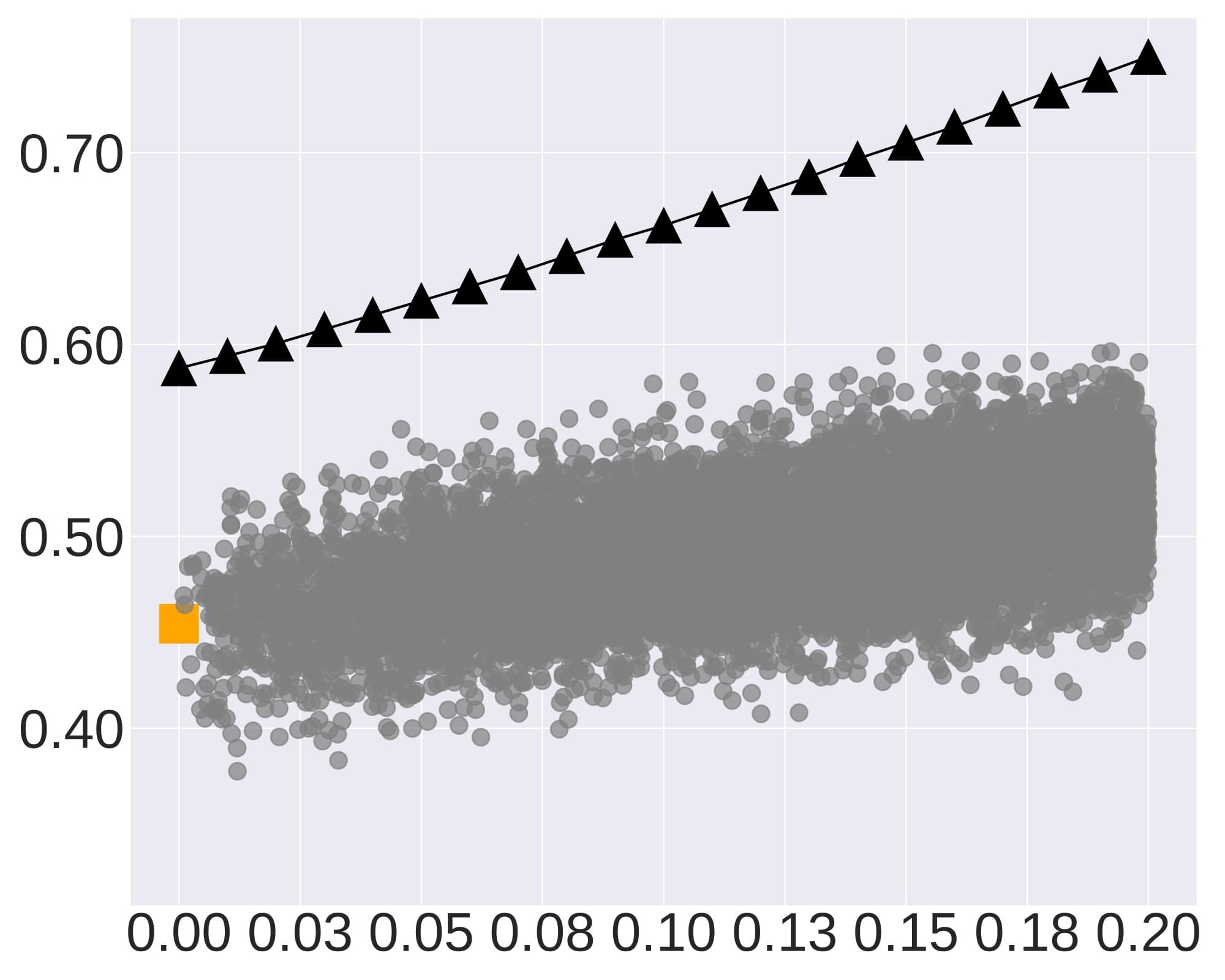}}
  \vspace{-0.5em}
 	\centerline{\footnotesize{~~~Hellinger Distance}}
 \vspace{-0.5em}
 \end{minipage}
}
\subfigure{
\centerline{\includegraphics[width=0.75\textwidth]{figures/legend3.pdf}}}
\caption{Conformal generation risk $\hat{\alpha}_{\text{rag}}$ and empirical risks with Biencoder-SFT under distribution shifts with $N_{\text{rag}}=15, \lambda_g=1, \lambda_s=1.0$.}
\label{fig:bound_and_simulation_llmr_dist_shft}
\end{figure*}

\begin{figure*}[t]
\subfigure{
    \rotatebox{90}{\hspace{-3.5em} Evaluation Risk}
    \begin{minipage}{0.24\linewidth}
    \centerline{\footnotesize{\quad AESLC}}
 	\vspace{1pt}
\centerline{\includegraphics[width=1.0\textwidth]{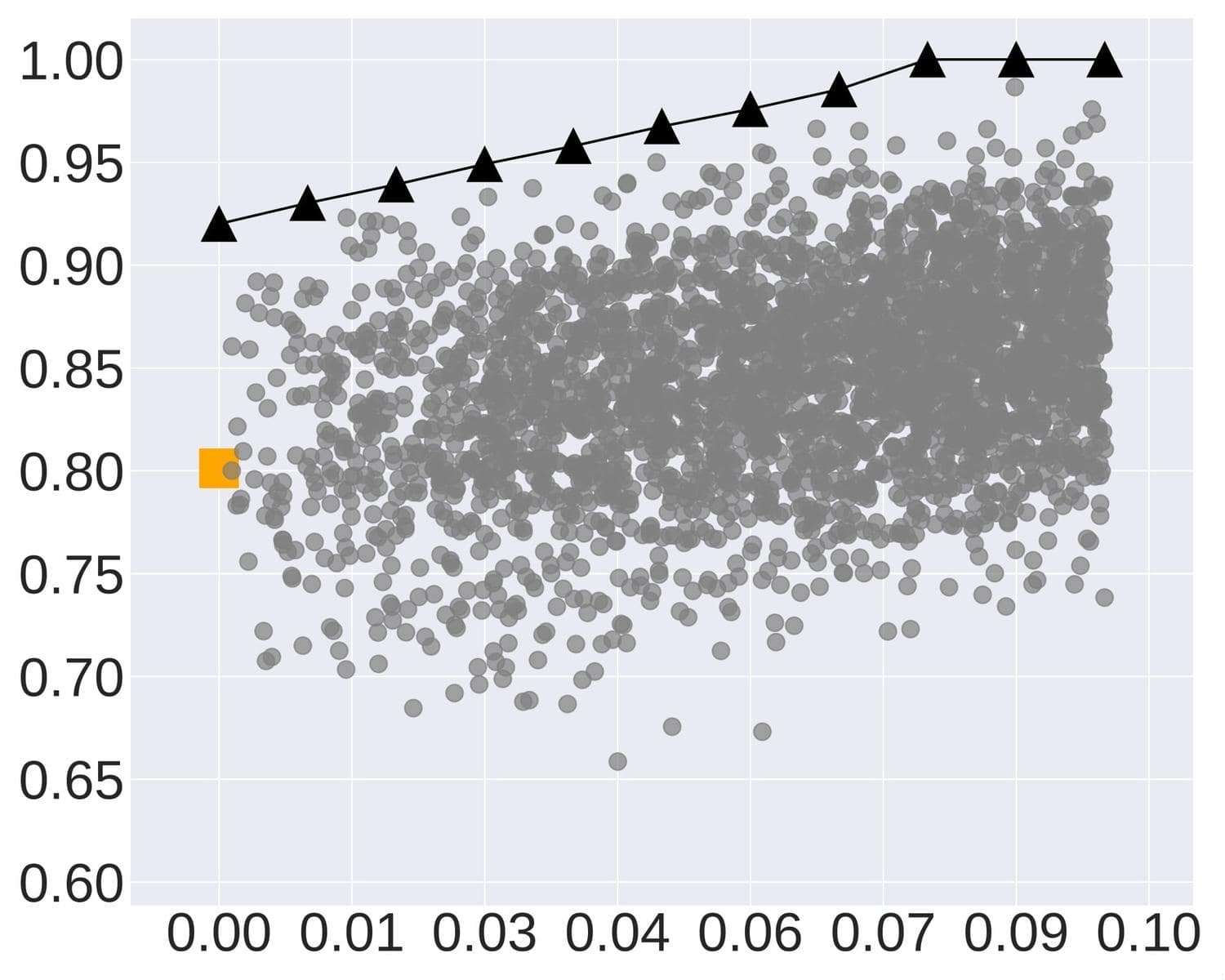}}
  \vspace{-0.5em}
 	\centerline{\footnotesize{~~~Hellinger Distance $\rho$}}
 \vspace{-0.5em}
 \end{minipage}
 \begin{minipage}{0.24\linewidth}
    \centerline{\footnotesize{\quad CommonGen}}
 	\vspace{1pt}
 	\centerline{\includegraphics[width=1.0\textwidth]{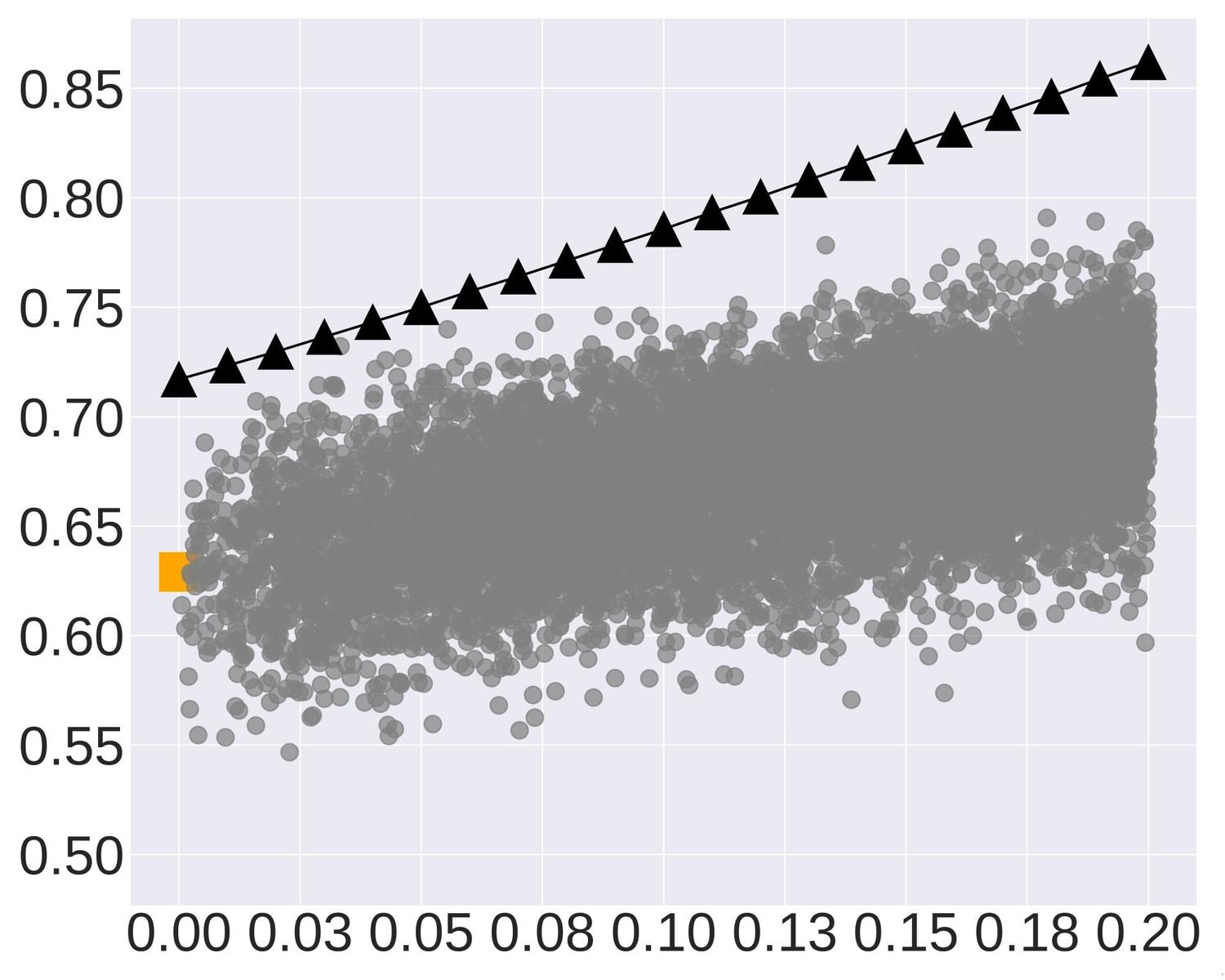}}
  \vspace{-0.5em}
 	\centerline{\footnotesize{~~~Hellinger Distance $\rho$}}
 \vspace{-0.5em}
 \end{minipage}
 \begin{minipage}{0.24\linewidth}
    \centerline{\footnotesize{\quad DART}}
 	\vspace{1pt}
 	\centerline{\includegraphics[width=1.0\textwidth]{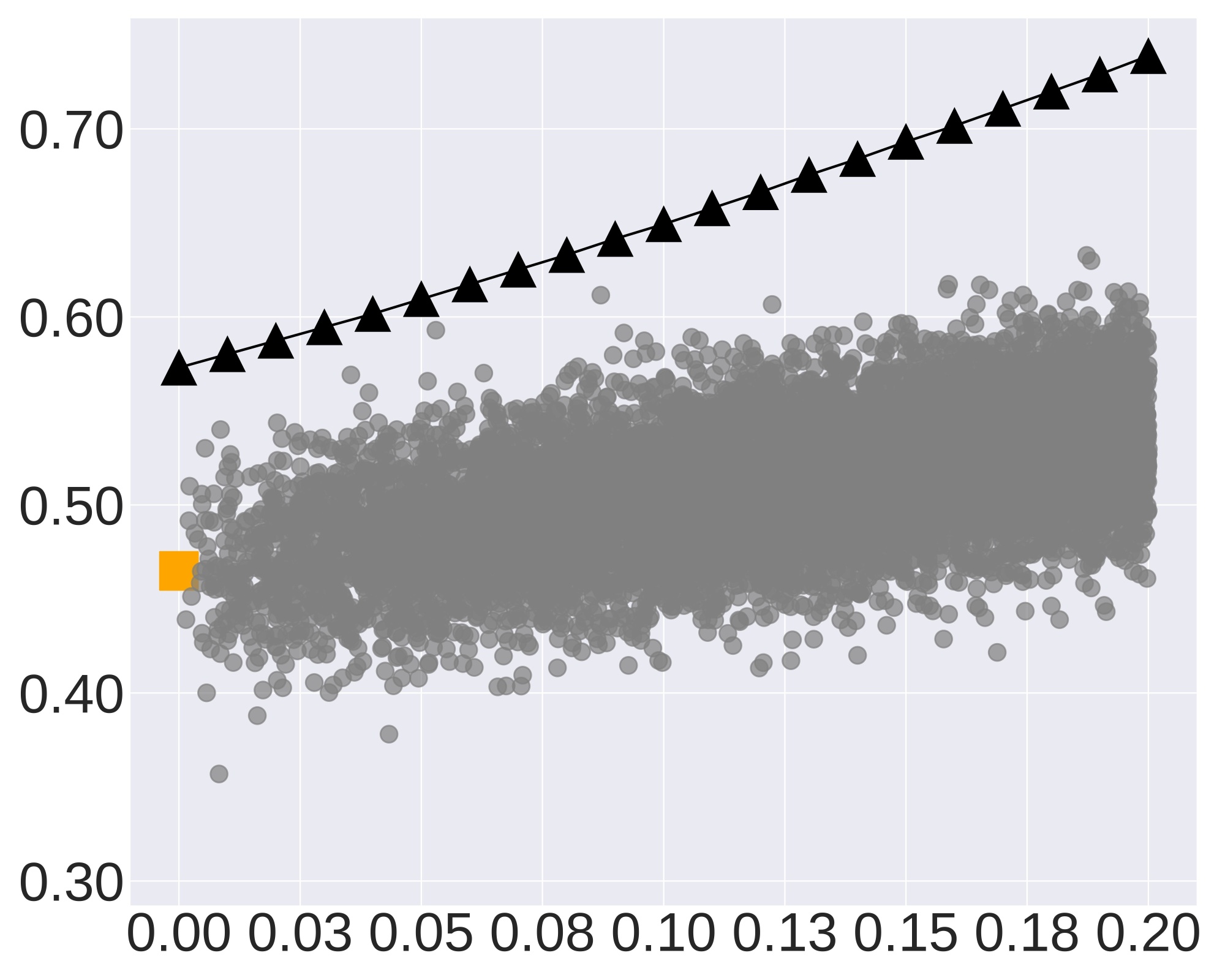}}
  \vspace{-0.5em}
 	\centerline{\footnotesize{~~~Hellinger Distance $\rho$}}
 \vspace{-0.5em}
 \end{minipage}
 \begin{minipage}{0.24\linewidth}
    \centerline{\footnotesize{\quad E2E}}
 	\vspace{1pt}
 	\centerline{\includegraphics[width=1.0\textwidth]{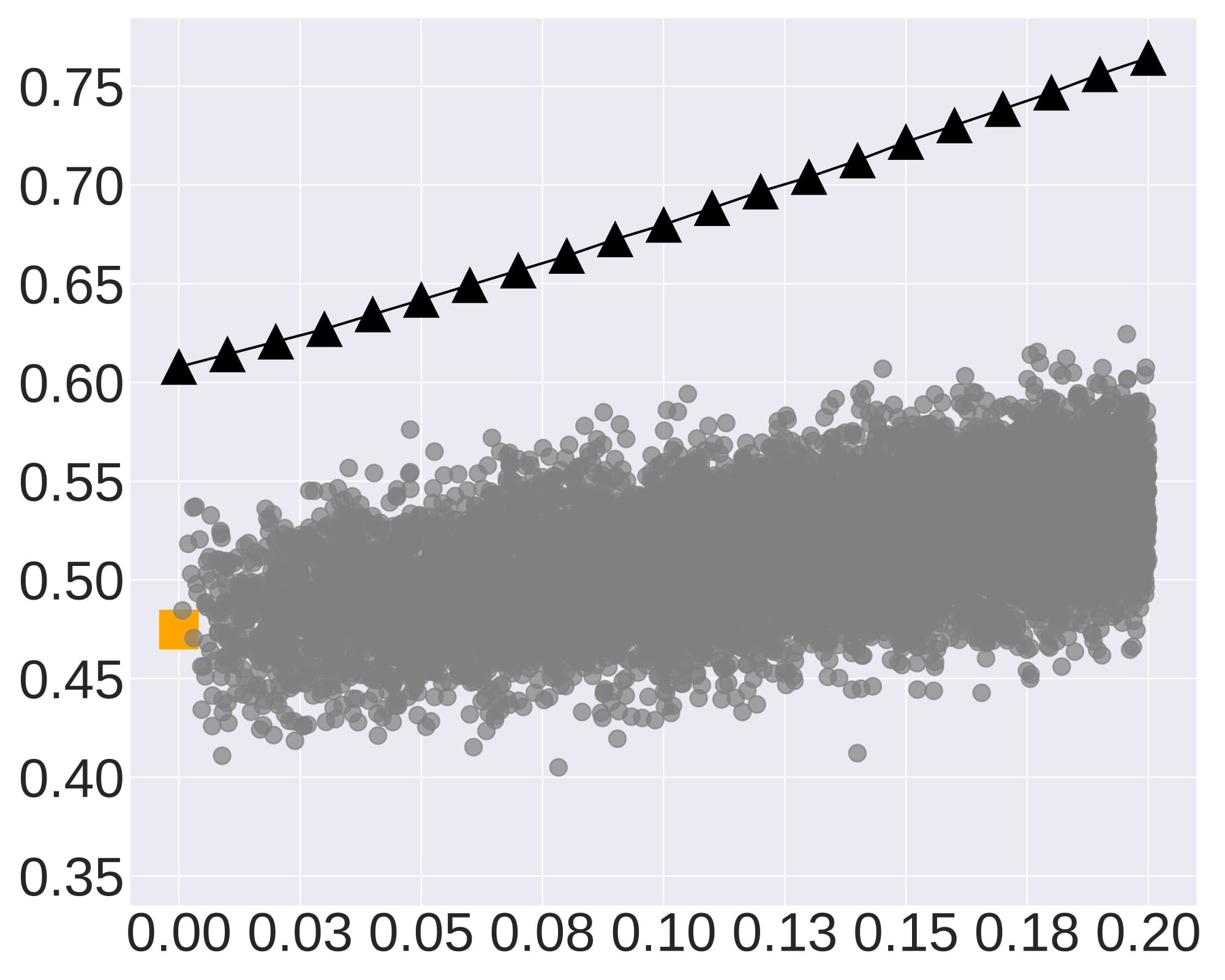}}
  \vspace{-0.5em}
 	\centerline{\footnotesize{~~~Hellinger Distance $\rho$}}
 \vspace{-0.5em}
 \end{minipage}
}
\subfigure{
\centerline{\includegraphics[width=0.75\textwidth]{figures/legend3.pdf}}}
\caption{Conformal generation risk $\hat{\alpha}_{\text{rag}}$ and empirical risks with BM25 under distribution shifts with $N_{\text{rag}}=15, \lambda_g=1, \lambda_s=1.0$.}
\label{fig:bound_and_simulation_bm25_dist_shft}
\end{figure*}

\begin{figure*}[t]
\subfigure{
    \rotatebox{90}{\hspace{-3.5em} Evaluation Risk}
    \begin{minipage}{0.24\linewidth}
    \centerline{\footnotesize{\quad AESLC}}
 	\vspace{1pt}
\centerline{\includegraphics[width=1.0\textwidth]{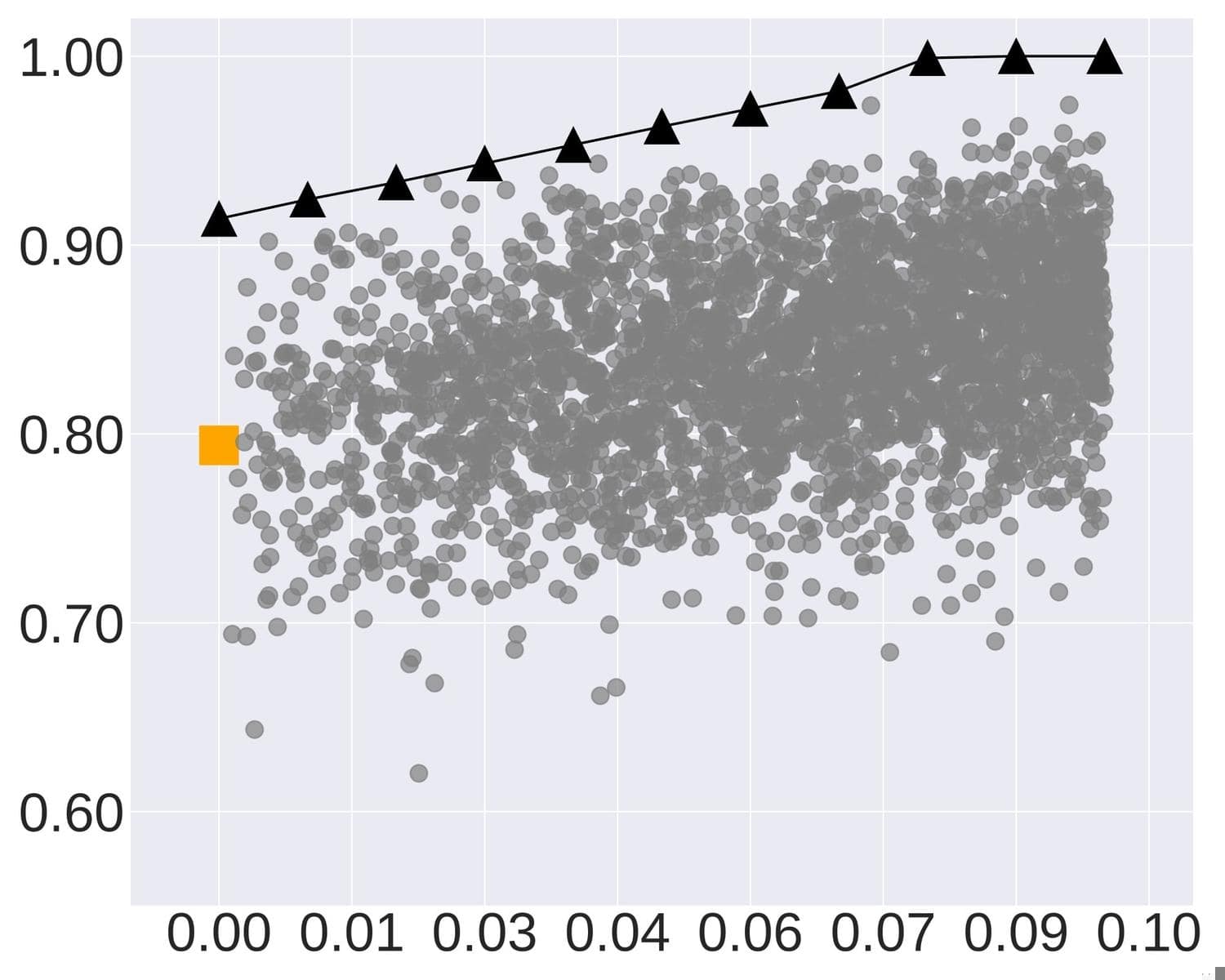}}
  \vspace{-0.5em}
 	\centerline{\footnotesize{~~~Hellinger Distance $\rho$}}
 \vspace{-0.5em}
 \end{minipage}
 \begin{minipage}{0.24\linewidth}
    \centerline{\footnotesize{\quad CommonGen}}
 	\vspace{1pt}
 	\centerline{\includegraphics[width=1.0\textwidth]{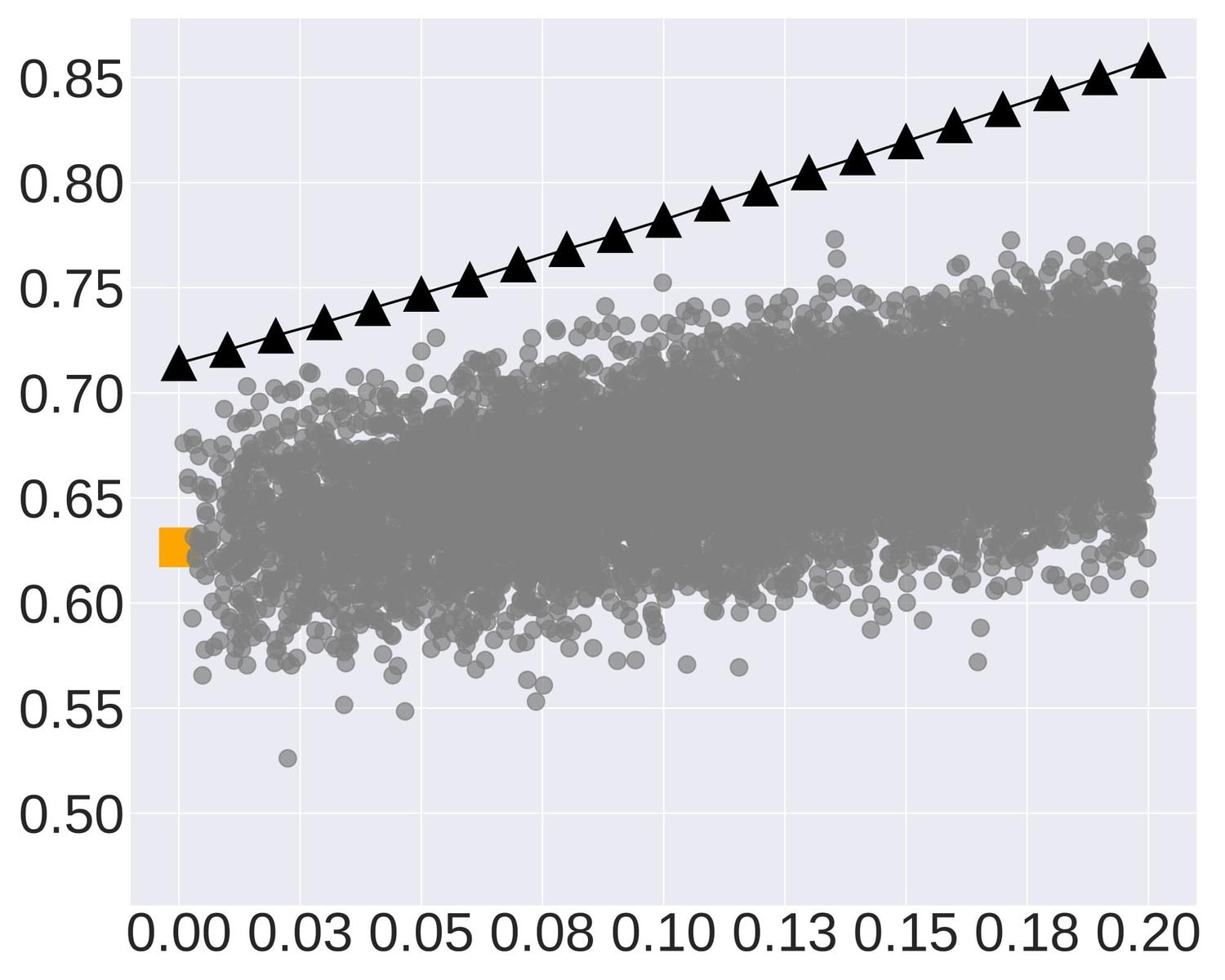}}
  \vspace{-0.5em}
 	\centerline{\footnotesize{~~~Hellinger Distance $\rho$}}
 \vspace{-0.5em}
 \end{minipage}
 \begin{minipage}{0.24\linewidth}
    \centerline{\footnotesize{\quad DART}}
 	\vspace{1pt}
 	\centerline{\includegraphics[width=1.0\textwidth]{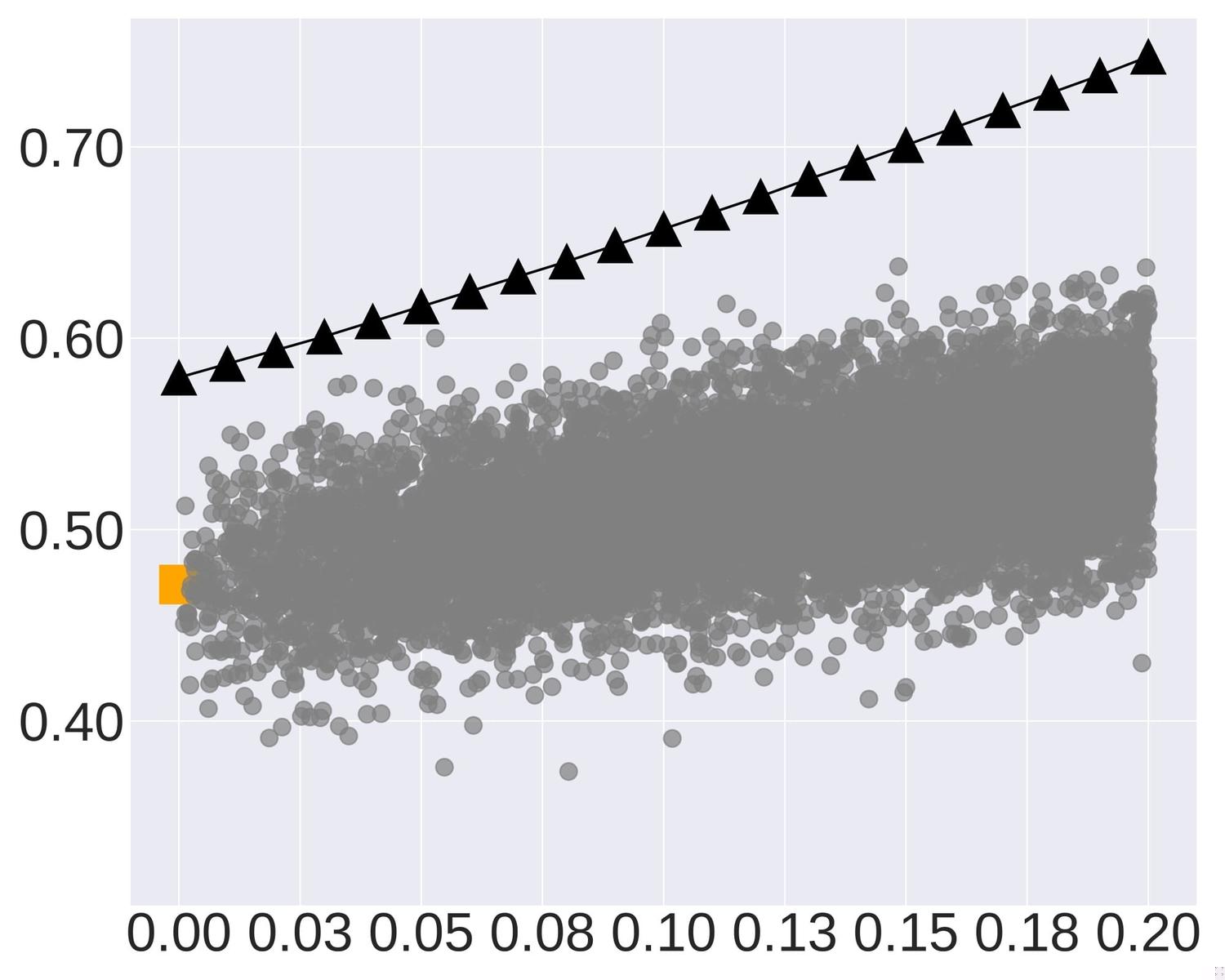}}
  \vspace{-0.5em}
 	\centerline{\footnotesize{~~~Hellinger Distance $\rho$}}
 \vspace{-0.5em}
 \end{minipage}
 \begin{minipage}{0.24\linewidth}
    \centerline{\footnotesize{\quad E2E}}
 	\vspace{1pt}
 	\centerline{\includegraphics[width=1.0\textwidth]{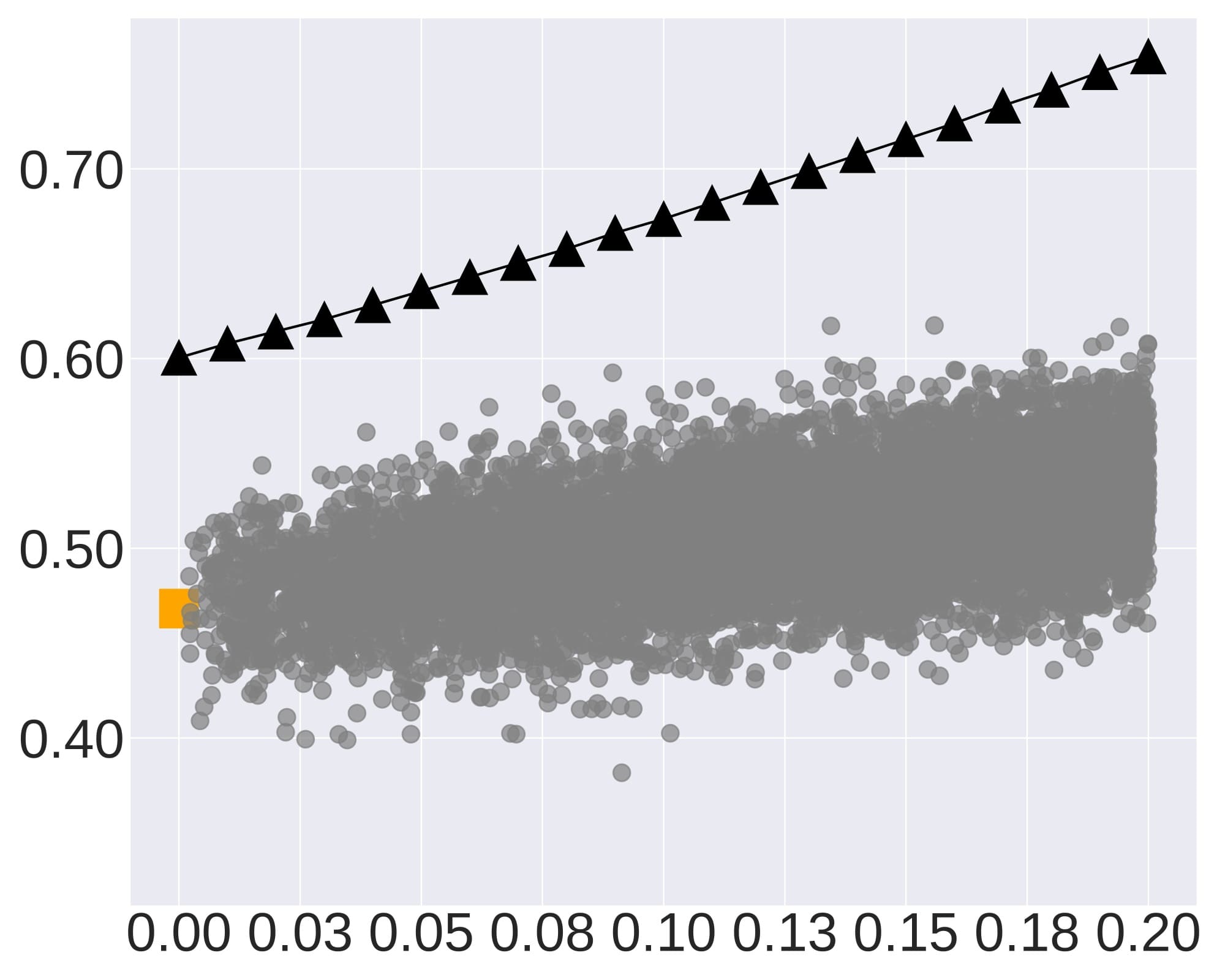}}
  \vspace{-0.5em}
 	\centerline{\footnotesize{~~~Hellinger Distance $\rho$}}
 \vspace{-0.5em}
 \end{minipage}
}
\subfigure{
\centerline{\includegraphics[width=0.75\textwidth]{figures/legend3.pdf}}}
\caption{Conformal generation risk $\hat{\alpha}_{\text{rag}}$ and empirical risks with BAAI/bge under distribution shifts with $N_{\text{rag}}=15, \lambda_g=1, \lambda_s=1.0$.}
\label{fig:bound_and_simulation_baai_dist_shft}
\end{figure*}

\begin{figure*}[t]
\subfigure{
    \rotatebox{90}{\hspace{-5.0em} Conformal Risk $\alpha_{\text{rag}}$}
    \begin{minipage}{0.24\linewidth}
    \centerline{\footnotesize{\quad AESLC}}
 	\vspace{1pt}
\centerline{\includegraphics[width=1.0\textwidth]{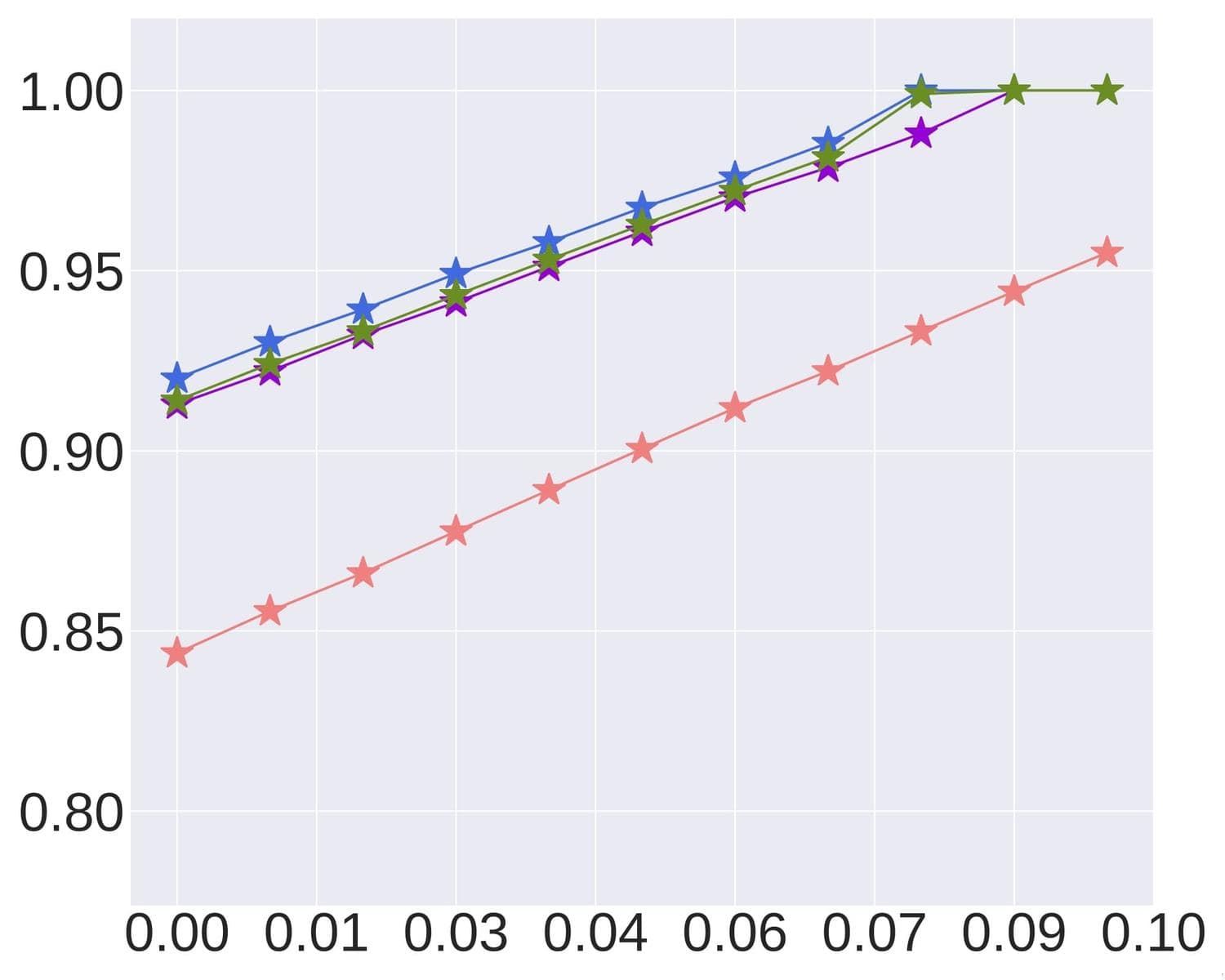}}
  \vspace{-0.5em}
 	\centerline{\footnotesize{~~~Hellinger Distance $\rho$}}
 \vspace{-0.5em}
 \end{minipage}
 \begin{minipage}{0.24\linewidth}
    \centerline{\footnotesize{\quad CommonGen}}
 	\vspace{1pt}
 	\centerline{\includegraphics[width=1.0\textwidth]{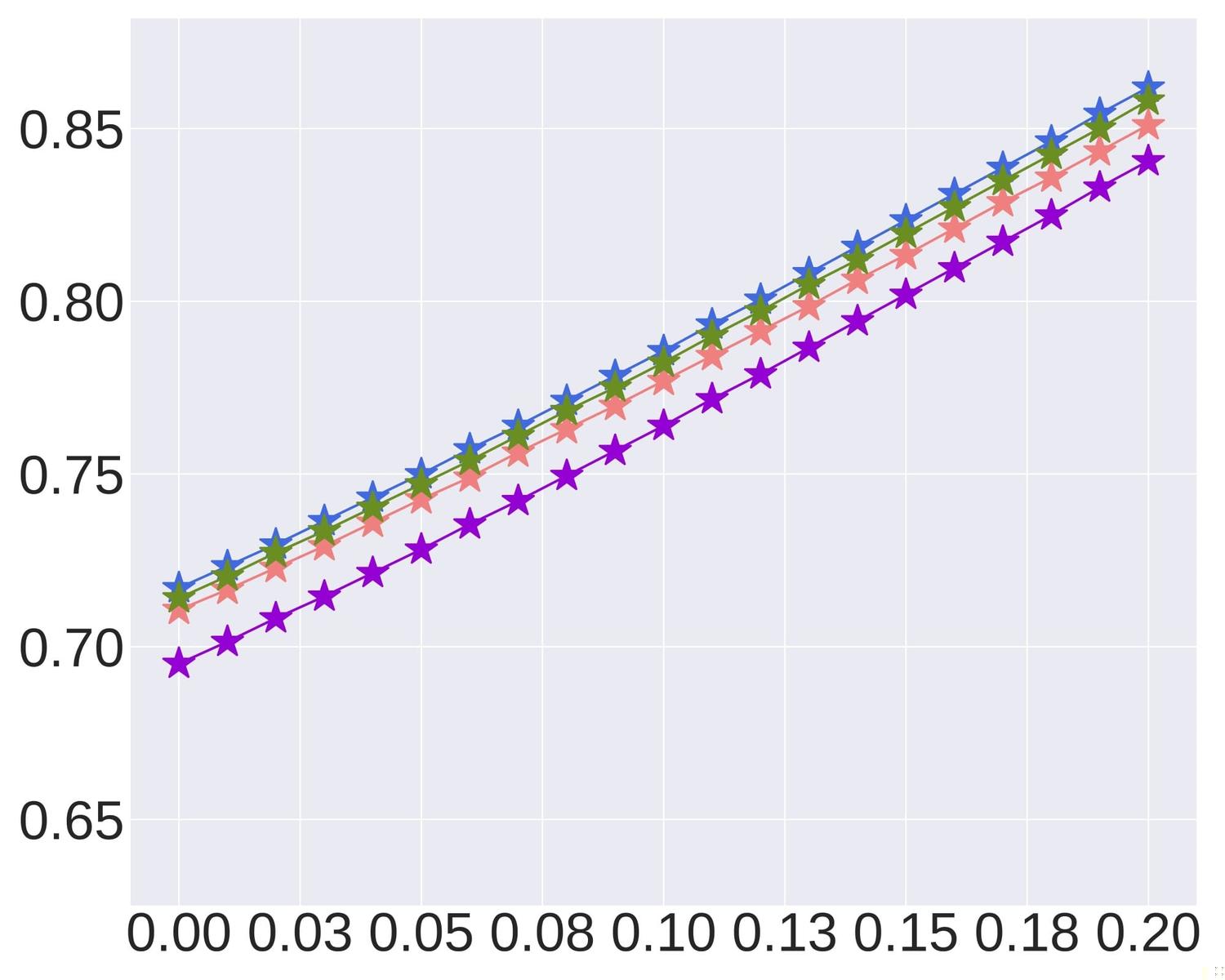}}
  \vspace{-0.5em}
 	\centerline{\footnotesize{~~~Hellinger Distance $\rho$}}
 \vspace{-0.5em}
 \end{minipage}
 \begin{minipage}{0.24\linewidth}
    \centerline{\footnotesize{\quad DART}}
 	\vspace{1pt}
 	\centerline{\includegraphics[width=1.0\textwidth]{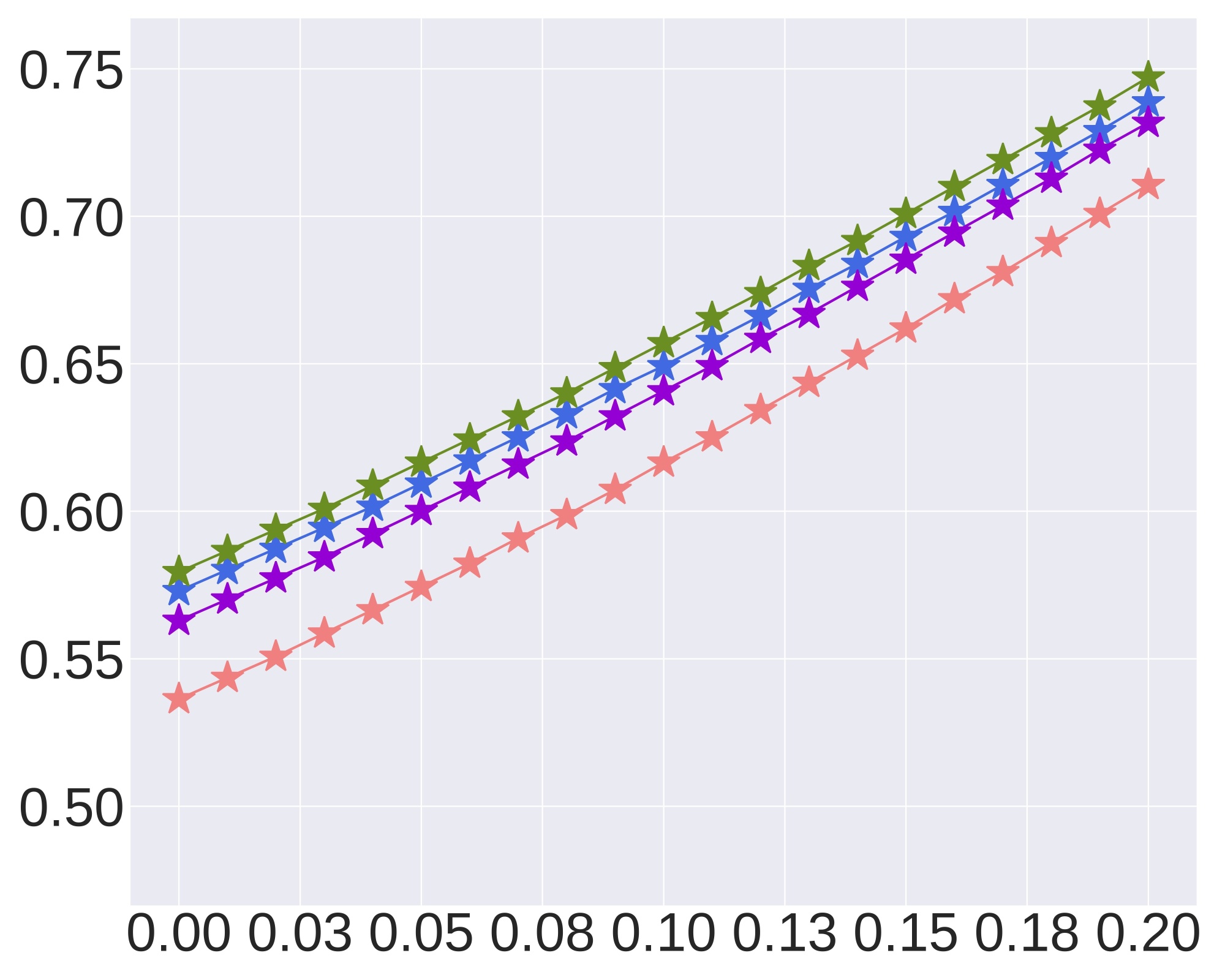}}
  \vspace{-0.5em}
 	\centerline{\footnotesize{~~~Hellinger Distance $\rho$}}
 \vspace{-0.5em}
 \end{minipage}
 \begin{minipage}{0.24\linewidth}
    \centerline{\footnotesize{\quad E2E}}
 	\vspace{1pt}
 	\centerline{\includegraphics[width=1.0\textwidth]{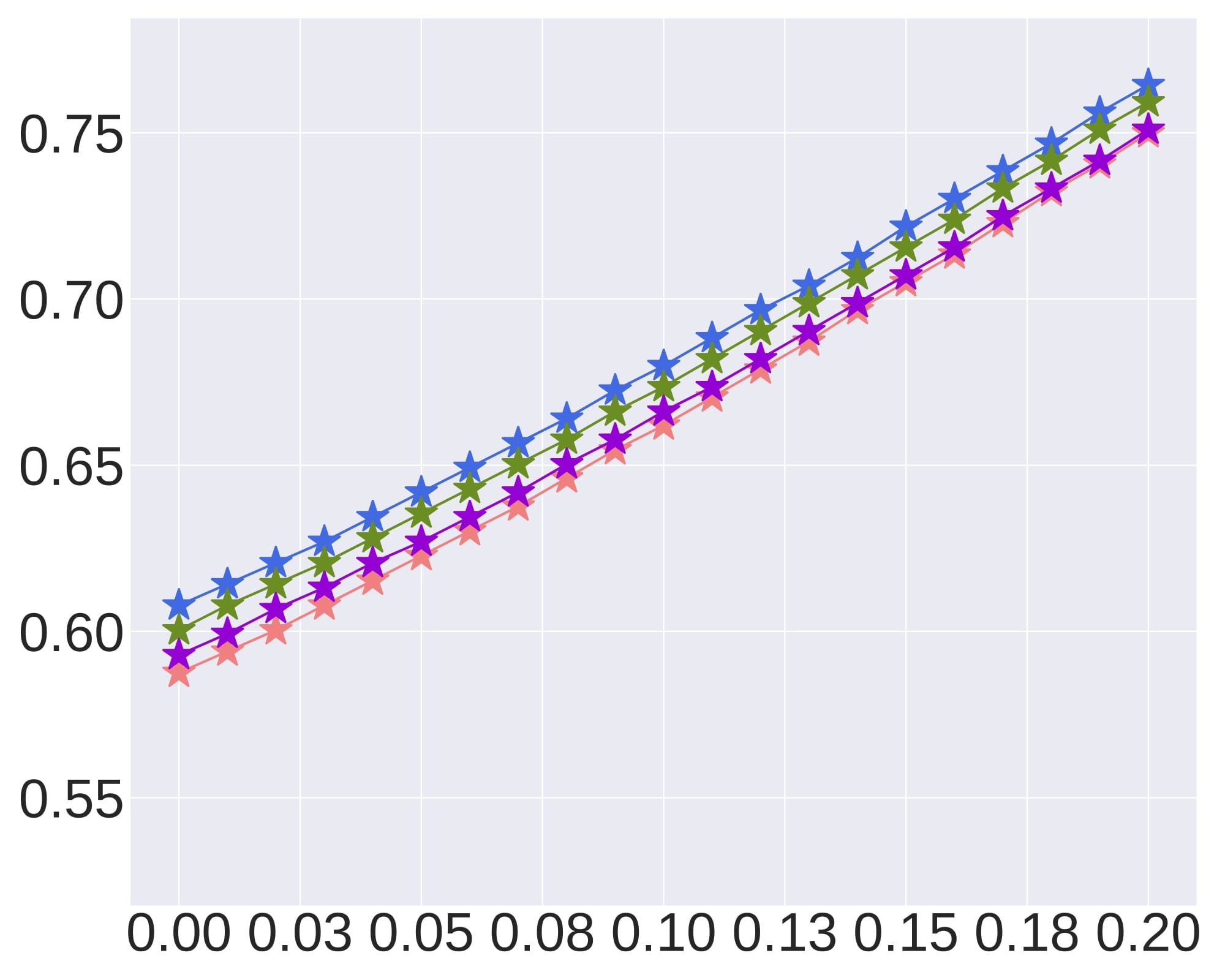}}
  \vspace{-0.5em}
 	\centerline{\footnotesize{~~~Hellinger Distance $\rho$}}
 \vspace{-0.5em}
 \end{minipage}
}
\subfigure{
\centerline{\includegraphics[width=0.53\textwidth]{figures/legend2.pdf}}}
\caption{Conformal generation risk $\alpha_{\text{rag}}$ vs. Hellinger distance $\rho$ for different retrieval models under distribution shifts with $N_{\text{rag}}=15, \lambda_g=1, \lambda_s=1.0$.}
\label{fig:comparison_retrieval_dist_shft}
\end{figure*}

\begin{figure*}[t]
\subfigure{
 \begin{minipage}{0.49\linewidth}
    \centerline{\footnotesize{\quad DART}}
 	\vspace{-0.8pt}
 \centerline{\includegraphics[width=1.0\textwidth]{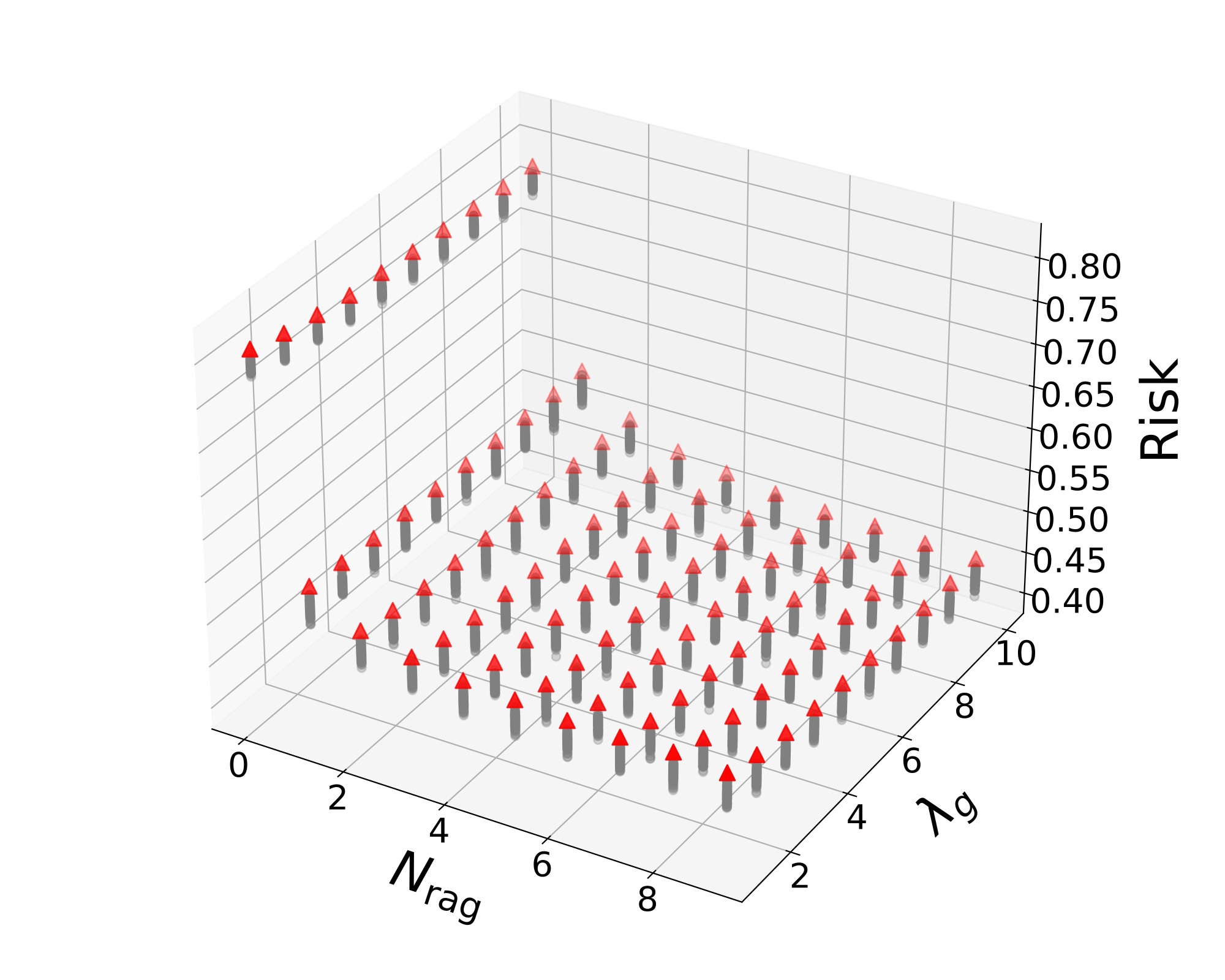}}
  \vspace{-1em}
 \end{minipage}
 \begin{minipage}{0.49\linewidth}
    \centerline{\footnotesize{\quad E2E}}
 	\vspace{-0.8pt}
 	\centerline{\includegraphics[width=1.0\textwidth]{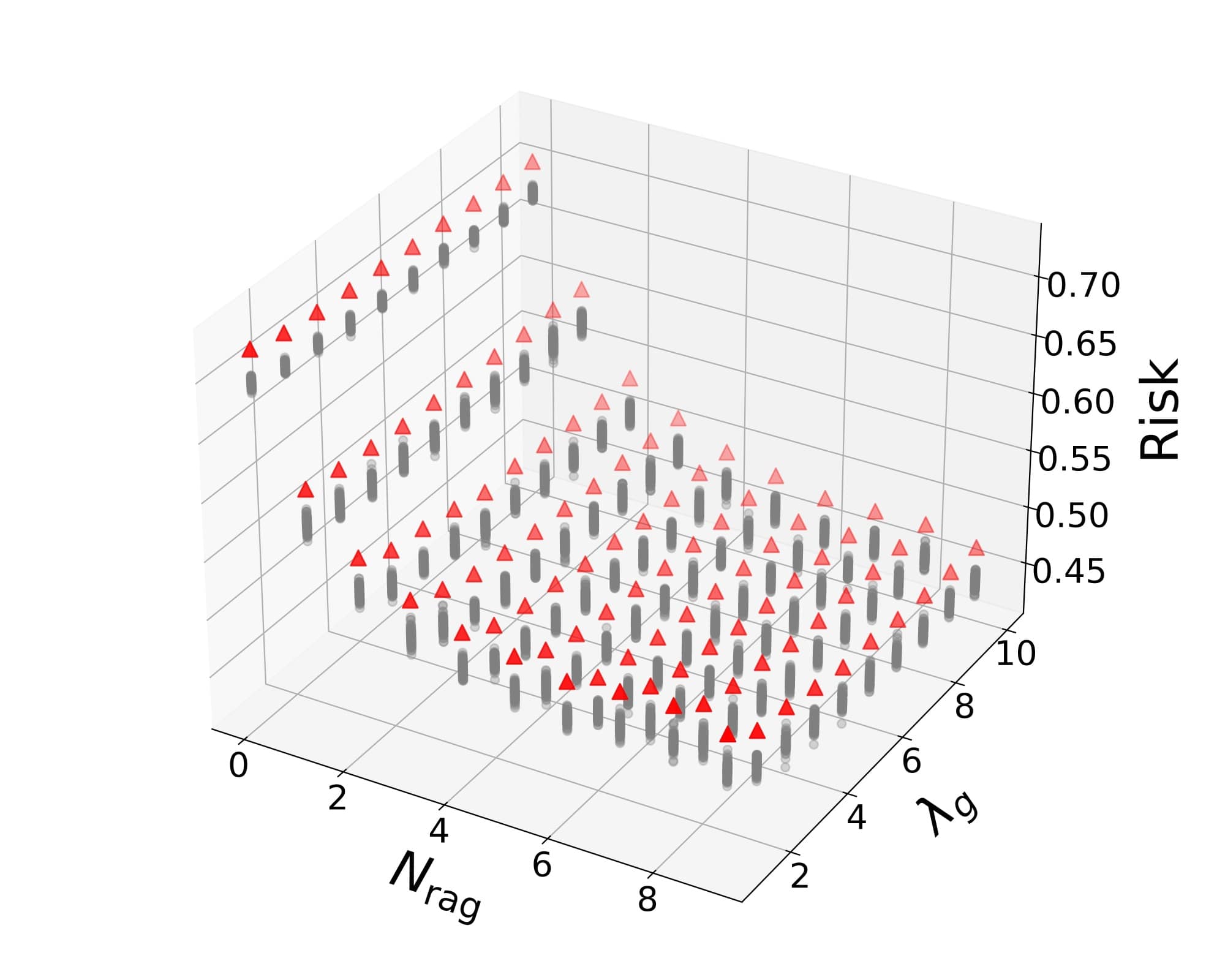}}
  \vspace{-1em}
 \end{minipage}
}
\subfigure{
\centerline{\includegraphics[width=0.5\textwidth]{figures/legend4.pdf}}}
\caption{Conformal generation risk $\hat{\alpha}_{\text{rag}}$ and simulated empirical risks with different $\lambda_g$ and $N_{\text{rag}}$ for OpenAI/ada.}
\label{fig:bound_and_simulation_openai_multi_gen_2_dart}
\vspace{-1em}
\end{figure*}

\begin{figure*}[t]
\subfigure{
    \rotatebox{90}{\hspace{-3.5em} Evaluation Risk}
    \begin{minipage}{0.24\linewidth}
    \centerline{\footnotesize{\quad AESLC}}
 	\vspace{1pt}
\centerline{\includegraphics[width=1.0\textwidth]{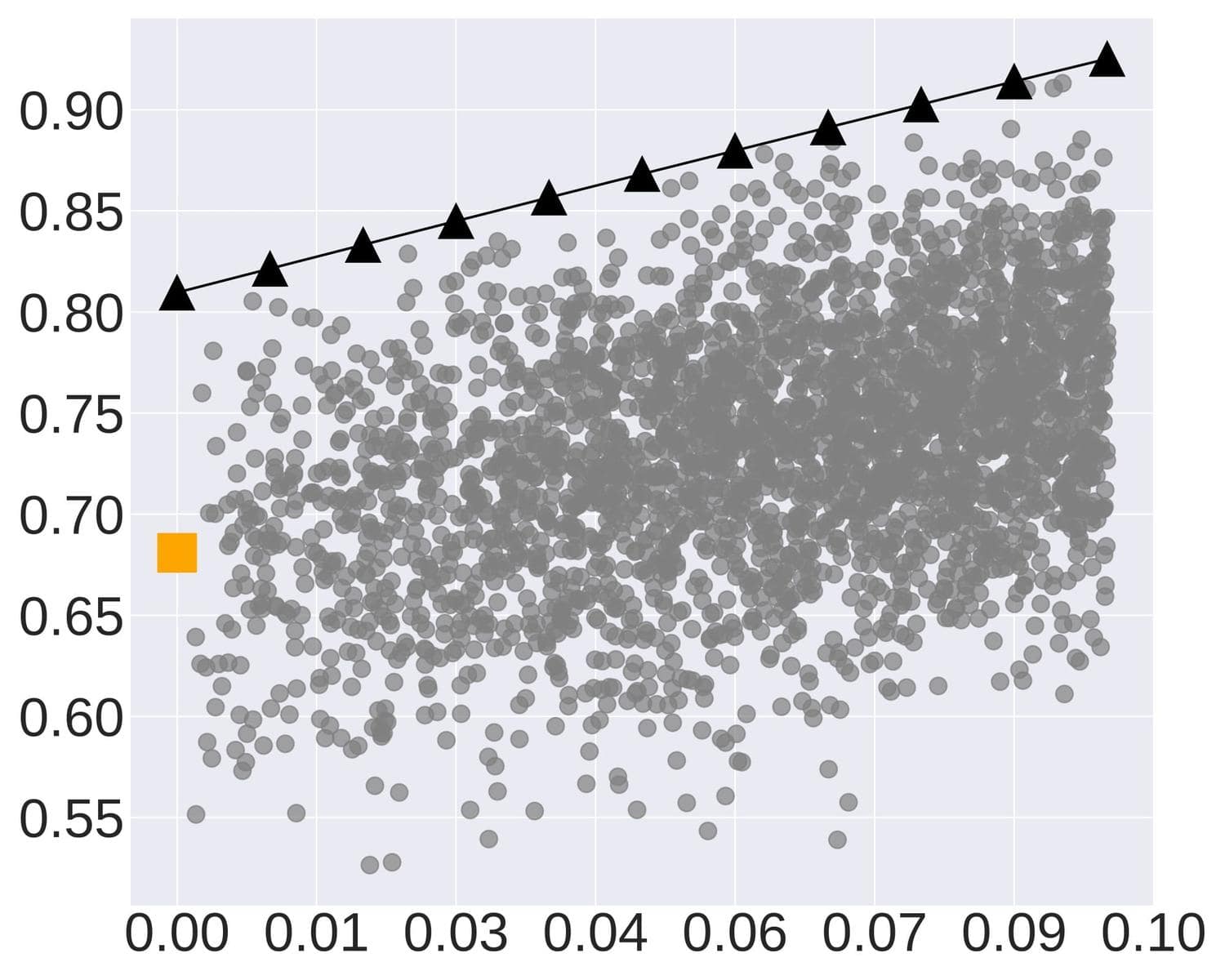}}
  \vspace{-0.5em}
 	\centerline{\footnotesize{~~~Hellinger Distance}}
 \vspace{-0.5em}
 \end{minipage}
 \begin{minipage}{0.24\linewidth}
    \centerline{\footnotesize{\quad CommonGen}}
 	\vspace{1pt}
 	\centerline{\includegraphics[width=1.0\textwidth]{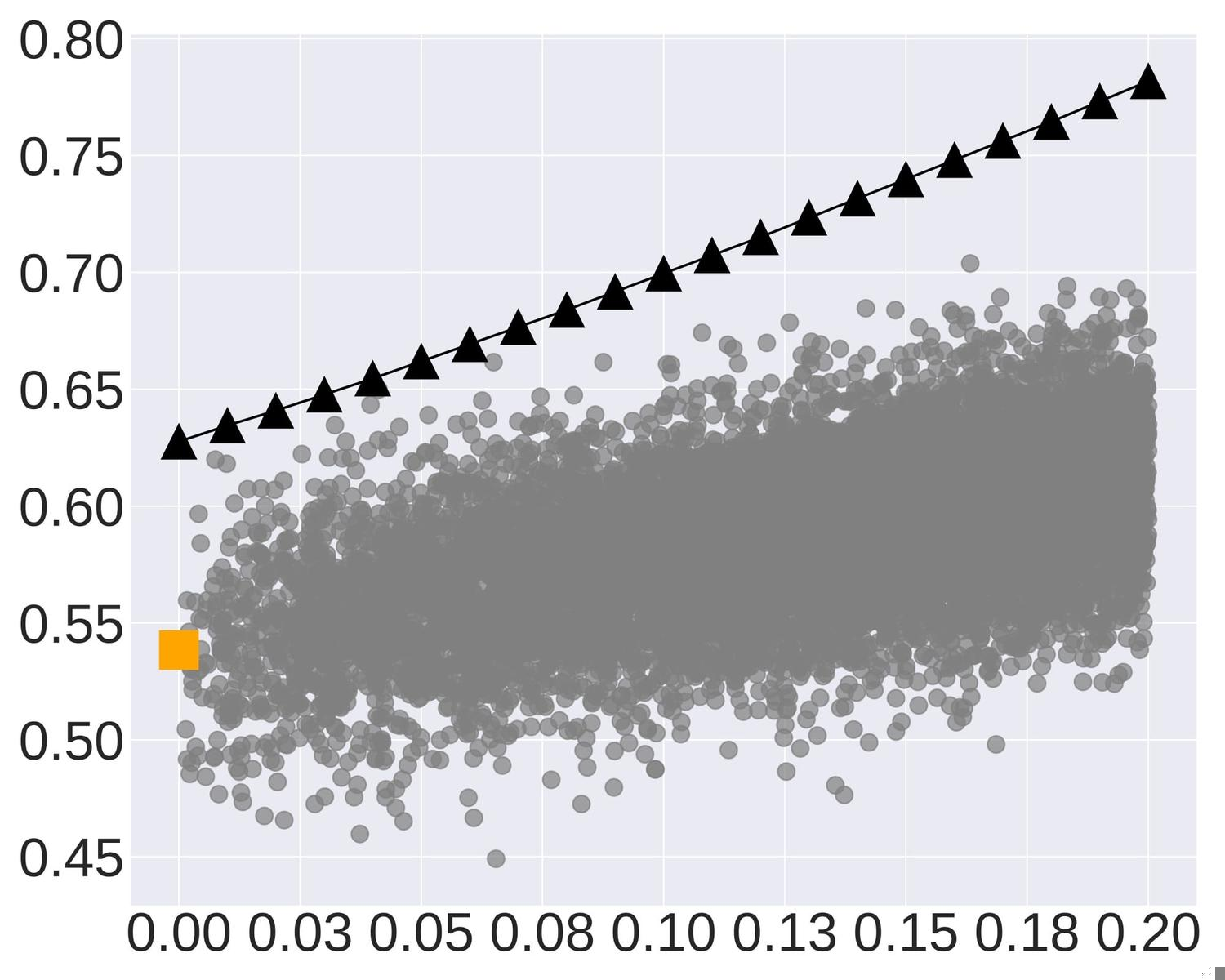}}
  \vspace{-0.5em}
 	\centerline{\footnotesize{~~~Hellinger Distance}}
 \vspace{-0.5em}
 \end{minipage}
 \begin{minipage}{0.24\linewidth}
    \centerline{\footnotesize{\quad DART}}
 	\vspace{1pt}
 	\centerline{\includegraphics[width=1.0\textwidth]{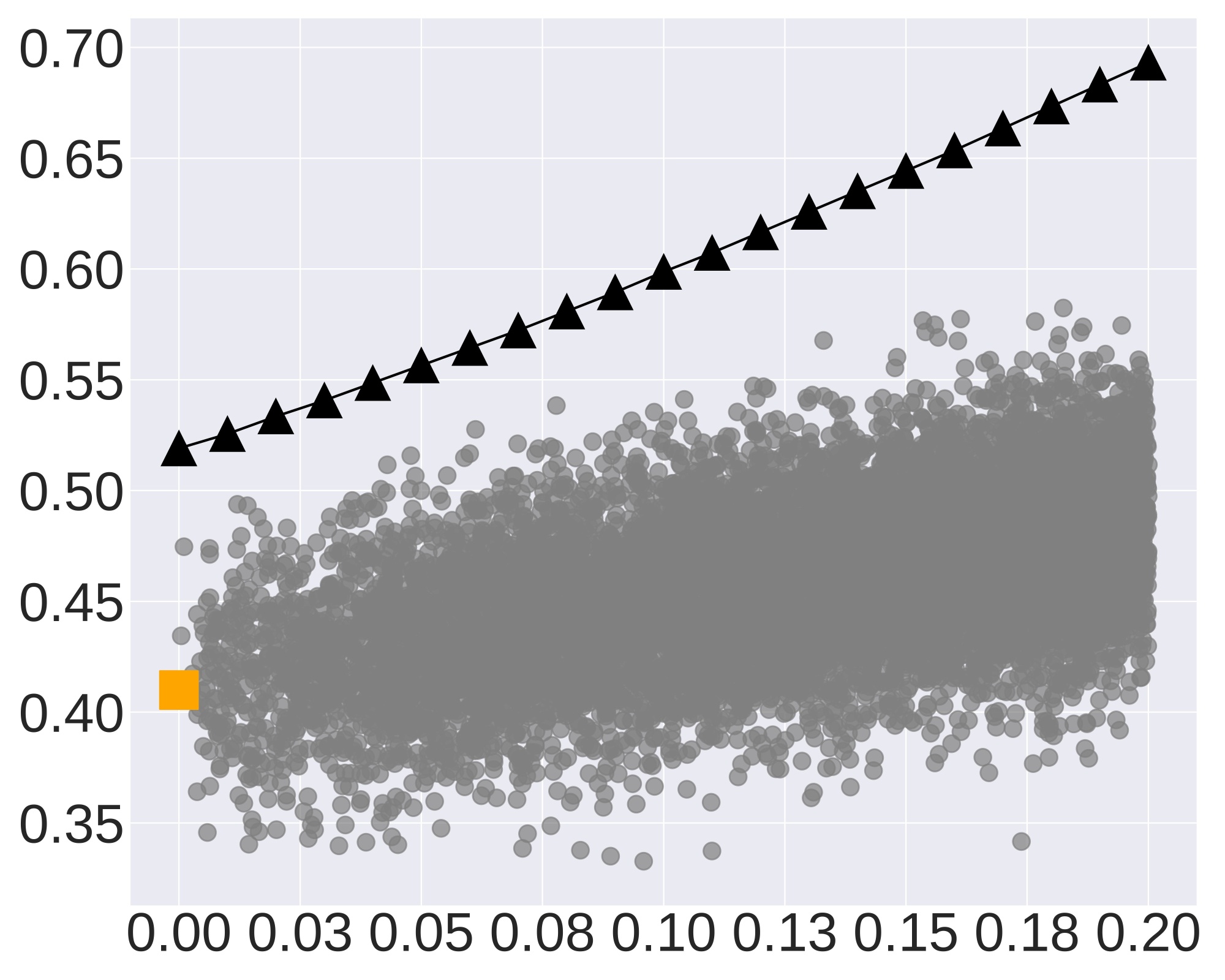}}
  \vspace{-0.5em}
 	\centerline{\footnotesize{~~~Hellinger Distance}}
 \vspace{-0.5em}
 \end{minipage}
 \begin{minipage}{0.24\linewidth}
    \centerline{\footnotesize{\quad E2E}}
 	\vspace{1pt}
 	\centerline{\includegraphics[width=1.0\textwidth]{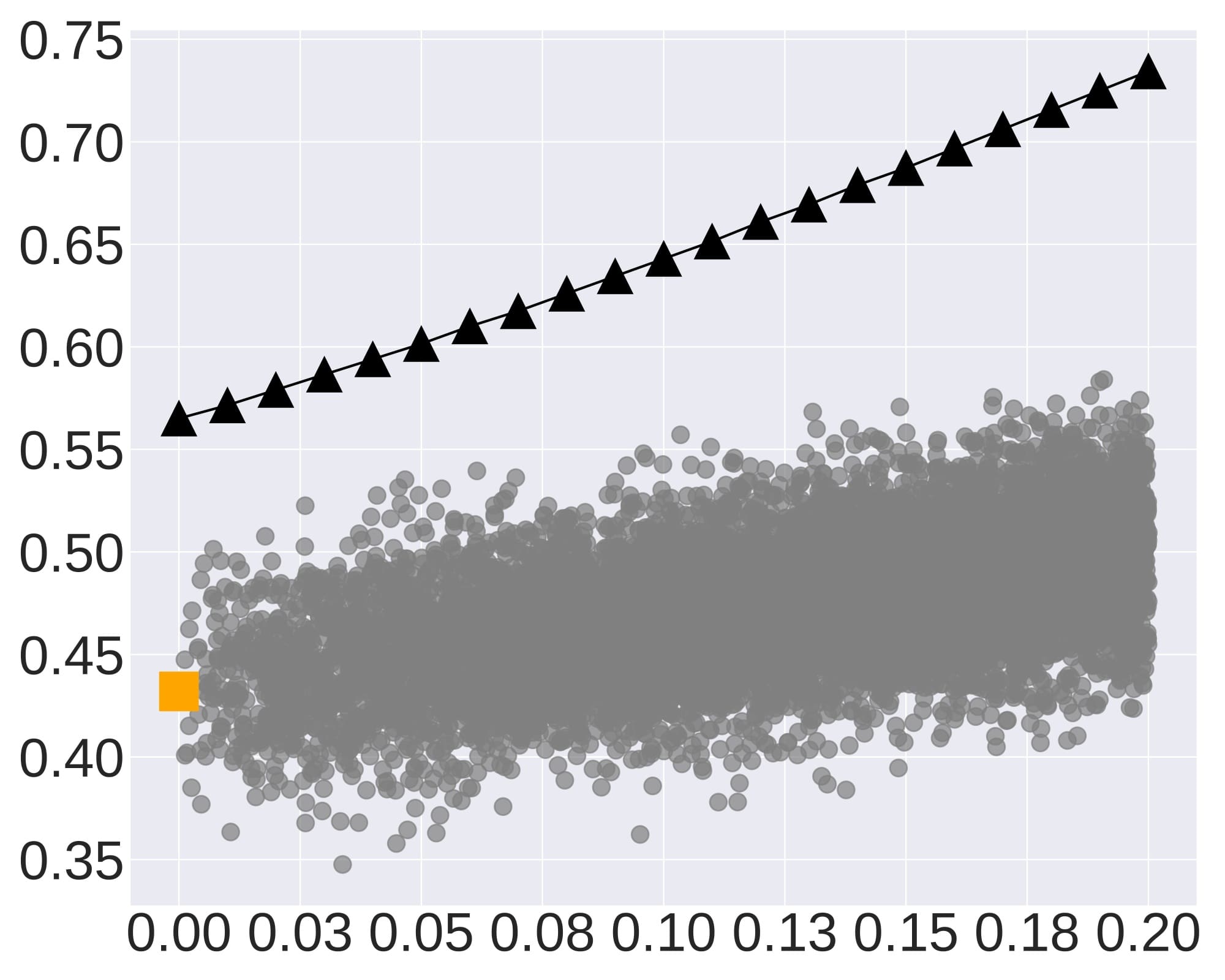}}
  \vspace{-0.5em}
 	\centerline{\footnotesize{~~~Hellinger Distance}}
 \vspace{-0.5em}
 \end{minipage}
}
\subfigure{
\centerline{\includegraphics[width=0.75\textwidth]{figures/legend3.pdf}}}
\caption{Conformal generation risk $\hat{\alpha}_{\text{rag}}$ and empirical risks with $N_{\text{rag}}=5, \lambda_g=20, \lambda_s=1.0$ under distribution shifts for Biencoder-SFT.}
\label{fig:bound_and_simulation_llmr_multi_gen_dist_shft}
\end{figure*}

\begin{figure*}[t]
\subfigure{
    \rotatebox{90}{\hspace{-3.5em} Evaluation Risk}
    \begin{minipage}{0.24\linewidth}
    \centerline{\footnotesize{\quad AESLC}}
 	\vspace{1pt}
\centerline{\includegraphics[width=1.0\textwidth]{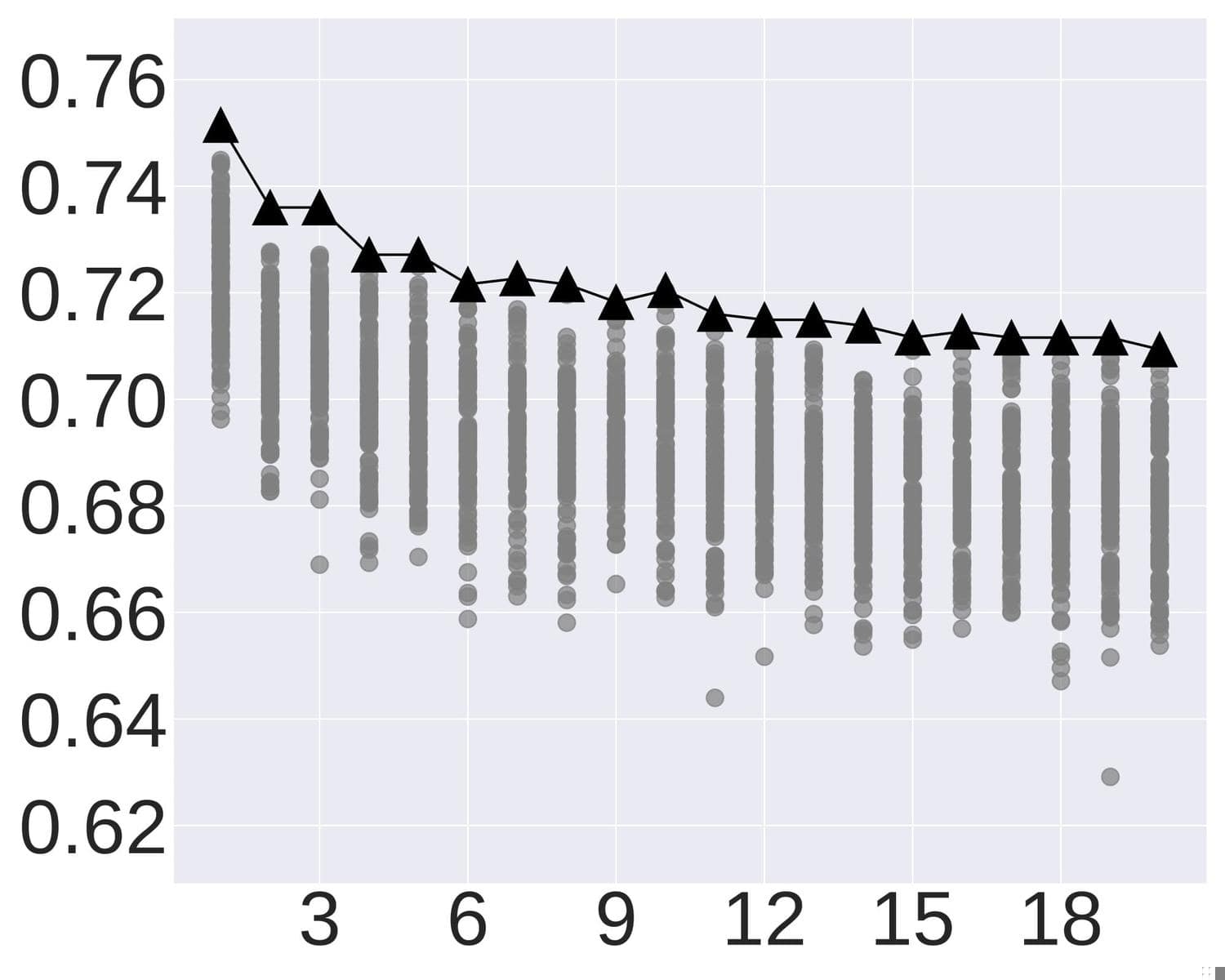}}
  \vspace{-0.5em}
 	\centerline{\footnotesize{~~~\# Generations $\lambda_g$}}
 \vspace{-0.5em}
 \end{minipage}
 \begin{minipage}{0.24\linewidth}
    \centerline{\footnotesize{\quad CommonGen}}
 	\vspace{1pt}
 	\centerline{\includegraphics[width=1.0\textwidth]{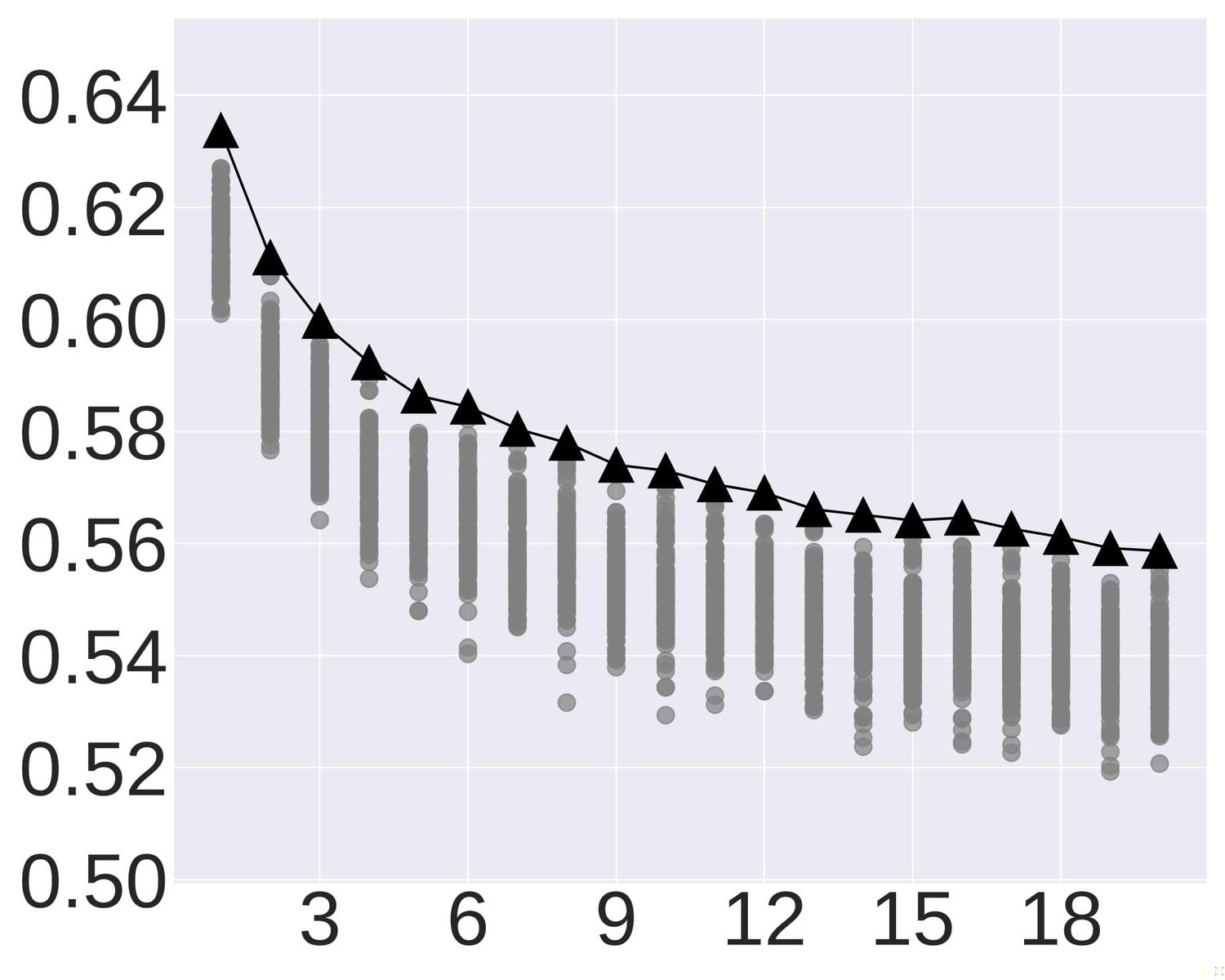}}
  \vspace{-0.5em}
 	\centerline{\footnotesize{~~~\# Generations $\lambda_g$}}
 \vspace{-0.5em}
 \end{minipage}
 \begin{minipage}{0.24\linewidth}
    \centerline{\footnotesize{\quad DART}}
 	\vspace{1pt}
 	\centerline{\includegraphics[width=1.0\textwidth]{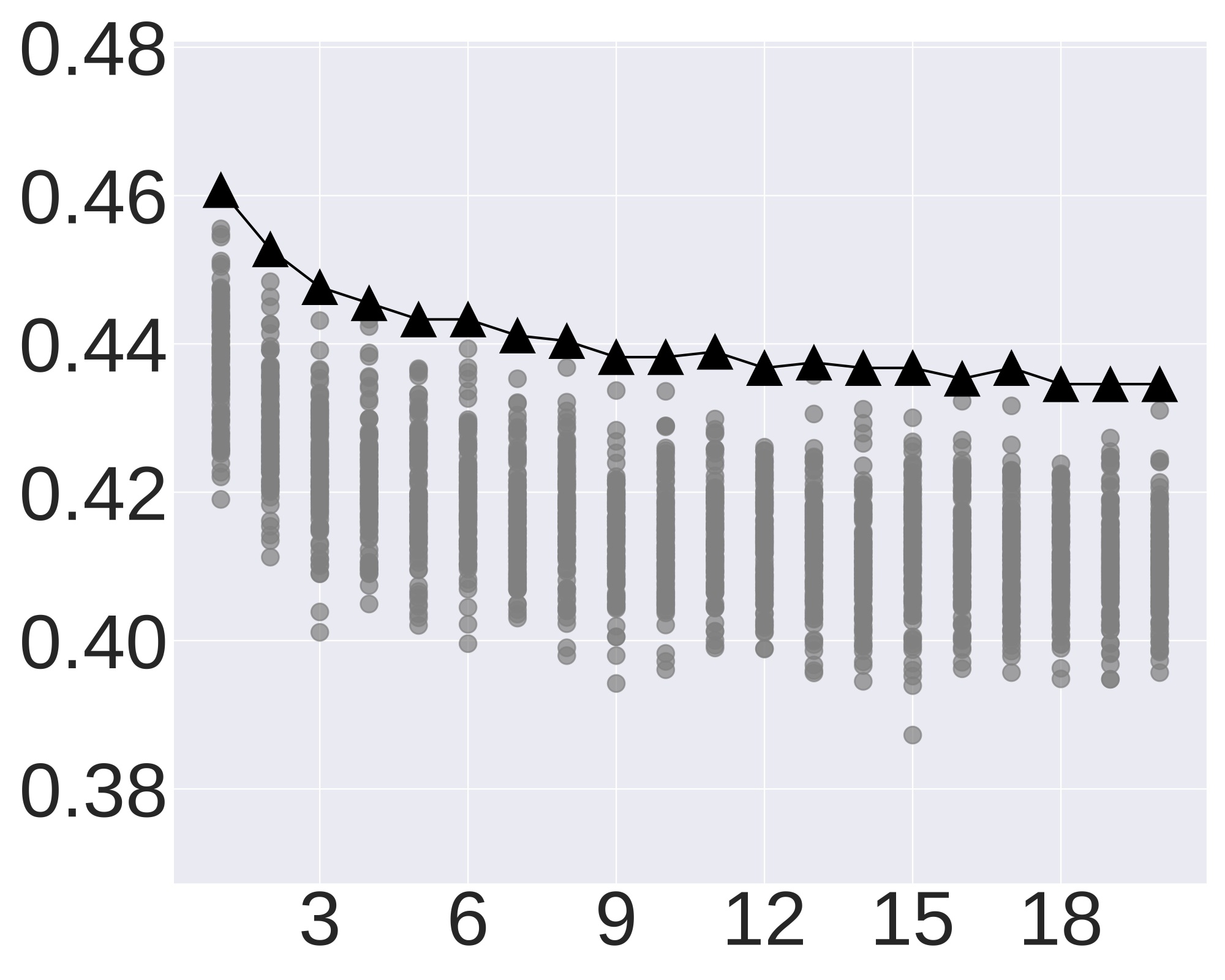}}
  \vspace{-0.5em}
 	\centerline{\footnotesize{~~~\# Generations $\lambda_g$}}
 \vspace{-0.5em}
 \end{minipage}
 \begin{minipage}{0.24\linewidth}
    \centerline{\footnotesize{\quad E2E}}
 	\vspace{1pt}
 	\centerline{\includegraphics[width=1.0\textwidth]{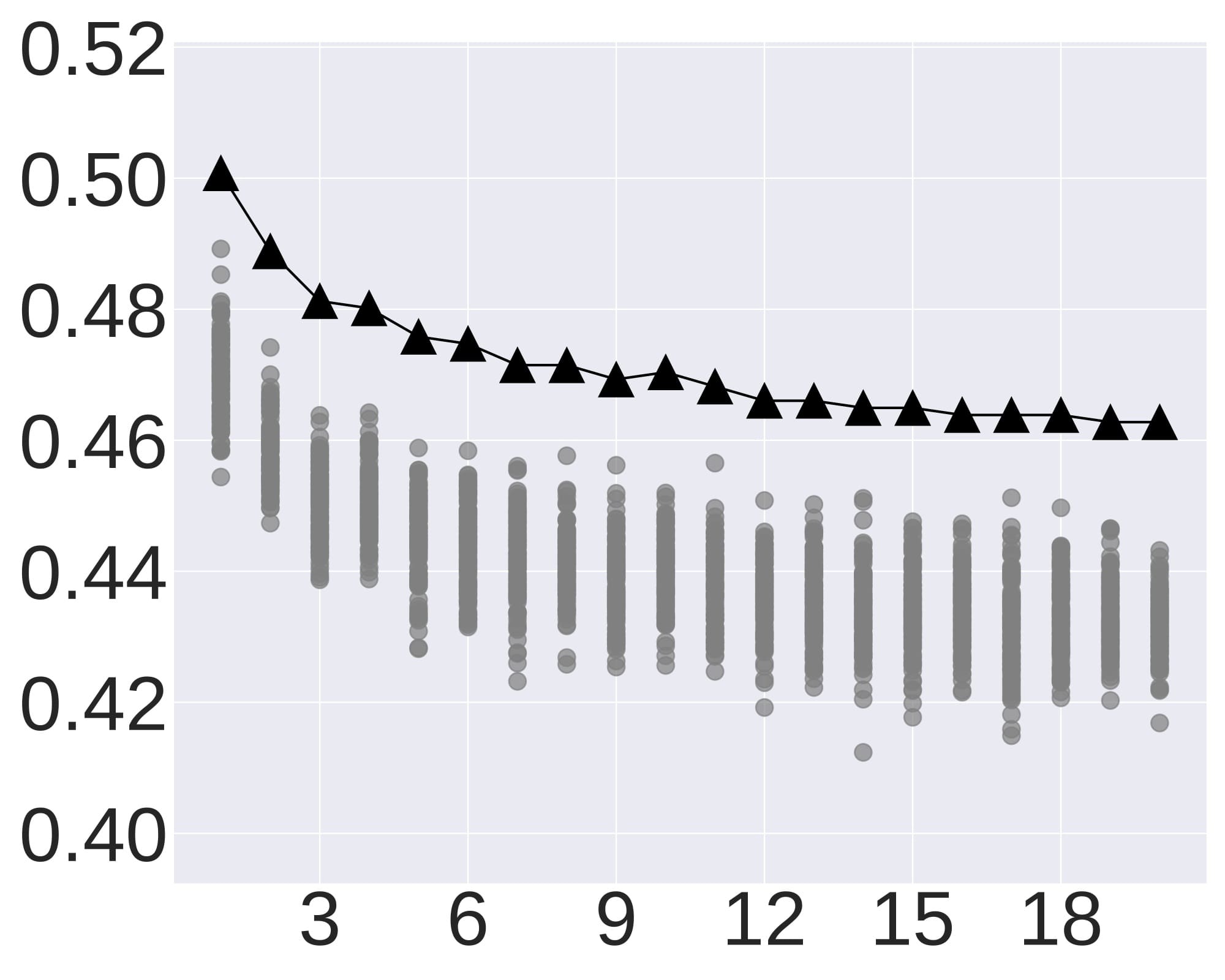}}
  \vspace{-0.5em}
 	\centerline{\footnotesize{~~~\# Generations $\lambda_g$}}
 \vspace{-0.5em}
 \end{minipage}
}
\subfigure{
\centerline{\includegraphics[width=0.4\textwidth]{figures/legend.pdf}}}
\caption{Conformal generation risk $\hat{\alpha}_{\text{rag}}$ and simulated empirical risks for different generation set size $\lambda_g$ and fixed $N_{\text{rag}}=5, \lambda_s=1.0$ for Biencoder-SFT.}
\label{fig:bound_and_simulation_llmr_multi_gen}
\end{figure*}

\begin{figure*}[ht]
\subfigure{
    \rotatebox{90}{\hspace{-3.5em} Evaluation Risk}
    \begin{minipage}{0.24\linewidth}
    \centerline{\footnotesize{\quad AESLC}}
 	\vspace{1pt}
\centerline{\includegraphics[width=1.0\textwidth]{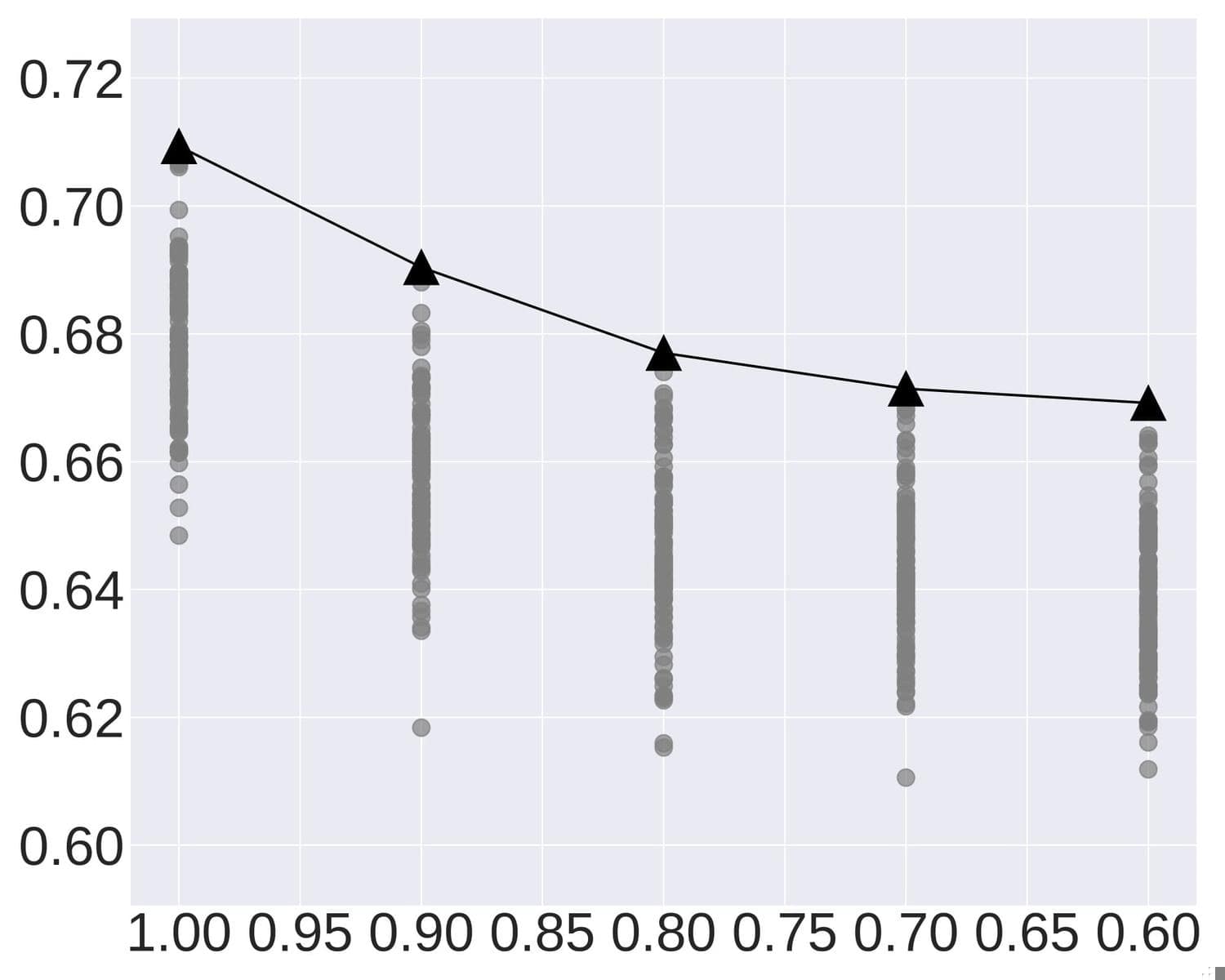}}
  \vspace{-0.5em}
 	\centerline{\footnotesize{~~~diversity threshold $\lambda_s$}}
 \vspace{-0.5em}
 \end{minipage}
 \begin{minipage}{0.24\linewidth}
    \centerline{\footnotesize{\quad CommonGen}}
 	\vspace{1pt}
 	\centerline{\includegraphics[width=1.0\textwidth]{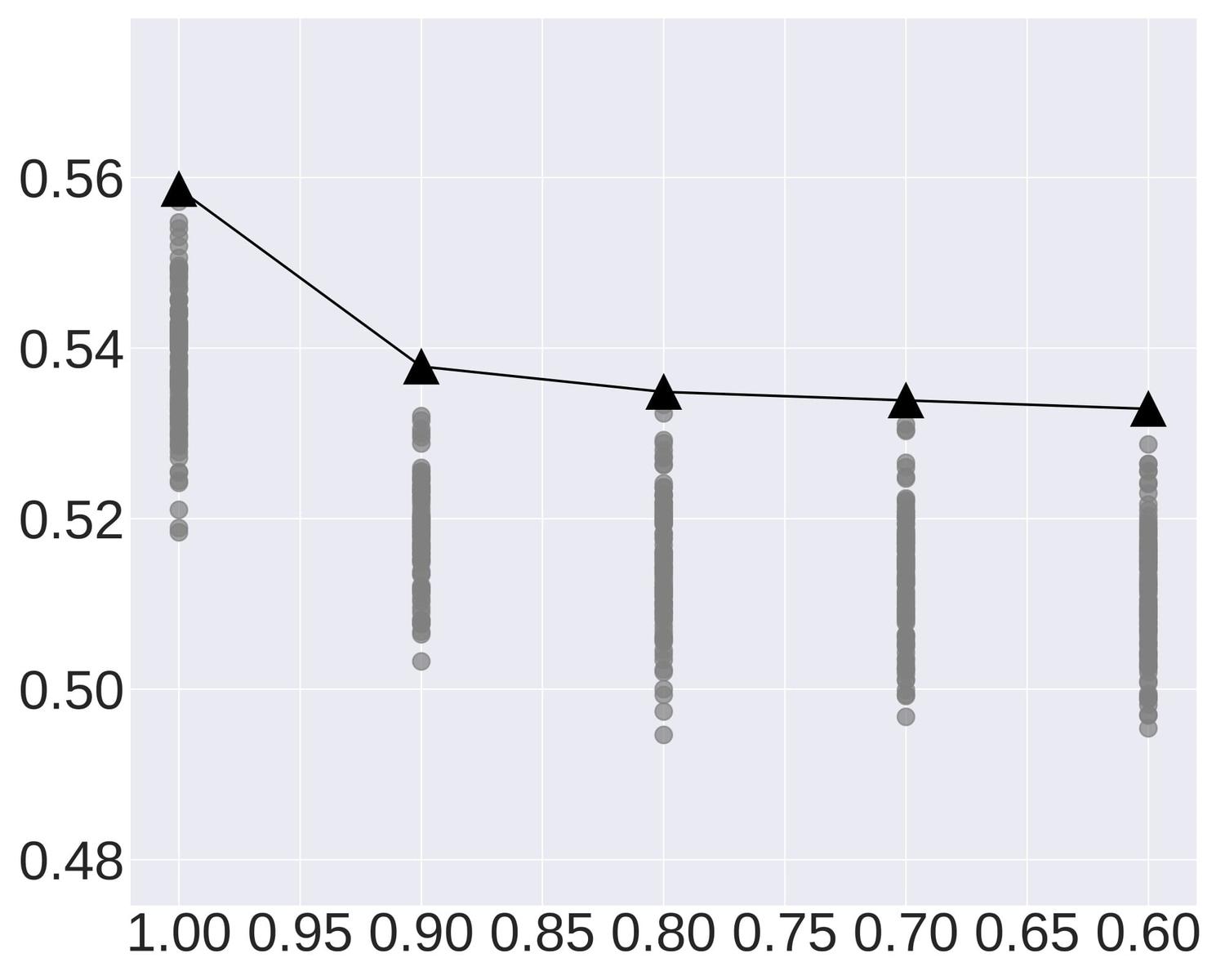}}
  \vspace{-0.5em}
 	\centerline{\footnotesize{~~~diversity threshold $\lambda_s$}}
 \vspace{-0.5em}
 \end{minipage}
 \begin{minipage}{0.24\linewidth}
    \centerline{\footnotesize{\quad DART}}
 	\vspace{1pt}
 	\centerline{\includegraphics[width=1.0\textwidth]{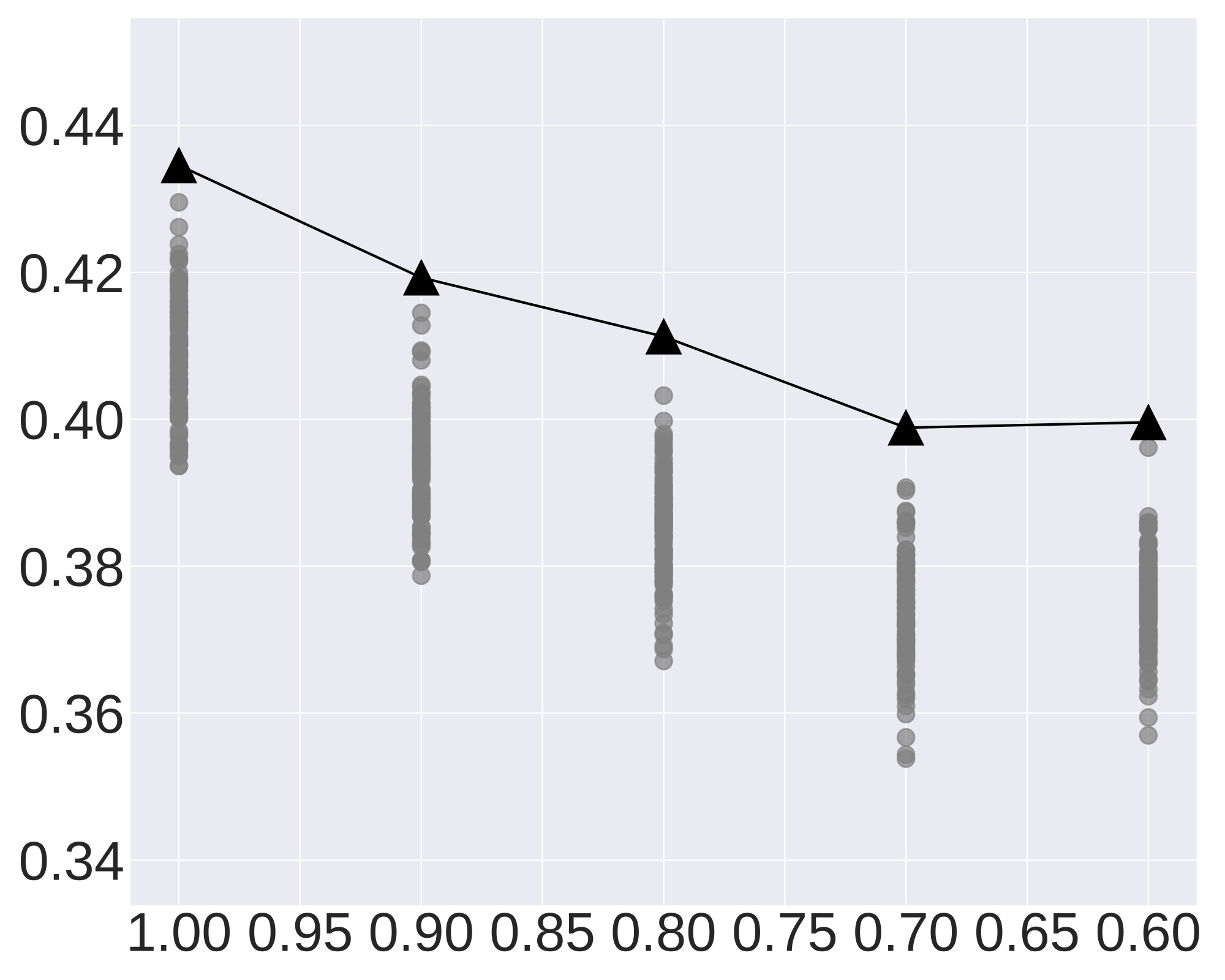}}
  \vspace{-0.5em}
 	\centerline{\footnotesize{~~~diversity threshold $\lambda_s$}}
 \vspace{-0.5em}
 \end{minipage}
 \begin{minipage}{0.24\linewidth}
    \centerline{\footnotesize{\quad E2E}}
 	\vspace{1pt}
 	\centerline{\includegraphics[width=1.0\textwidth]{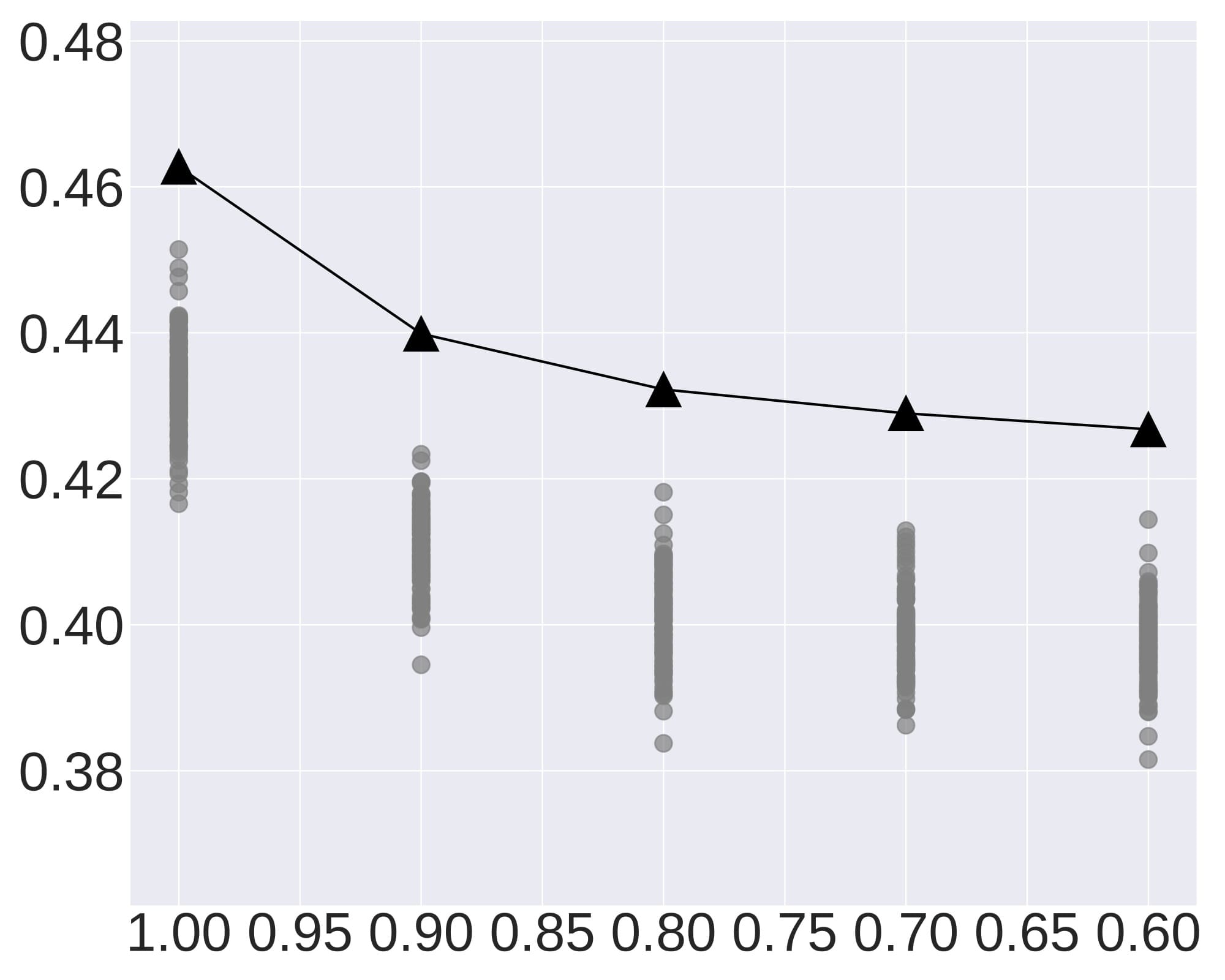}}
  \vspace{-0.5em}
 	\centerline{\footnotesize{~~~diversity threshold $\lambda_s$}}
 \vspace{-0.5em}
 \end{minipage}
}
\subfigure{
\centerline{\includegraphics[width=0.4\textwidth]{figures/legend.pdf}}}
\caption{Conformal generation risk $\hat{\alpha}_{\text{rag}}$ and simulated empirical risks for different diversity threshold $\lambda_s$ with fixed $N_{\text{rag}}=5, \lambda_g=20$ for Biencoder-SFT.}
\label{fig:bound_and_simulation_llmr_multi_gen_lambda_s}
\vspace{-1em}
\end{figure*}

\subsection{Experiment setup}
\label{app:setup}

\paragraph{Datasets} We evaluate \name on four NLP datasets with retrieval augmented generation utilizing an external knowledge base.
(1) AESLC: The Annotated Enron Subject Line Corpus (AESLC) dataset \cite{zhang2019email} contains a collection of email messages from employees of the Enron Corporation. It contains two primary features: the ``email\_body``, which is the text of the email body, and the ``subject\_line``, which is the text of the email subject. The task is to generate the email subject given the email body. 
(2) The Generative Commonsense Reasoning (CommonGen) dataset \cite{lin2019commongen} is a collection of commonsense descriptions, amounting to 79k descriptions over 35k unique concept sets, constructed via crowdsourcing and existing caption corpora. The task is to generate a sentence that uses given concepts in a coherent and commonsense way.
(3) The Data Record to Text (DART) dataset \cite{nan2020dart} is a large-scale dataset designed to facilitate the generation of text from structured data records. It comprises 82,191 examples spanning various domains, with each example being a set of Resource Description Framework (RDF) triples derived from data records in tables and tree ontologies of schemas. These are annotated with sentence descriptions that encapsulate all the facts presented in the RDF triplet.
(4) The End-to-End Generation (E2E) dataset \cite{novikova2017e2e} contains about 50k comments in the restaurant domain. The task is to generate text from meaning representations (MRs), which are structured inputs that describe various aspects of a restaurant. These MRs consist of slots and values that a model needs to convert into natural language descriptions that are coherent and fluent.

\textbf{External knowledge base}: Following \cite{wang2023learning,wei2021finetuned,cheng2023uprise}, we construct the external knowledge base as the union of a total of 30 publicly available datasets from 9 distinct categories with over 6 million documents.

\paragraph{Metrics} For generation tasks, we leverage ROUGE-L to quantify the quality of generations. ROUGE-L measures the longest common subsequence between a candidate generation and reference texts and is typically adopted for generation quality evaluations across the literature \cite{lin2004rouge,gatt2018survey}. A low ROUGE-L implies poor quality generations and accordingly a high generation risk. Therefore, we adopt the risk function as $1-\text{ROUGE-L}$ to bound the risk in $[0,1]$. Note that \name is agnostic to selection of risk functions, and thus, practitioners can specify any function that suits their specific use cases.

\paragraph{Retrieval models} We consider four types of retrieval models. (1) BM25 \cite{robertson2009probabilistic} is a sparse encoding metric used to rank the candidate documents given a query in information retrieval tasks. BM25 scores are linearly weighted combinations of token-level scores and can be analytically formulated as a function of term frequency, inverse document frequency, and length of documents. (2) BAAI general embedding (BAAI/bge) \cite{zhang2023retrieve} trains the SOTA embedding model in the MTEB benchmark \cite{muennighoff2022mteb} and supports the diverse retrieval augmentation needs of LLMs by a reward formulation based on LLMs’ feedback, the stabilization of knowledge distillation, multi-task fine-tuning with explicit instructions, and homogeneous in-batch negative sampling. (3) OpenAI ada-text-embedding-02 (OpenAI/ada) is a close source text embedding models designed to convert text into high-dimensional vectors, which capture the semantic meaning of the input text and can be used for a variety of tasks, such as text similarity, classification, and clustering. (4) Biencoder-supervised fine-tuning (Biencoder-SFT) \cite{wang2023learning} is a bi-encoder based dense retriever trained with contrastive loss and hard negative sampling strategies. It iteratively trains the retrieval model with hard negative samples identified by computing similarity scores by the current retrieval model.

\paragraph{Implementation details} We leverage Llama-7b for inference without specification. We perform conformal calibration on validation sets for different NLP datasets and fix the confidence levels $1-\delta=0.9$ across {all} evaluations. Lastly, we concatenate the retrieved in-context examples and the query sample with line break tokens.


\subsection{Conformal risk bounds evaluation}
\label{app:exp_no_dist}

\noindent\textbf{Soundness and tightness of generation risk guarantees in \name}. To achieve the generation risk guarantee in \Cref{eq:conformal_guarantee}, \name computes the upper confidence bound of generation risk by \Cref{eq:conformal_risk_}, which takes the empirical risk, calibration size and confidence level as input. We evaluate the conformal risks of RAG models $\alpha_{\text{rag}}$ with variations of the number of retrieved examples $N_{\text{rag}}$ by calibration statistics on the validation set. 
To validate the soundness and tightness of the risk bounds, we randomly sample multiple test sets from a pool of test samples and compute the empirical risks on the sampled test sets. The sampling protocol is detailed in \Cref{alg:simulation_no_shft} in \Cref{app:exp_no_dist}.
We provide the results for  BM25, BAAI/bge, Biencoder-SFT in \Cref{fig:bound_and_simulation_bm25,fig:bound_and_simulation_baai,fig:bound_and_simulation_llmr}.
The results show that \textbf{\underline{(1)}} the certified conformal risks $\alpha_{\text{rag}}$ (black up-pointing triangles) are larger than the empirical risks of sampled test sets (grey points) and some empirical risks approach the risk bounds, demonstrating the soundness and tightness of risk bounds in \name, and \textbf{\underline{(2)}} the conformal risks decrease much as the number of retrieved examples $N_{\text{rag}}$ increases, which shows the effectiveness of retrieved in-context examples in RAG model and aligns with our theoretical analysis in \Cref{thm:gene_rag}. 

\noindent\textbf{Comparisons of \name risk bounds for SOTA retrieval models}. We also compare the conformal risk bounds of \name for different retrieval models, including sparse encoding method BM25, and SOTA dense encoding models BAAI/bge, OpenAI/ada, Biencoder-SFT. The results in \Cref{fig:comparison_retrieval} show that \textbf{\underline{(1)}} RAG benefits in achieving a lower conformal risks of generations for different retrieval models, and \textbf{\underline{(2)}} Biencoder-SFT is the most performant generally since the model is trained in the same domain as the test sets, while OpenAI/ada trained with an open corpus also demonstrates impressive effectiveness and even outperforms Biencoder-SFT on CommonGen dataset.

\subsection{Conformal risk bounds evaluations under distribution shifts}
\label{app:exp_dist}

\noindent\textbf{Soundness and tightness of distribution-shifted conformal risk bounds in \name}.
The test user input text can be out of the calibration distribution in practice, which needs correction of the conformal risk bounds considering the distribution drift. We provide the first distribution-shfted conformal risk bounds for general bounded risk functions in \Cref{pro:conf_shf}. We evaluate the bounds in \Cref{pro:eq1} as a function of the distribution shifting distance measured by Hellinger distance $\rho$. To validate the bounds, we also construct various test sets with covariate shift induced by sample weight shifting. Specifically, different weights can be assigned to test samples in the shifted test sets and the Hellinger distance can also be explicitly computed by the original sample weights and shifted sample weights. We provide the detailed procedure in \Cref{alg:simulation} in \Cref{app:exp_dist}.
We provide the results of conformal risk bounds and simulated empirical risks with $N_{\text{rag}}=15$ in \Cref{fig:bound_and_simulation_bm25_dist_shft,fig:bound_and_simulation_baai_dist_shft,fig:bound_and_simulation_llmr_dist_shft} with BM25, BAAI/bge, and Biencoder-SFT.
The results demonstrate that \textbf{\underline{(1)}} the distribution-shifted conformal risk bounds in \Cref{pro:conf_shf} is sound and tight for different retrieval models, and \textbf{\underline{(2)}} the conformal risk bounds increases linearly with the Hellinger distance and are non-trivial for a Hellinger distance up to $0.2$.

\noindent\textbf{Comparisons of \name distribution-shifted risk bounds for SOTA retrieval models}.
We compare the distribution-drift conformal risk bounds for different retrieval models in \Cref{fig:comparison_retrieval_dist_shft}. The results show that \textbf{\underline{(1)}} the bounds for different retrieval models increases linearly with a same slope as the Hellinger distance $\rho$ increases, and \textbf{\underline{(2)}} Biencoder-SFT and OpenAI/ada demonstrate lower conformal risk bounds for different distances due to a lower initial risk without distribution shifts.

\subsection{Conformal risk bounds with multi-dimensional RAG configurations}
\label{app:exp_multi}

We already theoretically and empirically demonstrate the effectiveness of retrieved in-context examples quantified by $N_{\text{rag}}$. To further improve the conformal risk bounds, we can addtionally consider more RAG parameters such as the number of generations $\lambda_g$ in the generation set and a similarity threshold $\lambda_s$ to control the diversity of generations as \Cref{alg:gen_pro}. We follow the RAG generation protocol in \Cref{alg:gen_pro} and define the risk function as the minimal risks among all candidate generations. We similarly construct random test sets and provide the results on DART and E2E in \Cref{fig:bound_and_simulation_openai_multi_gen_2}. The results show that \textbf{\underline{(1)}} the conformal risk bounds for multi-dimensional RAG configurations are still sound and tight, and \textbf{\underline{(2)}} a larger $N_{\text{rag}}$ can reduce the conformal risks more sensitively compared to the number of generations $\lambda_g$, demonstrating the effectiveness of more retrieved in-context examples.
We also fix the number of retrieved examples $N_{\text{rag}}=5$ and the diversity threshold $\lambda_s=1.0$, and evaluate the certified conformal risk and empirical risk of randomly sampled test set in \Cref{fig:bound_and_simulation_llmr_multi_gen}, which demonstrates that although not effective as the number of retrieved examples, a larger generation set size also benefits a low generation risk. 
Further more, we fix $N_{\text{rag}}=5, \lambda_g=20, \lambda_s=1.0$ and evaluate the risks for different distribution drift distance in \Cref{fig:bound_and_simulation_llmr_multi_gen_dist_shft}.

\subsection{Valid configurations identification with specified risk levels}
\label{app:exp_multi_2}

In conformal analysis \textbf{\underline{(2)}} in \Cref{sec:crc1}, given a desired risk level $\alpha$, \name can certify a set of valid configurations $\hat{\Lambda}_\alpha$ such that any configuration in the set results in a conformal risk below $\alpha$. We use Bonferroni correction for family-wise error rate control and evaluate empirical risks on randomly sampled test sets for the identified valid configurations. We provide the results on DART and ECE datasets in \Cref{fig:bound_and_simulation_openai_multi_gen_2_dart}.
The results demonstrate that \textbf{\underline{(1)}} the certification is empirically sound since the empirical risks of valid configurations $\hat{\Lambda}_\alpha$ are always below the desired level $\alpha$, and \textbf{\underline{(2)}} a large number of retrieved examples $N_{\text{rag}}$ and a large generation set size $\lambda_g$ are effective in achieving a low generation risk since the valid configuration set includes the region with a large $N_{\text{rag}}$ and $\lambda_g$.
We also individually demonstrate the effectiveness of generation set size $\lambda_g$ and diversity threshold $\lambda_s$ in \Cref{fig:bound_and_simulation_llmr_multi_gen,fig:bound_and_simulation_llmr_multi_gen_lambda_s}.

\subsection{Conformal generation risks with different inference models}
\label{subsec:llm}

\begin{table}[ht]
\centering
\caption{Comparison of conformal generation risk $\hat{\alpha}_{\text{rag}}$ with different $N_{\text{rag}}$ using Llama-2-7b, Mistral-7B-Instruct-v0.2, and Llama-2-13b. The results are evaluated on the AESLC dataset with text-embedding-ada-002 from OpenAI as the retrieval model.}
\label{tab:llm}
\begin{tabular}{lcccccc}
\hline
Model & $N_{\text{rag}}=0$ & $N_{\text{rag}}=1$ & $N_{\text{rag}}=2$ & $N_{\text{rag}}=3$ & $N_{\text{rag}}=4$ & $N_{\text{rag}}=5$ \\
\hline
Llama-2-7b & 0.953 & 0.957 & 0.879 & 0.868 & 0.847 & 0.836 \\
Mistral-7B-Instruct-v0.2 & 0.897 & 0.829 & 0.813 & 0.795 & 0.792 & 0.793 \\
Llama-2-13b & 0.889 & 0.823 & 0.802 & 0.792 & 0.772 & 0.772 \\
\hline
\end{tabular}
\end{table}

We conduct additional evaluations of the conformal generation risk (upper bound of the generation risk) of \name with different types of inference models as well as models with different sizes. The results in \Cref{tab:llm} demonstrate that (1) for different inference models, the conformal generation risk decreases effectively as the number of retrieved in-context examples $N_{\text{rag}}$ increases, and (2) given that Llama-2-13b outperforms Llama-2-7b in the evaluations, models with larger sizes in the same model family achieve a lower conformal generation risk.

\subsection{Qualitative example}
\label{subsec:qualitative_example}
\vspace{-1em}

\begin{table}[h!]
\centering
\caption{A qualitative example.}
\begin{tabular}{|p{4cm}|p{5cm}|p{5cm}|}
\hline
\textbf{Prompt} & \textbf{Generation set} \\ \hline
Vanilla generation & Which team won the 2020 World Cup? & {“The 2020 World Cup was not held in 2020 due to the COVID-19 pandemic.”} \\ \hline
RAG generation protocol ($N_{\text{rag}}=1$) & The World Cup is held every four years. France won the 2018 World Cup in Russia. Argentina won the 2022 World Cup in Katar. \newline Which team won the 2020 World Cup? & {“The 2020 World Cup has not yet been held.”} \\ \hline
RAG generation protocol ($N_{\text{rag}}=1, \lambda_g=3$) & The World Cup is held every four years. France won the 2018 World Cup in Russia. Argentina won the 2022 World Cup in Katar. \newline Which team won the 2020 World Cup? & {“The 2020 World Cup has not yet been held.”, “2020 World Cup is not held yet.”, “2020 World Cup has not held yet.”} \\ \hline
RAG generation protocol ($N_{\text{rag}}=1, \lambda_g=3, \lambda_s=0.5$) & The World Cup is held every four years. France won the 2018 World Cup in Russia. Argentina won the 2022 World Cup in Katar. \newline Which team won the 2020 World Cup? & {“The 2020 World Cup has not yet been held.”, “The 2020 World Cup was not held due to the COVID-19 pandemic”, “2020 was not the year for any World Cup as the tournament”} \\ \hline
\end{tabular}
\label{tab:generation_protocols}
\end{table}

In \Cref{tab:generation_protocols}, we use the following qualitative example to illustrate the effectiveness of our RAG protocol. Consider the input prompt, "Which team won the 2020 World Cup?" This prompt is inherently misleading, as there was no World Cup event scheduled for 2020.
In the vanilla generation process, the model (LLAMA-2-7B-32K-INSTRUCT) produces a misconception stating, “The 2020 World Cup was not held in 2020 due to the COVID-19 pandemic.” By integrating one retrieved result ($N_{rag}=1$) into the prompt: "The World Cup is held every four years." the model no longer falsely attributes the absence of the event to COVID-19. However, it still fails to recognize that 2020 was not a designated year for the World Cup.
Simply increasing the generation set size to $\lambda_g=3$ yields similar results, failing to address the core issue. Nonetheless, by imposing diversity constraints on the generations $\lambda_s=0.5$, the model correctly identifies the crux of the matter: “2020 was not the year for any World Cup as the tournament”.

\end{document}